\DeclareMathAlphabet\mathbfcal{OMS}{cmsy}{b}{n}
\newcommand{\minus}[1]{{#1}{-}} 
\newcommand{\plus}[1]{{#1}{+}}  
\newtheorem{assumption}{Assumption}[chapter]
\newtheorem{experiment}{Experiment}[chapter]
\newcommand{\bt}{\mathcal{T}} 
\newcommandx{\unsure}[2][1=]{\todo[linecolor=red,backgroundcolor=red!25,bordercolor=red,#1]{#2}}
\newcommandx{\change}[2][1=]{\todo[linecolor=blue,backgroundcolor=blue!25,bordercolor=blue,#1]{#2}}
\newcommandx{\info}[2][1=]{\todo[linecolor=OliveGreen,backgroundcolor=OliveGreen!25,bordercolor=OliveGreen,#1]{#2}}
\newcommandx{\improvement}[2][1=]{\todo[linecolor=Plum,backgroundcolor=Plum!25,bordercolor=Plum,#1]{#2}}
\newcommandx{\addition}[2][1=]{\todo[linecolor=cyan,backgroundcolor=cyan!25,bordercolor=cyan,#1]{#2}}
\newcommandx{\thiswillnotshow}[2][1=]{\todo[disable,#1]{#2}}
\newcommandx{\todomichele}[2][1=]{\todo[linecolor=blue,backgroundcolor=blue!25,bordercolor=blue,#1]{MICHELE: #2}}
\newcommandx{\todopetter}[2][1=]{\todo[linecolor=OliveGreen,backgroundcolor=OliveGreen!25,bordercolor=OliveGreen,#1]{PETTER: #2}}
\begin{document}

\author{Michele Colledanchise and Petter \"Ogren }
\title{Behavior Trees in \\Robotics and AI}
\subtitle{An Introduction}
\date{}

\maketitle

\frontmatter
\pagenumbering{Roman}

\tableofcontents

\mainmatter

\graphicspath{{bts/figures/}}




\section*{Quotes on Behavior Trees}
\thispagestyle{empty}

\say{I'm often asked why I chose to build the SDK with behavior trees instead of finite state machines. The answer is that behavior trees are a far more expressive tool to model behavior and control flow of autonomous agents.}
\footnote{\url{https://developers.jibo.com/blog/the-jibo-sdk-} \url{reaching-out-beyond-the-screen}}

\begin{flushright}
Jonathan Ross \\Head of Jibo SDK 
\end{flushright}

\say{There are a lot of different ways to create AI's, and I feel like I've tried pretty much all of them at one point or another, but ever since I started using behavior trees, I wouldn't want to do it any other way. I wish I could go back in time with this information and do some things differently.}
\footnote{\url{http://www.gamasutra.com/blogs/ChrisSimpson/20140717/221339/Behavior_trees_for_AI_How_they_work.php}}

\begin{flushright}
Mike Weldon \\Disney, Pixar  
\end{flushright}

\say{[...]. Sure you could build the very same behaviors with a finite state machine (FSM). But anyone who has worked with this kind of technology in industry knows how fragile such logic gets as it grows. A finely tuned hierarchical FSM before a game ships is often a temperamental work of art not to be messed with!}
\footnote{\url{http://aigamedev.com/open/article/fsm-age-is-over/}}

\begin{flushright}
Alex J. Champandard \\Editor in Chief  \& Founder AiGameDev.com, \\Senior AI Programmer Rockstar Games
\end{flushright}

\say{Behavior trees offer a good balance of supporting goal-oriented behaviors and reactivity.}
\footnote{\url{https://forums.unrealengine.com/showthread.php?6016-Behavior-Trees-What-and-Why}}

\begin{flushright}
Daniel Broder \\Unreal Engine developer
\end{flushright}

 \say{The main advantage [of Behavior Trees] is that individual behaviors can easily be reused in the context of another higher-level behavior, without needing to specify how they relate to subsequent behaviors}, \cite{Bagnell2012b}.
\begin{flushright}
Andrew Bagnell et al. \\ Carnegie Mellon University.
\end{flushright}

\chapter{What are Behavior Trees?}
\label{chap:bts}

A Behavior Tree (BT) is a way to structure the switching between different tasks\footnote{assuming that an activity can somehow be broken down into reusable sub-activities called \emph{tasks} sometimes also denoted \emph{actions} or \emph{control modes}}  in an autonomous agent, such as a robot or a virtual entity 
in a computer game. An example of a BT performing a pick and place task can be seen in Fig.~\ref{introduction.fig.BTsmall}. 
As will be explained, BTs are a very efficient way of creating complex systems that are both \emph{modular} and \emph{reactive}.
These properties are crucial in many applications, which has led to the spread of BT
 from computer game programming to many branches of AI and Robotics.

\begin{figure}[ht!]
    \centering

    \begin{subfigure}[t]{\columnwidth}
        \centering
\includegraphics[width = 0.7\columnwidth]{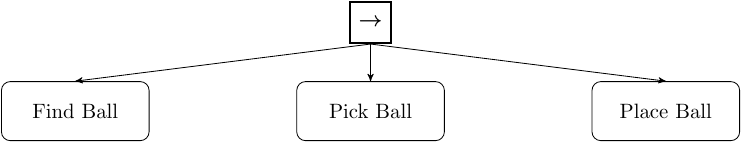}
   \caption{A high level BT carrying out a task consisting of first finding, then picking and finally placing a ball.  }
   \label{introduction.fig.BTsmall}
    \end{subfigure}
    \vspace*{0.5cm}

      \begin{subfigure}[t]{\columnwidth}
        \centering
\includegraphics[width = 0.7\columnwidth]{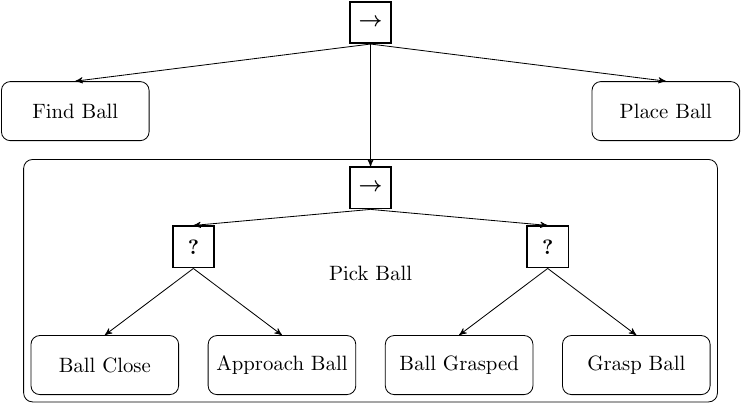}
   \caption{The Action Pick Ball from the BT in Fig.~\ref{introduction.fig.BTsmall} is expanded into a sub-BT. The Ball is approached until it is considered close, and then the Action grasp is executed until the ball is securely grasped.}
      \label{introduction.fig.BTextended}
    \end{subfigure}
    \vspace{1em}
    \caption{Illustrations of a BT carrying out a pick and place task with different degrees of detail. The execution of a BT will be described in Section~\ref{sec:classicalBT}.}
    \label{introduction.fig.BT}
\end{figure}


In this book, we will first give an introduction to BTs, in the present chapter. Then, in Chapter~\ref{ch:earlier_ideas} we describe how BTs relate to, and in many cases generalize, earlier switching structures, or control architectures as they are often called. These ideas are then used as a foundation for a set of efficient and easy to use design principles described in Chapter~\ref{ch:design_principles}.
Then, in Chapter~\ref{ch:extensions} we describe a set of important extensions to BTs.
Properties such as safety, robustness, and efficiency are important for an autonomous system, and in Chapter~\ref{ch:properties} we describe a set of tools for formally analyzing these using a state space formulation of BTs. 
With the new analysis tools, we can formalize the descriptions of how BTs generalize earlier approaches in Chapter~\ref{ch:btsvsothers}.
Then, we see how BTs can be automatically generated using planning, in Chapter~\ref{ch:planning} and learning, in Chapter~\ref{ch:learning}.  
Finally, we describe an extended set of tools to capture the behavior of Stochastic BTs, where the outcomes of actions are described by probabilities, in Chapter~\ref{ch:stochastic}. These tools enable the computation of both success probabilities and time to completion.

In this chapter, we will first tell a brief history of BTs in Section~\ref{sec:history}, 
and explain the core benefits of BTs,  in Section~\ref{sec:modularity},
then in Section~\ref{sec:classicalBT} we will describe how a BT works.
Then, we will create a simple BT for the computer game Pac-Man in Section~\ref{sec:pacman} and a more sophisticated BT for a mobile manipulator in Section~\ref{sec:youbot}. We finally describe the usage of BT in a number of applications in Section~\ref{sec:use_of_BTs}.

\section{A Short History and Motivation of BTs}
\label{sec:history}
BTs were developed in the computer game industry, as a tool to increase
modularity in the control structures of Non-Player Characters (NPCs)\label{definition:NPC}
\cite{isla2005handling,champandard2007understanding,mateas2002abl,isla2008halo,millington2009artificial,rabin2014gameAiPro}.
In this billion-dollar industry, modularity is a key property that enables reuse of code, incremental design of functionality, and efficient testing.

In games, the control structures of NPCs were often formulated in terms of Finite State Machines (FSMs). However, just as Petri Nets \cite{murata1989petri} provide an alternative to FSMs that supports design of \emph{concurrent} systems, BTs provide an alternative view of FSMs that supports design of \emph{modular} systems.

Following the development in the industry, BTs have now also started to receive attention in academia \cite{lim2010evolving,Nicolau2016,shoulson2011parameterizing,bojic2011extending,ogren,Marzinotto14,Bagnell2012b,klockner2013,Colledanchise14, bart,hu2015ablation,guerin2015manufacturing}.

At Carnegie Mellon University, BTs have been used extensively to do robotic manipulation \cite{Bagnell2012b, bart}. 
The fact that modularity is the key reason for using BTs is clear from the following quote from~\cite{Bagnell2012b}:
 \say{The main advantage is that individual behaviors can easily be reused in the context of another higher-level behavior, without needing to specify how they relate to subsequent behaviors}.

BTs have also been used to enable non-experts to do robot programming of pick and place operations,
due to their \say{modular, adaptable representation of a robotic task}
\cite{guerin2015manufacturing} and allowed \say{end-users to visually create programs
with the same amount of complexity and power as traditionally-written programs}~\cite{paxton2016costar}. Furthermore, BTs have been proposed as a key component in brain surgery robotics due to their \say{flexibility, reusability, and simple syntax} \cite{hu2015ablation}.

%
%

\section{What is wrong with FSMs? The Need for Reactiveness and Modularity}
\label{sec:modularity}

Many autonomous agents need to be both reactive and modular.
By reactive we mean the ability to quickly and efficiently react to changes.
We want a robot to slow down and avoid a collision if a human enters into its planned trajectory and we want a virtual game character to hide, flee, or fight, if made aware of an approaching enemy.
By modular, we mean the degree to which a system's components may be separated into building blocks, and recombined \cite{gershenson2003product}. 
We want the agent to be modular, to enable components to be developed, tested, and reused independently of one another. Since complexity grows with size, it is  beneficial to be able to work with components one at a time, rather than the combined system.

FSMs have long been the standard choice when designing a task switching structure \cite{powers2012sting,montemerlo2008junior}, and
 will be discussed in detail in Chapter~\ref{sec:FSM}, but here we make a short description of the unfortunate tradeoff between reactivity and modularity that is inherent in FSMs.
This tradeoff can be understood in terms of the classical Goto-statement that was used in early programming languages.
The Goto statement is an example of a so-called
\emph{one-way control transfer}, where the execution of a program jumps to another part of the code and continue executing from there.
Instead of \emph{one-way control transfers}, modern programming languages tend to rely on 
\emph{two-way control transfers} embodied in e.g. function calls. Here, execution jumps to a particular part of the code, executes it, and then returns to where the function call was made. The drawbacks of \emph{one-way control transfers} were made explicit by 
 Edsgar Dijkstra in his paper \emph{Goto statement considered harmful} \cite{Dijkstra:1968:LEG:362929.362947}, where he states that
``The Goto statement as it stands is just too primitive; it is too much an invitation to make a mess of one's program".
Looking back at the state transitions in  FSMs, we note that they  are indeed \emph{one-way control transfers}. 
This is where the tradeoff between reactivity and modularity is created.
For the system to be reactive, there needs to be many transitions between components, and many transitions means many  \emph{one-way control transfers}
which, just as Dijkstra noted, harms modularity by being an ``invitation to make a mess of one's program". 
If, for example, one component is removed, every transition to that component needs to be revised.
As will be seen, BTs use \emph{two-way control transfers}, governed by the internal nodes of the trees.

 Using BTs instead of FSMs 
to implement the task switching, allows us to describe the desired behavior in modules as depicted in Figure~\ref{introduction.fig.BTsmall}. 
Note that in the next section we will describe how BTs work in detail, so these figures are just meant to give a first glimpse of BTs, rather than the whole picture.

 A behavior is often composed of a sequence of sub-behaviors that are task independent, meaning that while creating one sub-behavior the designer does not need to know which sub-behavior will be performed next. Sub-behaviors can be designed recursively, adding more details as in Figure~\ref{introduction.fig.BTextended}. BTs are executed in a particular way, which will be described in the following section, that allows the behavior to be carried out reactively. For example, the BT in Figure~\ref{introduction.fig.BT} executes the sub-behavior \emph{Place Ball}, but also verifies that the ball is still at a known location and securely grasped. If, due to an external event, the ball slips out out of the grasp, then the robot will abort the sub-behavior \emph{Place Ball} and will re-execute the sub-behavior \emph{Pick Ball} or \emph{Find Ball} according to the current situation. 

\section{Classical Formulation of BTs}
\label{sec:classicalBT}

At the core, BTs are built from a small set of simple components, just as many other powerful concepts, but throughout this book, we will see how this simple formalism can be used to create very rich structures, in terms of both applications and theory.

Formally speaking, a BT is a directed rooted tree where the internal nodes  are called \emph{control flow nodes} and leaf nodes are called \emph{execution nodes}. For each connected node we use the common terminology of  \emph{parent} and \emph{child}. The root is the node without parents; all other nodes have one parent. The control flow nodes have at least one child. Graphically, the children of a node are placed below it, as shown in Figures~\ref{bts.fig.seq}-\ref{bts.fig.par}.

A BT starts its execution from the root node that generates signals that allow the execution of a node called \emph{ticks} with a given frequency, which are sent to its children. A node is executed if and only if it receives ticks. The child immediately returns \emph{Running} to the parent,   if its execution is under way, \emph{Success} if it has achieved its goal, or \emph{Failure} otherwise.

In the classical formulation, there exist four categories of control flow nodes (Sequence, Fallback, Parallel, and Decorator) and two categories of execution nodes (Action and Condition). They  are all explained below and summarized in Table~\ref{bts:tab:nodeTable}.

The Sequence node executes Algorithm \ref{bts:alg:sequence},
which corresponds to routing the ticks to its children from the left until it finds a child that returns either \emph{Failure} or \emph{Running}, then it returns \emph{Failure} or \emph{Running} accordingly to its own parent. It returns \emph{Success} if and only if all its children return \emph{Success}. Note that when a child returns \emph{Running} or \emph{Failure}, the Sequence node does not route the ticks to the next child (if any). The symbol of the Sequence node is a box containing the label \say{$\rightarrow$}, shown in Figure~\ref{bts.fig.seq}.
\begin{figure}[h]
\centering
\includegraphics[width=0.6\columnwidth]{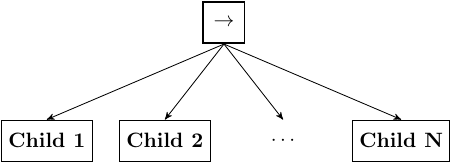}
\caption{Graphical representation of a Sequence node with $N$ children.}
\label{bts.fig.seq}
\end{figure}

\begin{algorithm2e}[h]
  \For{$i \gets 1$ \KwSty{to} $N$}
  {
    \ArgSty{childStatus} $\gets$ \FuncSty{Tick(\ArgSty{child($i$)})}\\
    \uIf{\ArgSty{childStatus} $=$ \ArgSty{Running}}
    {
      \Return{Running}
    }
    \ElseIf{\ArgSty{childStatus} $=$ \ArgSty{Failure}}
    {
      \Return{Failure}
    }
  }
  \Return{Success}
  \caption{Pseudocode of a Sequence node with $N$ children}
  \label{bts:alg:sequence}
\end{algorithm2e}

The Fallback node\footnote{Fallback nodes are sometimes also called \emph{selector} or \emph{priority selector} nodes.} executes Algorithm \ref{bts:alg:fallback},
which corresponds to routing the ticks to its children from the left until it finds a child that returns either \emph{Success} or \emph{Running}, then it returns \emph{Success} or \emph{Running} accordingly to its own parent. It returns \emph{Failure} if and only if all its children return \emph{Failure}. Note that when a child returns \emph{Running} or \emph{Success}, the Fallback node does not route the ticks to the next child (if any).
The symbol of the the Fallback node is a box containing the label \say{$?$}, shown in Figure~\ref{bts.fig.sel}.
\begin{figure}[h]
\centering
\includegraphics[width=0.6\columnwidth]{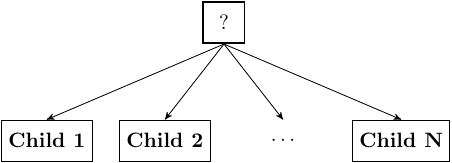}
\caption{Graphical representation of a Fallback node with $N$ children.}
\label{bts.fig.sel}
\end{figure}

\begin{algorithm2e}[h]
  \For{$i \gets 1$ \KwSty{to} $N$}
  {
    \ArgSty{childStatus} $\gets$ \FuncSty{Tick(\ArgSty{child($i$)})}\\
    \uIf{\ArgSty{childStatus} $=$ \ArgSty{Running}}
    {
      \Return{Running}
    }
    \ElseIf{\ArgSty{childStatus} $=$ \ArgSty{Success}}
    {
      \Return{Success}
    }
  }
  \Return{Failure}
  \caption{Pseudocode of a Fallback node with $N$ children}
    \label{bts:alg:fallback}
\end{algorithm2e}

The Parallel node executes Algorithm \ref{bts:alg:parallel},
which corresponds to routing the ticks to all its children and it returns \emph{Success} if $M$ children return \emph{Success}, it returns \emph{Failure} if $N-M+1$ children return \emph{Failure}, and it returns \emph{Running} otherwise, where $N$ is the number of children and $M\leq N$ is a user defined threshold.
The symbol of the the Parallel node is  a box containing the label \say{$\rightrightarrows$}, shown in Figure~\ref{bts.fig.par}.
\begin{figure}[h]
\centering
\includegraphics[width=0.6\columnwidth]{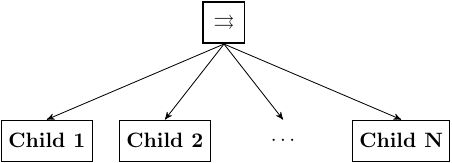}
\caption{Graphical representation of a Parallel node with $N$ children.}
\label{bts.fig.par}
\end{figure}

\begin{algorithm2e}[h]
  \For{$i \gets 1$ \KwSty{to} $N$}
  {
    \ArgSty{childStatus}(i) $\gets$ \FuncSty{Tick(\ArgSty{child($i$)})}\\
    }
    \uIf{$\Sigma_{i: \ArgSty{childStatus}(i)=Success}1\geq M$}
    {
      \Return{Success}
    }
    \ElseIf{$\Sigma_{i: \ArgSty{childStatus}(i)=Failure}1 > N-M$}
    {
      \Return{Failure}
    
  }
  \Return{Running}
    \caption{Pseudocode of a Parallel node with $N$ children and success threshold $M$}
  \label{bts:alg:parallel}
\end{algorithm2e}

\begin{figure}[h]
        \centering
        \begin{subfigure}[b]{0.3\columnwidth}
                \centering
                \includegraphics[width=0.4\columnwidth]{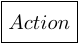}
                \caption{Action node. The label describes the action performed.}
                \label{bts.fig.act}              
        \end{subfigure}
        ~
                \begin{subfigure}[b]{0.3\columnwidth}
                \centering
                \includegraphics[width=0.7\columnwidth]{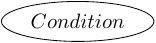}
                \caption{Condition node. The label describes the condition verified.}
                \label{bts.fig.cond}
        \end{subfigure}
        ~
         \begin{subfigure}[b]{0.3\columnwidth}
                \centering
                \includegraphics[width=0.5\columnwidth]{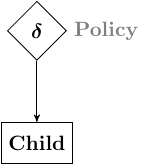}
                \caption{Decorator node. The label describes the user defined policy.}
                \label{bts.fig.dec}
        \end{subfigure}%
        \caption{Graphical representation of  Action (a), Condition (b), and Decorator (c) node.}
\end{figure}

When it receives ticks, an Action node executes a command. It returns \emph{Success} if the action is correctly completed or \emph{Failure} if the action has failed. While the action is ongoing it returns \emph{Running}. An Action node is shown in Figure~\ref{bts.fig.act}.

When it receives ticks, a Condition node checks a proposition. It returns \emph{Success} or \emph{Failure} depending on if the proposition holds or not. Note that a Condition node never returns a status of \emph{Running}. A Condition node is shown in Figure~\ref{bts.fig.cond}.

The Decorator node is a control flow node with a single child that manipulates the return status of its child according to a user-defined rule and also selectively ticks the child according to some predefined rule. For example, an \emph{invert} decorator inverts the \emph{Success}/\emph{Failure} status of the child; a \emph{max-N-tries} decorator only lets its child fail $N$ times, then always returns \emph{Failure} without ticking the child; a \emph{max-T-sec} decorator lets the child run for $T$ seconds then, if the child is still Running, the Decorator returns \emph{Failure} without ticking the child. The symbol of the Decorator is a rhombus, as in Figure~\ref{bts.fig.dec}.

\begin{table*}[htp]
\scriptsize
\begin{center}
\begin{tabular}{|c|c|c|c|c|c|}
\hline
 \bf{Node type} & \bf{Symbol}& \bf{Succeeds} & \bf{Fails} & \bf{Running} \cr

\hline
 Fallback  &\fbox{?} & If one child succeeds & If all children fail &If one child returns Running \cr
\hline
Sequence &\fbox{$\rightarrow$} &If all children succeed & If one child fails &If one child returns Running \cr
\hline
Parallel &\fbox{$\rightrightarrows$} & If $\geq M$ children succeed & If $>N-M$ children fail &else \cr
\hline
Action & \fbox{text}& Upon completion & If impossible to complete & During completion \cr
\hline
\cornersize{.9} 
Condition &\ovalbox{text} & If true & If false & Never  \cr
 \hline
Decorator & $\Diamond$& Custom  & Custom & Custom \cr
\hline
\end{tabular}
\end{center}
\caption{The node types of a BT.}
\label{bts:tab:nodeTable}
\end{table*}%

\subsection{Execution Example of a BT}
\label{bts.ee}
Consider the BT in Figure~\ref{bts.fig.btmot} 
designed to make an agent look for a ball, approach it, grasp it, proceed to a bin, and place the ball in the bin.
This example will illustrate the execution of the BT, including the reactivity when another (external) agent takes the ball from the first agent, 
making it switch to looking for the ball and approaching it again.
When the execution starts, the ticks traverse the BT reaching the condition node \emph{Ball Found}. The agent does not know the ball position hence the condition node returns  \emph{Failure} and the ticks reach the Action \emph{Find Ball}, which returns  \emph{Running} (see Figure~\ref{bts.fig.btmot1}). While executing this action, the agent  sees the ball with the camera.  In this new situation the agent knows the ball position. Hence the condition node \emph{Ball Found} now returns  \emph{Success} resulting in the ticks no longer reaching the Action node \emph{Find Ball} and the action is preempted. The ticks continue exploring the tree, and reach the condition node \emph{Ball Close}, which returns \emph{Failure} (the ball is far away) and then reach the Action node \emph{Approach Ball}, which returns  \emph{Running} (see Figure~\ref{bts.fig.btmot2}). Then the agent eventually reaches the ball, picks it up and goes towards the bin (see Figure~\ref{bts.fig.btmot3}). When an external agent moves the ball from the hand of the first agent to the floor (where the ball is visible), the condition node \emph{Ball Found} returns  \emph{Success} while the condition node \emph{Ball Close} returns  \emph{Failure}. In this situation the ticks no longer reach the Action \emph{Approach Bin} (which is preempted) and they instead reach the Action \emph{Approach Ball} (see Figure~\ref{bts.fig.btmot4}).


\begin{figure}[h!]
\centering
\includegraphics[width=\columnwidth]{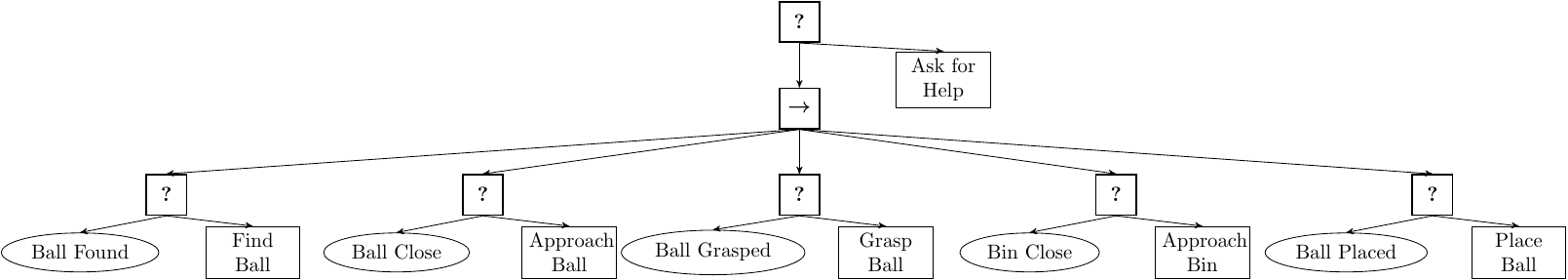}
\caption{BT encoding the behavior of Example~\ref{Introduction.ex.motivating}.}
\label{bts.fig.btmot}
\end{figure}

\begin{figure}[h!]
\centering
\begin{subfigure}[b]{\columnwidth}
\includegraphics[width=\columnwidth]{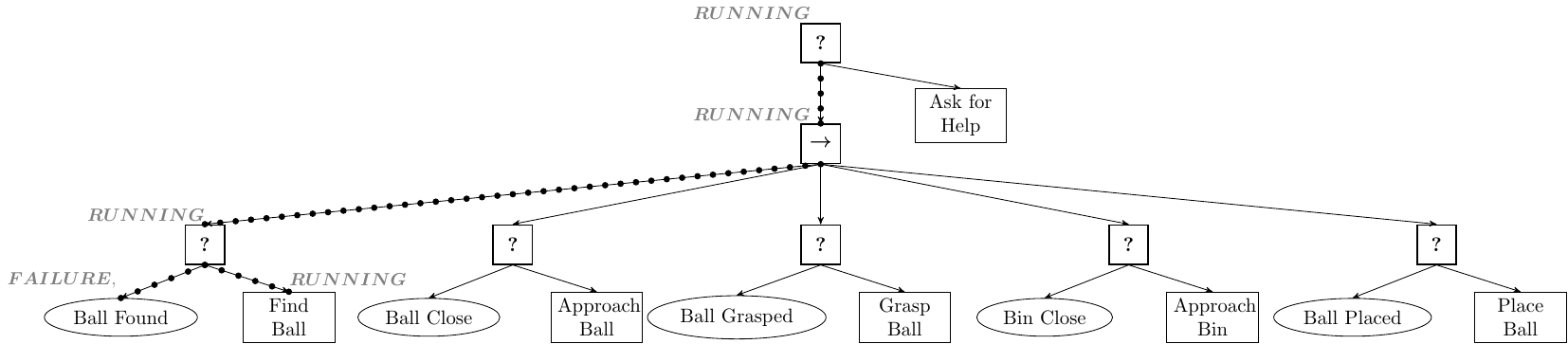}
\caption{Ticks' traversal when the robot is searching the ball.}
\label{bts.fig.btmot1}
\end{subfigure}

\begin{subfigure}[b]{\columnwidth}
\includegraphics[width=\columnwidth]{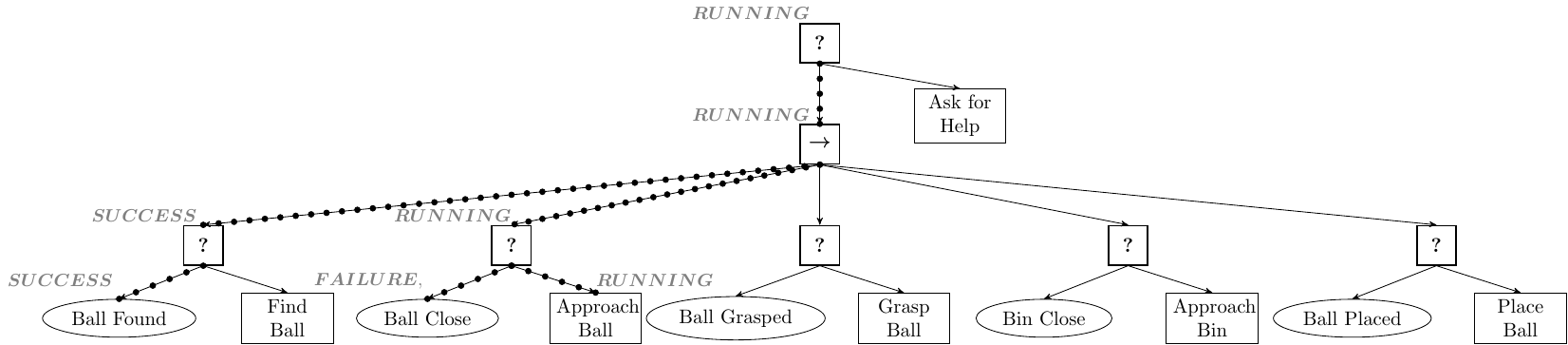}
\caption{Ticks'  traversal while the robot is approaching the ball.}
\label{bts.fig.btmot2}
\end{subfigure}

\begin{subfigure}[b]{\columnwidth}
\includegraphics[width=\columnwidth]{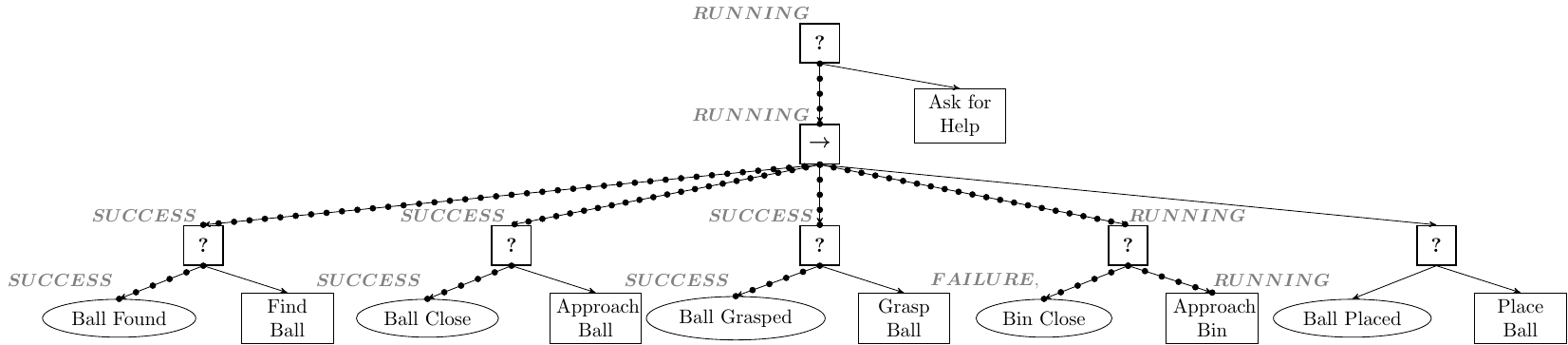}
\caption{Ticks'  traversal while the robot is approaching the bin.}
\label{bts.fig.btmot3}
\end{subfigure}

\begin{subfigure}[b]{\columnwidth}
\includegraphics[width=\columnwidth]{BTExample1Tick2}
\caption{Ticks'  traversal while the robot is approaching the ball again (because it was removed from the hand).}
\label{bts.fig.btmot4}
\end{subfigure}
\caption{Visualization of the ticks' traversal in the different situations, as explained in Section~\ref{bts.ee}.}
\label{bts.fig.btmot1to4}
\end{figure}

\subsection{Control Flow Nodes with Memory}
\label{bt:sec:mem}
As seen in the example above, to provide reactivity
the control flow nodes Sequence and Fallback keep sending ticks to the children to the left of a running child, in order to verify whether a child has to be re-executed and the current one has to be preempted. 
However, sometimes the user knows that a child, once executed, does not need to be re-executed.

Nodes with memory~\cite{millington2009artificial} have been introduced to enable the designer to avoid the unwanted re-execution of some nodes. Control flow nodes with memory always remember whether a child has returned  \emph{Success} or \emph{Failure}, avoiding the re-execution of the child until the whole Sequence or Fallback finishes in either \emph{Success} or \emph{Failure}. In this book, nodes with memory are graphically represented with the addition of the symbol  \say{$*$} (e.g. a Sequence node with memory is graphically represented by a box with a \say{$\rightarrow^*$}).
The memory is cleared when the parent node returns either  \emph{Success} or \emph{Failure}, so that at the next activation all children are considered. Note however that every execution of a control flow node with memory can be obtained with a non-memory BT using some auxiliary conditions as shown in Figure~\ref{bts.fig.mem}. Hence nodes with memory can be considered to be syntactic sugar.

\begin{figure}[h]

        \begin{subfigure}[b]{0.3\columnwidth}
                \centering
                \includegraphics[width=\columnwidth]{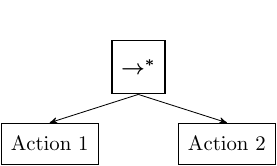}
                \caption{Sequence composition with memory.}
                \label{bts.fig.starnonreac}
        \end{subfigure}          
        ~ 
        \begin{subfigure}[b]{0.6\columnwidth}
                \centering
                \includegraphics[width=\columnwidth]{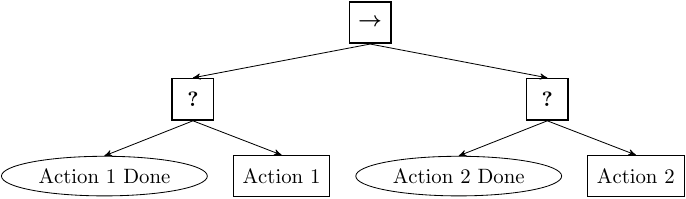}
                \caption{BT that emulates the execution of the Sequence composition with memory using nodes without memory.}
                \label{bts.fig.starreac}              
        \end{subfigure}
        \caption{Relation between memory and memory-less BT nodes.}
                        \label{bts.fig.mem}              
\end{figure}

\begin{remark}
 Some BT implementations, such as the one described in~\cite{millington2009artificial},  do not include the \emph{Running} return status. Instead, they let each Action run until it returns \emph{Failure} or \emph{Success}. We denote these BTs as  \emph{non-reactive}, since they do not allow  actions other than the currently active one to react to changes. This is a significant limitation on non-reactive BTs, which was also noted in \cite{millington2009artificial}. A non-reactive BT can be seen as a BT with only memory nodes.
 
As reactivity is one of the key strengths of BTs, the non-reactive BTs are of limited use. 
\end{remark}

\newpage	
\section{Creating a BT for Pac-Man from Scratch}
\label{sec:pacman}
In this section we create a set of BTs of increasing complexity for playing the game Pac-Man.
The source code of all the examples is publicly available and editable.\footnote{\url{https://btirai.github.io/}}
We use  a clone of the Namco's Pac-Man computer game depicted in Figure~\ref{bts.fig.ScenarioPacMan}\footnote{The software was developed at UC Berkeley for educational purposes. More information available at: \url{http://ai.berkeley.edu/project_overview.html}}.    

In the testbed, a BT controls the agent, Pac-Man, through a maze containing two ghosts, a large number of  pills, including two so-called power pills. 
The goal of the game is to consume all the pills, without being eaten by the ghosts. The power pills are such that, if eaten, Pac-Man receives temporary super powers, and is able to eat the ghosts. After a given time the effect of the power pill wears off, and the ghosts can again eat Pac-Man.
 When a ghost is eaten, it returns to the center box where it is regenerated and becomes dangerous again. Edible ghosts change color, and then flash to signal when they are about to become dangerous again.
\begin{figure}[h]
\centering
\includegraphics[width=\columnwidth]{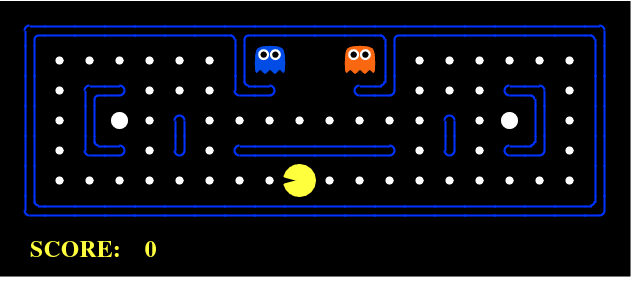}
\caption{The game Pac-Man for which we will design a BT. There exists maps of different complexity.}
\label{bts.fig.ScenarioPacMan}
\end{figure}

The simplest behavior is to let Pac-Man ignore the ghosts and just focus on eating pills.
This is done using a greedy action
 \emph{Eat Pills} as in Figure~\ref{bts.fig.greedy}.

\begin{figure}[h]
\centering
\includegraphics[width=0.2\columnwidth]{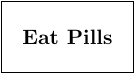}
\caption{BT for the simplest non-random behavior, \emph{Eat Pills}, which maximizes the number of pills eaten in the next time step.}
\label{bts.fig.greedy}
\end{figure}

The simple behavior described above ignores the ghosts. 
To take them into account,  we can extend the previous behavior by adding an \emph{Avoid Ghosts} Action to be executed whenever the condition \emph{Ghost Close} is true. This Action will 
greedily maximize the distance to all ghosts.
The new Action and condition can be added to the BT as depicted in 
 Fig.~\ref{bts.fig.avoid}.
The resulting BT will switch between Eat Pills and Avoid Ghost depending on whether Ghost Close returns Success or Failure.

\begin{figure}[h]
\centering
\includegraphics[width=0.5\columnwidth]{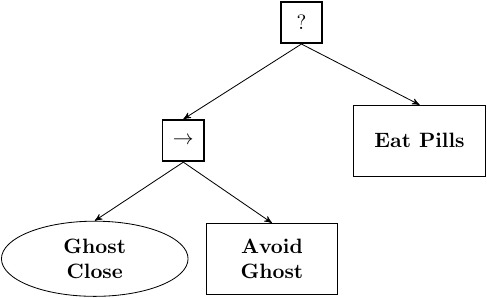}
\caption{If a Ghost is Close, the BT will execute the Action Avoid Ghost, else it will run Eat Pills.}
\label{bts.fig.avoid}
\end{figure}

The next extension we make is to take the power pills into account.
When Pac-Man eats a Power pill, the ghosts are edible, and we would like to chase them, instead of avoiding them.
To do this we add the condition \emph{Ghost  Scared} and the Action \emph{Chase Ghost} to the BT, as shown in Fig.~\ref{bts.fig.eat}.
\emph{Chase Ghost} greedily minimizes the distance to the closest edible ghost.
Note that we only start chasing the ghost if it is close, otherwise we continue eating pills.
Note also that all extensions are modular, without the need to rewire the previous BT.

\begin{figure}[h]
\centering
\includegraphics[width=0.5\columnwidth]{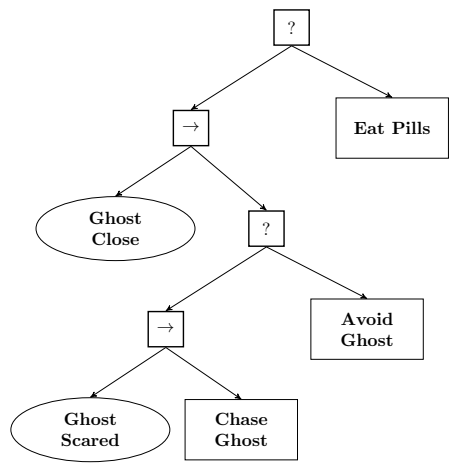}
\caption{BT for the Combative Behavior}
\label{bts.fig.eat}
\end{figure}

With this incremental design, we have created a basic AI for playing Pac-Man, but what if we want to make a world class Pac-Man AI?
You could add additional nodes to the BT, such as moving towards the Power pills when being chased, and stop chasing ghosts when they are blinking and soon
will transform into normal ghosts. However, much of the fine details of Pac-Man lies in considerations of the Maze geometry, choosing paths that avoid dead ends and 
possible capture by multiple ghosts.
Such spatial analysis is probably best done inside the actions, e.g., making Avoid Ghosts take dead ends and ghost positions into account.
The question of what functionality to address in the BT structure, and what to take care of inside the actions is open, and must be decided on a case to case basis,
as discussed in Section~\ref{design:sec:granularity}.

\section{Creating a BT for a Mobile Manipulator Robot}
\label{sec:youbot}
\begin{figure}[h]
\centering
\includegraphics[width=\columnwidth]{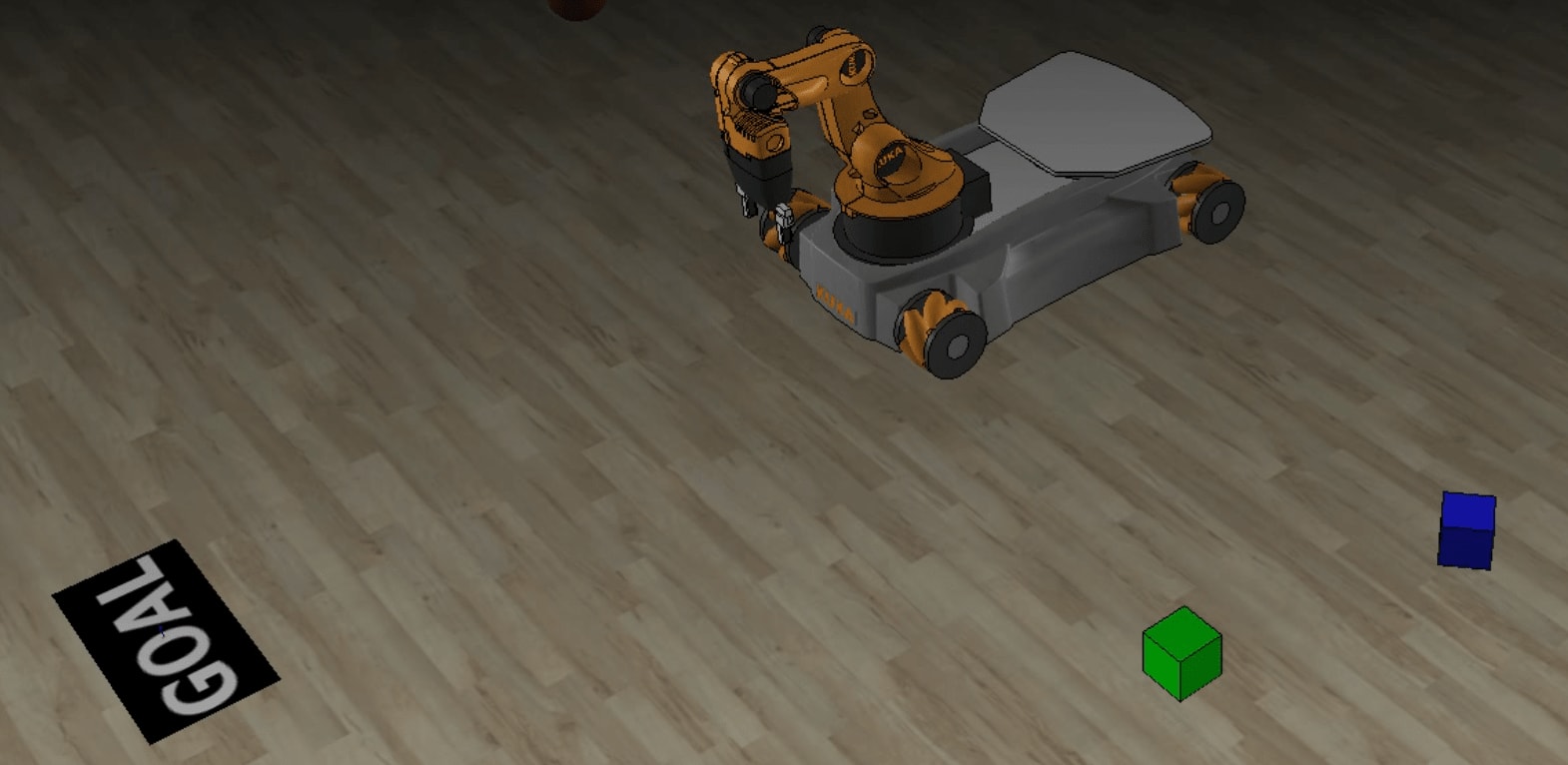}
\caption{The Mobile Manipulator for which we will design a BT.}
\label{bts.fig.ScenarioYouBot}
\end{figure}

In this section, we create a set of BTs of increasing complexity for controlling a mobile manipulator.
The source code of all the examples is publicly available and editable.\footnote{\url{https://btirai.github.io/}}
We use a custom-made testbed created in the V-REP robot simulator depicted in Figure~\ref{bts.fig.ScenarioYouBot}.

In the testbed, a BT controls a mobile manipulator robot, a youBot, on a flat surface. In the scenario, several colored cubes are lying on a flat surface. The goal is to move the green cube to the goal area while avoiding the other cubes. The youBot's grippers are such that the robot is able to pick and place the cubes if the robot is close enough.

The simplest possible BT is to check the goal condition \emph{Green Cube on Goal}. If this condition is satisfied (i.e. the cube is on the goal) the task is done, if it is not satisfied the robot needs to \emph{place the cube} onto the goal area. To correctly execute the Action \emph{Place Cube}, two conditions need to hold: the robot \emph{is holding the green cube} and the robot \emph{is close to the goal area}. The behavior described so far can be encoded in the BT in Figure~\ref{bts.fig.youbotsimple}. This BT is able to 
place the green cube on the goal area if and only if the robot is close to the goal area with the green cube grasped.

\begin{figure}[h]
\centering
\includegraphics[width=0.5\columnwidth]{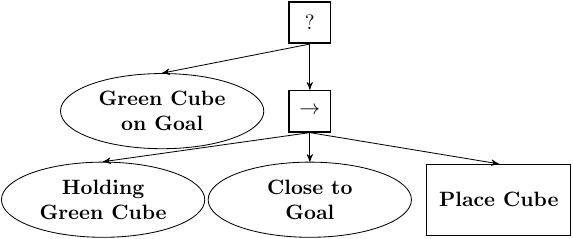}
\caption{BT for the simple Scenario.}
\label{bts.fig.youbotsimple}
\end{figure}

Now, thanks to the modularity of BTs, we can separately design the BTs needed to satisfy the two lower conditions in Fig.~\ref{bts.fig.youbotsimple}, i.e., the BT needed to grasp the green cube and the BT needed to reach the goal area. To grasp the green cube, the robot needs to have the \emph{hand free} and be \emph{close to the cube}. If it is not close, it approaches as long as a collision free trajectory exists. This behavior is encoded in the BT in Figure~\ref{bts.fig.youbotpick}. To reach the goal area we let the robot simply \emph{Move To the Goal}  as long as a \emph{collision free trajectory exists}. This behavior is encoded in the BT in Figure~\ref{bts.fig.youbotmove}.

\begin{figure}[ht!]
    \centering

    \begin{subfigure}[t]{\columnwidth}
        \centering
\includegraphics[width = 0.45\columnwidth]{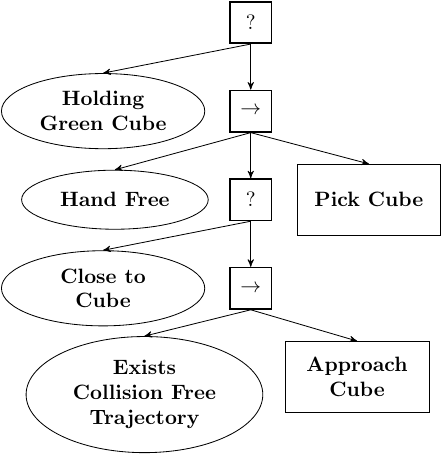}
   \caption{A BT that picks the green cube. }
   \label{bts.fig.youbotpick}
    \end{subfigure}
    
      \begin{subfigure}[t]{\columnwidth}
        \centering
\includegraphics[width = 0.45\columnwidth]{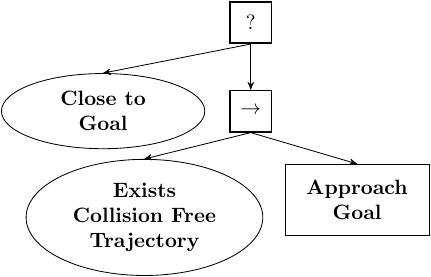}
   \caption{A BT that reaches the goal region.}
      \label{bts.fig.youbotmove}
    \end{subfigure}
    \vspace{1em}
    \caption{Illustrations of a BT carrying out the subtasks of picking the green cube and reaching the goal area}
    \label{bts.fig.youbotpickmove}
\end{figure}

Now we can extend the simple BT in Fig.~\ref{bts.fig.youbotsimple} above by replacing the two lower conditions in Fig.~\ref{bts.fig.youbotsimple}
with the two BTs in Fig.~\ref{bts.fig.youbotpickmove}.
The result can be seen  in Fig.~\ref{bts.fig.youbotfinal}. Using this design, the robot is able to place the green cube in the goal area as long as there exists a collision free trajectory to the green cube and to the goal area.

\begin{figure}[h]
\centering
\includegraphics[width=0.7\columnwidth]{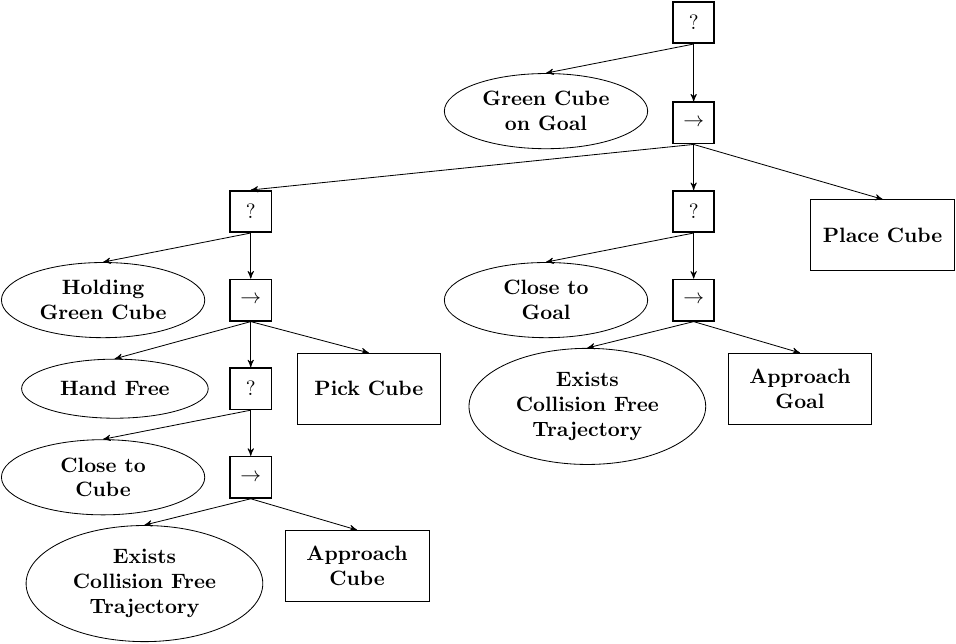}
\caption{Final BT resulting from the aggregation of the BTs in Figs.~\ref{bts.fig.youbotsimple}-\ref{bts.fig.youbotpickmove}}
\label{bts.fig.youbotfinal}
\end{figure}

We can continue to incrementally build the BT in this way to handle more situations, for instance removing obstructing objects to ensure that a \emph{collision free trajectory exists}, and dropping things in the hand to be able to pick the green cube up. 

\section{Use of BTs in Robotics and AI}
\label{sec:use_of_BTs}
In this section we describe the use  of BTs in a set of real robot applications and projects, spanning from autonomous driving to industrial robotics. 

\subsection{BTs in autonomous vehicles}

There is no standard control architecture for autonomous vehicles, however reviewing the architectures used to address the DARPA Grand Challenge, a competition for autonomous vehicles, we note that most teams employed FSMs  designed and developed exactly for that challenge \cite{urmson2008autonomous, urmson2007tartan}. Some of them used a HFSM\cite{montemerlo2008junior} decomposing the mission task in multiple subtasks in a  hierarchy. 
As discussed in Section~\ref{sec:modularity} there is reason to believe that using BTs instead of FSMs would be beneficial for autonomous driving applications.

\begin{figure}[h]
\centering
\includegraphics[width=\columnwidth]{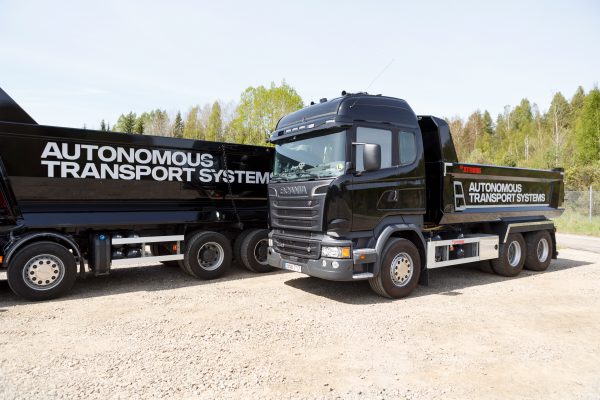}
\caption[Trucks running the Scania iQmatic's software.]{Trucks running the Scania iQmatic's software.\footnotemark}
\label{bts.fig.iQmatic}
\end{figure}

iQmatic is a Scania-led project that aims at developing a fully autonomous heavy truck for goods transport, mining, and other industrial applications. The vehicle's software has to be reusable, maintainable and easy to develop. For these reasons, the iQmatic's developers chose BTs as the  control architecture  for the project. BTs are appreciated in iQmatic for their human readability,  supporting the design and development of early prototypes; and their maintainability,  making the editing task easier. Figure~\ref{bts.fig.iQmatic} shows two trucks used in the iQmatic project.\footnotetext{Picture courtesy of \url{Scania.com}}

\subsection{BTs in industrial robotics}
Industrial robots usually operate in structured environments and their  control architecture  is designed for a single specific task. Hence classical architectures such as FSMs or Petri Nets~\cite{murata1989petri} have found successful applications in the last decades. However, future generations of collaborative industrial robots, so-called cobots, will operate in less structured environments and  collaborate closely with humans. Several research projects explore this research direction. 

\begin{figure}[h]
\centering
\includegraphics[width=\columnwidth]{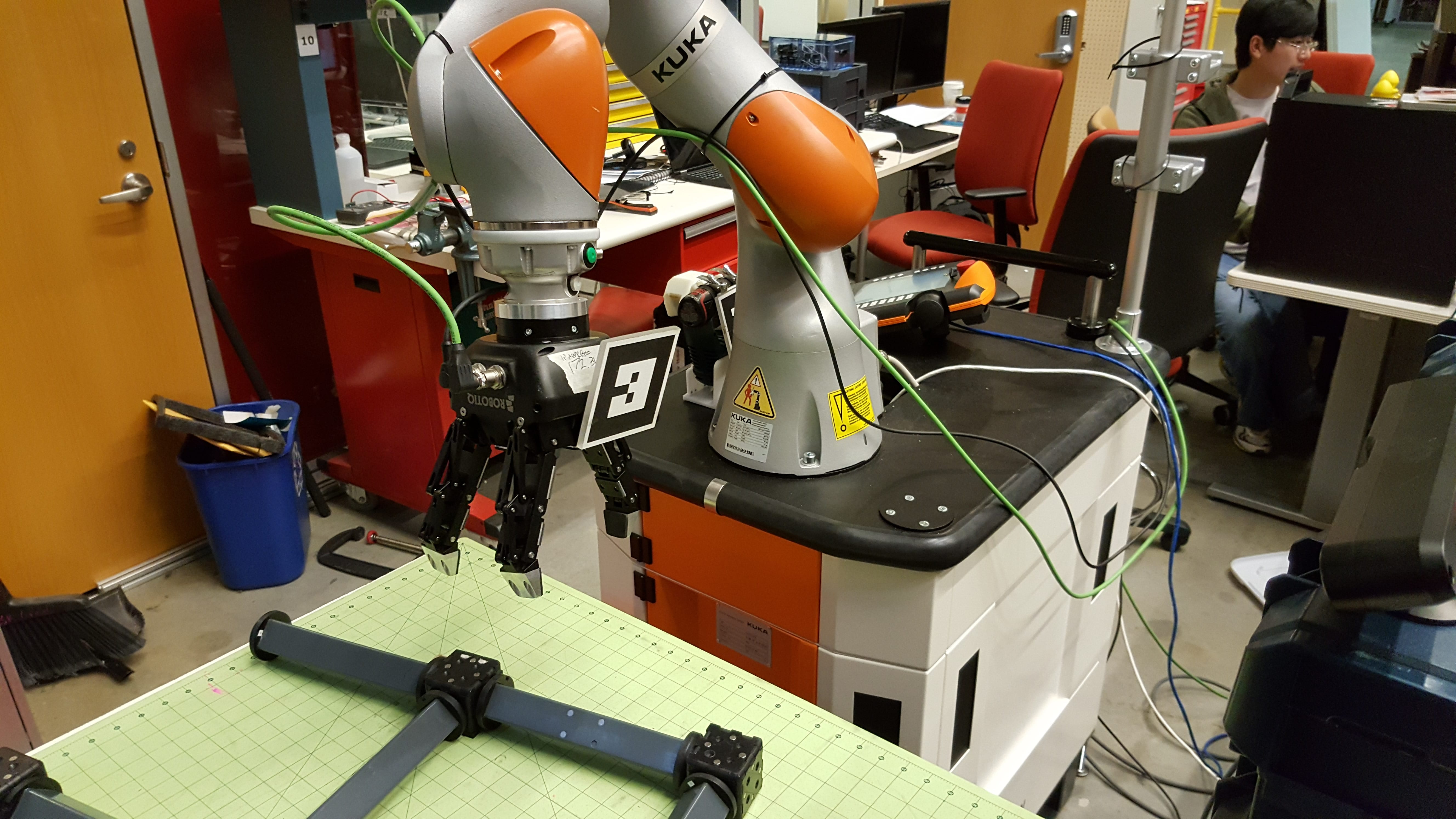}
\caption[Experimental platform of the CoSTAR project.]{Experimental platform of the CoSTAR project.\footnotemark}
\label{bts.fig.CoSTAR}
\end{figure}

CoSTAR~\cite{paxton2016costar} is a project that aims at developing a software framework that contains tools for  industrial applications that involve human cooperation.\footnotetext{Picture courtesy of \url{http://cpaxton.github.io/}} The use cases include non trained operators composing task plans, and training 
 robots  to perform complex behaviors.
BTs have found successful applications in this project as they simplify the composition of subtasks. The order in which the subtasks are executed is independent from the subtask implementation; this enables  easy composition of trees and the iterative composition of larger and larger trees. Figure~\ref{bts.fig.CoSTAR} shows one of the robotic platforms of the project.


SARAFun\footnote{\url{http://sarafun.eu}} is a project that aims at developing a robot-programming framework that enables a non-expert user to program an assembly task from scratch on a robot in less than a day. It takes advantages of state of the art techniques in sensory and cognitive abilities, robot control, and planning.
\begin{figure}[h]
\centering
\includegraphics[trim={0 2cm 0 15cm},clip,width=\columnwidth]{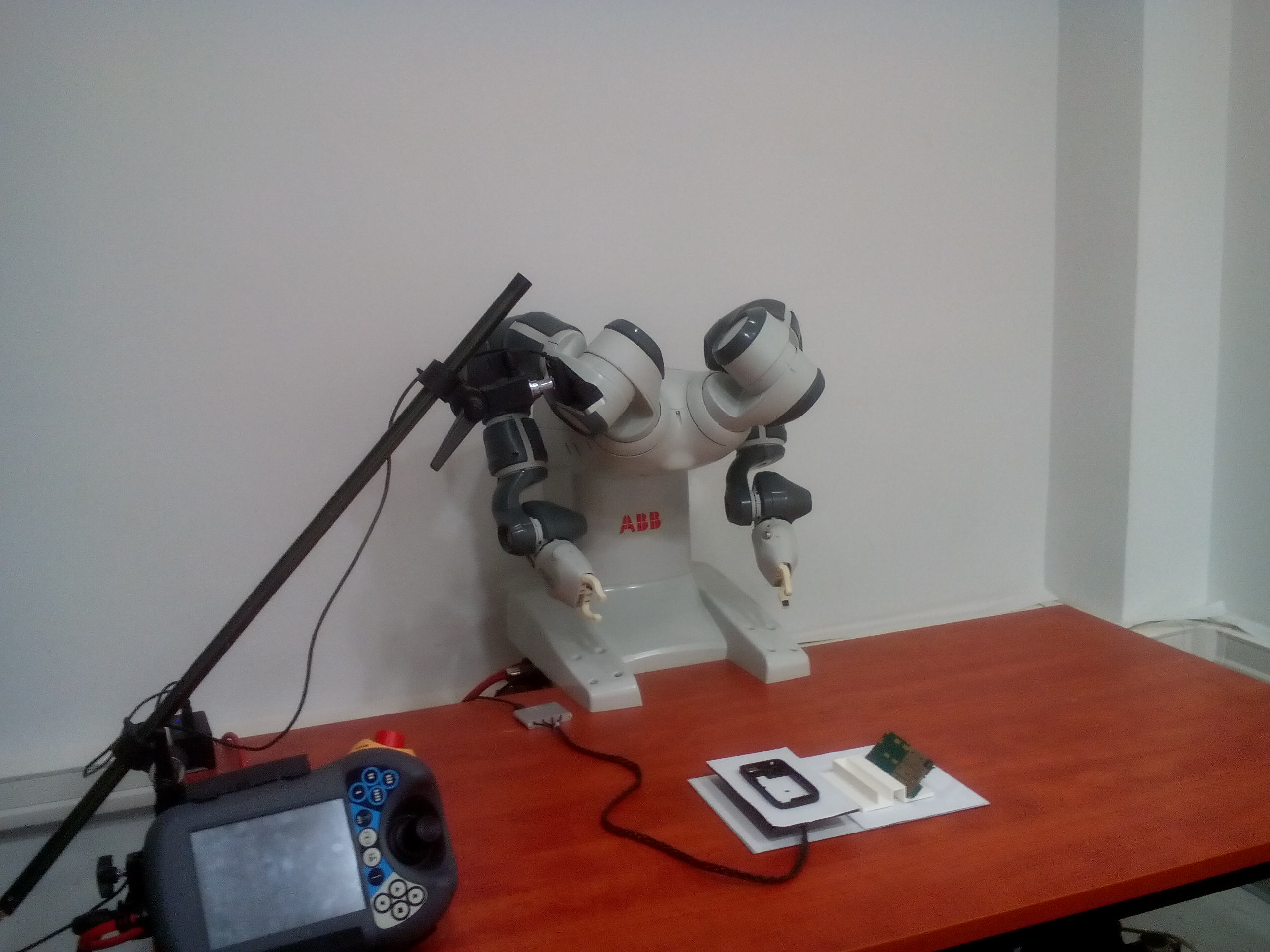}

\caption[Experimental platform of the SARAFun projectE.]{Experimental platform of the SARAFun project.\footnotemark}
\label{bts.fig.SARAFun}
\end{figure}

BTs are used to execute the generic actions learned or planned. For the purpose of this project, the  control architecture  must be human readable, enable code reuse, and modular.
 BTs have  created advantages also during the  development stage, when the code written by different partners had to be integrated. Figure~\ref{bts.fig.SARAFun} shows an ABB Yumi robot used in the SARAFun testbed.\footnotetext{Setup located at CERTH, Thessaloniki, Greece. Picture courtesy of Angeliki Topalidou-Kyniazopoulou.}



\begin{figure}[h]
  \includegraphics[width=0.9\textwidth]{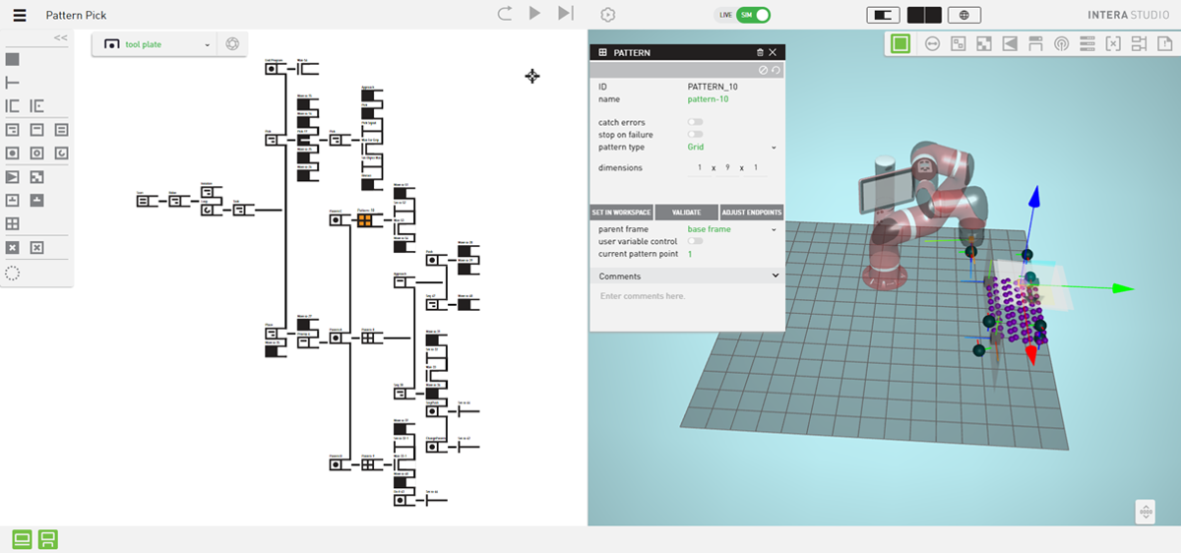}
  \caption[Intera's BT (left) and simulation environment (right).]{Intera's BT (left) and simulation environment (right).\footnotemark}
  \label{intera}
\end{figure}
  \footnotetext{Picture courtesy of \url{http://www.rethinkrobotics.com/intera/}}

Rethink Robotics released its software platform Intera in 2017, with
BTs  at the \say{heart of the design}. Intera claims to be a \say{first-of-its-kind software platform that connects everything from a single robot controller, extending the smart, flexible power of Rethink Robotics' Sawyer to the entire work cell and simplifying automation with unparalleled ease of deployment.}\footnote{\url{http://www.rethinkrobotics.com/news-item/rethink-robotics-releases-} \url{intera-5-new-approach-automation/}} It is designed with the goal of creating the world's fastest-to-deploy robot and fundamentally changing the concepts of integration, making it drastically easier and affordable.


Intera's BT defines the Sequence of tasks the robot will perform. The tree can be created manually or trained by demonstration. Users can inspect any portion of the BT and make adjustments. The Intera interface (see Figure~\ref{intera}) also includes a simulated robot, so a user can run simulations while  the program executes the BT. BTs are appreciated in this context because the train-by-demonstration framework builds a BT that is easily inspectable and modifiable.\footnote{\url{http://twimage.net/rodney-brooks-743452002}}

\subsection{BTs in the Amazon Picking Challenge}

\begin{figure}[h!]
\centering
  \includegraphics[width=0.9\textwidth]{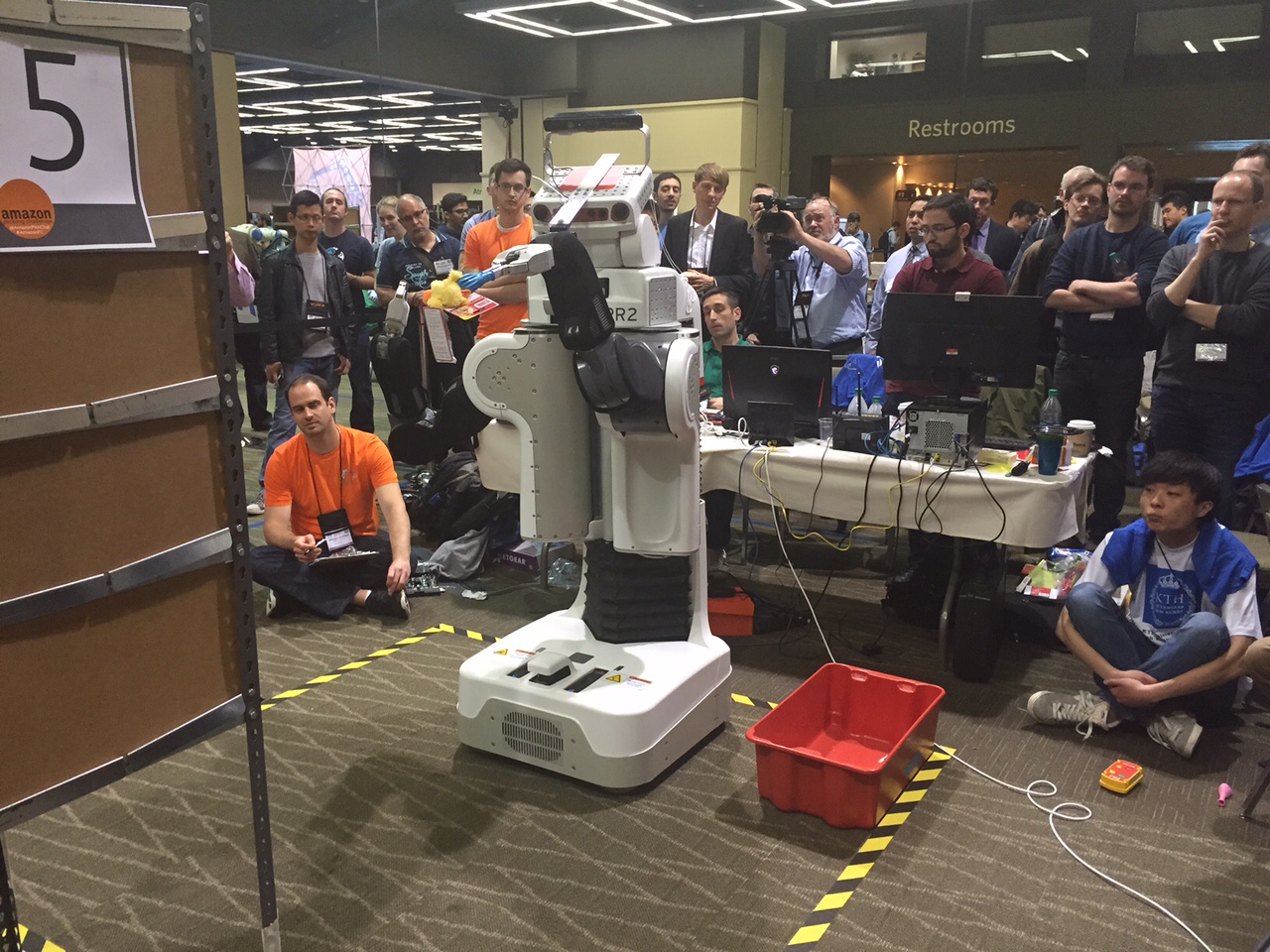}
  \caption{The KTH entry in the Amazon Picking Challenge at ICRA 2015.}
  \label{amazon}
\end{figure}

The Amazon Picking Challenge (APC) is an international robot competition. Robots need to autonomously retrieve a wide range of products from a shelf and put them into a container. The challenge was conceived with the purpose of strengthening the ties between  industrial and academic robotic research, promoting shared solutions to some open problems in unstructured automation. Over thirty companies and research laboratories from different continents competed in the APC's preliminary phases. The best performing teams earned the right to compete at the finals and the source codes of the finalists were made publicly available. \footnote{\url{https://github.com/amazon-picking-challenge}}

The KTH entry in the final challenge used BTs in both  2015 and 2016. BTs were appreciated for their modularity and code reusability, which allowed the integration of different functionalities developed by programmers with different background and coding styles. In  2015, the KTH entry got the best result out of the four teams competing with PR2 robots.

\subsection{BTs inside the social robot JIBO}
JIBO is a social robot that can recognize faces and voices, tell jokes, play games, and share information.
It is intended to be used in homes, providing the functionality of a tablet, but with an interface relying on speech and video instead of a touch screen. JIBO has been featured in Time Magazine's Best Inventions of 2017.\footnote{\url{http://time.com/5023212/best-inventions-of-2017/}} BTs are a fundamental part of the software architecture of JIBO\footnote{\url{https://developers.jibo.com/docs/behavior-trees.html}},
including an open SDK inviting external contributors to develop new skills for the robot.

\begin{figure}[ht]
\centering
  \includegraphics[width=0.9\textwidth]{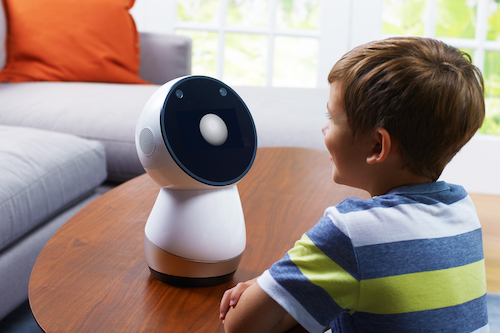}
  \caption{The JIBO social robot has an SDK based on BTs.}
  \label{jibo}
\end{figure}

%
%

\chapter{How Behavior Trees Generalize and Relate to Earlier Ideas}
\label{ch:earlier_ideas}

In this chapter, we describe how BTs relate to, and often generalize, a number of well known control architectures including FSMs (Section~\ref{sec:FSM}),
the Subsumption Architecture (Section~\ref{sec:SA}), the Teleo-Reactive Approach (Section~\ref{sec:TR}) and Decision Trees (Section~\ref{sec:DT}). We also present advantages and disadvantages of each approach.
Finally, we list a set of advantages and disadvantages of BTs 
(\ref{btasca.sec.properties}).
Some of the results of this chapter were previously published in the journal paper \cite{colledanchise2017behavior}.

\label{btasca.sec.ca}

\section{Finite State Machines}
\label{sec:FSM}
A FSM is one of the most basic mathematical models of computation.
The FSM consists of a set of states, transitions and events, as illustrated in Fig.~\ref{Arch.fig.FSM} showing an example of a FSM designed to carry out a grab-and-throw task. 
Note that the discussion here is valid for all  control architectures  based on FSMs, including Mealy~\cite{moore1956gedanken} and Moore~\cite{mealy1955method} machines.
\begin{figure}[h]
\centering
\includegraphics[width = \columnwidth]{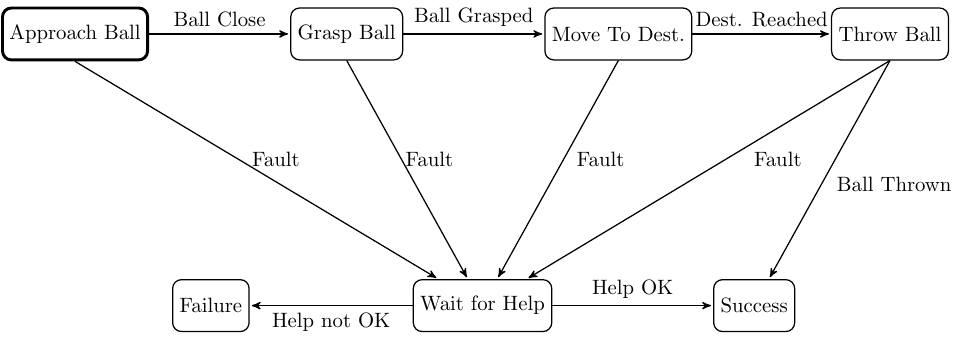} 
   \caption{Graphical representation of a FSM designed to carry out a simple grab-and-throw task. The initial state has a thicker border, and events names are given next to the corresponding transition arrows.}
    \label{Arch.fig.FSM}
\end{figure}


\subsection{Advantages and disadvantages}
FSMs are widely used due to their three main advantages:

\begin{itemize}
\item Very common structure, used in many different parts of computer science.
\item  Intuitive and easy to understand.
\item Easy to implement. 
\end{itemize}

However, the drawbacks of FSMs gives rise to problems when the system modelled grows in complexity and number of states, as described briefly in Section~\ref{sec:modularity}.
In particular we have the following drawbacks

\begin{itemize}
\item Reactivity/Modularity tradeoff.
A reactive system needs many transitions, and every transition corresponds to a Goto statement, see Section~\ref{sec:modularity}. In particular, the transitions give rise to the problems below:
\begin{itemize}
\item Maintainability: Adding or removing states requires the re-evaluation a potentially large number of transitions and internal states of the FSM. This makes FSMs highly susceptible to human design errors and 
impractical from an automated design perspective.
\item Scalability: FSMs with many states and many transitions between them are hard to modify, for both humans and computers.
\item Reusability: The transitions between states may depend on internal variables, making it unpractical to reuse the same sub-FSM  in multiple projects.
\end{itemize}
\end{itemize}

\section{Hierarchical Finite State Machines}
Hierarchical FSMs (HFSMs)\label{definition:HFSM}, also known as State Charts~\cite{Harel87statecharts}, where developed to alleviate some of the disadvantages of FSMs. In a HFSM, a state can in turn contain one or more substates. A state containing two or more states is called a \emph{superstate}. In a HFSM, a \emph{generalized transition} is a transition between superstates. Generalized transitions can reduce the number of transitions by connecting two superstates rather than connecting a larger number of substates individually. 
Each superstate has one substate identified as the starting state, executing whenever a transition to the superstate occurs.
Figure~\ref{Arch.fig.HFSM} shows an example of a HFSM for a computer game character.

\begin{figure}[h]
\centering
\includegraphics[width = \columnwidth]{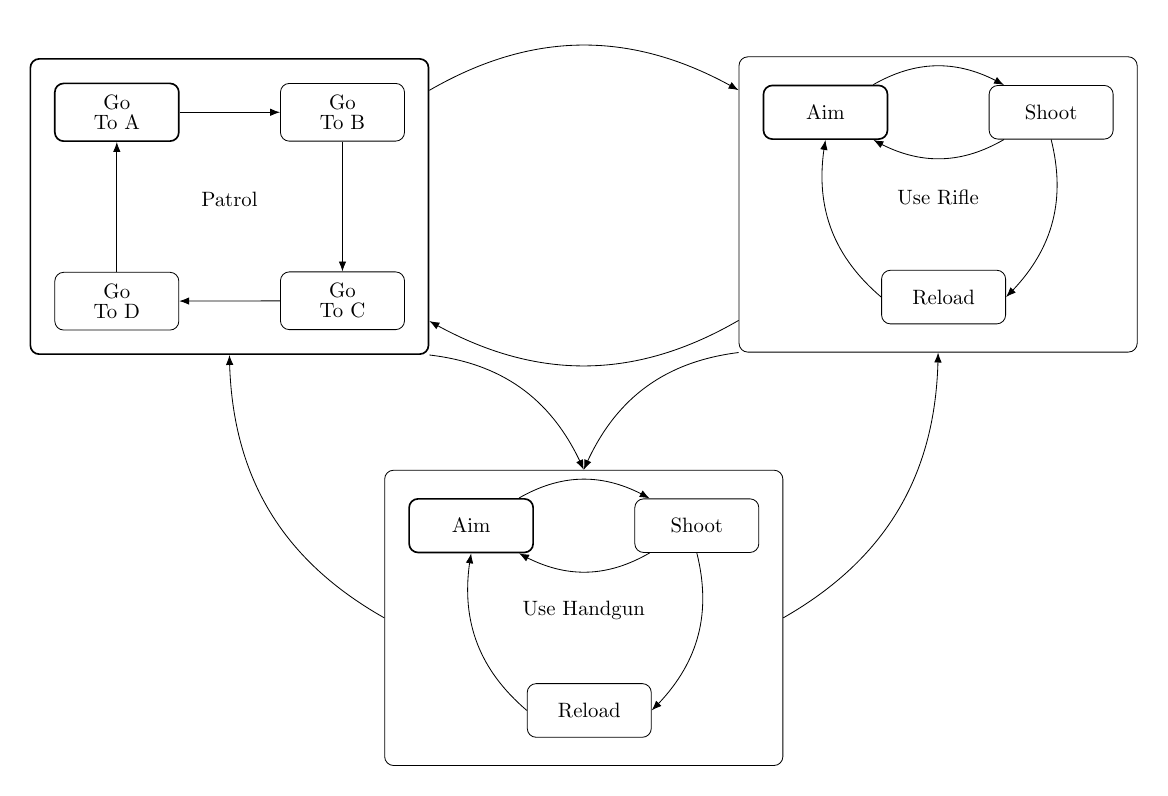} 
   \caption{Example of a HFSM controlling a NPC of a combat game. \emph{Patrol}, \emph{Use Rifle}, and \emph{Use Handgun} are superstates.}
    \label{Arch.fig.HFSM}
\end{figure}

\subsection{Advantages and disadvantages}

The main advantages of HFSMs are:

\begin{itemize}
\item Increased Modularity: it is possible to separate the tasks in subtasks. However these subtasks often still depend on each other through state-dependent transitions.
\item Behavior inheritance: The state nesting in HFSMs allows so-called \emph{behavior inheritance}. Behavior inheritance allows substates to inherit behaviors from the superstate; for example,  in the HFSM depicted in Figure~\ref{Arch.fig.HFSM}, while in the substates inside \emph{Use Handgun}, the character holds the weapon using one hand whereas while in the substates inside \emph{Use Rifle}, the character holds the weapon using two hands. Thus, there is no need for the sub states to specify this property, instead, it is inherited from the superstate.
\end{itemize}
The main disadvantages of HFSMs are:
\begin{itemize}
\item Maintainability: Adding or removing states is still hard. A long sequence of actions, with the possibility of going back in the sequence and re-execute a task that was undone by external agents (e.g. the environment), still requires a fully connected subgraph.
\item Manually created hierarchy: Although HFSMs were conceived as a hierarchical version of FSMs, the hierarchy has to be user defined and editing such a hierarchy  can be difficult. The hierarchy resolves some problems, but a reactive HFSM still results in some sub graphs being fully connected with many possible transitions, see Fig.~\ref{fig:hugeHFSM}.
\end{itemize}

From a theoretical standpoint, every execution described by a BT can be described by a FSM and vice-versa~\cite{ogren,Marzinotto14}. However, due to the number of transitions, using a FSM as a  control architecture  is unpractical for some applications as shown in Chapter~\ref{chap:bts}. Moreover, 
a potential problem is that
a FSM does not assume that the conditions triggering the outgoing transitions from the same state are mutually exclusive. When implemented, the conditions are checked regularly in discrete time, hence there exists a non-zero probability that two or more conditions hold simultaneously after one cycle. To solve this problem we need to redefine some transitions, as done in the FSM in Figure~\ref{Introduction.fig.FSMreactive}, making the propositions of the outgoing transitions mutually exclusive. A FSM of this format is impractical to design for both humans and computers. Manually adding and removing behaviors  is prone to errors. After adding a new state, each existing transition must be re-evaluated (possibly removed or replaced) and new transitions from/to the new state must be evaluated as well. A high number of transitions make any automated process to analyze or synthesize FSMs computationally expensive.  

HFSMs is the most similar  control architecture  to BTs in terms of purpose and use. To compare BTs with HFSMs we use the following complex example. Consider the HFSM shown in Figure~\ref{fig:hugeHFSM} describing the behavior of a humanoid robot. We can describe the same functionality using the BT shown in Figure~\ref{fig:hugeBT}. Note that we have used the standard notation~\cite{Harel87statecharts} of HFSMs to denote two activities running in parallel with a dashed line as separation. One important difference is that, in HFSMs, each layer in the hierarchy needs to be added explicitly, whereas in BTs every subtree can be seen as a module of its own, with the same interface as an atomic action.

In the HFSM shown in Figure~\ref{fig:hugeHFSM}, a proposition needs to be given for each transition, and to improve readability we have numbered these propositions from $C1$ to $C10$.
In the top layer of the HFSM we have the sub-HFSMs of \emph{Self Protection} and  \emph{Perform Activites}.
Inside the latter we have two parallel sub-HFSMs. One is handling the user interaction, while the larger one contains 
a complete directed graph handling the switching between the different activities. Finally, \emph{Play Ball Game} is yet another  sub-HFSM with the 
ball tracking running in parallel with another complete directed graph, handling the reactive switching between  \emph{Approach Ball}, \emph{Grasp Ball}, and \emph{Throw Ball}. 

It is clear from the two figures how modularity is handled by the HFSM. 
The explicitly defined sub-HFSM encapsulates  \emph{Self Protection},  \emph{Perform Activities} and  \emph{Play Ball Game}.
However, inside these sub-HFSMs, the transition structure is a complete directed graph, with $n(n-1)$ transitions that need to be maintained ($n$ being the number of nodes).

\begin{landscape}
 \begin{figure}
 \centering
  \includegraphics[width=\textwidth]{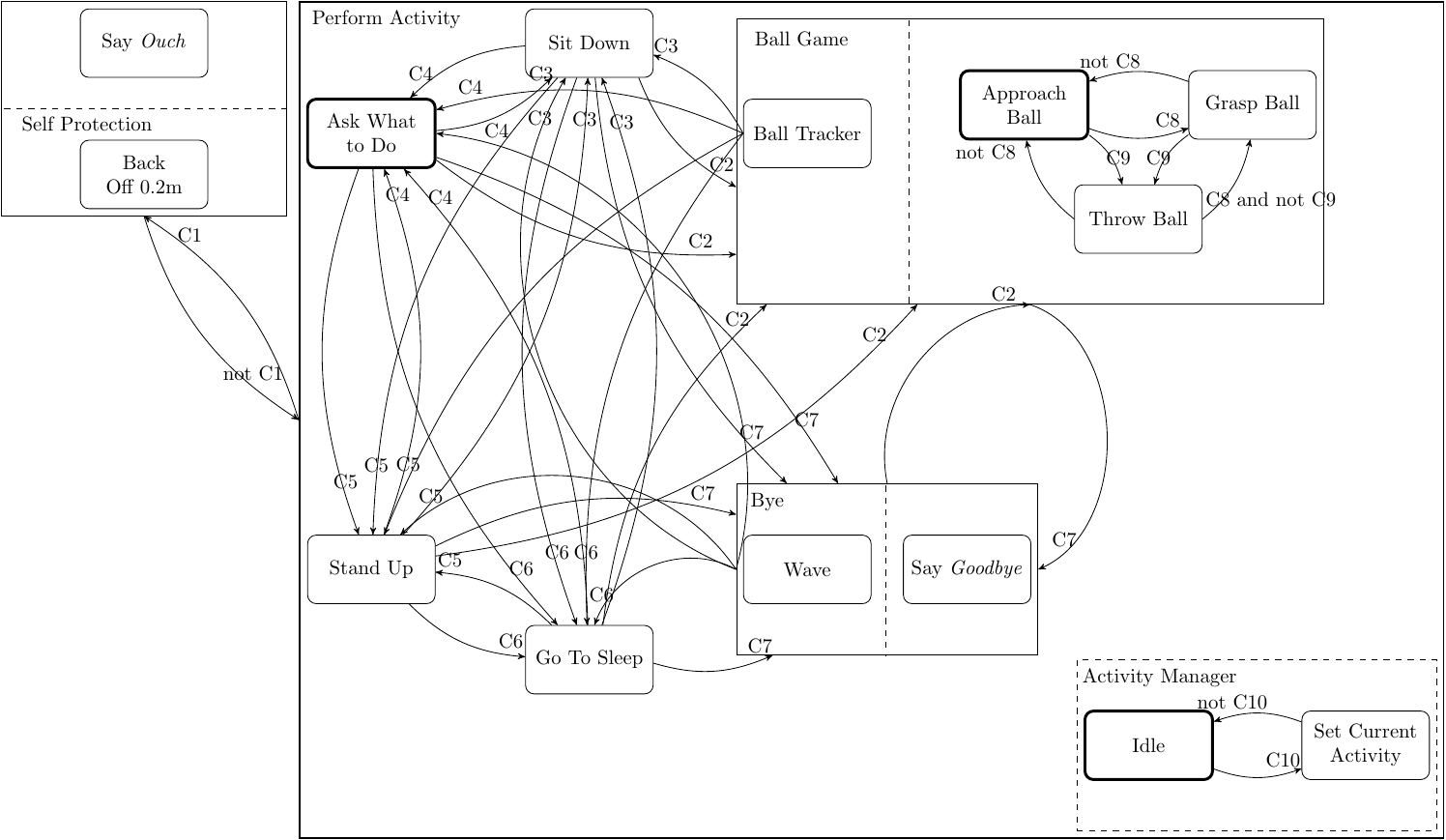}
  \caption{A HFSM description of the BT in Figure \ref{fig:hugeBT}. The transition conditions are shown at the end of each arrow to indicate the direction of the transition. Note how the complexity of the transitions \emph{within} each layer of the HFSM grows with the number of nodes. The conditions labels are: $C1= \mbox {Bumper Pressed}$, $C2= \mbox {Activity Ball Game}  $, $C3= \mbox {Activity Sit} $, $C4=\mbox {Not Know What to Do} $, $C5= \mbox {Activity Stand Up} $, $C6= \mbox {Activity Sleep} $, $C7=\mbox {Say Goodbye} $, $C8= \mbox {Ball Close} $, $C9= \mbox {Ball Grasped}$, $C10= \mbox {New User Suggestion}$. }
  \label{fig:hugeHFSM}
 \end{figure}
\end{landscape}

\begin{landscape}
 \begin{figure}\centering
  \includegraphics[width=\textwidth]{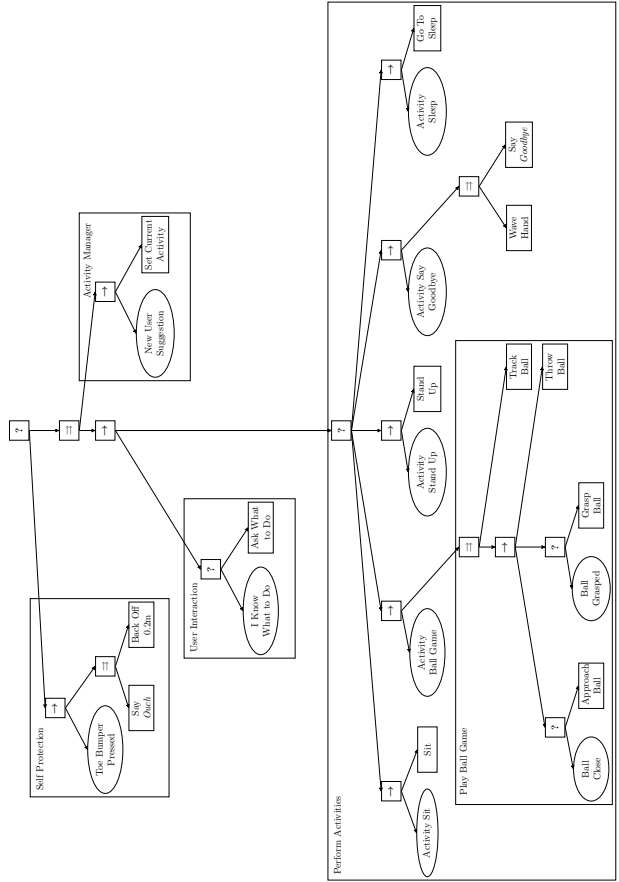}
  \caption{A BT that combines some capabilities  of a humanoid robot in an interactive and modular way. Note how atomic actions can easily be replaced by more complex sub-BTs.}
  \label{fig:hugeBT}
 \end{figure}
\end{landscape}


\newpage
\subsection{Creating a FSM that works like a BTs }
\label{btsvsothers:sec:FSMandBTs}

As described in Chapter~\ref{chap:bts}, each BT returns \emph{Success}, \emph{Running} or \emph{Failure}.
Imagine we have a state in a FSM that has 3 transitions, corresponding to these 3 return statements.
Adding a Tick source that collect the return transitions and transfer the execution back into the state, as depicted in Figure \ref{btsvsothers:FSMticksource}, we have a structure that resembles a BT.

 \begin{figure}[h]
\begin{center}
\includegraphics[width=5cm]{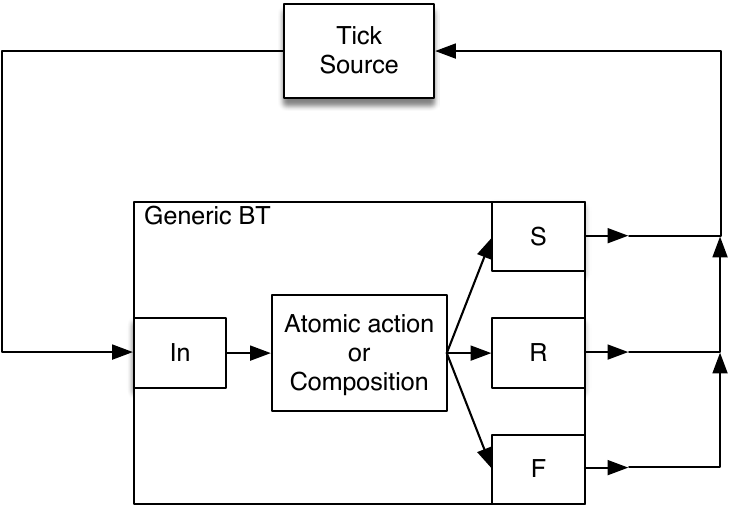}
\caption{An FSM behaving like a BT, made up of a single normal state, three out transitions Success (S), Running (R) and Failure (F), and a Tick source. }
\label{btsvsothers:FSMticksource}
\end{center}
\end{figure}

We can now compose such FSM states using both Fallback and Sequence constructs. The FSM corresponding to the Fallback example in Figure~\ref{btsvsothers:subsumption} would then look like the one shown in Figure \ref{btsvsothers:FSMfallback}.

\begin{figure}[h]
\begin{center}
\includegraphics[width=3.5cm]{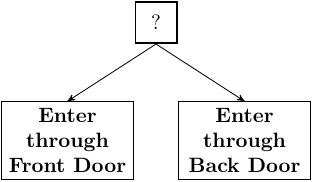}
\caption{
A Fallback is used to create an \emph{Enter Building} BT.
The back door option is only tried if the front door option fails. }
\label{btsvsothers:subsumption}
\end{center}
\end{figure}

 \begin{figure}[h]
\begin{center}
\centering
\includegraphics[width=0.8\columnwidth]{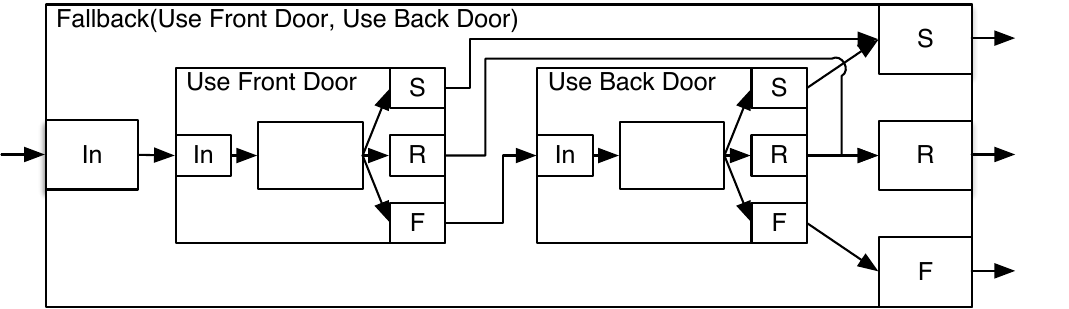}
\caption{A FSM corresponding to the Fallback BT in Figure~\ref{btsvsothers:subsumption}.
Note how the second state is only executed if the first fails.}
\label{btsvsothers:FSMfallback}
\end{center}
\end{figure}

Similarly, the FSM corresponding to the sequence example in Figure~\ref{btsvsothers:recipe} would then look like the one shown in Figure \ref{btsvsothers:FSMsequence}, and a two level BT, such as the one in  Figure~\ref{btsvsothers:comb1} would look like Figure~\ref{btsvsothers:FSMcombo}.

\begin{figure}[h]
\begin{center}
\includegraphics[width=3.5cm]{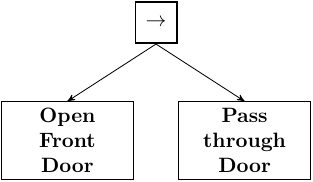}
\caption{
A Sequence is used to to create an \emph{Enter Through Front Door} BT.
Passing the door  is only tried if the opening action succeeds.}
\label{btsvsothers:recipe}
\end{center}
\end{figure}

 \begin{figure}[h]
\begin{center}
\centering
\includegraphics[width=0.8\columnwidth]{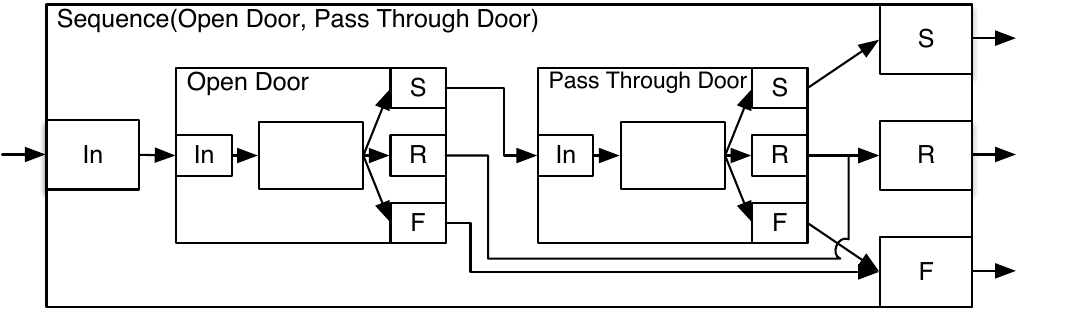}
\caption{An FSM corresponding to the Sequence BT in Figure~\ref{btsvsothers:recipe}.  
Note how the second state is only executed if the first succeeds.
}
\label{btsvsothers:FSMsequence}
\end{center}
\end{figure}

\begin{figure}[htbp]
\begin{center}
\includegraphics[width=6cm]{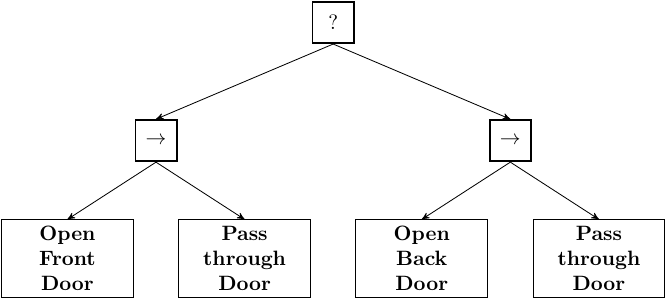}
\caption{The two BTs in Figures \ref{btsvsothers:subsumption}
 and \ref{btsvsothers:recipe} are combined to larger BT. If e.g. the robot opens the front door, but does not manage to pass through it, it will try the back door.}
\label{btsvsothers:comb1}
\end{center}
\end{figure}

 \begin{figure*}[htbp]
\begin{center}
\includegraphics[width=\columnwidth]{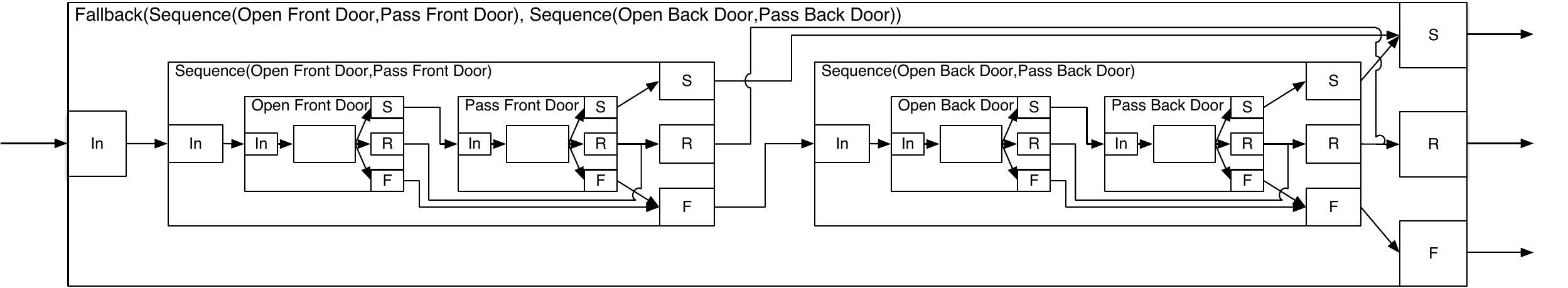}
\caption{An FSM corresponding to the BT in Figure~\ref{btsvsothers:comb1}. 
 }
\label{btsvsothers:FSMcombo}
\end{center}
\end{figure*}

\begin{figure}[htbp]
\begin{center}
\includegraphics[width=7cm]{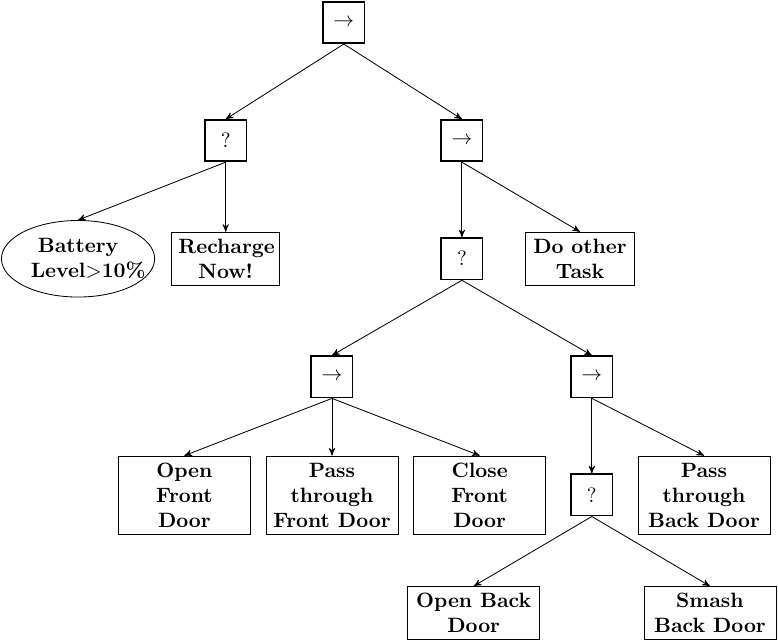}
\caption{Combining the BTs above and some additional Actions, we get a flexible BT for entering a building and performing some task.}
\label{btsvsothers:comb2}
\end{center}
\end{figure}

A few observations can be made from the above examples.
First, it is perfectly possible to design FSMs with a structure taken from BTs.
Second, considering that a BT with 2 levels corresponds to the FSM in Figure~\ref{btsvsothers:FSMcombo}, a BT with 5 levels, such as the one in Figure~\ref{btsvsothers:comb2} would correspond to a somewhat complex FSM. 

Third, and more importantly, the \emph{modularity} of the BT construct is illustrated 
in  Figures~\ref{btsvsothers:FSMticksource}-\ref{btsvsothers:FSMcombo}. 
Figure~\ref{btsvsothers:FSMcombo} might be complex, but that complexity is encapsulated in a box with a single in-transition and three out-transitions, just as the box in Figure~\ref{btsvsothers:FSMticksource}.

Fourth, as was mentioned in Section~\ref{sec:modularity}, the decision of what to do after a given sub-BT returns is always decided on the parent level of that BT. The sub-BT is ticked, and returns \emph{Success}, \emph{Running} or \emph{Failure} and the parent level decides whether to tick the next child, or return something to its own parent. Thus, the BT ticking and returning of a sub-BT is similar to a \emph{function call} in a piece of source code, just as described in Section~\ref{sec:modularity}. A function call in Java, C++, or Python moves execution to another piece of the source code, but then returns the execution to the line right below the function call. What to do next is decided by the piece of code that made the function call, not the function itself.
As discussed, this is quite different from standard FSMs where the decision of what to do next is decided by the  state being transitioned to, in a way that resembles the Goto statement.


\subsection{Creating a BT that works like a FSM }
\label{btsvsothers:sec:FSMandBTs_part2}

If you have a FSM design and want to convert it to a BT, the most straight forward way is to create a \emph{State Variable} available to all parts of the BT and then
list all the states of the FSM and their corresponding transitions and actions as shown in Figure~\ref{Arch.fig.FSMasBT}.

\begin{figure}[h!]
\centering
\includegraphics[width = 0.7\columnwidth]{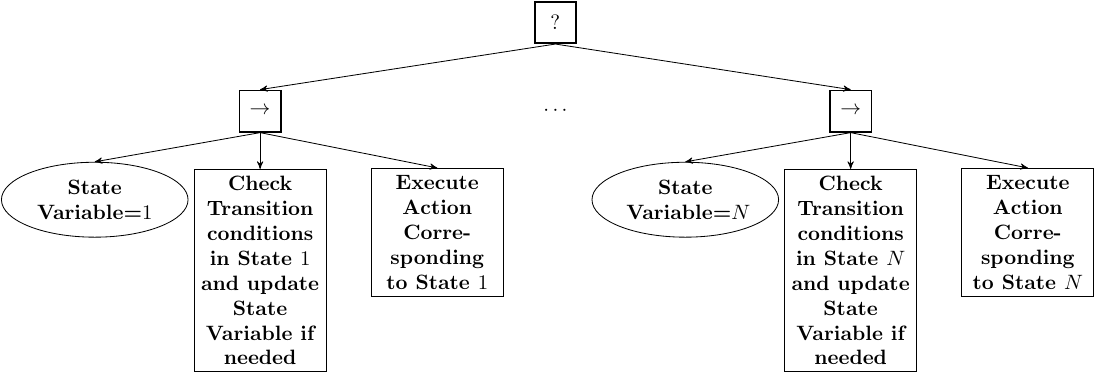} 
   \caption{Example of a straightforward translation of a FSM to a BT using a global \emph{State Variable}.}
    \label{Arch.fig.FSMasBT}
\end{figure}

\section{Subsumption Architecture}
\label{sec:SA}
The Subsumption Architecture~\cite{brooks1986robust} is heavily associated with the behavior-based robotic architecture, which was very popular in the late 1980s and 90s. 
This architecture has been widely influential in autonomous robotics and elsewhere in real-time AI and  has found a number of successful applications~\cite{brooks1990elephants}.
The basic idea of the Subsumpion Architecture is to have several controllers, each one implementing a task, running in parallel. Each controller is allowed to output both its actuation commands and a binary value that signifies if it wants to control the robot or not. The controllers are ordered according to some priority (usually user defined), and the highest priority controller, out of the ones that want to control the robot, is given access to the actuators. Thus, a controller with a higher priority is able to subsume a lower level one. Figure~\ref{Arch.fig.subsump} shows an example of a Subsumption Architecture.

\begin{figure}[h]
\centering
\includegraphics[width = 0.7\columnwidth]{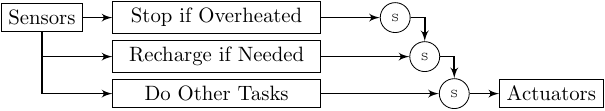} 
   \caption{Example of Subsumption Architecture composed by three controllers. The controller \emph{Stop if Overheated} subsumes the controller \emph{Recharge if Needed}, which subsumes the controller \emph{Do Other Tasks}.}
    \label{Arch.fig.subsump}
\end{figure}

\subsection{Advantages and disadvantages}

The Subsumption Architecture has many practical advantages, in particular:

\begin{itemize}
\item Easy development: The Subsumption Architecture is naturally well suited for iterative development and testing.
\item Modularity: The Subsumption Architecture connects limited, task-specific actions.
\item Hierarchy: The controllers are hierarchically ordered, which makes it possible to define high priority behaviors (e.g. safety guarantees) that override others.
\end{itemize}
The main disadvantages of the Subsumption Architecture are:
\begin{itemize}
\item Scalability:  Designing complex action selection through a distributed system of inhibition and suppression can be hard.
\item Maintainability: Due to the lack of structure, the consequences of adding or removing controllers can be hard to estimate. 
\end{itemize}

\subsection{How BTs Generalize the Subsumption Architecture}
There is a straightforward mapping from a Subsumption Architecture design to a BT using a Fallback node.
If each controller in the Subsumption Architecture is turned into a BT Action, returning running if the binary output indicates that it wants to run and Failure the rest of the time, 
a standard Fallback composition will create an equivalent BT.
As an example we see that the structure in Fig.~\ref{Arch.fig.subsump} is represented by the BT in Fig.~\ref{btsvsothers:fig:subsBT_v2}. A more formal argument using a state space representation of BTs will be given in Section \ref{btsvsothers:sec:analogySA}.


\begin{figure}[htbp]
\begin{center}
\includegraphics[width=6cm]{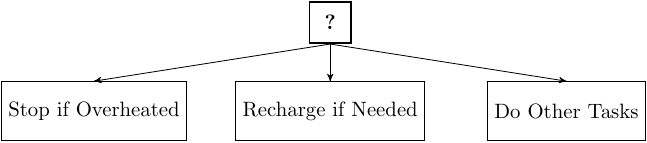}
\caption{A BT version of the subsumption example in Figure  \ref{Arch.fig.subsump}. }
\label{btsvsothers:fig:subsBT_v2}
\end{center}
\end{figure}

\section{Teleo-Reactive programs}
\label{sec:TR}
\label{architectures.tr}
Teleo-Reactive\label{definition:TR} (TR) programs were introduced by Nils Nilsson~\cite{nilsson1994teleo} at Stanford University in 1994 to allow engineers to define the behavior of a robotics system that had to achieve specific goals while being responsive to changes in the environment. A TR program is composed of a set of prioritized condition-action rules that directs the agent towards a goal state (hence the term \emph{teleo}) while monitoring the environmental changes (hence the term \emph{reactive}). In its simplest form, a TR program is described by a list of condition-action rules as the following:
\begin{eqnarray*}
c_1 &\rightarrow& a_1 \\
 c_2  &\rightarrow& a_2 \\
 &\cdots \\
  c_m  &\rightarrow& a_m
\end{eqnarray*}
where the $c_i$ are conditions and $a_i$ are actions. The condition-action rules list is scanned from the top until it finds a condition that holds, then the corresponding action is executed. In a TR program, actions are usually \emph{durative} rather than discrete. A durative action is one that continues indefinitely in time, e.g. the Action \emph{move forwards} is a durative action, whereas the action \emph{take one step} is discrete. In a TR program, a durative action is executed as long as its corresponding condition remains the one with the highest priority among the ones that hold. When the highest priority condition that holds changes, the action executed changes accordingly. Thus, the conditions must be evaluated continuously so that the action associated with the current highest priority condition that holds, is always the one
being executed. A running action terminates when its corresponding condition ceases to hold or when another condition with higher priority takes precedence. Figure~\ref{Arch.fig.tr} shows an example of a TR program for navigating in a obstacle free environment.

\begin{figure}[h]
\centering
\begin{eqnarray*}
 \mbox{Equal(pos,goal)} &\rightarrow& \mbox{Idle} \\
 \mbox{Heading Towards (goal)} &\rightarrow& \mbox{Go Forwards} \\
 \mbox{(else)} &\rightarrow& \mbox{Rotate} 
\end{eqnarray*}
   \caption{Example of teleoreactive program carrying out a navigation task. If the robot is in the goal position, the action performed is \emph{Idle} (no actions executed). Otherwise if it is heading towards the goal, the action performed is \emph{Go Forwards}. Otherwise, the robot performs the action \emph{Rotate}.}
    \label{Arch.fig.tr}
\end{figure}
 
TR programs have been extended in several directions, including integrating TR programs with automatic planning and machine learning~\cite{benson1993reacting,vargas2008solving}, removing redundant parts of a TR program~\cite{mousavi2003simplification}, and using TR programs to play
robot soccer~\cite{gubisch2008teleo}.

\subsection{Advantages and disadvantages}
The main advantages of a TR program are:
\begin{itemize}
\item Reactive execution: TR programs enable  reactive executions by continually monitoring the conditions and aborting actions when needed.
\item Intuitive structure: The list of condition-action rules is intuitive to design for small problems.
\end{itemize}
The main disadvantages of a TR program are:
\begin{itemize}
\item Maintainability: Due to its structure (a long list of rules), adding or removing condition-action rules is prone to cause errors when a TR program has to encode a complex system. In those cases, a TR program takes the shape of a long list.
\item Failure handling: To enable failure handling, a TR program needs to have a condition that checks if an action fails.
\end{itemize}

\subsection{How BTs Generalize Teleo-Reactive Programs}
\label{sec:BT_generalize_TR}

\begin{figure}[htbp]
\begin{center}
\includegraphics[width=0.8\columnwidth]{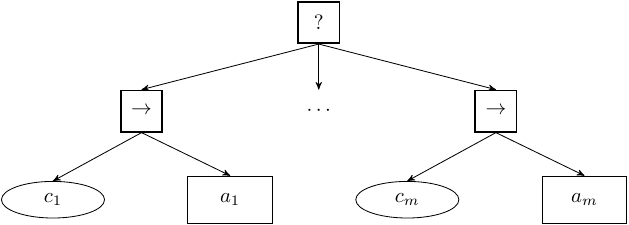}
\caption{The BT that is analogous to a given TR.}
\label{btsvsothers:tr2bt_v2}
\end{center}
\end{figure}

The core idea of continuously checking conditions and applying the corresponding rules can be captured using a Fallback node and pairs of conditions and actions. Thus, a general TR program can be represented in the BT of Fig.~\ref{btsvsothers:tr2bt_v2}.
A more formal argument using a state space representation of BTs will be given in Section~\ref{btsvsothers:sec:analogyTRs}.

\section{Decision Trees}
\label{sec:DT}
A Decision Tree is a directed tree that represents a list of nested if-then clauses used to derive decisions~\cite{sammut20027}. Leaf nodes describe decisions, conclusions, or actions to be carried out, whereas non-leaf nodes describe predicates to be evaluated. Figure~\ref{Arch.fig.dt} shows a Decision Tree where according to some conditions, a robot will decide what to do.

\begin{figure}[h]
\centering
\includegraphics[width = 0.8\columnwidth]{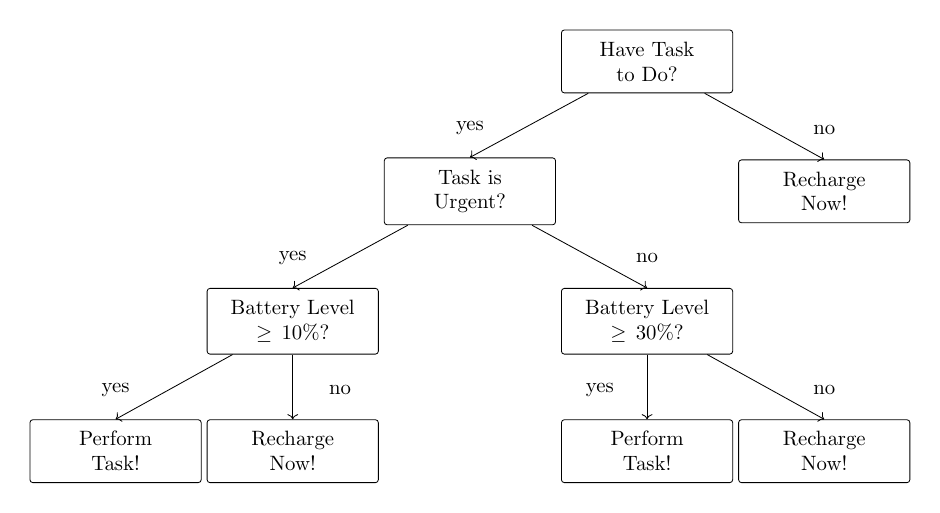} 
   \caption{Example of a Decision Tree executing a generic robotic task. The predicate are evaluated traversing the tree in a top-down fashion.}
    \label{Arch.fig.dt}
\end{figure}

\subsection{Advantages and disadvantages}

The main advantages of a Decision Tree are:
\begin{itemize}
\item Modularity: The Decision Tree structure is modular, in the sense that a subtree can be developed independently from the rest of the Decision Tree, and added where suitable.
\item Hierarchy: Decision Tree's structure is hierarchical, in the sense that predicates are evaluated in a top-down fashion.
\item Intuitive structure: It is straightforward to design and understand Decision Trees.
\end{itemize}
The main disadvantages of a Decision Tree are:
\begin{itemize}
\item No information flow out from the nodes, making failure handling very difficult

\end{itemize}

\subsection{How BTs Generalize Decision Trees}
\label{sec:BT_generalize_DT}

\begin{figure}[htbp]
\begin{center}
\includegraphics[width=\columnwidth]{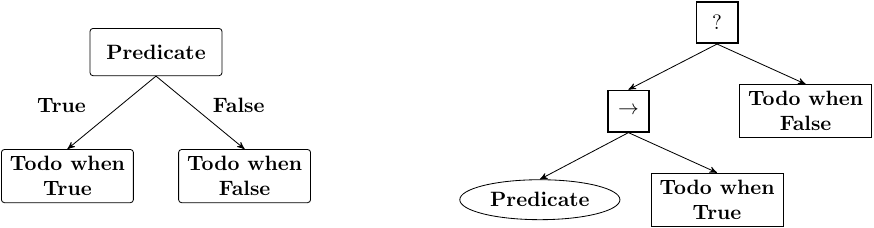}
\caption{The basic building blocks of Decision Trees are `If ... then ... else ...' statements (left), and those can be created in BTs as illustrated above (right). }
\label{btsvsothers:fig:decisionTreeEq_v2}
\end{center}
\end{figure}


\begin{figure}[htbp]
\begin{center}
\includegraphics[width=0.6\columnwidth]{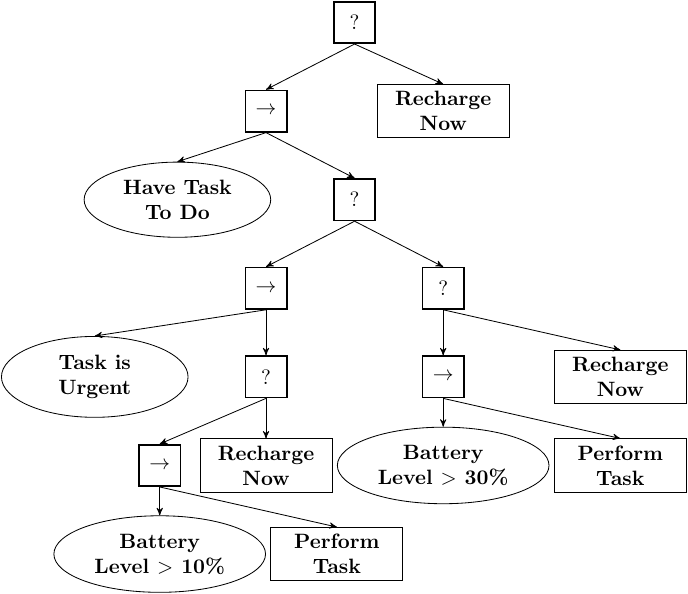}
\caption{A BT that is equivalent to the Decision Tree in Figure  \ref{Arch.fig.dt}. 
}
\label{btsvsothers:fig:decisionTreeBTeq_v2}
\end{center}
\end{figure}

A general Decision Tree can be converted into a BT using the mapping shown in Fig.~\ref{btsvsothers:fig:decisionTreeEq_v2}. By converting the predicate to a condition, letting the leaves be Action nodes always returning Running, we can map each decision node of the Decision Tree to a small BT. Applying the mapping to the Decision Tree of Fig.~\ref{Arch.fig.dt} we get the BT of Fig.~\ref{btsvsothers:fig:decisionTreeBTeq_v2}.
A more formal argument using a state space representation of BTs will be given in Section \ref{btsvsothers:sec:analogyDTs}.
Note that this structure requires actions always returning Running, reflecting the drawback of Decision Trees that no information flows out of the actions.

\section{Advantages and Disadvantages of Behavior Trees}
\label{btasca.sec.properties} 
Having looked at how BTs relate to a set of existing control architectures we will now take a step back and list a number of advantages and disadvantages 
of BTs.

\subsection{Advantages}

As described in Section~\ref{sec:modularity} many advantages stem from BTs being both modular and reactive.
Below we list a set of advantages of BTs.


\begin{description}
\item[Modular:] 
By modular, we mean the degree to which a system's components may be separated into building blocks, and recombined \cite{gershenson2003product}. 
A modular system can be designed, implemented, tested and reused one module at a time.
The benefits of modularity thus increases, the more complex a system is, by 
enabling a divide and conquer approach when designing, implementing and testing.

BTs are modular, since each subtree of a BT can be seen as a module in the above sense, with a standard interface given by the return statuses. 
Thus,  BTs are modular on all scales ranging from the topmost subtrees to all the leaves of the tree.


\item[Hierarchical organization:] 
If a control architecture contains several levels of decision making it is hierarchical.
The possibility of designing and analyzing structures on different hierarchical levels
 is important for both humans and computers, as it enables e.g., iterative refinement and extensions of a plan, see Section~\ref{design:sec:back_chaining}. BTs are hierarchical, since each level of a BT automatically defines a  level in the hierarchy.

 \item[Reusable code:] 
Having reusable code is very important in any large, complex, long-term project. The ability to reuse designs relies  on the ability to build larger things from smaller parts, and on the independence of the input and output of those parts from their use in the project. 
 To enable reuse of code, each module must interface the  control architecture  in a clear and well-defined fashion.

BTs enable  reusable code, since given the proper implementation, any subtree can be reused in multiple places of a BT.
Furthermore, when writing the code of a leaf node, the developer needs to just take care of returning the correct return status which is universally predefined as either \emph{Running}, \emph{Success}, or \emph{Failure}. Unlike FSMs and HFSMs, where the outgoing transitions require  knowledge about the next state, in BTs leaf nodes are developed disregarding which node is going to be executed next. Hence, the BT logic is independent from the leaf node executions and viceversa.

\item[Reactivity:] By reactive we mean the ability to quickly and efficiently react to changes.
 For unstructured environments, where outcomes of actions are not certain and the state of the world is constantly changed by external actors,
 plans that were created offline and then executed in an open loop fashion are often likely to fail.

BTs are reactive, since
the continual generation of ticks and their tree traversal result in a closed loop execution. Actions are executed and aborted according to the ticks' traversal, which depends on the leaf nodes' return statuses. Leaf nodes are tightly connected with the environment (e.g. condition nodes evaluate the overall system properties and Action nodes return  \emph{Failure}/\emph{Success} if the action failed/succeeded). 
Thus, BTs are highly responsive to changes in the environment.

\item[Human readable:] A readable structure is desirable for reducing the cost of development and debugging, especially when the task is human designed. The structure should remain readable even for large systems. Human readability requires a coherent and compact structure.

BTs are human readable due to their tree structure and modularity.

\item[Expressive:] A control architecture must be sufficiently expressive to encode a large variety of behaviors.

BTs are at least as expressive as FSMs, see Section~\ref{sec:FSM}, the Subsumption Architecture, see Section~\ref{sec:SA}, Teleo-Reactive programs, see Section~\ref{sec:TR}, and Decision Trees, see Section~\ref{sec:DT}.

\item[Suitable for analysis:] Safety critical robot applications often require an analysis of qualitative and quantitative system properties. These properties include: safety, in the sense of avoiding irreversible undesired behaviors;  robustness, in the sense of a large domain of operation; efficiency, in the sense of time to completion; reliability, in the sense of success probability; and composability, in the sense of analyzing whether properties are preserved over compositions of subtasks. 

BTs have tools available to evaluate such system properties, see Chapters \ref{ch:properties} and \ref{ch:stochastic}.

\item[Suitable for automatic synthesis:] In some problem instances, it is preferable that the action ordering of a task, or a policy, is automatically synthesized using task-planning or machine learning techniques. The  control architecture  can influence the efficiency of such synthesis techniques (e.g. a FSM with a large number of transitions can drastically deteriorate the speed of an algorithm that has to consider all the possible paths in the FSMs).

BTs are suitable for automatic synthesis in terms of both planning, see Section~\ref{design:sec:back_chaining} and in more detail Chapter~\ref{ch:planning} and learning, see Chapter~\ref{ch:learning}.

\end{description}

To  illustrate the advantages listed above, we consider the following simple example. 

\begin{example}
\label{Introduction.ex.motivating}
A robot is tasked to find a ball, pick it up, and place it into a bin. If the robot fails to complete the task, it should go to a safe position and wait for a human operator. After picking up the ball (Figure~\ref{IN.fig.FSMEx1}), the robot moves towards the bin (Figure~\ref{IN.fig.FSMEx2}). While moving towards the bin, an external entity takes the ball from the robot's gripper (Figure~\ref{IN.fig.FSMEx3}) and immediately throws it in front of the robot, where it can be seen (Figure~\ref{IN.fig.FSMEx4}). The robot aborts the execution of moving and it starts to approach the ball again.  
\end{example}

\begin{figure}[h!]
    \centering
    \begin{subfigure}[t]{0.45\columnwidth}
        \centering
\includegraphics[width = \columnwidth, trim={18cm 1cm 8cm 14cm},clip]{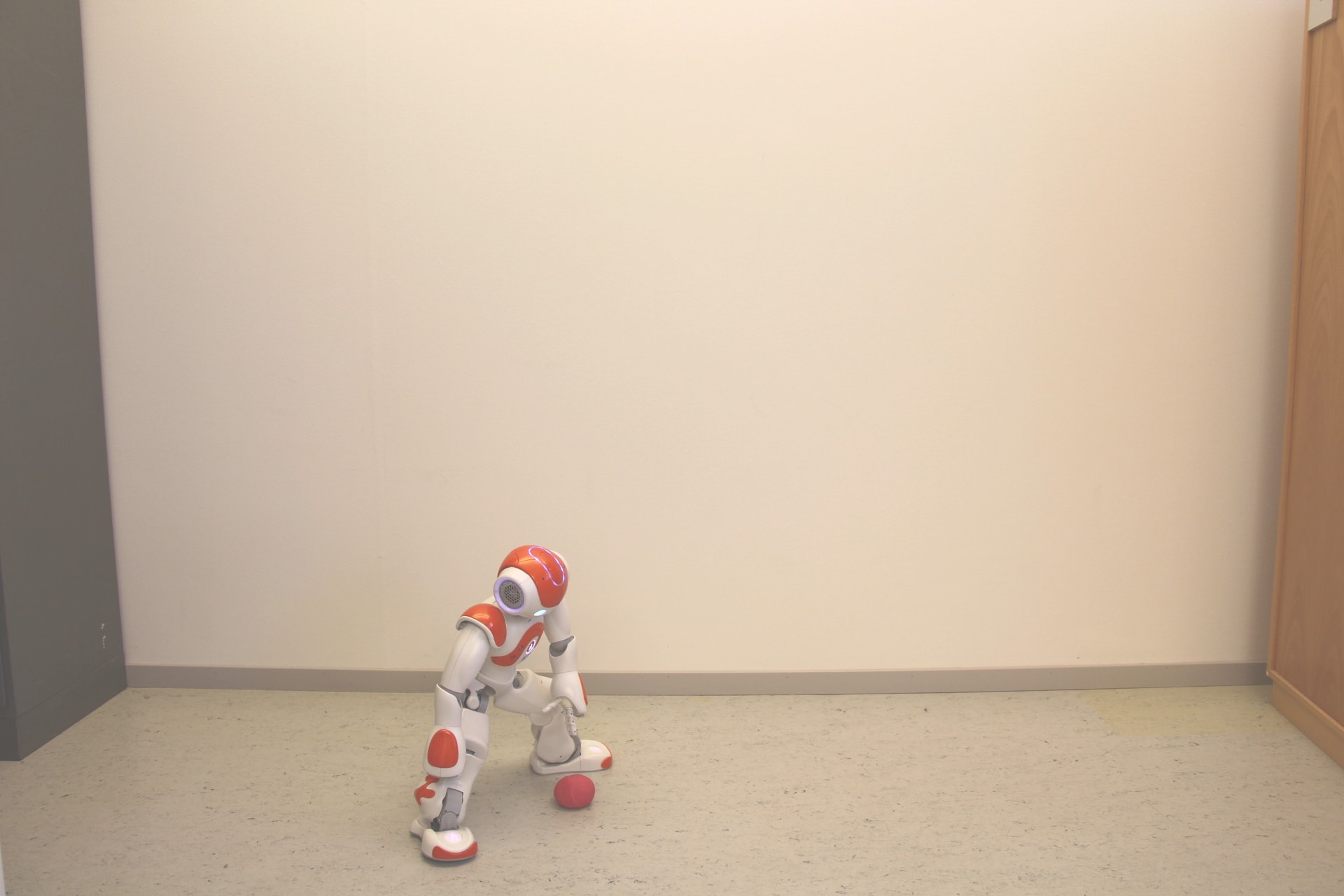} 
   \caption{The robot is picking up the ball.}
       \label{IN.fig.FSMEx1}
    \end{subfigure}
    ~  \begin{subfigure}[t]{0.45\columnwidth}
        \centering
\includegraphics[width = \columnwidth, trim={18cm 1cm 8cm 14cm},clip]{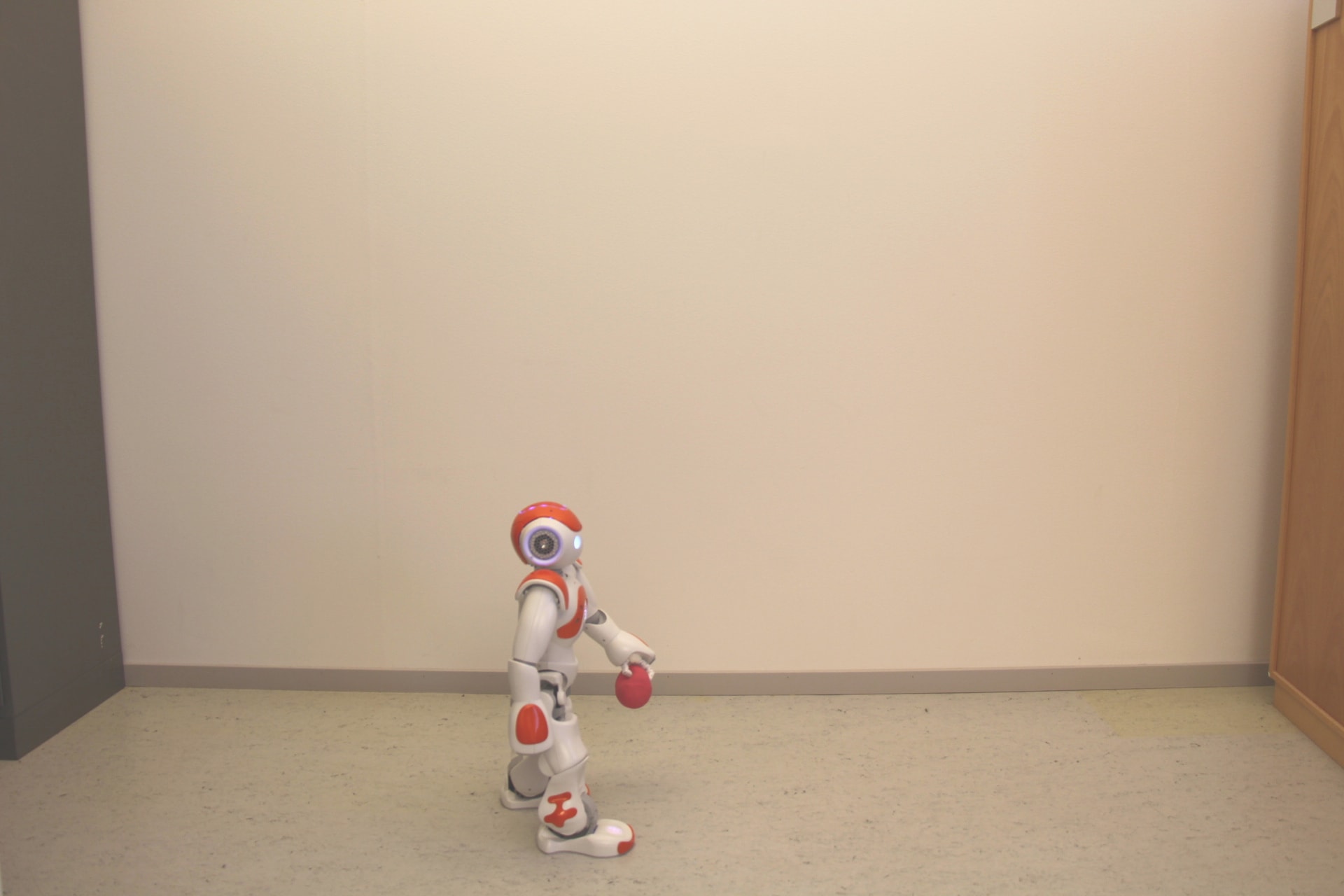} 
   \caption{The robot moves toward the bin (far away from the robot) with the ball in the hand.}
       \label{IN.fig.FSMEx2}
    \end{subfigure}

        \begin{subfigure}[b]{0.45\columnwidth}
        \centering
\includegraphics[width = \columnwidth, trim={19cm 2cm 7cm 13cm},clip]{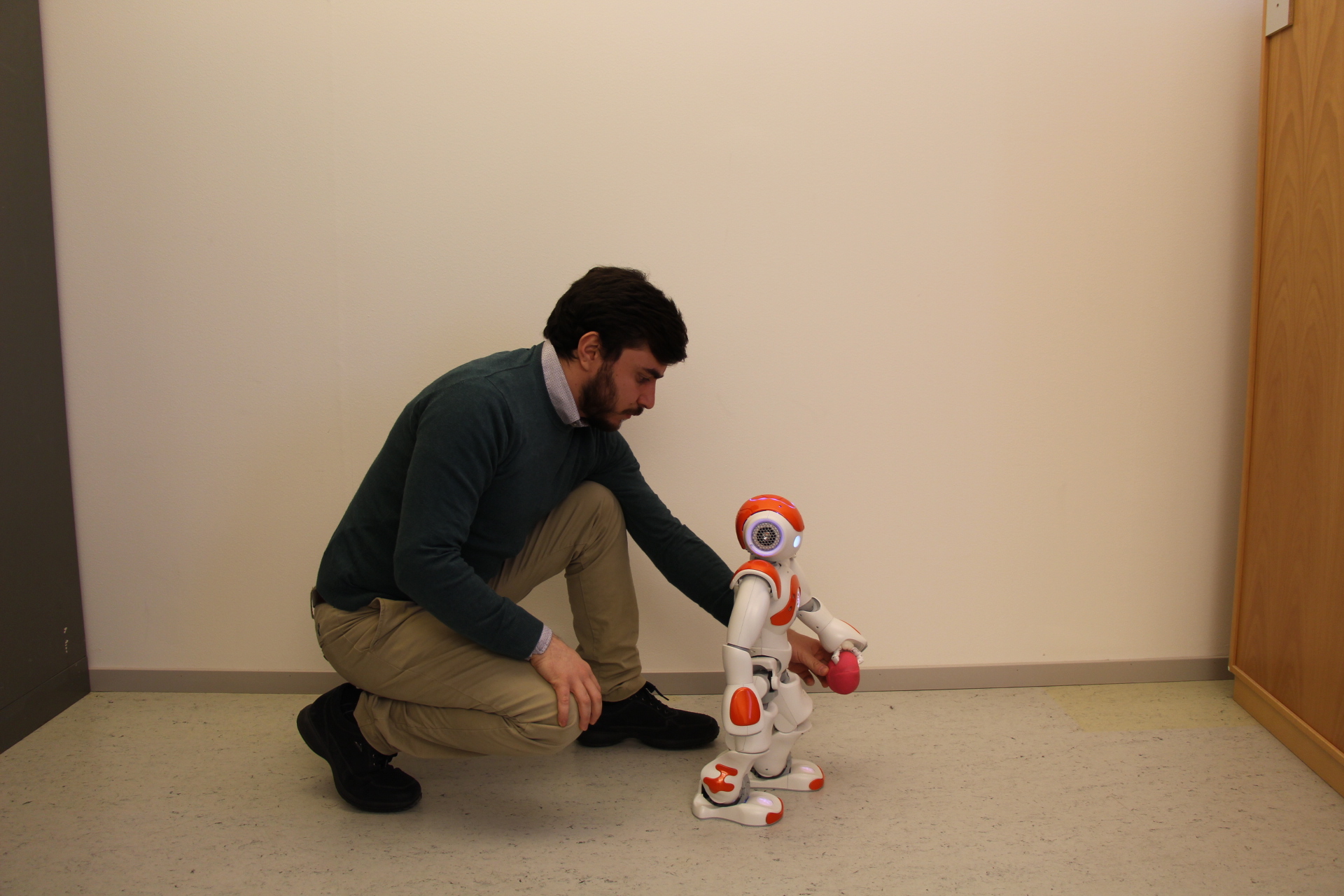} 
   \caption{An external entity (a human) takes the ball from the robot gripper.}
       \label{IN.fig.FSMEx3}
    \end{subfigure}
    ~  \begin{subfigure}[b]{0.45\columnwidth}
        \centering
\includegraphics[width = \columnwidth, trim={18cm 1cm 8cm 14cm},clip]{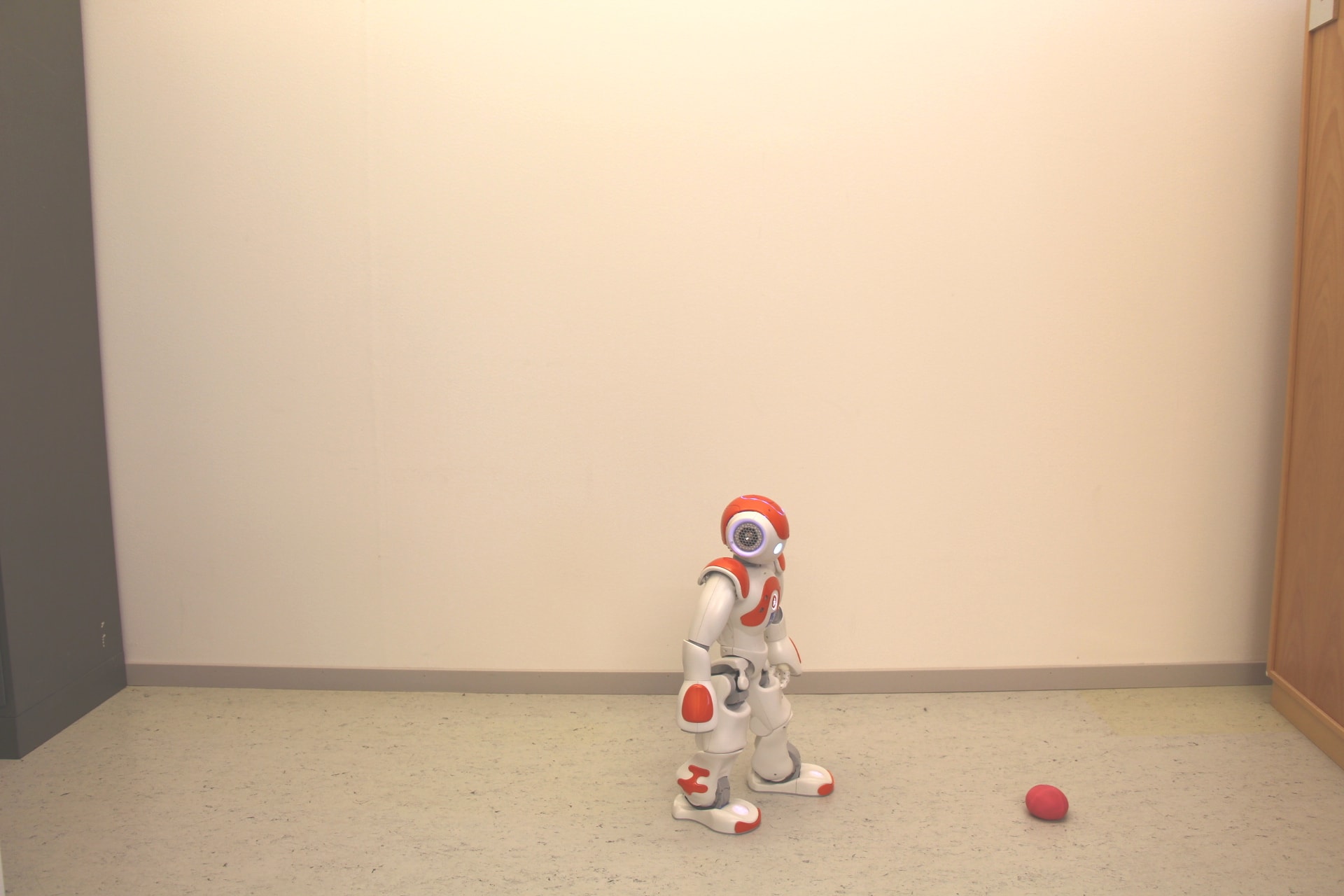} 
   \caption{The robot approaches the ball in the new location.}
       \label{IN.fig.FSMEx4}
    \end{subfigure}
    \caption{Execution stages of Example~\ref{Introduction.ex.motivating}.}
    \label{IN.fig.FSMEx}
\end{figure}

%
%
\begin{landscape}
 \begin{figure}
\includegraphics[width = \columnwidth]{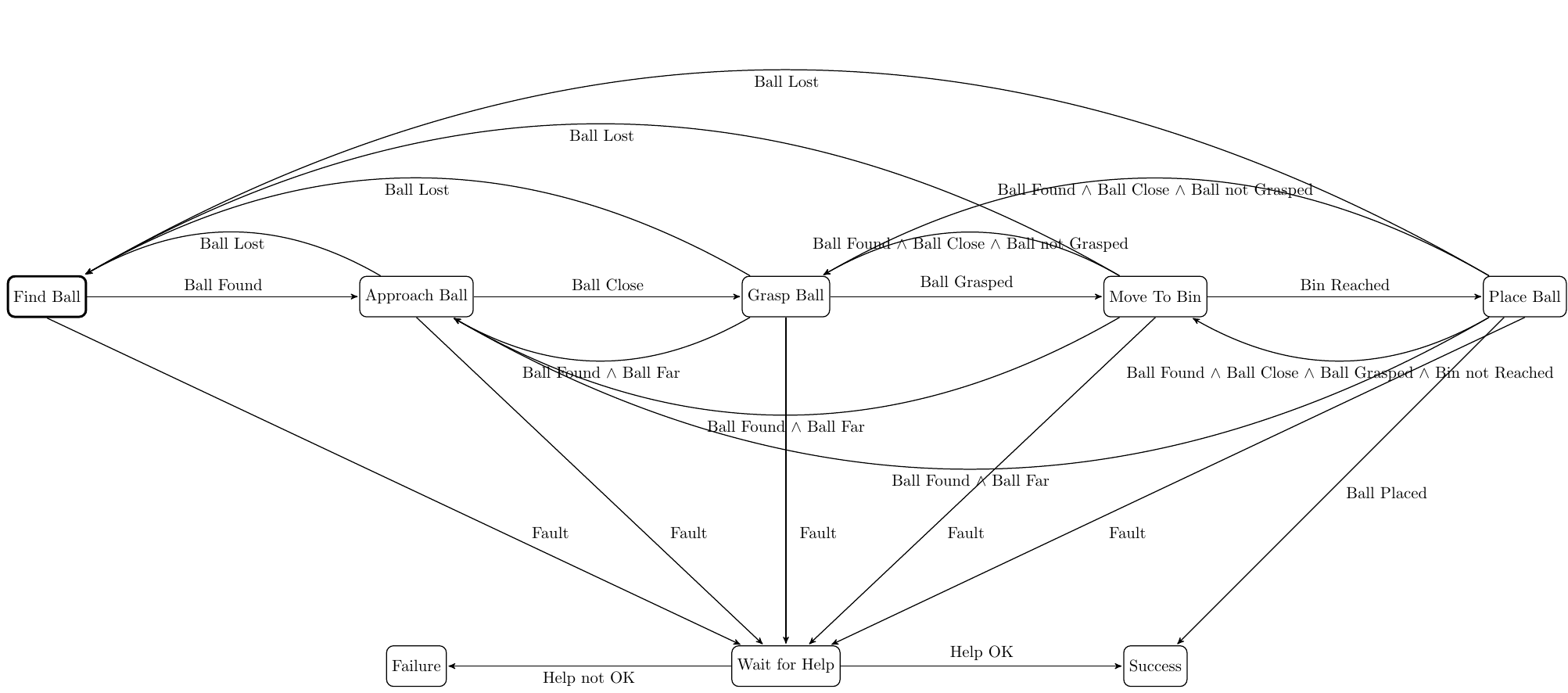} 
   \caption{FSM modeling the robot's behavior in Example~\ref{Introduction.ex.motivating}. The initial state has a thicker border.}
    \label{Introduction.fig.FSMreactive}
 \end{figure}
\end{landscape}
In this example, the robot does not simply execute a pick-and-place task. It \emph{continually} monitors the progress of the actions, stops whenever needed, skips planned actions, decides the actions to execute, and responds to exogenous events. In order to execute some actions, the robot might need to inject new actions into the plan (e.g. the robot might need to empty the bin before placing the ball). Hence the task requires a  control architecture  suitable for extensions. These extensions might be human made (e.g. the robot asks the operator to update the current action policy) requiring an architecture to be \emph{human readable}, or automated (e.g. using model-based reasoning) requiring an architecture to be \emph{suitable for automatic synthesis}. In either case, to be able to easily extend and modify the action policy, its representation must be \emph{modular}. In addition, new actions may subsume existing ones whenever needed (e.g. \emph{empty the bin if it is full} must be executed before \emph{place the ball)}. This requires a \emph{hierarchical} representation of the policy. Moreover there might be multiple different ways of carrying out a task (e.g. picking the ball using the left hand or the right hand). The robot must be able to decide which option is the best, requiring the architecture to be \emph{suitable for analysis}. Finally, once the policy is designed, it is desirable that it can be \emph{reused} in other contexts.  

Most  control architectures lack one or more of   the properties described above. Take as an example a FSM modeling the behavior of the robot in Example~\ref{Introduction.ex.motivating}, depicted in Figure~\ref{Introduction.fig.FSMreactive}. As can be seen, even for this simple example the FSM gets fairly complex with many transitions.

\subsection{Disadvantages}
In this section we describe some disadvantages of BTs. 

\begin{description}
\item [The BT engine can be complex to implement.] The implementation of the BT engine can get complicated using single threaded sequential programming. To guarantee the full functionality of BTs, the tick's generation and traversal should be executed in parallel with the action execution. However the
BT engine only needs to be implemented once, it can be reused, and several BT engines are available as off the shelf software libraries.\footnote{C++ library: \url{https://github.com/miccol/Behavior-Tree} \\ ROS library: \url{http://wiki.ros.org/behavior_tree} \\ 
python library: \url{https://github.com/futureneer/beetree}}
\item [Checking all the conditions can be expensive.] A BT needs to check several conditions to implement the closed-loop task execution. In some applications this checking is expensive or even infeasible. In those cases a closed-loop execution (using any architecture) presents more costs than advantages. However, it is still possible to design an open-loop task execution using BTs with memory nodes, see Section~\ref{bt:sec:mem}. 
\item [Sometimes a feed-forward execution is just fine.] In  applications where the robot operates in a very structured environment, predictable in space and time, BTs do not have any advantages over simpler architectures.
\item [BTs are different from FSMs.] BTs, despite being easy to understand, require a new mindset when designing a solution. The execution of BTs is not focused on states but on conditions and the switching is not event driven but tick driven. 
The ideas presented in this book, and in particular the design principles of Chapter~\ref{ch:design_principles}, are intended to support the design of efficient BTs.
\item [BT tools are less mature.] Although there is software for developing BTs, it is still far behind the amount and maturity of the software available for e.g. FSMs.
\end{description}


\chapter{Design principles}
\label{ch:design_principles}
\graphicspath{{design/}}

BTs are fairly easy to understand and use, but to make full use of their potential it can be good to be aware of a set of design principles that can be used in different situations. In this chapter, we will describe these principles using a number of examples.
First, in Section~\ref{sec:explicit_conditions}, we will describe the benefit of using explicit success conditions in sequences,
then, in Section~\ref{sec:implicit_sequences}, we describe how the reactivity of a BT can be increased by creating implicit sequences, using Fallback nodes.
In Section~\ref{sec:dt}, we show how BTs can  be designed in a way that is similar to Decision Trees.
Then, in Section~\ref{sec:safety}, we show how safety can be improved using sequences.
Backchaining is an idea used in automated planning, and in Section~\ref{design:sec:back_chaining} we show how it can be used to create
deliberative, goal directed, BTs.
Memory nodes and granularity of BTs is discussed in 
 Sections~\ref{sec:design:memory} and \ref{design:sec:granularity}.
Finally, we show how easily all these principles can be combined at different levels of a BT in Section~\ref{design:sec:combinations}.

\section{Improving Readability using Explicit Success Conditions}
\label{sec:explicit_conditions}

One advantage of BTs is that the switching structure is clearly shown in the graphical representation of the tree. However, one thing that is not shown is the details regarding when the individual actions return  Success and Failure.

\begin{figure}[h]
\centering
\includegraphics[width=0.5\columnwidth]{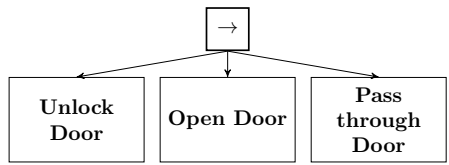}
\caption{Simple Sequence}
\label{design:fig:sequence}
\end{figure}
 Consider the sequence in Figure~\ref{design:fig:sequence}. One can assume that Unlock Door returns Success when it has unlocked the door, but what if it is called when the door is already unlocked? Depending on the implementation it might either return Success immediately, or actually try to unlock the door again, with the possibility of returning Failure if the key cannot be turned further. A similar uncertainty holds regarding the implementation of Open Door (what if the door is already open?) and Pass through Door.
 To address this problem, and remove uncertainties regarding the implementation, explicit Success conditions can be included in the BT.

\begin{figure}[h]
\centering
\includegraphics[width=0.9\columnwidth]{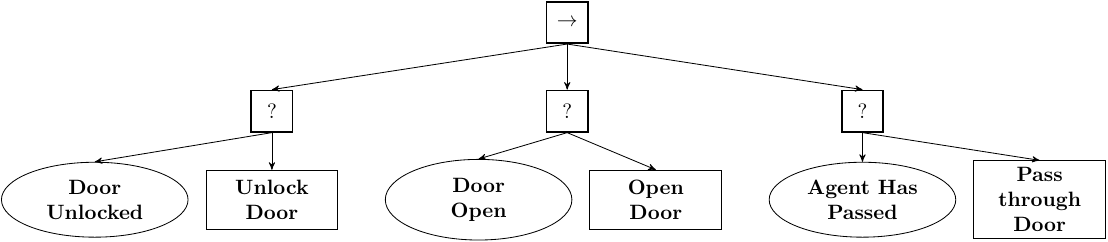}
\caption{Sequence with explicit success conditions. Note how each action is paired with a condition through a Fallback node, making the success condition of the pair explicit.}
\label{design:fig:explicit_sequence}
\end{figure}

In Figure~\ref{design:fig:explicit_sequence}, the BT from Figure~\ref{design:fig:sequence} has been extended to include explicit success conditions. These conditions are added in a pair with the corresponding action using a Fallback node. Now, if the door is already unlocked and open, the two first conditions of Figure~\ref{design:fig:explicit_sequence}
will return Success, the third will return Failure, and the agent will proceed to execute the action Pass through Door.

\section{Improving Reactivity using Implicit Sequences}
\label{sec:implicit_sequences}

It turns out that we can  improve the reactivity of the BT in Figure~\ref{design:fig:explicit_sequence} even further,
using the fact that BTs generalize the Teleo-Reactive approach, see Section~\ref{sec:BT_generalize_TR}.
Consider the case when the agent has already passed the door, but the door is closed behind it. The BT in Figure~\ref{design:fig:explicit_sequence}
would then proceed to unlock the door, open it, and then notice that it had already passed it and return Success.

\begin{figure}[h]
\centering
\includegraphics[width=0.9\columnwidth]{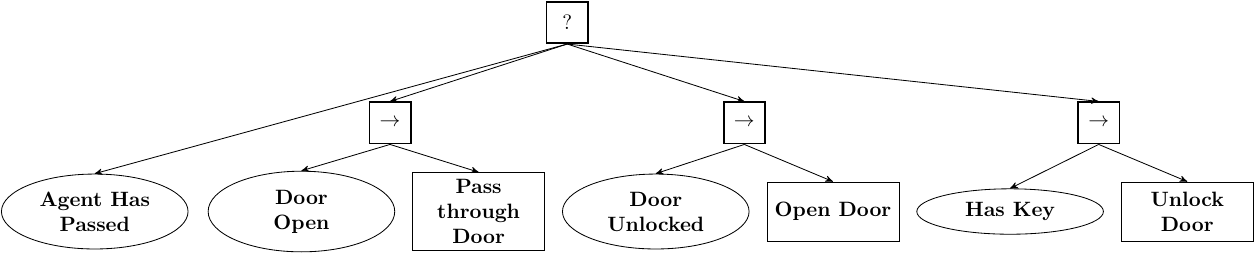}
\caption{An Implicit Sequence is constructed using a Fallback node, reversing the order of the actions and pairing them with appropriate preconditions.}
\label{design:fig:implicit_sequence}
\end{figure}

The key observation needed to improve reactivity is to realize that the goal is to get through the door, and that the other actions are just means to get to
that goal. In the BT in Figure~\ref{design:fig:implicit_sequence} we have reversed the order of the actions in order the check the goal state first.
We then changed fallbacks to sequences and vice versa, and finally changed the conditions. Now, instead of checking outcomes, or success conditions as we did in 
Figure~\ref{design:fig:explicit_sequence}, we check preconditions, conditions needed to execute the corresponding actions, in Figure~\ref{design:fig:implicit_sequence}.
First the BT checks if the agent has passed the door, if so it returns Success. If not, it proceeds to check if the door is open, and if so passes through it. If neither of the previous conditions are satisfied, it checks if the door is unlocked, and if so starts to open it. As a final check, if nothing else returns Success, it checks if it has the key to the door. If it does, it tries to open it, if not it returns Failure.

The use of implicit sequences is particularly important in cases where the agent needs to undo some of its own actions, such as closing a door after passing it.
A systematic way of creating implicit sequences is to use back chaining, as described in Section~\ref{design:sec:back_chaining}.

\section{Handling Different Cases using a Decision Tree Structure}
\label{sec:dt}
Sometimes, a reactive switching policy can be easily described in terms of a set of cases, much like a Decision Tree.
Then, the fact that BTs generalize Decision Trees can be exploited, see Section~\ref{sec:BT_generalize_DT}.

A simple Pac-Man example can be found in Figure~\ref{design:fig:dt}. The cases are separated by the two conditions \emph{Ghost Close} and \emph{Ghost Scared}.
If no ghost is close, Pac-Man continues to eat pills. If a ghost is close, the BT checks the second condition, \emph{Ghost Scared}, which turns true if Pac-Man eats a Power Pill. If the ghost is scared, Pac-Man chases it, if not, Pac-Man avoids the Ghost.

\begin{figure}[h]
\centering
\includegraphics[width=0.4\columnwidth]{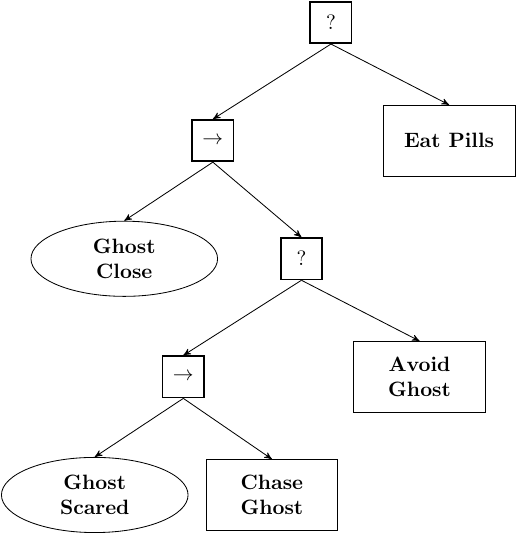}
\caption{Simple Pac-Man example using a Decision Tree structure.}
\label{design:fig:dt}
\end{figure}

\section{Improving Safety using Sequences}
\label{sec:safety}

In some agents, in particular robots capable of performing irreversible actions such as falling down stairs or damaging equipment, it is very important to be able to guarantee that some situations will never occur. These unwanted situations might be as simple as failing to reach the recharging station before running out of battery, or as serious as falling down a staircase and hurting someone.

\begin{figure}[h]
\centering
\includegraphics[height=3cm]{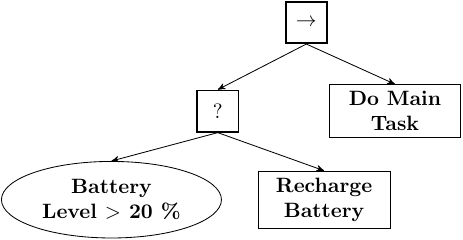}
\caption{A BT that is guaranteed not to run out of batteries, as long as Main Task keeps the robot close enough to the recharging station so that 20\% of battery will be enough to travel back.}
\label{design:fig:safety}
\end{figure}

A Sequence node can be used to guarantee safety, as shown in Figure~\ref{design:fig:safety}.
Looking closer at the BT in Figure~\ref{design:fig:safety} we see that it will probably lead to an unwanted chattering behavior. It will recharge until it reaches just over 20\%
and then start doing Main Task, but the stop as soon as the battery is back at 20\%,
and possibly end up chattering i.e. quickly switching between the two tasks.
The solution is to make sure that once recharging, the robot waits until the battery is back at 100\%. This can be achieved by the BT in Fig~\ref{design:fig:safety_hysteresis}.

\begin{figure}[h]
\centering
\includegraphics[height=4cm]{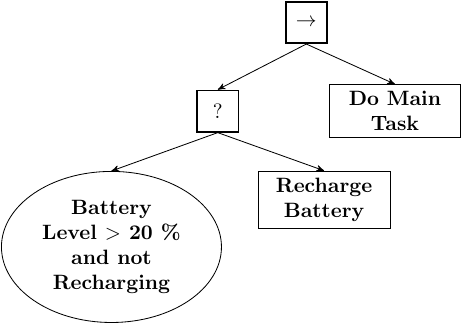}
\caption{By changing the condition in Fig.~\ref{design:fig:safety} the robot now keeps recharging  until the Battery level reaches 100\%. }
\label{design:fig:safety_hysteresis}
\end{figure}

\section{Creating Deliberative BTs using Backchaining}
\label{design:sec:back_chaining}

BTs can also be used to create deliberative agents, where the actions are carried out in order to reach a specific goal. 
We will use an example to see how this is done.
Imagine we want the agent to end up inside a house. To make that goal explicit, we create the trivial BT in Figure~\ref{design:fig:back_chaining_1},
with just a single condition checking if the goal is achieved or not.

\begin{figure}[h]
\centering
\includegraphics[width=0.2\columnwidth]{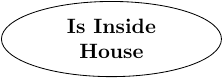}
\caption{A BT composed of a single condition checking if the goal is achieved.}
\label{design:fig:back_chaining_1}
\end{figure}

Now imagine we have a set of small BTs such as the ones in 
Figures~\ref{design:fig:back_chaining_2} and~\ref{design:fig:back_chaining_3}, each on the format of the general Postcondition-Precondition-Action (PPA) BT in 
Figure~\ref{design:fig:back_chaining_general}.

\begin{figure}[h]
\centering
\includegraphics[height=3cm]{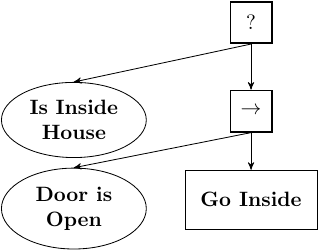}
\caption{PPA for achieving the postcondition Is Inside House. If the postcondition is not satisfied already, the BT checks the precondition Door is Open, if so it executes the action Go Inside.}
\label{design:fig:back_chaining_2}
\end{figure}

If we have such a set, be can work our way backwards from the goal (backchaining) by replacing preconditions with PPAs having the corresponding postcondition. Thus replacing the single condition in Figure~\ref{design:fig:back_chaining_1} with the PPA of Figure~\ref{design:fig:back_chaining_2} we get Figure~\ref{design:fig:back_chaining_2} again, since we started with a single condition.
More interestingly, if we replace the precondition Door is Open in Figure~\ref{design:fig:back_chaining_2} with the PPA of Figure~\ref{design:fig:back_chaining_3} we get the BT of Figure~\ref{design:fig:back_chaining_4}

\begin{figure}[h]
\centering
\includegraphics[width=0.9\columnwidth]{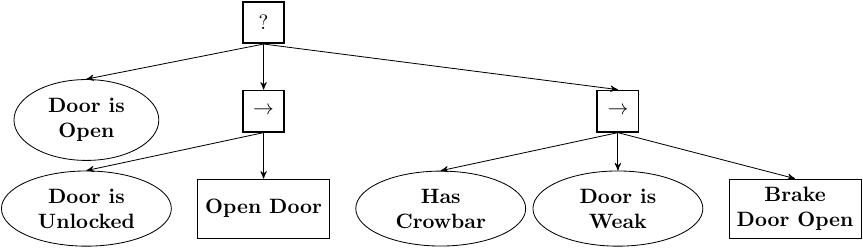}

\caption{PPA for achieving the postcondition Door is Open. If the postcondition is not satisfied, the BT checks the first precondition Door is Unlocked, if so it executes the action Open Door, if not it checks the second set of preconditions, starting with Has Crowbar, if so it checks Door is Weak, if both are satisfied it executes Brake Door Open.}

\label{design:fig:back_chaining_3}
\end{figure}

\begin{algorithm2e}[h]
\KwData{Set of Goal Conditions $C_i$, and a set of PPAs}
\KwResult{A reactive BT working to achieve the $C_i$s }
  Replace all $C_i$ with PPAs having $C_i$ as postcondition\;
 \While{the BT returns Failure when ticked}{
      replace one of the preconditions returning Failure (inside a PPA)
       with another complete PPA having the corresponding condition as postcondition, and therefore including at leaves one action to achieve the failing condition \;
    }    \caption{Pseudocode of Backchaining Algorithm}
  \label{design:alg:backchaining}
\end{algorithm2e}

\begin{figure}[h]
\centering
\includegraphics[width=0.9\columnwidth]{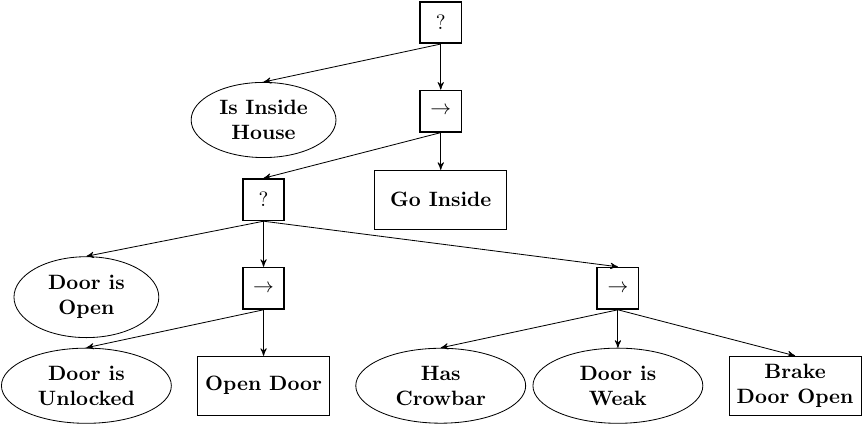}
\caption{The result of replacing \emph{Door is Open} in Figure~\ref{design:fig:back_chaining_2} with the PPA of Figure~\ref{design:fig:back_chaining_3}.}
\label{design:fig:back_chaining_4}
\end{figure}

Thus we can iteratively build a deliberative BT by applying   Algorithm~\ref{design:alg:backchaining}.
Looking at the BT in Figure~\ref{design:fig:back_chaining_4} we note that it first checks if the agent \emph{Is Inside House}, if so it returns Success. If not it checks if \emph{Door is Open}, and if it is, it proceeds to \emph{Go Inside}. If not it checks if \emph{Door is Unlocked} and correspondingly executes \emph{Open Door}. Else it checks if \emph{Door is Weak}, and it \emph{Has Crowbar} and proceeds to \emph{Brake Door Open} if that is the case. Else it returns Failure. If an action is executed it might either succeed, which will result in a new condition being satisfied and another action being executed until the task is finished, or it might fail. If \emph{Go Inside} fails, the whole BT returns Failure, but if \emph{Open Door} fails, the conditions \emph{Door is Weak} and \emph{Has Crowbar} are checked.

\begin{figure}[h]
\centering
\includegraphics[width=\columnwidth]{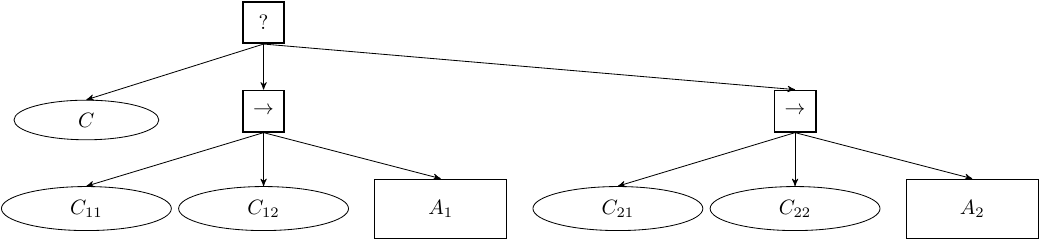}
\caption{General format of a PPA BT. The Postcondition $C$ can be achieved by either one of actions $A_1$ or $A_2$, which have Preconditions $C_{1i}$ and $C_{2i}$ respectively.}
\label{design:fig:back_chaining_general}
\end{figure}

In general, we let the PPA have the form of Figure~\ref{design:fig:back_chaining_general}, with one postcondition $C$ that can be achieved by either one of a set of actions $A_i$, each of these action are combined in a sequence with its corresponding list of preconditions $C_{ij}$, and these action precondition sequences are fallbacks for achieving the same objective. We see that from an efficiency point of view it makes sense to put actions that are most likely to succeed first (to avoid unnecessary failures) and check preconditions that are most likely to fail first (to quickly move on to the next fallback option).

\section{Creating Un-Reactive BTs using Memory Nodes}
\label{sec:design:memory}
As mentioned in Section~\ref{bt:sec:mem}, sometimes a child, once executed, does not need to be re-executed for the whole execution of a task. 
Control flow nodes with memory	are used to simplify the design of a BT avoiding the unwanted re-execution of some nodes. The use of nodes with memory is advised exclusively for those cases where there is no unexpected event that will undo the execution of the subtree in a composition with memory, as in the 
example below.

Consider the behavior of an industrial manipulator in a production line that has to \emph{pick}, \emph{move}, and \emph{place} objects. The robot's actions are carried out in a fixed workspace, with high precision. Human operators
make sure that nothing on the line changes. If they need a change in the line,
the software is manually updated accordingly. In this example the robot operates in a structured environment that is fully predictable in space and time. In this case we can disregard any unexpected change enabling us to describe the desired behavior
by a Sequence with memory of pick and place as in Figure~\ref{design:fig:mem}. In this scenario, after picking we can be sure that the object does not slips out of the robot's grippers. Hence while the robot is moving the object, the BT does not need to check if the object is still picked.  
\label{design:ex:mem}
\begin{figure}[h]
\centering
  \includegraphics[width=0.6\textwidth]{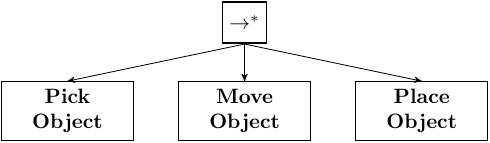}
\caption{Example of a Un-Reactive Sequence composition of the behaviors pick, move, and place.}
\label{design:fig:mem}
\end{figure}

\section{Choosing the Proper Granularity of a BT}
\label{design:sec:granularity}

In any modular design, 
 we need to decide the granularity of the modules. In a BT framework, this is translated into the choice of what to represent as a leaf node (single action or condition) and what to represent as a BT. 
The following two cases can be considered.

\begin{itemize}
\item 
It makes sense to encode the behavior in a single leaf when the potential
 subparts of the behavior are always used and executed in this particular combination.
 
\item
It makes sense to encode a behavior as a sub-BT, braking it up into conditions, actions and flow control nodes,
when the subparts are likely to be usable in other combinations in other parts of the BT,
and when the reactivity of BTs can be used to re-execute parts of the behavior when needed.
 
\end{itemize}

\label{design:ex:reused}
Consider the BT in Figure~\ref{design:fig:reused} describing the behavior of a humanoid robot. The actions \emph{sit} and \emph{stand} 
cannot be divided into meaningful sub-behaviors.

\begin{figure}[h]
\centering
  \includegraphics[width=\textwidth]{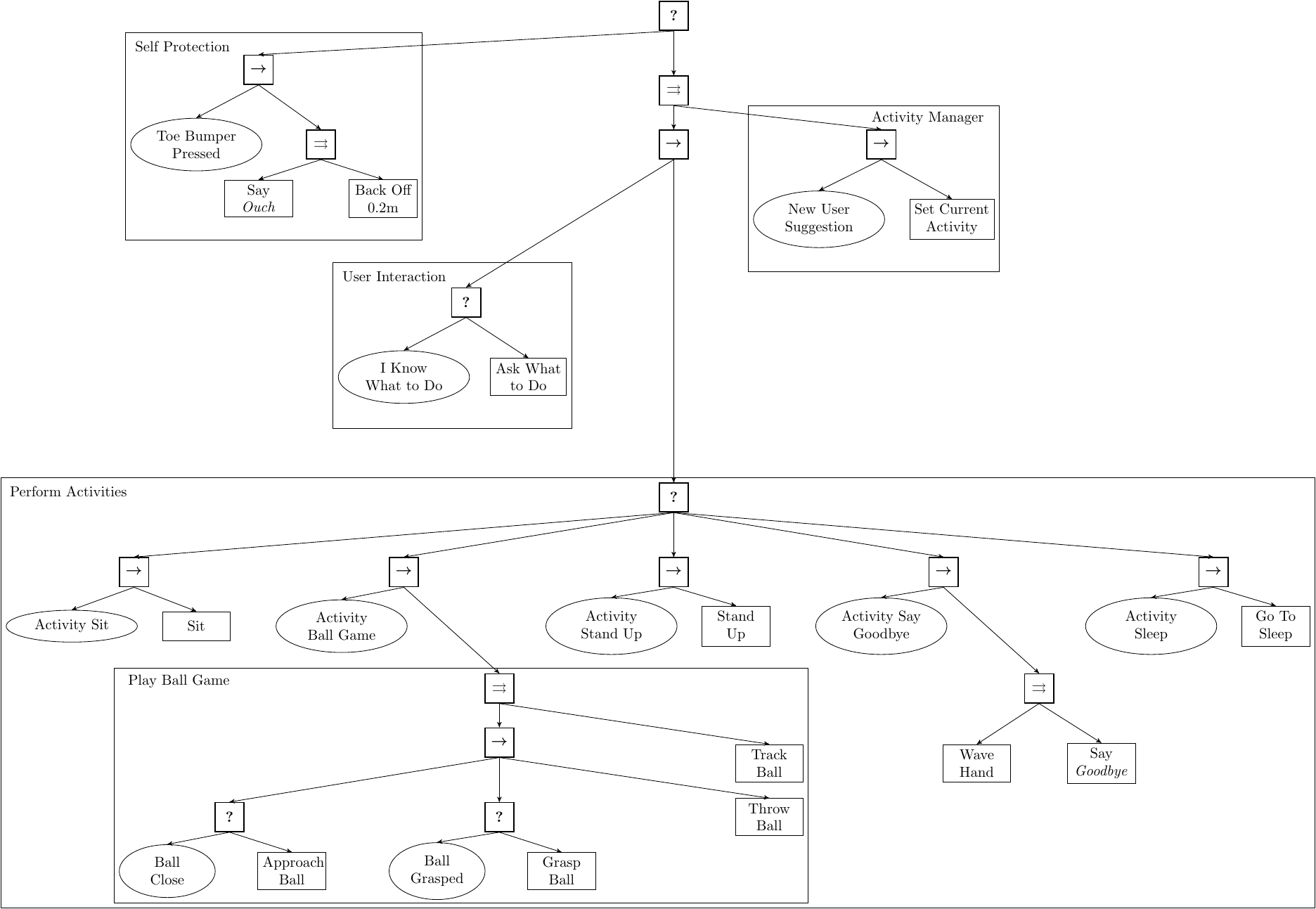}
\caption{Robot activity manager}

\label{design:fig:reused}
\end{figure}

\label{design:ex:closed}

Consider an assembly task for an industrial robot that coexists in a semi-structured environment with human workers. The tasks to perform are \emph{pick object}, \emph{assemble object}, and \emph{place object}. A closed-loop execution of this task can be represented with the BT in Figure~\ref{design:fig:closed}. Note that the BT can reactively handle  unexpected changes, possibly produced by the human worker in the line, such as when the worker picks up an object that the robot is trying to reach, or the object slipping out of the robot gripper while the robot is moving it, etc. If we had instead chosen to aggregate the actions \emph{pick object}, \emph{assemble object}, and \emph{place object} into a single action 
we would lose reactiveness when, for example, the robot has to re-pick an assembled object that slipped out from the robot's grippers. With a single action the robot would try to re-assemble an already assembled object. 
\begin{figure}[h]
\centering
  \includegraphics[width=\textwidth]{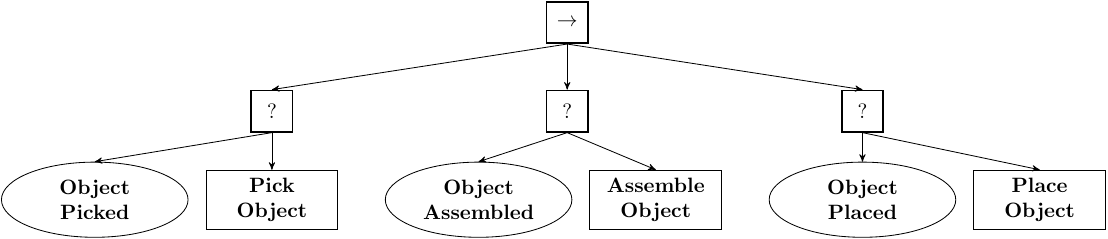}
\caption{Closed loop example}
\label{design:fig:closed}
\end{figure}
%

The advice above should give the designer an idea on how to reach a balanced BT that is neither too \emph{fine grained} nor too \emph{compact}. A fine grained BT might be unreasonably complex. While a compact BT may risk being not sufficiently reactive, by executing too many operations in a feed-forward fashion, losing one main advantage of BTs.

\section{Putting it all together}
\label{design:sec:combinations}

\begin{figure}[h]
\centering
  \includegraphics[width=10cm]{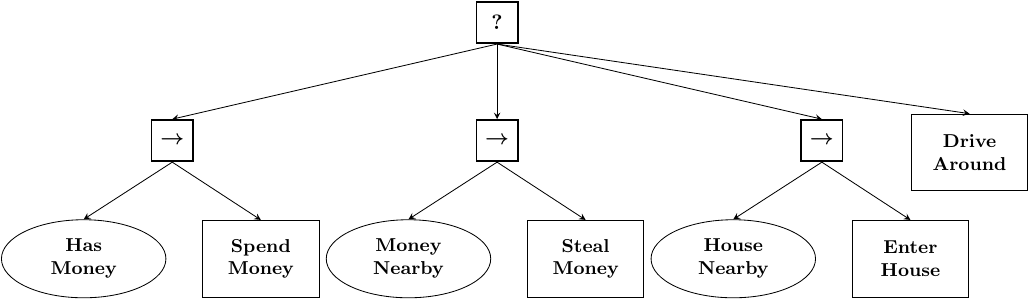}
\caption{Implicit sequence design of the activities of a burglar.}
\label{design:fig:burglar_implicit_sequence}
\end{figure}

In this section, we will show how the modularity of BTs make it very straightforward to combine the design principles described in this chapter at different levels of a BT.
Image we are designing the AI for a game character making a living as a burglar.
Its daily live could be filled with stealing and spending money, as described in the BT of Figure~\ref{design:fig:burglar_implicit_sequence}.
Note that we have used the Implicit Sequence design principle from Section~\ref{sec:implicit_sequences}.
The intended progression is driving around until a promising house is found, enter the house and find indications of money nearby,
steal the money and then leave the house to spend the money.

\begin{figure}[h]
\centering
  \includegraphics[width=7cm]{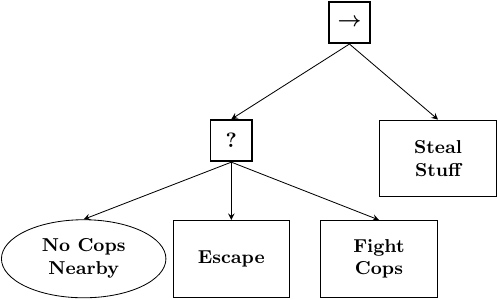}
\caption{If the escape (or fight) action is efficient enough, this sequence construction will guarantee that the burglar is never caught.}
\label{design:fig:burglar_safety}
\end{figure}

Performing the actions described above, the burglar is also interested in  not beeing captured by the police. Therefore we might design a BT handling when to escape, and when to fight the cops trying to catch it.
This might be considered a safety issue, and we can use the design principle for improving safety using sequences, as described in Section~\ref{sec:safety} above.
The result might look like the BT in Figure~\ref{design:fig:burglar_safety}.
If cops are nearby the burglar will first try to escape, and if that fails fight. If anytime during the fight, the escape option is viable, the burglar will switch to escaping.

\begin{figure}[h]
\centering
  \includegraphics[width=10cm]{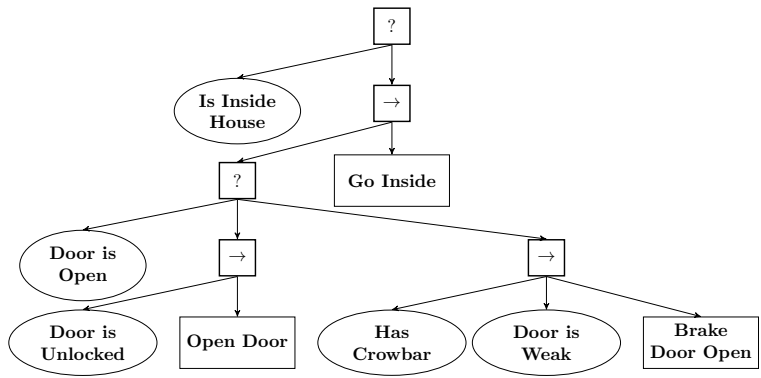}
\caption{Using backchaining, a BT of desired complexity can be created to get a burglar into a house, this is the same as Figure~\ref{design:fig:back_chaining_4}. }
\label{design:fig:burglar_into_house}
\end{figure}

We saw in Section~\ref{design:sec:back_chaining} how backchaining could be used to create a BT of the desired complexity for achieving a goal. That
same BT is shown here in Figure~\ref{design:fig:burglar_into_house} for reference.

Now, the modularity of BTs enable us to combine all these BTs, created with different design principles, into a single, more complex BT,
as shown in Figure~\ref{design:fig:burglar_combined}.
Note that the reactivity of all parts is maintained, and the switches between different sub-BTs happen just the way they should, for example from Drive Around, to Braking a Door Open (when finding a house), to Fighting Cops (when the police arrives and escape is impossible) and then Stealing Money (when police officers are defeated).
We will come back to this example in the next chapter on BT extensions.

\begin{figure}[h]
\centering
  \includegraphics[width=0.9\textwidth]{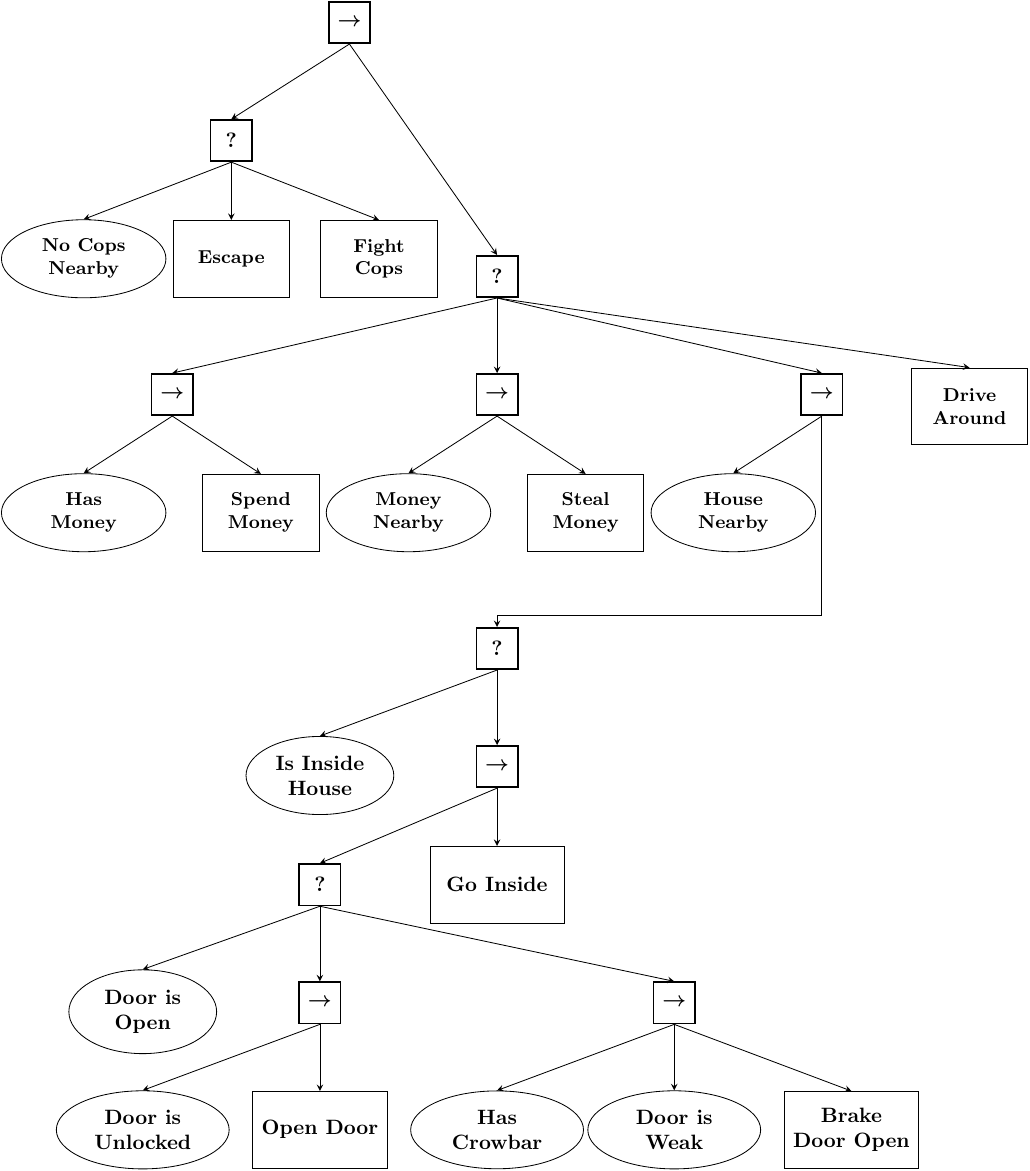}
\caption{A straightforward combination of the BTs in Figures~\ref{design:fig:burglar_implicit_sequence},  \ref{design:fig:burglar_safety}, and   \ref{design:fig:burglar_into_house}.}
\label{design:fig:burglar_combined}
\end{figure}


\chapter{Extensions of Behavior Trees}
\label{ch:extensions}
\graphicspath{{extensions/}}

As the concept of BT has spread in the AI and robotics communities, a number of extensions have been proposed.
Many of them revolve around the Fallback node, and the observation that the ordering of a Fallback node is often somewhat arbitrary.
In the nominal case, the children of a Fallback node are different ways of achieving the same outcome,
which makes the ordering itself unimportant, but note that this is not the case when Fallbacks are used to increase reactivity with implicit sequences, as described in Section~\ref{sec:implicit_sequences}. 

In this chapter, we will describe a number of extensions of the BT concept that have been proposed.

\begin{figure}[h]
\centering
  \includegraphics[width=8cm]{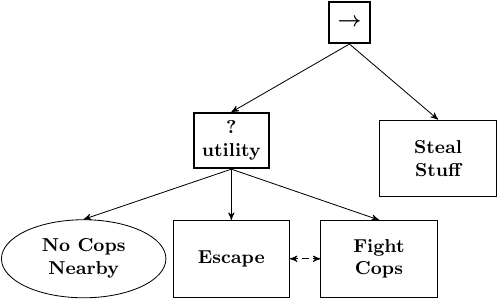}
\caption{The result of adding a utility Fallback in the BT controlling a burglar game character in Figure~\ref{design:fig:burglar_safety}.
Note how the Utility node enables a reactive re-ordering of the actions \emph{Escape} and \emph{Fight Cops}.}
\label{design:fig:burglar_utility}
\end{figure}

\section{Utility BTs}
Utility theory is the basic notion that if we can measure the utility of all potential decisions, it would make sense to 
choose the most useful one. In \cite{merrill2014building} it was suggested that a utility Fallback node would address what was described as the biggest drawback of BTs, i.e. having fixed priorities in the children of Fallback nodes.

A simple example can be seen in the burglar BT of Figure~\ref{design:fig:burglar_utility}.
How do we know that escaping is always better than fighting?
This is highly dependent on the circumstances, do we have a getaway vehicle, do we have a weapon, how many opponents are there, and what are their vehicles and weapons?

By letting the children of a utility Fallback node return their expected utility, the Fallback node can start with the node of highest utility. Enabling the burglar to escape when a getaway car is available, and fight when having a superior weapon at hand. 
In \cite{merrill2014building} it is suggested that all values are normalized to the interval $[0,1]$ to allow comparison between different actions.

Working with utilities is however not entirely straightforward. One of the core strengths of BTs is the modularity, how single actions are handled in the same way as a large tree.
But how do we compute utility for a tree?
Two possible solutions exist,
either we add Decorators computing utility below every utility Fallback node, 
or we add a utility estimate in all actions, and create  a way to propagate utility up the tree, passing both Fallbacks and Sequences. 
The former is a bit ad-hoc, while the latter presents some theoretical difficulties.

It is unclear how to aggregate and propagate utility in the tree. It is suggested in~\cite{merrill2014building} to use the max value in both Fallbacks and Sequences.
This is reasonable for Fallbacks, as the utility Fallback will prioritize the max utility child and execute it first, but one might also argue that a second Fallback child of almost as high utility should increase overall utility for the Fallback.
The max rule is less clear in the Sequence case, as there is no re-ordering, and a high utility child might not be executed due to a failure of another child before it.
These difficulties brings us to the next extension, the Stochastic BTs.


\section{Stochastic BTs}
\label{sec:stochastic_extension}
A natural variation of the idea of utilities above is to consider success probabilities, as suggested in  \cite{Colledanchise14,hannaford2016simulation}.
If something needs to be done, the action with the highest success probability might be a good candidate.
Before going into details, we note that both costs, execution times, and possible undesired outcomes also matters,
but defer this discussion to a later time.

One advantage of considering success probabilities is that the aggregation across both Sequences and Fallbacks
is theoretically straightforward. Let  $P^s_i$ be the success probability of a given tree, then
the probabilities can be aggregated  as follows~\cite{hannaford2016simulation}:
\begin{equation}
 P^s_{\mbox{Sequence}} = \Pi_i P^s_i, \quad P^s_{\mbox{Fallback}} = 1- \Pi_i (1- P^s_i),
\end{equation}
since Sequences need all children to succeed, while Fallbacks need only one, with probability equal to the complement of all failing.
This is theoretically appealing, but relies on the implicit assumption that each action is only tried once. In a reactive BT for a robot picking and placing items,
you could imagine the robot first picking an item, then accidentally dropping it halfway, and then picking it up again. 
Note that the formulas above do not account for this kind of events.

Now the question comes to how we compute or estimate $P^s_i$ for the individual actions. A natural idea
is to learn this from experience~\cite{hannaford2016simulation}.
It is reasonable to assume that the success probability of an action, $P^s_i$, is  a function of the world state, so it would make sense to try to learn
the success probability as a function of state. Ideally we can classify situations such that one action is known to work in some situations,
and another is known to work in others. The continuous maximization of success probabilities in a Fallback node would then make
the BT choose the correct action depending on the situation at hand.

There might still be some randomness to the outcomes, and then the following estimate is reasonable
\begin{equation}
  P^s_i = \frac{\mbox{\# successes}}{\mbox{\# trials}}. 
\end{equation}
However, this leads to a exploit/explore problem \cite{hannaford2016simulation}. What if both available actions of a Fallback have high success probability?
Initially we try one that works, yielding a good estimate for that action. Then the optimization might continue to favor (exploit) that action,
never trying (explore) the other one that might be even better. For the estimates to converge for all actions, even the ones with
lower success estimates needs to be executed sometimes. One can also note that having multiple similar  robots connected to a cloud service
enables much faster learning of both forms of success estimates described above.

It was mentioned above that it might also be relevant to 
 include costs and execution times in the decision of what tree to execute. A formal treatment of both success probabilities and execution times can be found in 
 Chapter~\ref{ch:stochastic}. A combination of cost and success probabilities might result in a utility system, as described above, but finding the right combination of all three is still an open problem.

\section{Temporary Modification of BTs}
Both in robotics and gaming there is sometimes a need to temporary modify the behavior of a BT.
In many robotics applications there is  an operator or collaborator that might want to temporarily 
influence the actions or priorities of a robot. For instance, convincing a service robot to set the table 
before doing the dirty dishes, or making a delivery drone complete the final mission even though
the battery is low enough to motivate an immediate recharge in normal circumstances.
In computer games, the AI is influenced by both level designers, responsible for the player experience,
and AI engineers, responsible for agents behaving rationally.
Thus, the level designers need a way of making some behaviors more likely, without causing
irrational side effects ruining the game experience.

\begin{figure}[h]
\centering
  \includegraphics[width=8cm]{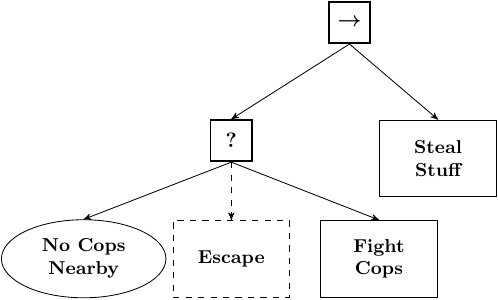}
\caption{The \emph{agressive burglar} style, resulting from disabling  \emph{Escape} in the BT controlling a burglar game character in Figure~\ref{design:fig:burglar_safety}.}
\label{design:fig:burglar_agressive}
\end{figure}

This problem was discussed in one of the first papers on BTs \cite{isla2005handling}, 
with the proposed solutions being \emph{styles},
with each style corresponding to disabling a subset of the BT. For instance, the style \emph{agressive burglar}
might simply have the actions \emph{Escape} disabled, making it disregard injuries and attack until defeated, see Figure~\ref{design:fig:burglar_agressive}.
Similarly, the  \emph{Fight} action can be disabled in the \emph{pacifist burglar} style, as shown in Figure~\ref{design:fig:burglar_pacifist}.
A more elaborate solution to the same problem can be found in the Hinted BTs described below.

\begin{figure}[h]
\centering

  \includegraphics[width=8cm]{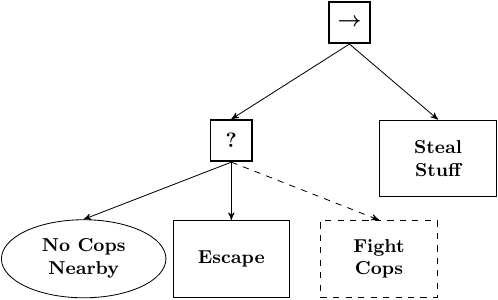}
\caption{The \emph{pacifist burglar} style, resulting from disabling  \emph{Fight} in the BT controlling a burglar game character in Figure~\ref{design:fig:burglar_safety}.}
\label{design:fig:burglar_pacifist}
\end{figure}

Hinted BTs were first introduced in \cite{ocio2010dynamic, ocio2012adapting}.
The key idea is to have an external entity, either human or machine, 
giving suggestions, so-called \emph{hints}, regarding actions to execute, to a  BT.
In robotics, the external entity might be an operator or user suggesting something, and in a computer game
it might be the level designer wanting to influence the behavior of a character  without having to edit the actual BT.

The hints can be both positive (+), in terms of suggested actions, and negative (-), actions to avoid,
and a somewhat complex example can be found in Figure~\ref{design:fig:burglar_hinted}.
 Multiple hints can be active simultaneously,
 each influencing the BT in one, or both, of two different ways.
  First they can effect the ordering of Fallback nodes. Actions or trees with positive hints are moved to the left, and ones with negative hints are moved to the right.
 Second, the BT is extended with additional conditions, checking if a specific hint is given.

\begin{figure}[h!]
\centering
  \includegraphics[width=\textwidth]{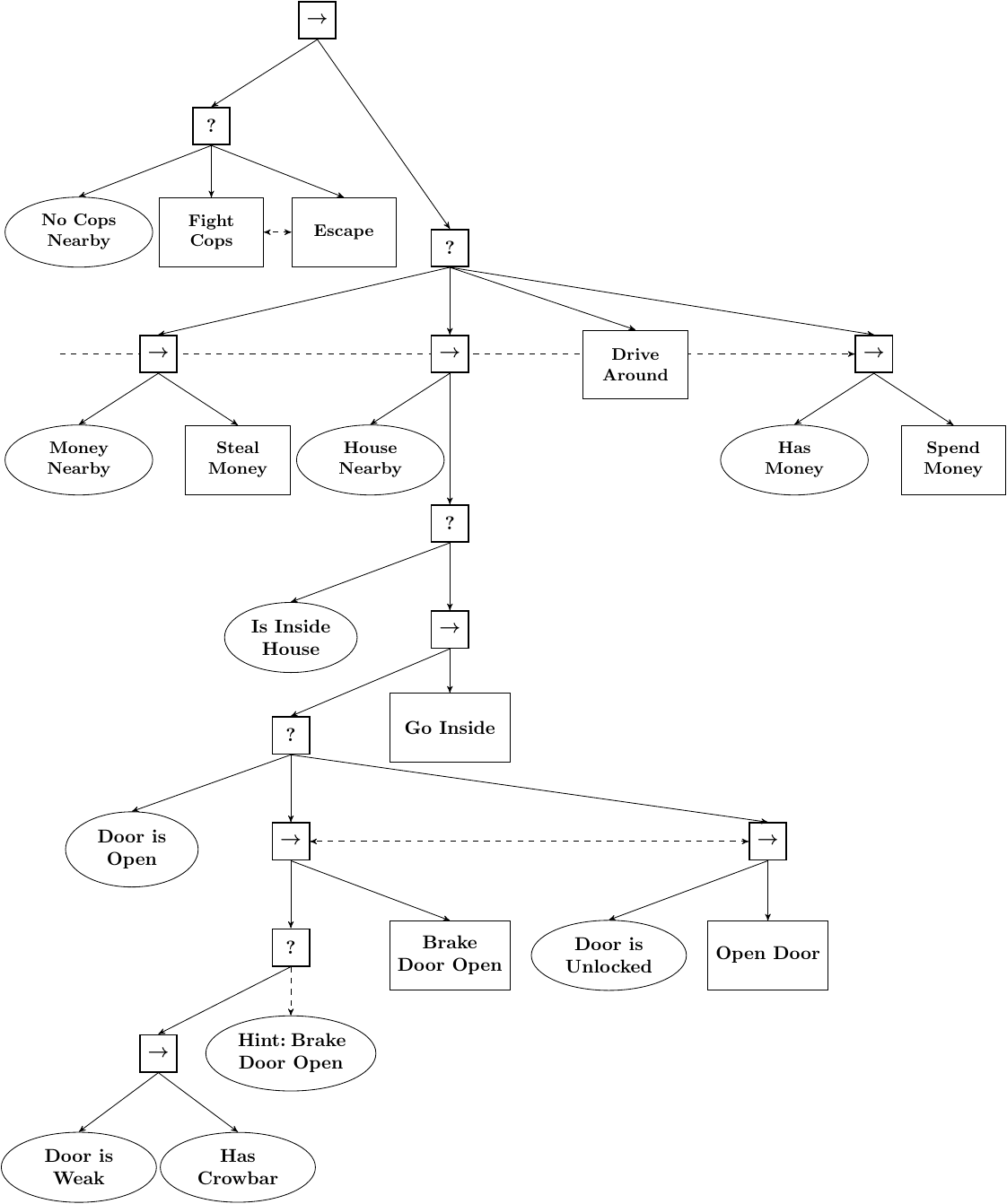}
\caption{The result of providing the hints \emph{Fight Cops+},  \emph{Brake Door Open+} and  \emph{Spend Money-} to the BT in Figure~\ref{design:fig:burglar_combined}. The dashed arrows indicated changes in the BT.}
\label{design:fig:burglar_hinted}
\end{figure}

In the BT of Figure~\ref{design:fig:burglar_hinted}, the following hints were given: \emph{Fight Cops+},  \emph{Brake Door Open+} and  \emph{Spend Money-}.
\emph{Fight Cops+} makes the burglar first considering the fight option, and only escaping when fighting fails.
 \emph{Brake Door Open+} makes the burglar try to brake the door, before seeing if it is open or not,
 and the new corresponding condition makes it ignore the requirements of having a weak door and a crowbar before attempting to brake the door.
Finally,   \emph{Spend Money-} makes the burglar prefer to drive around looking for promising houses rather than spending money.

\section{Other extensions of BTs}
In this section we will briefly describe a number of additional suggested extensions of BTs.
\subsection{Dynamic Expansion of BTs}
The concept of Dynamic Expansions was suggested in \cite{florez2008dynamic}.
Here, the basic idea is to let the BT designer leave some details of the BT to a run-time search.
To enable that search, some desired features of the action needed are specified, these include the
category, given a proposed behavior taxonomy, including \emph{Attack}, \emph{Defend}, \emph{Hunt}, and \emph{Move}. 
The benefit of the proposed approach is that newly created actions can be used in BTs that were created before the actions, 
as long as the BTs have specified the desired features that the new action should have.


\graphicspath{{properties/figures/}}

\chapter{Analysis of Efficiency, Safety, and Robustness}
\label{chap:properties}
\label{ch:properties}

Autonomous agents will need to be efficient, robust, and reliable in order to be used on a large scale. 
 In this chapter, we present a mathematical framework for analyzing  these properties for  a BT~(Section~\ref{bts:sec:funcBT}).
The analysis includes efficiency (Section~\ref{properties:sec:propDef}), in terms of execution time bounds; robustness (Section~\ref{properties:sec:propDef}), in terms of capability to operate in large domains; and safety (Section~\ref{properties:sec:safety}), in terms of avoiding some particular parts of the state space. 
Some of the results of this chapter were previously published in the journal paper \cite{colledanchise2017behavior}.

\section{Statespace  Formulation of BTs}
\label{bts:sec:funcBT}

In this section, we present a new formulation of BTs.
The new formulation is more formal, and will allow us to analyze how properties are preserved over modular compositions of BTs.
In the functional version, the \emph{tick} is replaced by a recursive function call that includes both the return status, the system dynamics and the system state.

\begin{definition}[Behavior Tree]
\label{bts:thesis.thesis.def:BT}
A BT is a three-tuple 
\begin{equation}
 \bt_i=\{f_i,r_i, \Delta t\}, 
\end{equation}
where $i\in \mathbb{N}$ is the index of the tree, $f_i: \mathbb{R}^n \rightarrow  \mathbb{R}^n$ is the right hand side of an ordinary difference equation, $\Delta t$ is a time step and 
$r_i: \mathbb{R}^n \rightarrow  \{\mathcal{R},\mathcal{S},\mathcal{F}\}$ is the return status that can be equal to either 
\emph{Running} ($\mathcal{R}$),
\emph{Success} ($\mathcal{S}$), or
\emph{Failure} ($\mathcal{F}$).
Let the Running/Activation region ($R_i$),
Success region ($S_i$) and
Failure region ($F_i$) correspond to a partitioning of the state space,  defined as follows:
\begin{eqnarray}
 R_i&=&\{x: r_i(x)=\mathcal{R} \} \\
 S_i&=&\{x: r_i(x)=\mathcal{S} \} \\
 F_i&=&\{x: r_i(x)=\mathcal{F} \}. 
\end{eqnarray}
Finally,  let $x_k=x(t_k)$ be the system state at time $t_k$, then the execution of a BT $\bt_i$ is a standard ordinary difference equation
\begin{eqnarray}
 x_{k+1}&=&f_i( x_{k}),  \label{bts:eq:executionOfBT}\\
 t_{k+1}&=&t_{k}+\Delta t.
\end{eqnarray}
 \end{definition}
The return status $r_i$ will be used when combining BTs recursively, as explained below.

\begin{assumption}
\label{bts:ass:same}
 From now on we will assume that all BTs evolve 
in the same continuous space $\mathbb{R}^n$ using
 the same time step $\Delta t$. 
\end{assumption}

\begin{remark}
\label{bts:rem:differentStatespace}
It is often the case, that different BTs, controlling different vehicle subsystems evolving in different state spaces, need to be 
 combined into a single BT. Such cases can be accommodated in the assumption above by letting all systems evolve in a larger state space, that is the Cartesian product of the smaller state spaces.
\end{remark}


\begin{definition}[Sequence compositions of BTs]
\label{bts:def.seq}
 Two or more BTs  can be composed into a more complex BT using a Sequence operator,
 $$\bt_0=\mbox{Sequence}(\bt_1,\bt_2).$$ 
 Then $r_0,f_0$ are defined as follows
\begin{eqnarray}
   \mbox{If }x_k\in S_1&& \\
   r_0(x_k) &=&  r_2(x_k) \\
   f_0(x_k) &=&  f_2(x_k) \label{bts:eq:seq1}\\ 
   \mbox{ else }&& \nonumber \\
   r_0(x_k) &=&  r_1(x_k) \\
   f_0(x_k) &=&  f_1(x_k). \label{bts:eq:seq2}
 \end{eqnarray}
\end{definition}
$\bt_1$ and $\bt_2$ are called children of $\bt_0$. Note that when executing the new BT, $\bt_0$ first keeps executing its first child $\bt_1$ as long as it returns Running or Failure.  
The second child is executed only when the first returns Success, and $\bt_0$ returns Success only when all children have succeeded, hence the name Sequence,
just as the classical definition of Sequences in Algorithm~\ref{bts:alg:sequence} of Section~\ref{sec:classicalBT}.

For notational convenience, we write
\begin{equation}
 \mbox{Sequence}(\bt_1, \mbox{Sequence}(\bt_2,\bt_3))= \mbox{Sequence}(\bt_1,\bt_2, \bt_3),
\end{equation}
and similarly for arbitrarily long compositions.

\begin{definition}[Fallback compositions of BTs]
\label{bts:def.fallb}
 Two or more BTs  can be composed into a more complex BT using a Fallback operator,
 $$\bt_0=\mbox{Fallback}(\bt_1,\bt_2).$$ 
 Then $r_0,f_0$ are defined as follows
\begin{eqnarray}
   \mbox{If }x_k\in {F}_1&& \\
   r_0(x_k) &=&  r_2(x_k) \\
   f_0(x_k) &=&  f_2(x_k) \\ 
   \mbox{ else }&&\nonumber \\
   r_0(x_k) &=&  r_1(x_k) \\
   f_0(x_k) &=&  f_1(x_k).
 \end{eqnarray}
\end{definition}

Note that when executing the new BT, $\bt_0$  first keeps executing its first child $\bt_1$ as long as it returns Running or Success. 
The second child is executed only when the first returns Failure, and $\bt_0$ returns Failure only when all children have tried, but failed, hence the name Fallback,
just as the classical definition of Fallbacks in Algorithm~\ref{bts:alg:fallback} of Section~\ref{sec:classicalBT}.

 For notational convenience, we write
\begin{equation}
 \mbox{Fallback}(\bt_1, \mbox{Fallback}(\bt_2,\bt_3))= \mbox{Fallback}(\bt_1,\bt_2, \bt_3),
\end{equation}
and similarly for arbitrarily long compositions.

Parallel compositions only make sense if the BTs to be composed control separate parts of the state space, thus we make the following assumption.
\begin{assumption}
\label{bts:ass:parallel}
 Whenever two BTs $\bt_1,\bt_2$ are composed in parallel, we assume that there is a partition of the state space $x=(x_1,x_2)$ such that $f_1(x)=(f_{11}(x),f_{12}(x))$ implies $f_{12}(x)=x$ and $f_2(x)=(f_{21}(x),f_{22}(x))$
implies $f_{21}(x)=x$ (i.e. the two BTs control different parts of the system). 
\end{assumption}

 \begin{definition}[Parallel compositions of BTs]
 \label{bts:def:parallel}
 Two or more BTs  can be composed into a more complex BT using a Parallel operator,
 $$\bt_0=\mbox{Parallel}(\bt_1,\bt_2).$$ 
 Let $x=(x_1,x_2)$ be the partitioning of the state space described in Assumption~\ref{bts:ass:parallel},
 then $f_0(x)=(f_{11}(x),f_{22}(x))$ and $r_0$ is defined as follows
\begin{eqnarray}
   \mbox{If } M=1&&\nonumber \\
   r_0(x) &=&  \mathcal{S}  \mbox{ If } r_1(x)=\mathcal{S} \vee r_2(x)=\mathcal{S}\\
   r_0(x) &=&  \mathcal{F}  \mbox{ If } r_1(x)=\mathcal{F} \wedge r_2(x)=\mathcal{F}\\
   r_0(x) &=&  \mathcal{R}  \mbox{ else } \\
   \mbox{If } M=2&&\nonumber \\
   r_0(x) &=&  \mathcal{S}  \mbox{ If } r_1(x)=\mathcal{S} \wedge r_2(x)=\mathcal{S}\\
   r_0(x) &=&  \mathcal{F}  \mbox{ If } r_1(x)=\mathcal{F} \vee r_2(x)=\mathcal{F}\\
   r_0(x) &=&  \mathcal{R}  \mbox{ else } 
 \end{eqnarray}
\end{definition}

%

\section{Efficiency and Robustness }
\label{properties:sec:propDef}

In this section we will show how some aspects of time efficiency and robustness

carry across modular compositions of BTs. This result will then enable us to conclude, that if two BTs are efficient, then their composition will also be \emph{efficient}, if the right conditions are satisfied.
We also show how the Fallback composition can be used to increase the region of attraction of a BT, thereby making it more robust to uncertainties in the initial configuration.

Note that, as in \cite{burridge1999sequential}, by robustness we mean large regions of attraction. We do not investigate e.g.\ disturbance rejection, or other forms of robustness\footnote{Both meanings of robustness are  aligned with the IEEE standard glossary of software engineering terminology:
``The degree to which a system or component can function correctly in the presence of invalid inputs or stressful environmental conditions."}

Many control problems, in particular in robotics, can be formulated in terms of achieving a given goal configuration in a way that is  time efficient and robust with respect to the initial configuration. Since all BTs return either Success, Failure, or Running, the definitions below will include a finite time, at which Success must be returned.

In order to formalize the discussion above, we  say that
 \emph{efficiency} can be measured by the size of a time bound $\tau$ in Definition~\ref{properties:def:FTS} and
 \emph{robustness} can be measured by the size of the region of attraction $R'$ in the same definition.

\begin{definition}[Finite Time Successful]
\label{properties:def:FTS}
 A BT is Finite Time Successful (FTS) with region of attraction $R'$, if for all starting points $x(0)\in R'\subset R$, there is a time $\tau$, and a time $\tau'(x(0))$ such that $\tau'(x)\leq \tau$ for all starting points, and
 $x(t)\in R'  $ for
 all $t\in [0,\tau')$ 
 and $x(t)\in S$ for
  $t = \tau'$
\end{definition}
As noted in the following lemma, exponential stability implies FTS, given the right choices of the sets $S,F,R$.
\begin{lemma}[Exponential stability and FTS]
 A BT for which $x_s$ is a globally exponentially stable equilibrium of the execution (\ref{bts:eq:executionOfBT}),
 and $S \supset \{x: ||x-x_s||\leq \epsilon\}$, $\epsilon>0$, $F=\emptyset$,  $R=\mathbb{R}^n \setminus S$, is FTS.
\end{lemma}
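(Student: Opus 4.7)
The target is to verify the three clauses of Definition~\ref{properties:def:FTS}: exhibit a region $R'\subset R$, a uniform time bound $\tau$ together with per-start times $\tau'(x_0)\le\tau$, and check that iterates stay in $R'$ until they land in $S$. Global exponential stability (GES) of $x_s$ under the execution~(\ref{bts:eq:executionOfBT}) supplies constants $C,\lambda>0$ with $\|x_k-x_s\|\le C\|x_0-x_s\|e^{-\lambda k\Delta t}$ for every $x_0\in\mathbb{R}^n$; the remaining work is to turn that estimate into the quantitative guarantees the definition demands.

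First I would extract the reach-time estimate. Since $S\supset\{x:\|x-x_s\|\le\epsilon\}$, the GES bound forces $x_k\in S$ as soon as $C\|x_0-x_s\|e^{-\lambda k\Delta t}\le\epsilon$, which happens after at most $\lceil \lambda^{-1}\log(C\|x_0-x_s\|/\epsilon)/\Delta t\rceil$ steps. On any bounded set of initial conditions this step count is uniformly bounded, which is what will play the role of $\tau$.

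Next, fix any $\tau>0$ and define
\begin{equation}
R' \;=\; \{x\in R : \text{the trajectory from } x \text{ enters } S \text{ within time } \tau\}.
\end{equation}
The GES bound shows that $R'$ at least contains the nonempty ball $\{x\in R:\|x-x_s\|\le(\epsilon/C)e^{\lambda\tau}\}$. Crucially, this pre-image definition is \emph{forward invariant}: if $x\in R'\setminus S$, then the trajectory starting at $f_i(x)$ reaches $S$ in at most $\tau-\Delta t\le\tau$ additional time, so $f_i(x)\in R'\cup S$. Combined with $F=\emptyset$ (so the only way to leave $R$ is into $S$), every iterate $x_k$ with $k\Delta t<\tau'(x_0)$ stays in $R'$, and by construction $\tau'(x_0)\le\tau$ for all $x_0\in R'$, which is exactly FTS.

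The main obstacle is this invariance step: the raw exponential bound permits a transient of factor $C\ge 1$, so a naive ball $\{\|x-x_s\|\le M\}$ is not in general invariant under $f_i$, and a region-of-attraction built directly from such a ball would be leaky. The pre-image construction above sidesteps this because membership in $R'$ is defined by a future-reach property that can only be \emph{preserved} by one application of $f_i$; an alternative, if a converse-Lyapunov function $V$ for the GES dynamics is available, is to take $R'=\{x\in R:V(x)\le V_0\}$ for suitable $V_0$, whose sublevel structure makes invariance immediate. Either route closes the argument.
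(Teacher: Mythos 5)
Your proposal is correct, and its core mechanism is the same as the paper's: global exponential stability forces the iterates into the ball $\{x:\|x-x_s\|\le\epsilon\}\subset S$ within a bounded number of steps, which delivers the time bound in Definition~\ref{properties:def:FTS}. However, you go substantially further than the paper's own argument, which simply asserts a bound of the form $\|x(k)-x_s\|\le e^{-ak}$ with no dependence on the initial condition and concludes immediately. Two of your additions fill genuine gaps. First, you retain the factor $C\|x_0-x_s\|$ in the exponential estimate, and therefore correctly observe that the reach time is uniform only on a bounded set of initial conditions; the uniform $\tau$ demanded by the definition then forces you to choose $R'$ as a suitable bounded subset of $R=\mathbb{R}^n\setminus S$ rather than all of $R$ — a point the paper's proof silently skips by dropping the constant. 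Second, you verify the clause $x(t)\in R'$ for all $t\in[0,\tau')$, which the paper does not address at all: your pre-image construction of $R'$ (membership defined by reaching $S$ within time $\tau$) is forward invariant by construction, and together with $F=\emptyset$ this closes the invariance requirement. The converse-Lyapunov sublevel-set alternative you mention would work equally well. In short, your route buys a complete verification of all three clauses of the FTS definition, where the paper's proof only establishes the reach-time clause and only under an implicitly uniform decay estimate.
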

\begin{proof}
Global exponential stability implies that there exists  $a>0$ such that $||x(k)-x_s|| \leq e^{-ak}$ for all $k$.
Then, for each $\epsilon$ there is a time $\tau$ such that $||x(k)-x_s|| \leq e^{-a\tau}<\epsilon$, which implies that there is a $\tau'<\tau$ such that $x(\tau')\in S$ and the BT is FTS.
\end{proof}

We are now ready to look at how these properties extend across compositions of BTs.

\begin{lemma}
\label{properties:lem:robustnessSequence}
\emph{(Robustness and Efficiency of Sequence Compositions)}
If $\bt_1,\bt_2$ are FTS, with $S_1=R_2' \cup S_2 $,
then $\bt_0=\mbox{Sequence}(\bt_1,\bt_2)$ is FTS with 
$\tau_0 = \tau_1+\tau_2$, 
$R_0'= R_1' \cup R_2'$ and 
$S_0=S_1 \cap S_2 = S_2$.
\end{lemma}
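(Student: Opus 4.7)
The plan is to unpack the definition of FTS for the composition $\bt_0$ by partitioning the initial state and tracking how the effective dynamics switches between $\bt_1$ and $\bt_2$. First, I would identify the Success region of $\bt_0$ directly from Definition~\ref{bts:def.seq}: for $x \notin S_1$ we have $r_0(x)=r_1(x)$, so such an $x$ cannot lie in $S_0$; for $x \in S_1$ we have $r_0(x)=r_2(x)$, so $S_0 = S_1 \cap S_2$. The hypothesis $S_1 = R_2' \cup S_2$ forces $S_2 \subseteq S_1$, collapsing this to $S_0 = S_2$ as claimed.

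Next, I would pick an arbitrary $x(0) \in R_0' = R_1' \cup R_2'$ and split into two cases. Observe that $R_1' \subseteq R_1$ is disjoint from $S_1$, while $R_2' \subseteq S_1$, so $R_1'$ and $R_2'$ are themselves disjoint. In Case 1, $x(0) \in R_1'$: then $x(0) \notin S_1$, the Sequence runs under $f_1$, and by FTS of $\bt_1$ the trajectory remains in $R_1' \subseteq R_0'$ and reaches $S_1$ at some time $\tau_1'(x(0)) \le \tau_1$. At that instant $x \in S_1 = R_2' \cup S_2$: either $x \in S_2 = S_0$ and we are done within time $\tau_1 \le \tau_1+\tau_2$, or $x \in R_2' \subseteq S_1$ and the dynamics switch to $f_2$. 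In Case 2, $x(0) \in R_2' \setminus R_1'$: since $R_2' \subseteq S_1$, $f_2$ is active from the outset.

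In both sub-cases that end up running $\bt_2$ from a point in $R_2'$, the crucial observation is that FTS of $\bt_2$ keeps the trajectory inside $R_2' \subseteq S_1$ until it enters $S_2$, so the Sequence continues to execute $\bt_2$ without ever reverting to $\bt_1$. This contributes an additional time of at most $\tau_2$, and the whole trajectory stays in $R_1' \cup R_2' = R_0'$ until the moment it enters $S_0 = S_2$. Summing yields the uniform bound $\tau_0 = \tau_1 + \tau_2$ over all starting points in $R_0'$, completing the FTS verification. The main obstacle, I expect, is the bookkeeping at the switching instant: one must verify that the exit point of $\bt_1$ lies in $R_2' \cup S_2$ (guaranteed by $S_1 = R_2' \cup S_2$) and that once $\bt_2$ takes over, its trajectory never escapes $S_1$ back into the $\bt_1$ regime (again guaranteed by $R_2' \subseteq S_1$); both hinge on exploiting the compatibility hypothesis $S_1 = R_2' \cup S_2$ at exactly the right moments.
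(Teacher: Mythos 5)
Your proposal is correct and follows essentially the same route as the paper's own proof: split on whether $x(0)$ lies in $R_1'$ or $R_2'$, use FTS of $\bt_1$ to reach $S_1=R_2'\cup S_2$ within $\tau_1$, and then use FTS of $\bt_2$ together with $R_2'\subseteq S_1$ to keep the Sequence executing $\bt_2$ until $S_2=S_0$ is reached within an additional $\tau_2$. Your version merely makes the bookkeeping at the switching instant (and the identification $S_0=S_1\cap S_2=S_2$) more explicit than the paper does.
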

\begin{proof}
 First we consider the case when $x(0)\in R_1'$. Then, as $\bt_1$ is FTS, the state will reach $S_1$ in a time   $k_1 <\tau_1$, without leaving $R_1'$.
 Then $\bt_2$ starts executing, and will keep the state inside $S_1$, since $S_1=R_2' \cup S_2$.
 $\bt_2$ will then bring the state into $S_2$, in a time $k_2 <\tau_2$, and $\bt_0$ will return Success.
 Thus we have the combined time $k_1+k_2 <\tau_1+\tau_1$.
 
 If $x(0)\in R_2'$, $\bt_1$ immediately returns Success, and $\bt_2$ starts executing as above.
\end{proof}

The lemma above is illustrated in Figure \ref{properties:fig:seq1rob}, 
and  Example~\ref{properties:ex:seq}
 below.

\begin{figure}[h]
\begin{center}
\includegraphics[width=4cm]{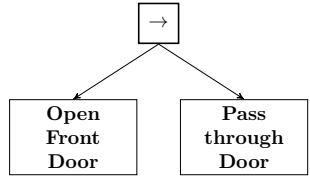}
\caption{
A Sequence is used to to create an \emph{Enter Through Front Door} BT.
Passing the door  is only tried if the opening action succeeds.
Sequences are denoted by a white box with an arrow.}
\label{properties:fig:recipe}
\end{center}
\end{figure}

 \begin{figure}[h]
\begin{center}
\includegraphics[width=0.8\columnwidth]{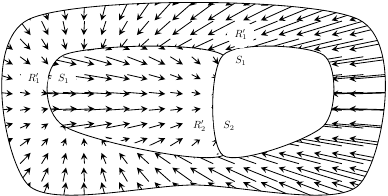}
\caption{The sets $R_1', S_1, R_2', S_2$ of Example \ref{properties:ex:seq} and Lemma \ref{properties:lem:robustnessSequence}. }
\label{properties:fig:seq1rob}
\end{center}
\end{figure}

\begin{example}
\label{properties:ex:seq}
Consider the BT in Figure \ref{properties:fig:recipe}.
If we know that \emph{Open Front Door} is FTS and will finish in less than $\tau_1$ seconds, and that \emph{Pass through Door} is FTS and will finish in less than $\tau_2$ seconds. Then, as long as $S_1=R_2' \cup S_2$,
Lemma \ref{properties:lem:robustnessSequence} states that the combined BT in Figure  \ref{properties:fig:recipe} is also FTS, with an upper bound on the execution time of $\tau_1+\tau_2$.
Note that the condition $S_1=R_2' \cup S_2$ implies that
 the action  \emph{Pass through Door} will not make the system leave $S_1$, by e.g. accidentally colliding with the door and thereby closing it without having passed through it.
\end{example}

The result for Fallback compositions is related, but with a slightly different condition on $S_i$ and $R_j'$. 
Note that this is the theoretical underpinning of the design principle \emph{Implicit Sequences} described in Section~\ref{sec:implicit_sequences}.

\begin{lemma}
\label{properties:lem:robustnessSelector}
\emph{(Robustness and Efficiency of Fallback Compositions)}
If $\bt_1,\bt_2$ are FTS, with $S_2 \subset  R_1'$ and $R_1«=R_1$,
then $\bt_0=\mbox{Fallback}(\bt_1,\bt_2)$ is FTS with 
$\tau_0 = \tau_1+\tau_2$, 
$R_0'= R_1' \cup R_2'$ and 
$S_0=S_1$.
\end{lemma}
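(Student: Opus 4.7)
The plan is to mirror the structure used for Sequence in Lemma~\ref{properties:lem:robustnessSequence}, doing a case analysis on where the initial state $x(0) \in R_0' = R_1' \cup R_2'$ lies with respect to $F_1$, and then invoking the FTS property of each subtree on its own region of attraction. The key observation driving the argument is that, by Definition~\ref{bts:def.fallb}, $\bt_2$ is ticked only when the current state lies in $F_1$, and $\bt_1$ is ticked on the complement $R_1 \cup S_1$.

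First I would dispatch the case $x(0) \in R_1' = R_1$: since $R_1' \cap F_1 = \emptyset$ (the sets $R_1, S_1, F_1$ partition the state space), $\bt_1$ is the active child immediately, and by its FTS property the state stays in $R_1'$ and reaches $S_1$ within $\tau_1$ steps, at which moment $\bt_0$ returns Success.

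Next I would take $x(0) \in R_2' \setminus R_1$. Since $R_1 = R_1'$, the complement decomposes as $S_1 \cup F_1$; if $x(0) \in S_1$ then $\bt_1$ is ticked and returns Success on the first step and we are done, so assume $x(0) \in F_1$. Then $\bt_2$ is the active child and, by its FTS property, the trajectory remains in $R_2'$ and reaches $S_2$ in at most $\tau_2$ steps. This is where the hypothesis $S_2 \subset R_1'$ enters: at that instant the state has entered $R_1'$, control transfers to $\bt_1$, and a further application of the FTS of $\bt_1$ gives $S_1$ in at most $\tau_1$ additional steps, yielding the overall bound $\tau_0 = \tau_1 + \tau_2$.

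The main obstacle I foresee is arguing that the Fallback does not chatter between its two children in a way that invalidates the time bound. During $\bt_2$'s phase the state might in principle cross into $R_1 = R_1'$ before reaching $S_2$, triggering $\bt_1$ prematurely; I would address this by noting that any such early handover only shortens the total time, because once $\bt_1$ resumes from any point in $R_1'$ its FTS property keeps the trajectory inside $R_1'$, which is disjoint from $F_1$, so there is no switch back to $\bt_2$. Finally, the claim $S_0 = S_1$ falls out of Definition~\ref{bts:def.fallb}: the only states that cause $\bt_0$ to return Success are those in $S_1 \cup (F_1 \cap S_2)$, and the second set is empty because $S_2 \subset R_1'$ while $R_1' \cap F_1 = \emptyset$.
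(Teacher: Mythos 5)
Your proof is correct and follows essentially the same route as the paper's: the same case split on $x(0)\in R_1'$ versus $x(0)\in R_2'\setminus R_1'$, with the hypothesis $S_2\subset R_1'$ forcing the handover from $\bt_2$ to $\bt_1$ and the time bounds adding. You are in fact somewhat more careful than the paper's own two-line argument, which does not explicitly address a possible early handover into $R_1'$ before $S_2$ is reached, nor verify the claim $S_0=S_1$; both of your additions are sound.
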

\begin{proof}
 First we consider the case when $x(0)\in R_1'$. Then, as $\bt_1$ is FTS, the state will reach $S_1$ before $k=\tau_1<\tau_0$, without leaving $R_1'$.
 If $x(0)\in R_2'\setminus R_1'$, $\bt_2$ will execute, and the state will progress towards $S_2$.  But as $S_2 \subset R_1'$, $x(k_1)\in R_1'$ at some time $k_1<\tau_2$. Then, we have the case above, reaching $x(k_2) \in S_1$ in a total time of $k_2<\tau_1+k_1<\tau_1+\tau_2$.
\end{proof}

The Lemma above is illustrated in Figure \ref{properties:fig:sel1}, and Example~\ref{properties:ex:sel}
 below.

 \begin{figure}[htbp]
\begin{center}
\includegraphics[width=0.6\columnwidth]{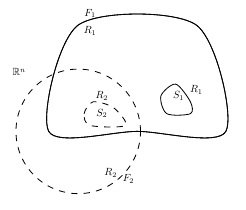}
\caption{The sets $S_1, F_1, R_1$ (solid boundaries) and $S_2, F_2, R_2$ (dashed boundaries) of Example \ref{properties:ex:sel} and Lemma \ref{properties:lem:robustnessSelector}. }
\label{properties:fig:sel1}
\end{center}
\end{figure}
 \begin{figure}[htbp]
\begin{center}
\includegraphics[width=0.4\columnwidth]{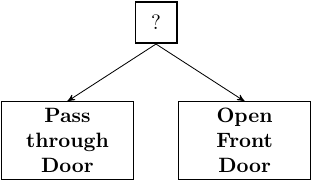}
\caption{An Implicit Sequence created  using a Fallback, as described in Example
 \ref{properties:ex:sel} and Lemma \ref{properties:lem:robustnessSelector}. }
\label{properties:fig:implicit}
\end{center}
\end{figure}

\begin{remark}
 As can be noted, the necessary conditions in Lemma \ref{properties:lem:robustnessSequence},
including~$S_1=R_2' \cup S_2 $ might be harder to satisfy than the conditions of 
Lemma \ref{properties:lem:robustnessSelector}, including~$S_2 \subset  R_1'$.
Therefore,  Lemma \ref{properties:lem:robustnessSelector} is often preferable from a practical point of view,
e.g. using implicit sequences as shown below.
\end{remark}

\begin{example}
\label{properties:ex:sel}
This example will illustrate the design principle \emph{Implicit sequences} of Section~\ref{sec:implicit_sequences}.
Consider the BT in Figure \ref{properties:fig:implicit}. During execution, if  the door is closed, then \emph{Pass through Door} will fail and \emph{Open Front Door} will start to execute. Now, right before \emph{Open Front Door} returns Success, the first action  \emph{Pass through Door} (with higher priority) will realize that the state of the world has now changed enough to enable a possible success and starts to execute, i.e.\ return Running instead of Failure. The combined action of this BT will thus make the robot open the door (if necessary) and then pass through if.

Thus, even though a Fallback composition is used, the result   is sometimes a sequential execution of the children in reverse order (from right to left). Hence the name Implicit sequence.
\end{example}

The example above illustrates how we can increase the robustness of a BT.
If we want to be able to handle more diverse situations, such as a closed door, we do not have to make the door passing action more complex, instead we combine it with another BT that can handle the situation and move the system into a part of the statespace that the first BT can handle.
The sets $S_0, F_0, R_0$ and $f_0$ of the combined BT are shown in Figure~\ref{properties:fig:sel2}, together with the  vector field $f_0(x)-x$.
As can be seen, the combined BT can now move a larger set of initial conditions to the desired region $S_0=S_1$.

\begin{figure}[htbp]
\begin{center}
\includegraphics[width=0.6\columnwidth]{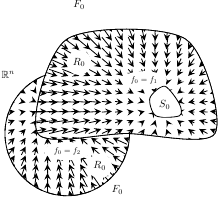}
\caption{The sets $S_0, F_0, R_0$ and the vector field $(f_0(x)-x)$ of Example~\ref{properties:ex:sel} and Lemma \ref{properties:lem:robustnessSelector}. }
\label{properties:fig:sel2}
\end{center}
\end{figure}

\begin{lemma}
\label{properties:lem:robustnessParallel}
\emph{(Robustness and Efficiency of Parallel Compositions)}
If $\bt_1,\bt_2$ are FTS,
then $\bt_0=\mbox{Parallel}(\bt_1,\bt_2)$ is FTS with 

\begin{eqnarray}
   \mbox{If } M=1&&\nonumber \\
	R_0' &=& \{R_1' \cup R_2'\} \setminus \{S_1 \cup S_2\}   \\ 
	S_0 &=& S_1 \cup S_2  \\ 		
	\tau_0 &=& \min(\tau_1,\tau_2) \\
   \mbox{If } M=2&&\nonumber \\
	R_0' &=&  \{R_1' \cap R_2'\} \setminus \{S_1 \cap S_2\}  \\ 
	S_0 &=& S_1 \cap S_2  \\ 
	\tau_0 &=& \max(\tau_1,\tau_2)
 \end{eqnarray}
\end{lemma}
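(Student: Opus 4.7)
The plan rests on Assumption~\ref{bts:ass:parallel}, which factorises the state as $x=(x_1,x_2)$ and decouples the dynamics: under $\bt_0=\mbox{Parallel}(\bt_1,\bt_2)$, by Definition~\ref{bts:def:parallel} the coordinate $x_1$ is advanced by $f_{11}$ while $x_2$ is advanced by $f_{22}$, independently. Since each $r_i$ only reads the subsystem that $\bt_i$ controls, the sets $R_i',S_i,F_i$ may be treated as cylindrical in the other coordinate, so the FTS guarantees of each $\bt_i$ transfer verbatim to the $x_i$ component during the parallel execution. I would then prove the two threshold cases separately, combining the per-child FTS bounds through $\max$ or $\min$ according to the logical conjunction or disjunction imposed by $M$.

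For the $M=2$ (AND) case, $r_0=\mathcal{S}$ iff both children succeed, hence $S_0=S_1\cap S_2$. Given $x(0)\in R_0'=(R_1'\cap R_2')\setminus(S_1\cap S_2)$, the FTS of each $\bt_i$ forces $x_i(t)$ to stay in $R_i'$ until it hits $S_i$ within $\tau_i$ steps, so neither child visits $F_i$ during the race and $\bt_0$ cannot prematurely return $\mathcal{F}$. Consequently $\bt_0$ returns $\mathcal{S}$ at the instant the slower child succeeds, giving the uniform bound $\tau_0=\max(\tau_1,\tau_2)$. For the $M=1$ (OR) case, $S_0=S_1\cup S_2$ and $\mathcal{F}$ is returned only when both children fail; for $x(0)\in R_0'=(R_1'\cup R_2')\setminus(S_1\cup S_2)$ at least one coordinate lies in its attraction region, and FTS of that child forces success at the corresponding time, at which point $\bt_0$ returns $\mathcal{S}$. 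In both cases the running region is carved out precisely so that the trajectory leaves $R_0'$ at the moment of first global success, matching Definition~\ref{properties:def:FTS}.

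The main obstacle I anticipate is the bookkeeping needed to guarantee that $x(t)$ stays inside the stated $R_0'$ for the whole run, in particular ruling out a premature $\mathcal{F}$ from the parallel node while a genuinely converging child is still working. Assumption~\ref{bts:ass:parallel} does the heavy lifting here: because the two BTs act on disjoint coordinates, neither can perturb the other out of its region of attraction, so each child's FTS trajectory is preserved verbatim inside the composition. A subtler point is the bound $\tau_0=\min(\tau_1,\tau_2)$ for $M=1$: I would justify it as the tight bound realised in the symmetric race case $x(0)\in R_1'\cap R_2'$, and treat initial conditions sitting in only one $R_i'$ by observing that the running child alone fixes the success time; depending on the reader's interpretation of FTS as a uniform bound over all of $R_0'$, this may warrant flagging $\max(\tau_1,\tau_2)$ as the sharper uniform replacement.
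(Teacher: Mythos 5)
Your proposal follows essentially the same route as the paper's proof, which is a short argument reading the time bounds directly off the logical structure of the Parallel return statuses ($M=1$ as OR, $M=2$ as AND) and leaning on Assumption~\ref{bts:ass:parallel} for the independence of the two executions; your write-up is simply a more careful elaboration of that argument. The one place you go beyond the paper is your final caveat about $\tau_0=\min(\tau_1,\tau_2)$ in the $M=1$ case, and you are right to flag it: under Definition~\ref{properties:def:FTS} the bound $\tau_0$ must be uniform over all of $R_0'=\{R_1'\cup R_2'\}\setminus\{S_1\cup S_2\}$, and for an initial state in $R_2'\setminus R_1'$ only $\bt_2$ is guaranteed to converge, so the completion time is only bounded by $\tau_2$, which may exceed $\min(\tau_1,\tau_2)$. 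The paper's proof is silent on this; the stated $\min$ bound is only valid on $R_1'\cap R_2'$, and the uniform bound over the stated region of attraction should be $\max(\tau_1,\tau_2)$, exactly as you suggest.
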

\begin{proof}
The Parallel composition executes $\bt_1$ and $\bt_2$ independently. If $M = 1$ the Parallel composition returns Success if either  $\bt_1$ or $\bt_2$ returns Success, thus $\tau_0 = \min(\tau_1,\tau_2)$. It returns Running if either  $\bt_1$ or $\bt_2$ returns Running and the other does not return Success. If $M = 2$ the Parallel composition returns Success if and only if both  $\bt_1$ and $\bt_2$ return Success, thus $\tau_0 = \max(\tau_1,\tau_2)$.
It returns Running if either  $\bt_1$ or $\bt_2$ returns Running and the other does not return Failure.
\end{proof}

\section{Safety }
\label{properties:sec:safety}
In this section we will show how some aspects of safety carry across modular compositions of BTs. The results will enable us to design a BT to handle safety guarantees and a BT to handle the task execution separately.

In order to formalize the discussion above, we say that \emph{safety} can be measured by the ability to avoid a particular part of the statespace, which we for simplicity  denote the \emph{Obstacle Region}.

\begin{definition}[Safe]
\label{properties:def:Safe}
 A BT is safe, with respect to the obstacle region $O \subset \mathbb{R}^n$, and the initialization region $I \subset R$,
 if for all starting points $x(0)\in I$, we have that $x(t) \not \in O$, for all $t \geq 0$.
 \end{definition}

In order to make statements about the safety of composite BTs we also need the following definition.

\begin{definition}[Safeguarding]
\label{properties:def:Safeguarding}
 A BT is safeguarding, with respect to the step length $d$, the obstacle region $O \subset \mathbb{R}^n$, and the initialization region $I \subset R$,
 if it is safe, and FTS with region of attraction $R' \supset I$ and a success region $S$, such that $I$ surrounds $S$ in the following sense:
\begin{equation}
  \{x\in X \subset \mathbb{R}^n: \inf_{s\in S} || x-s  || \leq d \} \subset I,
\end{equation}
where $X$ is the reachable part of the state space $\mathbb{R}^n$. 
  \end{definition}
This implies that the system, under the control of another BT with maximal statespace steplength $d$, cannot leave $S$ without entering $I$, and thus avoiding $O$,  see Lemma~\ref{properties:lem:safety} below.

\begin{example}
\label{properties:ex:safe}
To illustrate how safety can be improved using a Sequence composition, we  consider the UAV control BT 
 in Figure~\ref{properties:fig:uav}.
The sets $S_i, F_i, R_i$ are shown in Figure~\ref{properties:fig:seq1}. As $\bt_1$ is \emph{Guarrantee altitude above 1000 ft}, its failure region $F_1$ is a small part of the state space (corresponding to a crash) surrounded by the running region $R_1$ that is supposed to move the UAV away from the ground, guaranteeing a minimum altitude of 1000 ft. The success region $S_1$ is large, every state sufficiently distant from $F_1$. The BT that performs the mission, $\bt_2$, has a smaller success region $S_2$, surrounded by a very large running region $R_2$, containing a small failure region $F_2$. The 
function $f_0$ is governed by Equations (\ref{bts:eq:seq1}) and (\ref{bts:eq:seq2}) and is depicted in form of the  vector field $(f_0(x)-x)$  in Figure~\ref{properties:fig:seq2}.
\end{example}

\begin{figure}[htbp]
\begin{center}
\includegraphics[width=4cm]{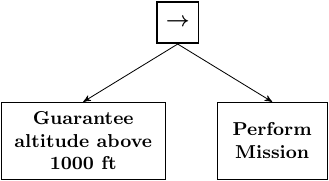}
\caption{The Safety of the UAV control BT is Guaranteed by the first Action.  }
\label{properties:fig:uav}
\end{center}
\end{figure}

 \begin{figure}[htbp]
\begin{center}
\includegraphics[width=0.8\columnwidth]{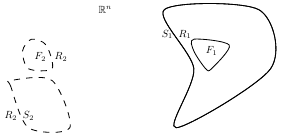}
\caption{The sets $S_1, F_1, R_1$ (solid boundaries) and $S_2, F_2, R_2$ (dashed boundaries) of Example \ref{properties:ex:safe} and Lemma \ref{properties:lem:safety}. }
\label{properties:fig:seq1}
\end{center}
\end{figure}

\begin{figure}[htbp]
\begin{center}
\includegraphics[width=0.8\columnwidth]{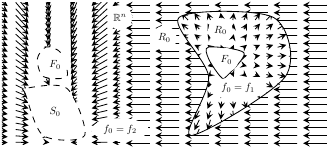}
\caption{The sets $S_0, F_0, R_0$ and  the vector field $(f_0(x)-x)$ of Example~\ref{properties:ex:safe} and Lemma \ref{properties:lem:safety}. }
\label{properties:fig:seq2}
\end{center}
\end{figure}

The discussion above is formalized in Lemma \ref{properties:lem:safety} below.

\begin{lemma}[Safety of Sequence Compositions]
\label{properties:lem:safety}
If $\bt_1$ is safeguarding, with respect to the obstacle $O_1$ initial region $I_1$, and margin $d$, and
 $\bt_2$ is an arbitrary BT with $\max_x ||x-f_2(x)||<d$,
then the composition $\bt_0=\mbox{Sequence}(\bt_1,\bt_2)$ is safe with respect to  $O_1$ and $I_1$.
\end{lemma}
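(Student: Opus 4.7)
The plan is to prove safety by induction on the natural phases of execution. Since $\bt_0 = \mbox{Sequence}(\bt_1,\bt_2)$ uses $f_2$ exactly when $x_k \in S_1$ and $f_1$ otherwise, any trajectory alternates between \emph{$\bt_1$-phases} (maximal contiguous blocks of steps with $x_k \notin S_1$) and \emph{$\bt_2$-phases} (maximal contiguous blocks with $x_k \in S_1$). The invariant I will maintain is that at the \emph{start} of every $\bt_1$-phase the state lies in $I_1$.

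For the base case, the hypothesis gives $x_0 \in I_1 \subset R_1' \subset R_1$, so $x_0 \notin S_1$ and the trajectory begins in a $\bt_1$-phase with the invariant satisfied. Inside a $\bt_1$-phase starting at some $x_{k_0} \in I_1$, the sequence dynamics $f_0$ coincides step-for-step with $\bt_1$ executed alone, because $f_0=f_1$ as long as we are outside $S_1$. Hence safety of $\bt_1$ on $I_1$ immediately yields $x_k \notin O_1$ throughout the phase, and FTS of $\bt_1$ with region of attraction $R_1' \supset I_1$ ensures the phase terminates within $\tau_1$ steps by the state entering $S_1$.

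For the $\bt_2$-phase, every visited state is reachable and lies in $S_1$, so it lies in $S_1 \cap X$. Applying the safeguarding containment $\{x \in X : \inf_{s \in S_1}\|x-s\| \leq d\} \subset I_1$ at distance $0$ gives $S_1 \cap X \subset I_1$, and safety of $\bt_1$ (any $x \in I_1$ is not in $O_1$ at $t=0$) gives $I_1 \cap O_1 = \emptyset$; together these show the $\bt_2$-phase avoids $O_1$. When the phase ends at step $k_1{+}1$ with $x_{k_1+1} = f_2(x_{k_1}) \notin S_1$, the bound $\|x - f_2(x)\| < d$ gives $\inf_{s \in S_1}\|x_{k_1+1} - s\| \leq \|x_{k_1+1} - x_{k_1}\| < d$, and since $x_{k_1+1} \in X$, the containment yields $x_{k_1+1} \in I_1$. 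This reinstates the invariant at the start of the next $\bt_1$-phase and closes the induction.

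The main obstacle I anticipate is the careful bookkeeping around the containment condition, which is stated only over the reachable part $X$ of the state space rather than all of $\mathbb{R}^n$. Making this rigorous requires the (essentially tautological) remark that every state actually attained along the trajectory lies in $X$, so the containment can be invoked at the two places it is needed: to conclude $S_1 \cap X \subset I_1$ during the $\bt_2$-phase, and to conclude that the exit state $f_2(x_{k_1})$ lands back in $I_1$. Once this is in place the alternating phase structure propagates the safety guarantee to all $k \geq 0$.
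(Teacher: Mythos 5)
Your proof is correct and follows essentially the same argument as the paper's: $\bt_1$ keeps the state out of $O_1$ while executing from $I_1$, and the step-size bound on $f_2$ forces any state exiting $S_1$ to land in the $d$-collar of $S_1$, hence in $I_1$, restarting the cycle. Your version is merely organized more carefully as an induction over alternating phases, and it makes explicit (via the safeguarding containment applied at distance zero, together with $I_1 \cap O_1 = \emptyset$) that states inside $S_1$ itself cannot lie in $O_1$ --- a point the paper's proof leaves implicit.
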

\begin{proof}
 $\bt_1$ is safeguarding, which implies that $\bt_1$ is safe and thus any trajectory starting in $I_1$ will stay out of $O_1$ as long as $\bt_1$ is executing. But if the trajectory reaches $S_1$, $\bt_2$ will  execute until the trajectory leaves $S_1$. We must now show that the trajectory cannot reach $O_1$ without first entering $I_1$. But any trajectory leaving $S_1$ must immediately enter $I_1$, as the first state outside $S_1$ 
 must lie in the set $\{x\in \mathbb{R}^n: \inf_{s\in S_1} || x-s  || \leq d \} \subset I_1$ due to the fact that for $\bt_2$, $||x(k)-x(k+1) ||=||x(k)-f_2(x(k)) ||<d$.
\end{proof}


We conclude this section with a discussion about undesired chattering in switching systems.

The issue of undesired chattering, i.e., switching back and fourth between different subcontrollers,
is always an important concern when designing switched control systems, and BTs are no exception. 
As is suggested by the right part of Figure \ref{properties:fig:seq2}, chattering can be a problem when  vector fields meet at a switching surface. 

Although the efficiency of some compositions can be computed using Lemma \ref{properties:lem:robustnessSequence} and \ref{properties:lem:robustnessSelector}  above, chattering can significantly reduce the efficiency of others.
Inspired by \cite{filippov1988differential} the following result can give an indication of when chattering is to be expected.

Let $R_i$ and $R_j$ be  the running region of  $\bt_i$ and $\bt_j$ respectively. 
We want to study the behavior of the system when a composition of $\bt_i$ and $\bt_j$ is applied. In some cases the execution of a BT will lead to the running region of the other BT and vice-versa. Then, both BTs are alternatively executed and the state trajectory chatters on the boundary between $R_i$ and $R_j$. 
We formalize this discussion in  the following lemma.

\begin{lemma}
 Given a composition $\bt_0=\mbox{Sequence}(\bt_1,\bt_2)$, where $f_i$ depend on $\Delta t$ such
 that $||f_i(x)-x|| \to 0$ when $\Delta t \to 0$.
 Let  $s:\mathbb{R}^n\to \mathbb{R}$ be such that 
  $s(x)=0$ if $x\in \delta S_1 \cap R_2$,
$s(x)<0$ if $x\in \mbox{interior}( S_1 )\cap R_2$,
 $s(x)>0$ if $x\in \mbox{interior}(  \mathbb{R}^n \setminus S_1 ) \cap R_2$,
 and let 
 $$
 \lambda_i(x)=(\frac{\partial s}{\partial x})^T(f_i(x)-x).
 $$
 Then, $x \in \delta S_1$ is chatter free, i.e., avoids switching between $\bt_1$ and $\bt_2$ at every timestep, for small enough $\Delta t$, if $\lambda_1(x)<0$ or $\lambda_2(x)>0$.
\end{lemma}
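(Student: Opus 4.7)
The plan is a local Filippov-style analysis near the switching surface $\delta S_1\cap R_2$, reducing the question of chatter to a sign-alternation of $s$ along consecutive iterates. By Definition~\ref{bts:def.seq}, on a neighbourhood $U$ of $x$ inside $R_2$ the recursion $x_{k+1}=f_0(x_k)$ applies $f_2$ when $x_k\in S_1$ (locally $s\leq 0$) and $f_1$ when $x_k\notin S_1$ (locally $s>0$). Consequently the BT switches its active child between steps $k$ and $k+1$ iff the signs of $s(x_k)$ and $s(x_{k+1})$ differ, and ``switching at every timestep'' near $x$ is equivalent to $\mathrm{sign}\,s(x_k)$ flipping at every step.

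I would then linearise. Continuity of $\nabla s$ together with the hypothesis $\|f_i(y)-y\|\to 0$ uniformly on $U$ as $\Delta t\to 0$ gives, via Taylor's theorem,
\[
s(f_i(y))-s(y) \;=\; \nabla s(y)^{\!\top}\bigl(f_i(y)-y\bigr) + o(\|f_i(y)-y\|) \;=\; \lambda_i(x) + o(1),
\]
so for $\Delta t$ small enough the sign of the one-step change of $s$ on $U$ is controlled by the sign of $\lambda_i(x)$.

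The core step is then a two-move case analysis. For the sign of $s$ to flip at two consecutive steps near $x$, an $f_1$-step from $\{s>0\}$ must land in $\{s<0\}$ and the next $f_2$-step must land back in $\{s>0\}$; by the linearisation this forces a specific sign on each of $\lambda_1(x)$ and $\lambda_2(x)$ simultaneously. Under the hypothesis of the lemma at least one of these required signs is excluded, so for $\Delta t$ below some threshold the alternation breaks down and $x$ is chatter-free; this is precisely the discrete-time analogue of Filippov's non-sliding condition that inspired the statement.

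The main technical obstacle is the uniformity in $\Delta t$: one has to pick it small enough that simultaneously (a)~the first two iterates stay inside $U$, (b)~the $o(\|f_i(y)-y\|)$ remainder is dominated by $|\lambda_i(x)|$ throughout $U$, and (c)~$\nabla s$ on $U$ is close enough to $\nabla s(x)$ that the sign of $\lambda_i$ transfers from $x$ to any $y\in U$ reachable in one step. These are standard continuity/compactness arguments; together they ensure that the strict inequalities on $\lambda_i(x)$ excluded by the hypothesis remain strict (with a definite sign) for both the first and the second iterate, which is all that is needed to rule out the alternating pattern and conclude the proof.
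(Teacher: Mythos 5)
Your overall strategy (reduce chattering to a sign alternation of $s$ along consecutive iterates, linearise the one-step change of $s$, and rule out the alternating pattern by a two-move case analysis) is considerably more careful than the paper's own proof, which consists of the single sentence that ``the vector field is pointing outwards on at least one side of the switching boundary.'' However, your argument has a genuine gap at exactly the step you leave implicit. Work out the signs your case analysis forces: for an $f_1$-step to carry the state from $\{s>0\}$ into $\{s<0\}$, the quantity $s$ must decrease, which by your linearisation requires $\lambda_1(x)\le 0$; for the subsequent $f_2$-step to return to $\{s>0\}$, it requires $\lambda_2(x)\ge 0$. So sustained chattering forces the conjunction $\lambda_1(x)\le 0$ \emph{and} $\lambda_2(x)\ge 0$. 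The lemma's hypothesis ``$\lambda_1(x)<0$ or $\lambda_2(x)>0$'' does \emph{not} exclude this conjunction: the prototypical sliding-mode configuration $\lambda_1(x)<0<\lambda_2(x)$, in which both fields point at the switching surface and chatter certainly occurs, satisfies the hypothesis. Your claim that ``at least one of these required signs is excluded'' is therefore false for the statement as written; the condition that actually breaks the alternation is $\lambda_1(x)>0$ or $\lambda_2(x)<0$, which is also what the paper's informal proof describes (a field pointing \emph{away} from the switching boundary on at least one side). You should flag this sign discrepancy rather than assert the exclusion, because as it stands the case analysis, if carried out, does not close.

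A secondary, fixable issue: since $\lambda_i(x)=\nabla s(x)^{\top}(f_i(x)-x)$ itself tends to $0$ as $\Delta t\to 0$, writing the Taylor remainder as $o(1)$ is not enough to conclude that the sign of $s(f_i(y))-s(y)$ is that of $\lambda_i(x)$. You need the remainder to be $o(\|f_i(y)-y\|)$ and the ratio $\lambda_i(x)/\|f_i(x)-x\|$ to be bounded away from zero (equivalently, the step direction must not become tangent to the level set of $s$) for the sign to transfer uniformly for small $\Delta t$. Your uniformity discussion in the last paragraph gestures at this but states the estimate in the wrong normalisation.
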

 
\begin{proof}
When the condition holds, the vector field is pointing outwards on at least one side of the switching boundary.
\end{proof}

Note that this condition is not satisfied on the right hand side of Figure~\ref{properties:fig:seq2}.
This concludes our analysis of BT compositions.

\section{Examples}
\label{properties:sec:example}
In this section, we show some BTs of example and we analyze their properties.

Section~\ref{properties:sec:example:robeff} Illustrates how to analyze robustness and efficiency of a robot executing a generic task. Section~\ref{properties:sec:safety} illustrates to compute safety using the functional representation of Section~\ref{bts:sec:funcBT}. Section~\ref{properties:sec:example:reliability} illustrate how to compute the performance estimate of a given BT. Finally, Section~\ref{properties:sec:example:huge}  illustrate the properties above of a complex BT .

All BTs were implemented using the ROS BT library.\footnote{library available at \url{http://wiki.ros.org/behavior_tree}.}
A video showing the executions of the BTs used in Sections~\ref{properties:sec:example:safety}-\ref{properties:sec:example:robeff} is publicly available. \footnote{\url{https://youtu.be/fH7jx4ZsTG8}}

\subsection{Robustness and Efficiency}
\label{properties:sec:example:robeff}
To illustrate Lemma \ref{properties:lem:robustnessSelector} we look at the BT of Figure~\ref{properties:fig:robust3} controlling a humanoids robot.
The BT has three subtrees \emph{Walk Home}, which is first tried,
if that fails (the robot cannot walk if it is not standing up) it tries the subtree \emph{Sit to Stand},
and if that fails, it tries \emph{Lie down to Sit Up}.
Thus, each fallback action brings the system into the running region of the action to its left,
e.g., the result of \emph{Sit to Stand} is to enable the execution of  \emph{Walk Home}.

\begin{example}
\label{properties:ex:rob}
Let  $x=(x_{1},x_{2})\in \mathbb{R}^2$, where $x_{1}\in [0,0.5]$ is the horizontal position of the robot head and
$x_{2}\in [0,0.55]$ is vertical position  (height above the floor) of the robot head.
The objective of the robot is to get to the destination at $(0,0.48)$. 

\begin{figure}[htbp]
\begin{center}
\includegraphics[width=0.4 \columnwidth]{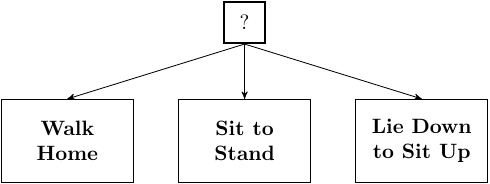}
\caption{The combination $\bt_3$=Fallback($\bt_4,\bt_5,\bt_6$) increases robustness by increasing the region of attraction. }
\label{properties:fig:robust3}
\end{center}
\end{figure}
\end{example}

First we describe the sets $S_i, F_i, R_i$ and the corresponding vector fields of the functional representation.
Then we apply Lemma~\ref{properties:lem:robustnessSelector} to see that the combination does indeed improve robustness. For this example $\Delta t=1s$.

For \emph{Walk Home}, $\bt_4$, we have that 
\begin{align}
S_4 &= \{x: x_1 \leq 0 \}  \label{properties:eq:startRobust} \\
R_4 &= \{x: x_1 \neq 0, x_2\geq 0.48 \}\\
F_4 &= \{x: x_1 \neq 0, x_2 < 0.48 \} \\
 f_4(x)&=\begin{pmatrix}
x_1-0.1 \\
x_2
\end{pmatrix}
\end{align}
that is, it runs as long as the vertical position of the robot head, $x_2$, is at least $0.48m$ above the floor, and moves towards the origin with a speed of $0.1m/s$. If the robot is not standing up $x_2<0.48m$ it returns Failure. A phase portrait of $f_4(x)-x$ is shown in Figure~\ref{properties:fig:goback}.
 Note that $\bt_4$ is FTS with the completion time bound $\tau_4 =0.5/0.1=10$ and region of attraction $R_4'=R_4$.

\begin{figure}[htbp]
\begin{center}
\includegraphics[width=0.6\columnwidth]{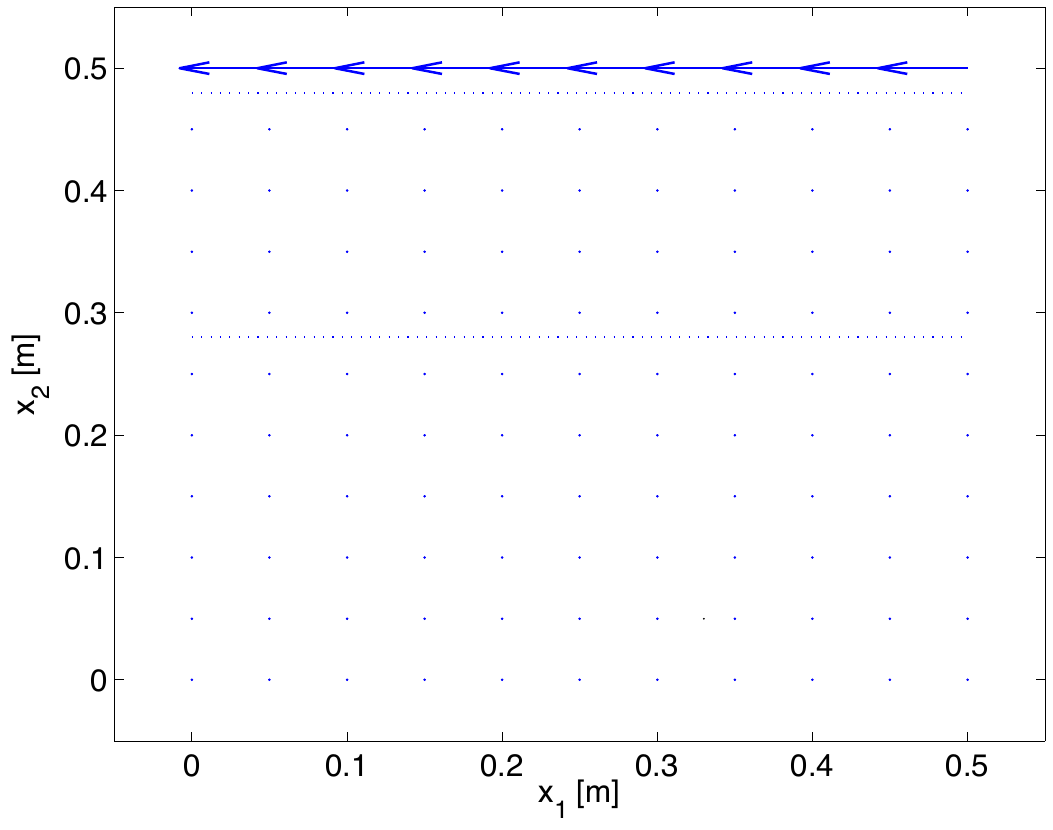}
\caption{The Action \emph{Walk Home}, keeps the head around $x_2=0.5$ and moves it towards the destination $x_1=0$.}
\label{properties:fig:goback}
\end{center}
\end{figure}

For \emph{Sit to Stand}, $\bt_5$, we have that 
\begin{align}
S_5 &= \{x: 0.48 \leq x_2 \}\\
R_5 &= \{x: 0.3 \leq x_2 <  0.48 \}\\
F_5 &= \{x: x_2 < 0.3 \} \\
 f_5(x)&=\begin{pmatrix}
x_1 \\
x_2+0.05
\end{pmatrix}
\end{align}
that is, it runs as long as the vertical position of the robot head, $x_2$, is in between $0.3m$ and $0.48m$ above the floor.
If $0.48 \leq x_2$ the robot is standing up, and it returns Success.
If $x_2 \leq 0.3$ the robot is lying down, and it returns Failure.
 A phase portrait of $f_5(x)-x$ is shown in Figure~\ref{properties:fig:standup}.
 Note that $\bt_5$ is FTS with the completion time bound $\tau_5 = \mbox{ceil}(0.18/0.05)=\mbox{ceil}(3.6)=4$ and region of attraction $R_5'=R_5$

\begin{figure}[htbp]
\begin{center}
\includegraphics[width=0.6\columnwidth]{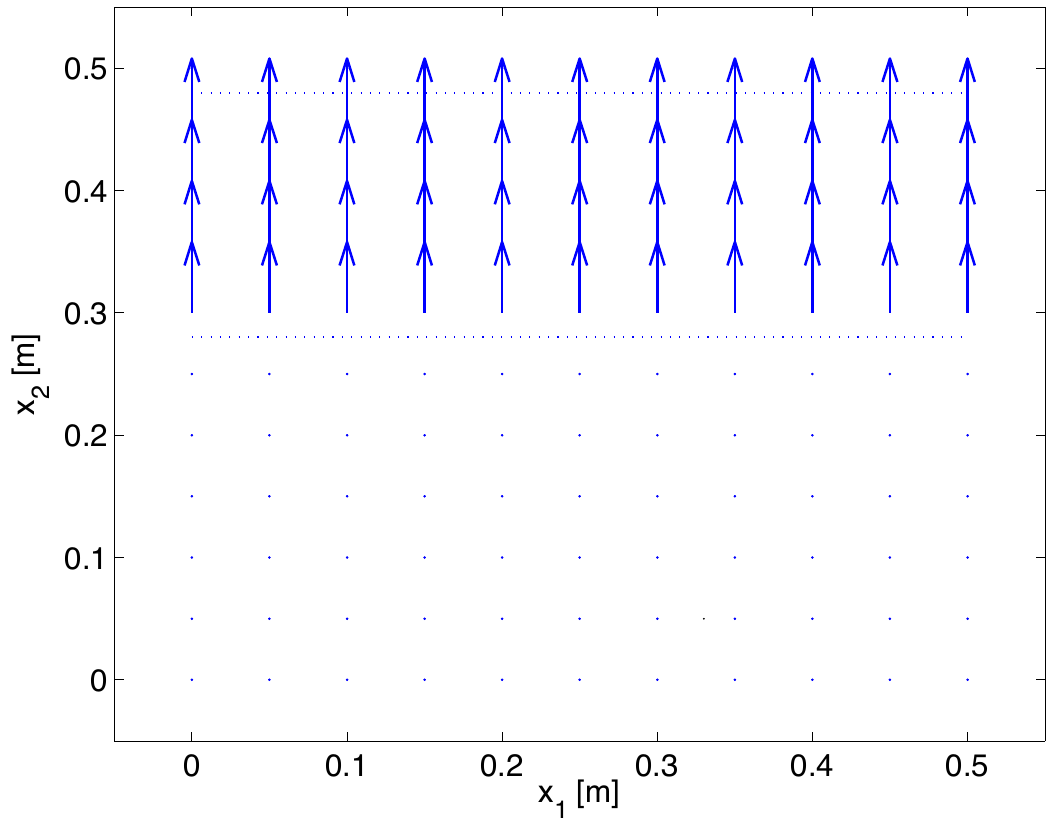}
\caption{The Action \emph{Sit to Stand} moves the head upwards in the vertical direction towards standing. }
\label{properties:fig:standup}
\end{center}
\end{figure}

For \emph{Lie down to Sit Up}, $\bt_6$, we have that 
\begin{align}
S_6 &= \{x: 0.3 \leq x_2 \}\\
R_6 &= \{x: 0 \leq x_2 <  0.3 \}\\
F_6 &= \emptyset \\
 f_6(x)&=\begin{pmatrix}
x_1 \\
x_2+0.03
\end{pmatrix} \label{properties:eq:endRobust}
\end{align}
that is, it runs as long as the vertical position of the robot head, $x_2$, is below $0.3m$  above the floor.
If $0.3 \leq x_2$ the robot is sitting up (or standing up), and it returns Success.
If $x_2 < 0.3$ the robot is lying down, and it returns Running.
 A phase portrait of $f_6(x)-x$ is shown in Figure~\ref{properties:fig:situp}.
 Note that $\bt_6$ is FTS with the completion time bound $\tau_6 =0.3/0.03=10$ and region of attraction $R_6'=R_6$

\begin{figure}[htbp]
\begin{center}
\includegraphics[width=0.6\columnwidth]{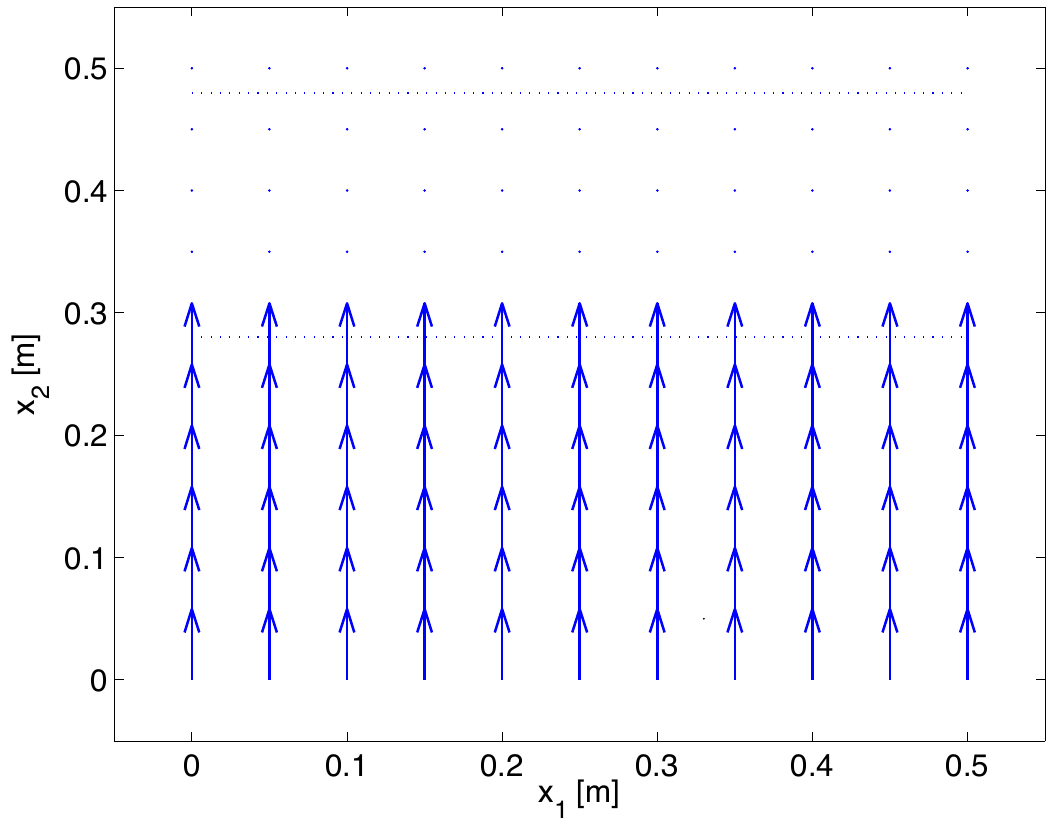}
\caption{The Action \emph{Lie down to Sit Up} moves the head upwards in the vertical direction towards sitting. }
\label{properties:fig:situp}
\end{center}
\end{figure}

Informally, we can look at the phase portrait in Figure~\ref{properties:fig:getback} to get a feeling for what is going on.
As can be seen the Fallbacks make sure that the robot gets on its feet and walks back,
independently of where it started in $\{x:0< x_1 \leq 0.5, 0\leq x_2 \leq 0.55\}$.

\begin{figure}[htbp]
\begin{center}
\includegraphics[width=0.6\columnwidth]{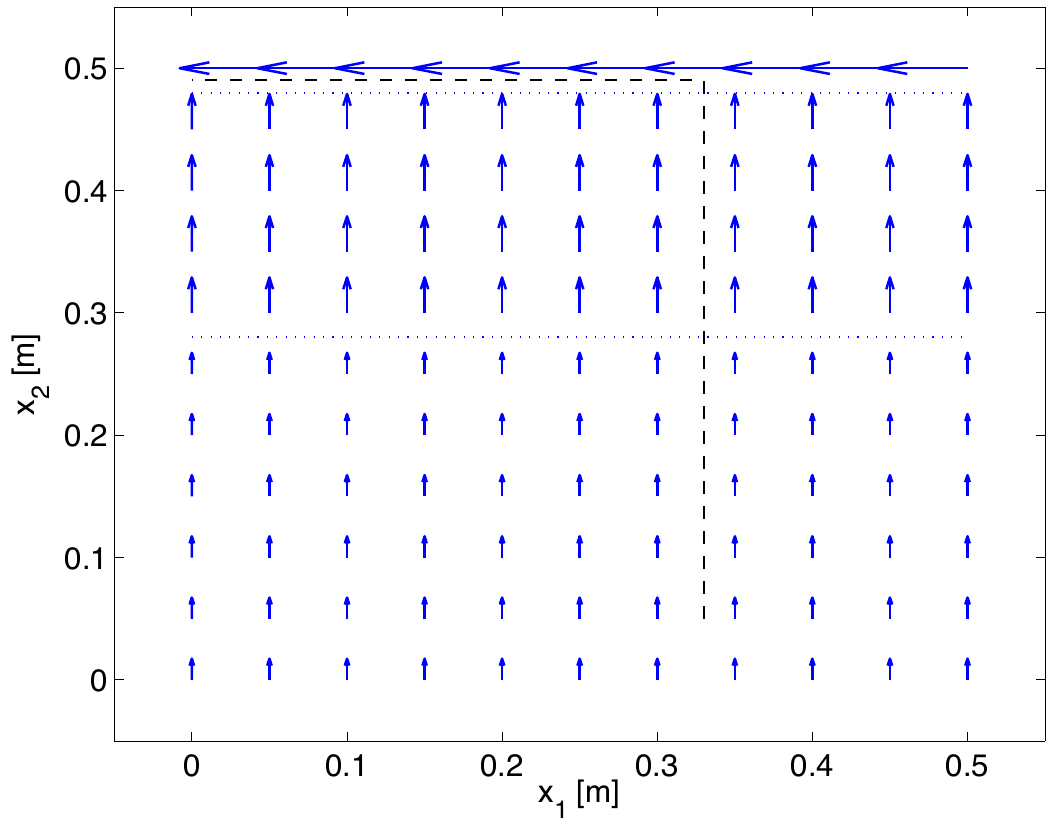}
\caption{The combination Fallback($\bt_4,\bt_5,\bt_6$) first gets up, and then walks home.}
\label{properties:fig:getback}
\end{center}
\end{figure}

Formally, we can use Lemma \ref{properties:lem:robustnessSelector} to compute 
robustness in terms of the region of attraction $R_3'$,  and efficiency in terms of bounds on completion time $\tau_3$.
The results are described in the following Lemma.

\begin{lemma}
\label{properties:lem:robustExample}
 Given $\bt_4,\bt_5,\bt_6$ defined in Equations (\ref{properties:eq:startRobust})-(\ref{properties:eq:endRobust}).
 
 The combined BT $\bt_3=\mbox{Fallback}(\bt_4,\bt_5,\bt_6)$ is FTS, with 
  region of attraction $R_3'=\{x:0< x_1 \leq 0.5, 0\leq x_2 \leq 0.55\}$,
 completion time bound $\tau_3 =24$.
\end{lemma}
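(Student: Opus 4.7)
The plan is to derive FTS of $\bt_3$ by applying Lemma~\ref{properties:lem:robustnessSelector} twice, exploiting the associativity of Fallback to write $\bt_3 = \mbox{Fallback}(\mbox{Fallback}(\bt_4,\bt_5),\bt_6)$. I would first establish that each of $\bt_4,\bt_5,\bt_6$ is individually FTS, reading the time bounds $\tau_4=10$, $\tau_5=4$, $\tau_6=10$ and the regions of attraction $R_i'=R_i$ directly from the vector fields in Equations~(\ref{properties:eq:startRobust})--(\ref{properties:eq:endRobust}). Summing these gives the claimed $\tau_3 = 24$, and unioning the running regions and inspecting where each covers the state space $\{0<x_1\leq 0.5,\ 0\leq x_2\leq 0.55\}$ yields $R_4\cup R_5\cup R_6 \supset R_3'$, which is the remaining claim.

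Next, I would verify the hypothesis of Lemma~\ref{properties:lem:robustnessSelector} for the inner composition $\bt_{45} = \mbox{Fallback}(\bt_4,\bt_5)$: namely $R_4'=R_4$ (explicitly stated) and $S_5 \subset R_4'$. The definition $S_5 = \{x_2 \geq 0.48\}$ meets $R_4 = \{x_1\neq 0, x_2\geq 0.48\}$ everywhere except on the axis $x_1=0$, which lies inside $S_4$ and is therefore already a success for $\bt_3$; restricted to the reachable set (where $x_1>0$, since $f_5$ leaves $x_1$ fixed) the inclusion holds. The lemma then yields $\bt_{45}$ is FTS with $\tau_{45} = \tau_4+\tau_5 = 14$, $R_{45}' = R_4 \cup R_5$, and $S_{45}=S_4$.

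I would then apply Lemma~\ref{properties:lem:robustnessSelector} one more time to $\bt_3 = \mbox{Fallback}(\bt_{45},\bt_6)$. The second hypothesis becomes $S_6 \subset R_{45}' = R_4 \cup R_5$, which I would verify by splitting $S_6 = \{x_2 \geq 0.3\}$ into the two slabs $\{0.3\leq x_2 < 0.48\} \subset R_5$ and $\{0.48\leq x_2\} \cap \{x_1>0\} \subset R_4$, again using that $f_6$ does not move $x_1$ so reachability is preserved. The conclusion of the lemma then gives $R_3' = R_{45}' \cup R_6 = R_4 \cup R_5 \cup R_6$ and $\tau_3 = \tau_{45} + \tau_6 = 24$, matching the claim.

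The main obstacle I expect is the boundary subtlety around $x_1 = 0$: the containment $S_5 \subset R_4$ is not literally true on the success axis, and analogously for $S_6$. My plan is to handle this by arguing within the reachable-state-space viewpoint of Definition~\ref{properties:def:Safeguarding} — since $f_5$ and $f_6$ modify only $x_2$, any trajectory initialized in $R_3'\subset\{x_1>0\}$ stays in $\{x_1>0\}$ while $\bt_5$ or $\bt_6$ is executing, so the transitions from success of one child into the running region of the next always occur in the set where the containments do hold. Everything else is routine bookkeeping of time bounds and the straightforward phase-portrait argument that $R_4 \cup R_5 \cup R_6$ covers $R_3'$.
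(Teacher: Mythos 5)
Your proposal is correct and takes essentially the same route as the paper: establish that $\bt_4,\bt_5,\bt_6$ are individually FTS with $R_i'=R_i$ and the stated time bounds, then apply Lemma~\ref{properties:lem:robustnessSelector} twice via the associativity of the Fallback composition. The only differences are that the paper nests the composition the other way (checking $S_6\subset R_5$ and $S_5\subset R_4$, and in fact $S_6\subset R_5$ is literally false since $S_6$ contains $\{x_2\geq 0.48\}$), whereas your left-associative grouping only needs the weaker and correct containment $S_6\subset R_4\cup R_5$, and you explicitly handle the $x_1=0$ boundary via invariance of $x_1$ under $f_5$ and $f_6$ — two details the paper's terser proof glosses over.
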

\begin{proof}
We note that $\bt_4,\bt_5,\bt_6$ are FTS with
$\tau_4 =10$, $\tau_5 =4$, $\tau_6 =10$
and regions of attractions equal to the running regions $R_i'=R_i$.
Thus we have that 
$S_6 \subset R_5=R_5'$ and
$S_5 \subset R_4=R_4'$.
Applying Lemma \ref{properties:lem:robustnessSelector} twice now gives the desired results,
$R_3'=R_4' \cup R_5'\cup R_6'=\{x:0\leq x_1 \leq 0.5, 0\leq x_2 \leq 0.55\}$ and
$\tau_3 = \tau_4+\tau_5+\tau_6=10+4+10=24$.
\end{proof}


\subsection{Safety}
\label{properties:sec:example:safety}

To illustrate Lemma~\ref{properties:lem:safety} we choose the BT of Figure \ref{properties:fig:safe3}. 
The idea is that the first subtree in the Sequence (named \emph{Guarantee Power Supply}) is to guarantee that the combination does not run out of power,
under very general assumptions about what is going on in the second BT.

First we describe the sets $S_i, F_i, R_i$ and the corresponding vector fields of the functional representation.
Then we apply Lemma~\ref{properties:lem:safety} to see that the combination does indeed guarantee against running out of batteries.
\begin{figure}[htbp]
\begin{center}
\includegraphics[width=0.4 \columnwidth]{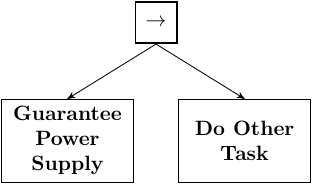}
\caption{A BT where the first action guarantees that the combination does not run out of battery. }
\label{properties:fig:safe3}
\end{center}
\end{figure}

\begin{example}

Let $\bt_1$ be \emph{Guarantee Power Supply} and $\bt_2$ be \emph{Do other tasks}.
Let furthermore $x=(x_{1},x_{2})\in \mathbb{R}^2$, where $x_{1}\in [0,100]$ is the distance from the current position to the recharging station
 and
$x_{2}\in [0,100]$ is the battery level. For this example $\Delta t=10s$.

For \emph{Guarantee Power Supply}, $\bt_1$, we have that 
\begin{align}
S_1 &= \{x:  100 \leq x_2 \mbox{ or } (0.1 \leq x_1, 20 < x_2) \} \label{properties:bt1Start} \\
R_1 &= \{x: x_2 \leq 20 \mbox{ or } (x_2 < 100  \mbox{ and }   x_1 < 0.1) \}\\
F_1 &= \emptyset \\
 f_1(x)&=\begin{pmatrix}
x_1 \\
x_2+1
\end{pmatrix}  \mbox{if } x_1<0.1, x_2<100\\
&=\begin{pmatrix}
x_1-1 \\
x_2-0.1
\end{pmatrix}  \mbox{else} \label{properties:bt1End}
\end{align}
that is, when running, the robot moves  to $x_1<0.1$ and recharges. While moving, the battery level decreases and while charging the battery level increases.
If at the recharge position, it returns Success only after reaching $x_2 \geq 100$.
Outside of the recharge area, it returns Success as long as the battery level is above 20\%.
 A phase portrait of $f_1(x)-x$ is shown in Figure~\ref{properties:fig:rechargeR}.

\begin{figure}[htbp]
\begin{center}
\includegraphics[width=0.6\columnwidth]{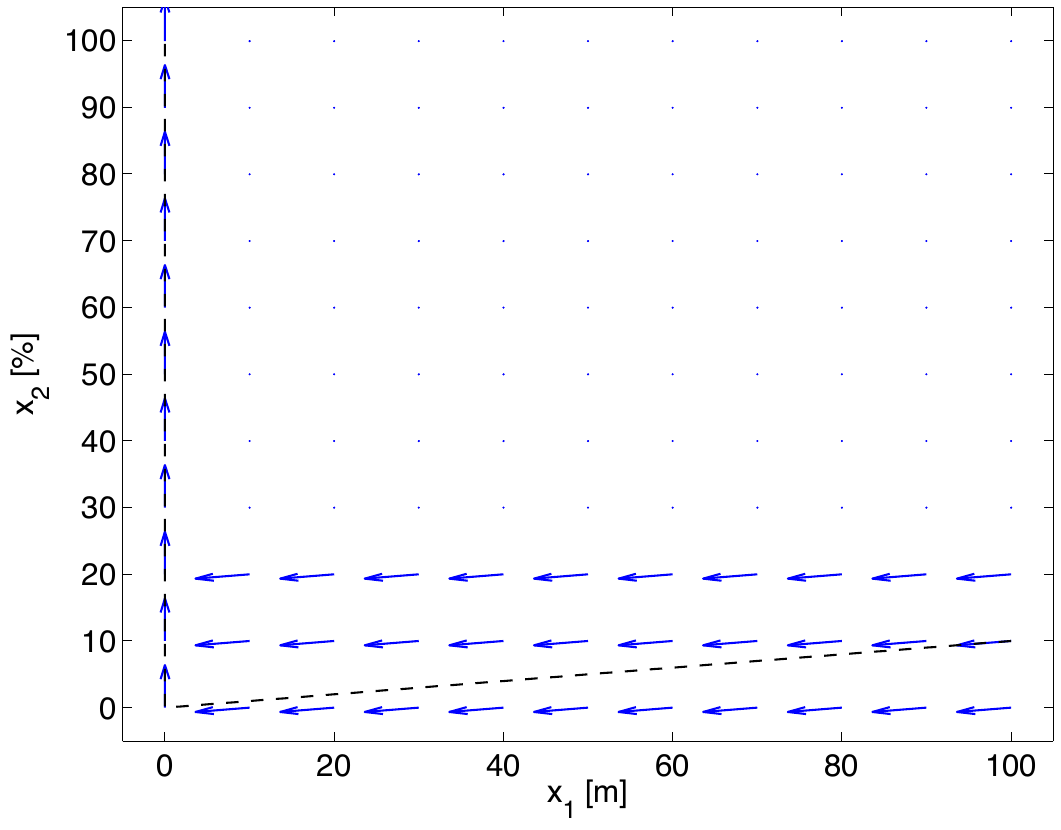}
\caption{The \emph{Guarantee Power Supply} Action }
\label{properties:fig:rechargeR}
\end{center}
\end{figure}

For \emph{Do Other Task}, $\bt_2$, we have that 
\begin{align}
S_2 &=  \emptyset \\
R_2 &= \mathbb{R}^2 \\
F_2 &= \emptyset \\
 f_2(x)&=\begin{pmatrix}
x_1 + (50-x_1)/50\\
x_2-0.1
\end{pmatrix} 
\end{align}
that is, when running, the robot moves towards $x_1=50$ and does some important task,
while the  battery  level keeps on decreasing. 
 A phase portrait of $f_2(x)-x$ is shown in Figure~\ref{properties:fig:rechargeR}.

\begin{figure}[htbp]
\begin{center}
\includegraphics[width=0.6\columnwidth]{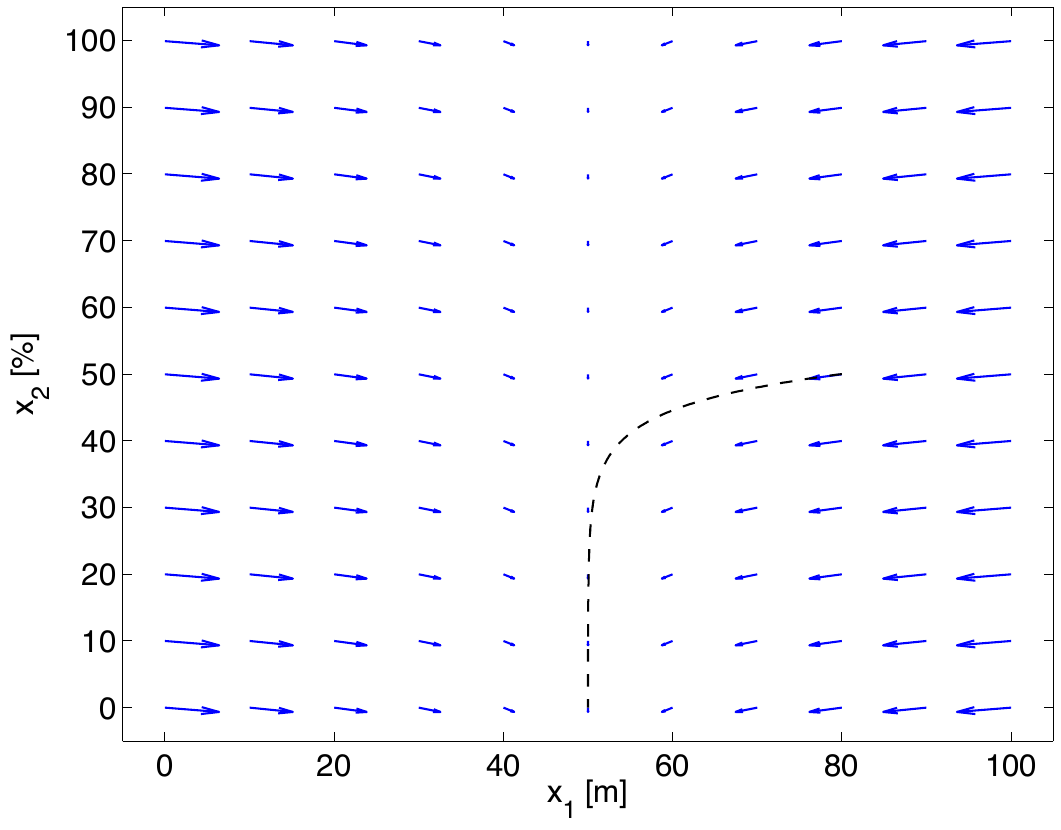}
\caption{The \emph{Do Other Task} Action  }
\label{properties:fig:rechargeDo}
\end{center}
\end{figure}
\end{example}

Given $\bt_1$ and $\bt_2$, the composition $\bt_0$=Sequence($\bt_1,\bt_2$) is created to improve the safety of $\bt_2$, as described below.

Informally, we can look at the phase portrait in Figure~\ref{properties:fig:rechargeTot} to get a feeling for what is going on.
The obstacle to be avoided is the Empty Battery state $O=\{x: x_2=0 \}$, and
$\bt_0$ makes sure that this state is never reached, since the \emph{Guarantee Power Supply} action
starts executing as soon as \emph{Do Other Task} brings the battery level below 20\%.
The remaining battery level is also enough for the robot to move back to the recharging station,
given that the robot position is limited by the reachable space, i.e., $x_{1k}\in [0,100]$.

\begin{figure}[htbp]
\begin{center}
\includegraphics[width=0.6\columnwidth]{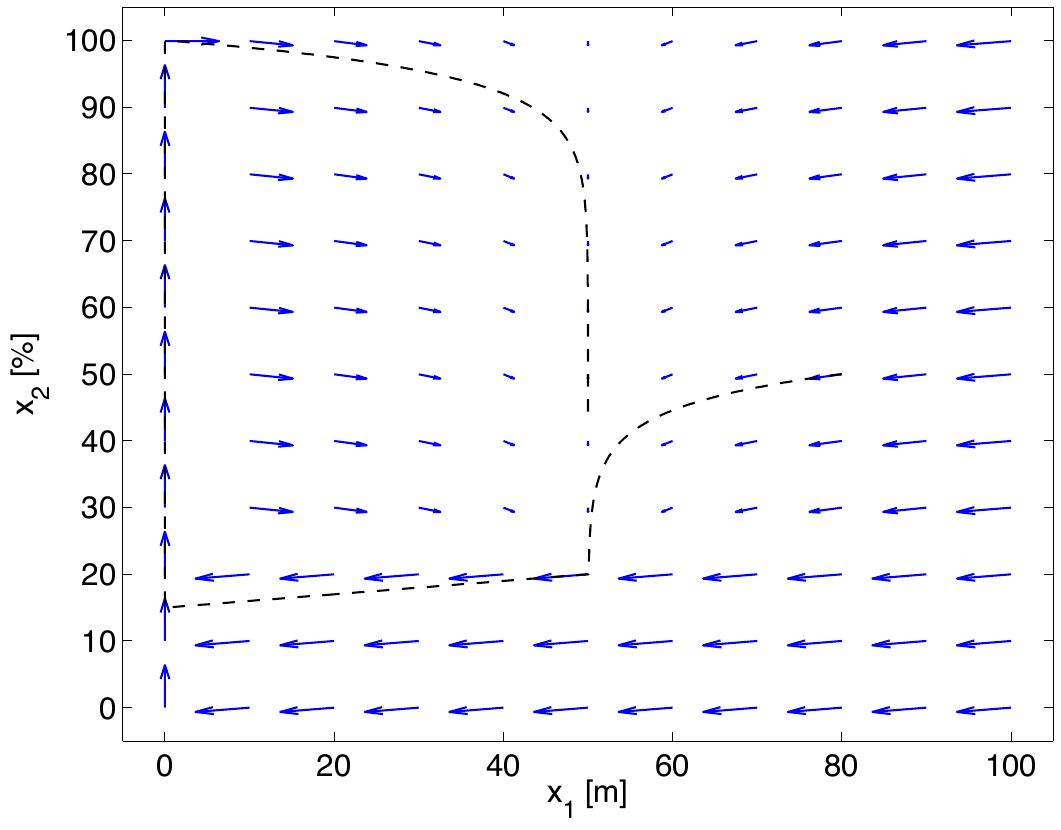}
\caption{Phase portrait of $\bt_0$=Sequence($\bt_1,\bt_2$). Note that $\bt_1$ guarantees that the combination does not run out of battery. 
The dashed line is a simulated execution, starting at $(80,50)$.}
\label{properties:fig:rechargeTot}
\end{center}
\end{figure}

Formally, we state the following Lemma

%
\begin{lemma}
\label{properties:lem:recharge}
 Let the obstacle region be $O=\{x: x_2=0 \}$ and the initialization region be $I=\{x: x_1 \in [0,100], x_2\geq 15 \}$.

 Furthermore, let $\bt_1$ be given by (\ref{properties:bt1Start})-(\ref{properties:bt1End}) and $\bt_2$ be an arbitrary BT satisfying $\max_x ||x-f_2(x)||<d=5$, then  $\bt_0$=Sequence($\bt_1,\bt_2$) is safe 
 with respect to $I$ and $O$, i.e. if $x(0)\in I$, then $x(t) \not \in O$, for all $t > 0$. 
\end{lemma}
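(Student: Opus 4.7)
The plan is to apply Lemma~\ref{properties:lem:safety}, for which I need to show that $\bt_1$ is safeguarding with respect to $O$, $I$, and $d = 5$. By Definition~\ref{properties:def:Safeguarding}, this requires three things: (i) $\bt_1$ is safe from $I$ avoiding $O$; (ii) $\bt_1$ is FTS with a region of attraction containing the relevant part of $I$; (iii) the surrounding condition $\{x \in X : \inf_{s \in S_1}\|x - s\| \leq 5\} \subset I$ holds. Once these are in place, Lemma~\ref{properties:lem:safety} applies directly, since the hypothesis $\max_x \|x - f_2(x)\| < 5$ is exactly $d = 5$.

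For the surrounding condition, I would split on which piece of $S_1$ realizes the infimum. If the nearest $s \in S_1$ has $s_2 \geq 100$, then $x_2 \geq s_2 - 5 \geq 95 \geq 15$. If instead $s_1 \geq 0.1$ and $s_2 > 20$, then $x_2 > 20 - 5 = 15$. In either case the horizontal coordinate is automatically in $[0, 100]$ because we restrict to $X$, so $x \in I$ as required.

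Safety and FTS of $\bt_1$ from $I$ follow from a worst-case analysis of the $f_1$-trajectory. The most demanding starting point in $I \cap R_1$ is $x(0) = (100, 15)$: here $x_1 \geq 0.1$ and $x_2 \leq 20$, so the "else" branch fires for $100$ steps, decreasing $x_1$ by $1$ and $x_2$ by $0.1$ each step and reaching $(0, 5)$. From then on, $x_1 < 0.1$ and $x_2 < 100$, so the "if" branch fires and $x_2$ increases by $1$ per step until $x_2 = 100 \in S_1$. Thus $x_2 \geq 5 > 0$ throughout (so $O$ is avoided), and $S_1$ is entered in at most $100 + 95 = 195$ steps, proving both safety and FTS. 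Applying Lemma~\ref{properties:lem:safety} then gives the safety of $\bt_0$.

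The main obstacle I expect is the mild mismatch between the strict letter of Definition~\ref{properties:def:Safeguarding} and the setup of the example: $I$ is not literally contained in $R_1$, because points in $I$ with $x_2 > 20$ and $x_1 \geq 0.1$ already lie in $S_1$. The clean way around this is to observe that what the proof of Lemma~\ref{properties:lem:safety} actually uses is only (a) that trajectories starting in $I \cap R_1$ under $f_1$ avoid $O$ and reach $S_1$, and (b) the surrounding condition, which together guarantee that each time $\bt_2$ leaves $S_1$ under $\bt_0$, it exits into $I$, at which point $\bt_1$ safely drives the state back into $S_1$. A secondary technicality worth checking is that the worst-case trajectory above truly dominates: since the "else" branch monotonically drives $x_1$ down toward the threshold $0.1$ and the "if" branch strictly increases $x_2$, no oscillating trajectory can reach a lower value of $x_2$ than the one computed, so the bound $x_2 \geq 5$ is tight for $x(0) \in I$.
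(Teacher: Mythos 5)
Your proposal is correct and takes essentially the same route as the paper: establish that $\bt_1$ is safeguarding with respect to $O$, $I$ and the step bound, then invoke Lemma~\ref{properties:lem:safety}. It is in fact more careful than the paper's own three-line proof: the worst-case trajectory computation giving $x_2\geq 5$, the explicit check of the surrounding condition with margin $d=5$ (the paper's proof asserts margin $d=10$, which would actually fail, since points with $10<x_2<15$ lying within distance $10$ of $S_1$ are not in $I$), and your honest repair of the definitional mismatch $I\not\subset R_1$ are all details the paper glosses over.
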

\begin{proof}
 First we see that $\bt_1$ is safe with respect to $O$ and $I$.
 Then we notice that $\bt_1$ is safeguarding with margin $d=10$ for the reachable set $X=\{x: x_1 \in [0,100], x_2 \in [0,100] \}$.
 Finally we conclude that $\bt_0$ is Safe, according to Lemma~\ref{properties:lem:safety}.
\end{proof}
Note that if we did not constraint the robot to move in some reachable set $X=\{x: x_1 \in [0,100], x_2 \in [0,100] \}$, it would be able to 
move so far away from the recharging station that the battery would not be sufficient to bring it back again before reaching $x_2=0$.


\clearpage
\subsection{A More Complex BT}
\label{properties:sec:example:huge}
Below we will use a larger BT to illustrate modularity, as well as the applicability of the proposed analysis tools to more complex problems.

\begin{figure*}[t]
  \includegraphics[width=\textwidth]{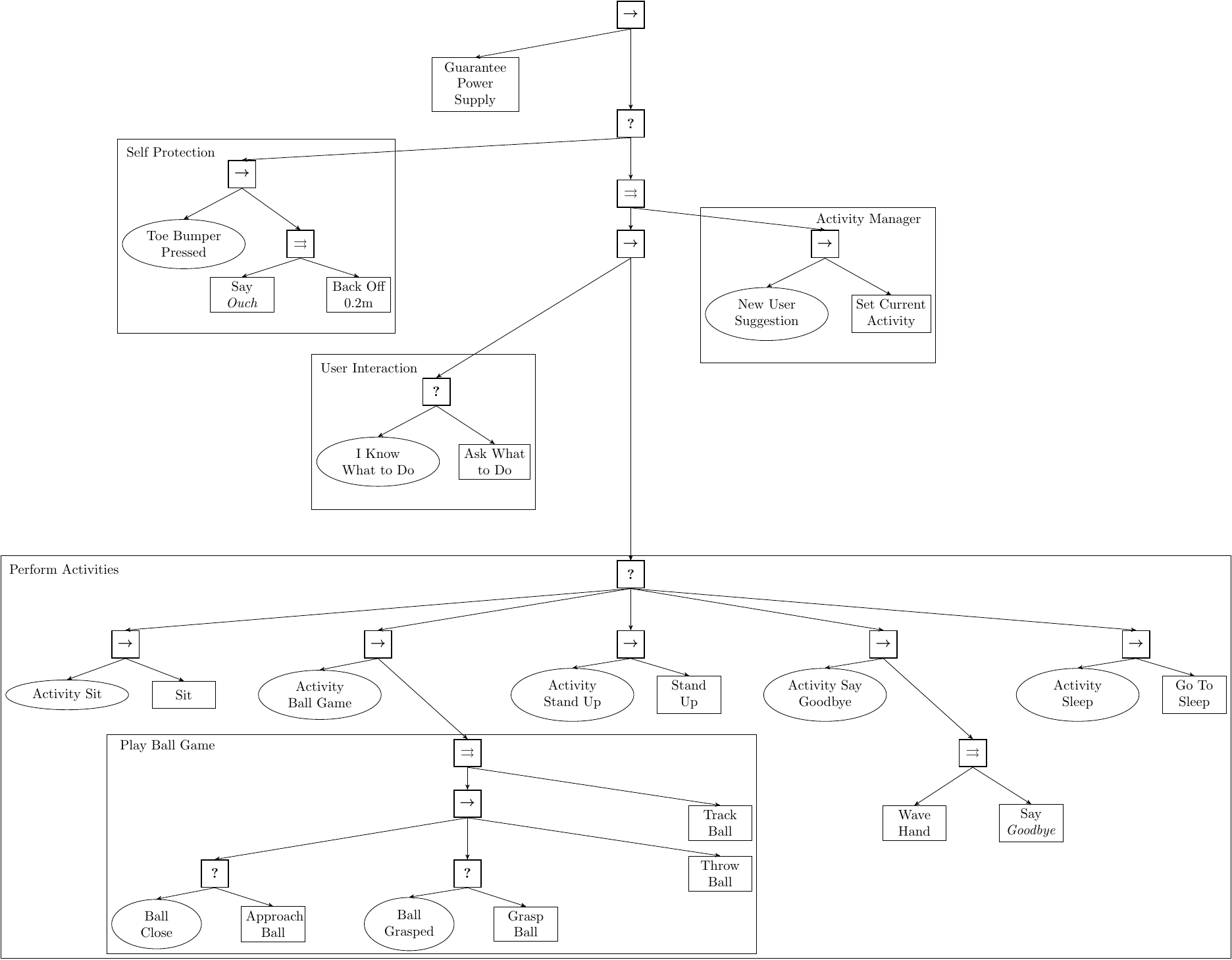}
  \caption{A BT that combines some capabilities  of the humanoid robot in an interactive and modular way. Note how atomic actions can easily be replaced by more complex sub-BTs.}
  \label{properties:fig:hugeExample}
\end{figure*}
\begin{example}

The BT in Figure~\ref{properties:fig:hugeExample} is 
designed for controlling a humanoid robot in an interactive capability demo,
and includes the BTs of Figures~\ref{properties:fig:safe3} and \ref{properties:fig:robust3} as subtrees, as discussed below.

The top left part of the tree includes some exception handling, in terms of battery management, and backing up and complaining in case  the toe bumpers are pressed.
The top right part of the tree is a Parallel node, listening for new user commands, along with a request for such commands if none are given and an execution of the corresponding activities if a command has been received. 

The subtree \emph{Perform Activities} is composed of  checking of
what activity to do, and execution of the corresponding command.
Since the activities are mutually exclusive, we let the Current Activity hold only the latest command and no ambiguities of control commands will occur.

The subtree \emph{Play Ball Game} runs the ball tracker, in parallel with moving closer to the ball, grasping it, and throwing it. 

As can be seen, the design is quite modular. A HDS implementation of the same functionality would need an extensive amount of transition arrows going in between the different actions.  

We will now apply the analysis tools of the paper to the example, initially assuming that all atomic actions are FTS, as described in Definition~\ref{properties:def:FTS}.

Comparing Figures~\ref{properties:fig:safe3} and \ref{properties:fig:hugeExample} we see that they are identical, if we let \emph{Do Other Task} correspond to the whole right part of the larger BT. Thus, according to Lemma \ref{properties:lem:recharge}, the complete BT is safe, i.e.\ it will not run out of batteries, as long as the reachable state space is bounded by 100 distance units from the recharging station and the time steps are small enough so that $\max_x ||x-f_2(x)||<d=5$, i.e.\ the battery does not decrease more than  5\%  in a single time step.

The design of the right subtree in \emph{Play Ball Game} is made to satisfy Lemma~\ref{properties:lem:robustnessSequence},
with the condition $S_1=  R_2'\cup S_2$. Let 
$\bt_1=\mbox{Fallback(Ball Close?, Approach Ball)}$, 
$\bt_2=\mbox{Fallback(Ball Grasped?, Grasp Ball)}$,
$\bt_3=\mbox{Throw Ball}$.
Note that the use of condition-action pairs makes the success regions explicit. 
Thus $S_1=  R_2'\cup S_2$, i.e.\ Ball Close is designed to describe the Region of Attraction of Grasp Ball, and
$S_2=  R_3'\cup S_3$, i.e.\ Ball Grasped is designed to describe the Region of Attraction of Throw Ball.
Finally, applying Lemma~\ref{properties:lem:robustnessSequence} twice
we conclude that the right part of \emph{Play Ball Game} is FTS with completion time bound $\tau_1+\tau_2+\tau_3$,
region of attraction $R_1'\cup R_2'\cup R_3'$ and success region $S_1 \cap S_2 \cap S_3$.

The Parallel composition at the top of 
\emph{Play Ball Game}  combines \emph{Ball Tracker} which always returns Running, with the subtree discussed above.
The Parallel node has $M=1$, 
i.e.\ it only needs the Success of one child to return Success.
Thus, it is clear from Definition~\ref{bts:def:parallel} that the whole BT \emph{Play Ball Game} has the same
properties regarding FTS as the right subtree.

Finally, we note that  \emph{Play Ball Game} fails if the robot is not standing up.
Therefore, we improve the robustness of that subtree in a way similar to 
  Example~\ref{properties:ex:rob}  in  Figure~\ref{properties:fig:robust3}.
Thus we create the composition Fallback(Play Ball Game, $\bt_5$, $\bt_6$),
with 
$\bt_5=\mbox{Sit to Stand}$, 
$\bt_6=\mbox{Lie Down to Sit Up}$.

Assuming that that high dimensional dynamics of  \emph{Play Ball Game}
is somehow captured in the $x_1$ dimension we can apply an argument similar to 
Lemma~\ref{properties:lem:robustExample} to conclude that the combined BT is indeed also FTS
with 
completion time bound $\tau_1+\tau_2+\tau_3+\tau_5+\tau_6$,
region of attraction $R_1'\cup R_2'\cup R_3' \cup R_5' \cup R_6'$
and success region $S_1 \cap S_2 \cap S_3$.

The rest of the BT concerns user interaction and is thus not suitable for doing performance analysis.
\end{example}

Note that the assumption on all atomic actions being FTS is fairly strong. For example, the humanoid's grasping capabilities are somewhat unreliable. 
A deterministic analysis such as this one
is still  useful for making good design choices, but 
in order to capture the stochastic properties of a BT, we need the tools of Chapter~\ref{ch:stochastic}.

But first we will use the tools developed in this chapter to formally investigate how BTs relate to other control architectures. 


\chapter{Formal Analysis of How Behavior Trees  Generalize Earlier Ideas}
\label{chap:btsvsothers}
\label{ch:btsvsothers}
\graphicspath{{btsvsothers/figures/}}
In this chapter, we will formalize the arguments of Chapter~\ref{ch:earlier_ideas}, using the tools developed in Chapter~\ref{ch:properties}.
In particular, we prove that BTs generalize Decision Trees (\ref{btsvsothers:sec:analogyDTs}), the Subsumptions Architecture (\ref{btsvsothers:sec:analogySA}), Sequential Behavior Compositions (\ref{btsvsothers:sec:analogySBCs}) and the Teleo-Reactive Approach (\ref{btsvsothers:sec:analogyTRs}).
Some of the results of this chapter were previously published in the journal paper \cite{colledanchise2017behavior}.

\section{How BTs Generalize Decision Trees}
\label{btsvsothers:sec:analogyDTs}

\begin{figure}[htbp]
\begin{center}
\includegraphics[width=10cm]{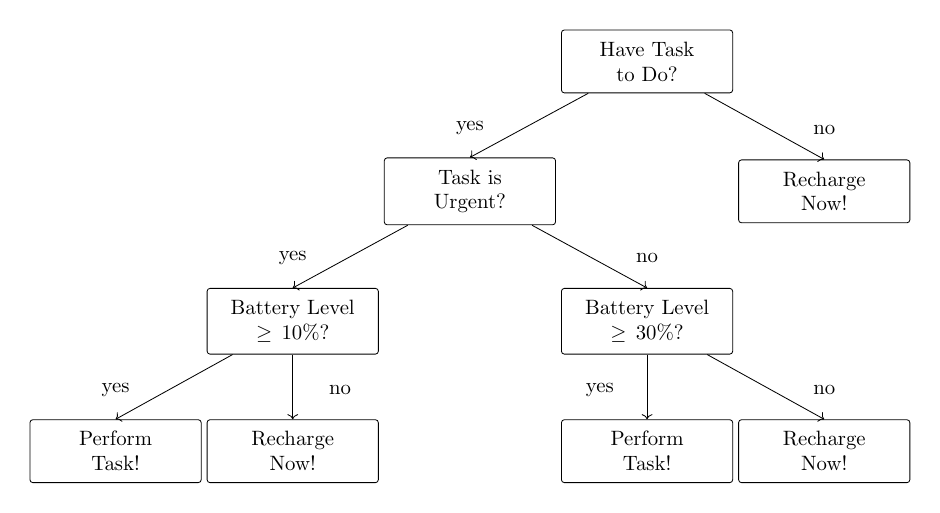}
\caption{The Decision Tree of a robot control system. The decisions are interior nodes, and the actions are leaves. }
\label{btsvsothers:fig:decisionTreeEx}
\end{center}
\end{figure}


Consider the Decision Tree of Figure  \ref{btsvsothers:fig:decisionTreeEx}, the robot has to decide whether to perform a given task or recharge its batteries. This decision is taken based upon the urgency of the task, and the current battery level.
The following Lemma shows how to create an equivalent BT from a given Decision Tree.

\begin{lemma}
\label{btsvsothers:lem:DTBT}
 Given a Decision Tree as follows
 \begin{equation}
 \label{btsvsothers:eq:DT}
DT_i = \begin{cases}
DT_{i1} & \mbox{if predicate $P_i$ is true} \\ 
DT_{i2} & \mbox{if predicate $P_i$ is false} \\ 
\end{cases}
\end{equation}
where 
$DT_{i1}$, $DT_{i2}$ are either atomic actions, or subtrees with identical structure, we can create an equivalent BT by setting
 \begin{equation}
 \bt_i=\mbox{Fallback}( \mbox{Sequence}(P_i,\bt_{i1}),\bt_{i2})
\end{equation}
for non-atomic actions, $ \bt_i = DT_{i}$ for atomic actions
 and requiring all actions to return Running all the time.
 
 The original Decision Tree and the new BT are equivalent in the sense that the same
 values for $P_i$ will always lead to the same atomic action being executed.
 The lemma is illustrated in Figure~\ref{btsvsothers:fig:decisionTreeEq}.
\end{lemma}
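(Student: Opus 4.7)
The plan is to proceed by structural induction on the depth of the Decision Tree, using the state-space formulation from Definitions~\ref{bts:thesis.thesis.def:BT}, \ref{bts:def.seq}, and \ref{bts:def.fallb} to make ``the same atomic action is executed'' precise: for every state $x$, the leaf that the Decision Tree's if-then-else cascade selects must coincide with the leaf whose $f$ drives the dynamics under $\bt_i$.

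For the base case, when $DT_i$ is an atomic action $A$, the construction sets $\bt_i = DT_i = A$, so both sides select $A$ for every $x$ and the equivalence is immediate. For the inductive step, I assume the claim holds for $\bt_{i1}$ and $\bt_{i2}$ corresponding to $DT_{i1}$ and $DT_{i2}$, and consider $\bt_i = \mbox{Fallback}(\mbox{Sequence}(P_i, \bt_{i1}), \bt_{i2})$. I interpret the predicate $P_i$ as a condition leaf with $S_{P_i} = \{x : P_i(x) \text{ true}\}$, $F_{P_i} = \{x : P_i(x) \text{ false}\}$, and $R_{P_i} = \emptyset$ (conditions never return Running, as stated in Section~\ref{sec:classicalBT}).

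Applying Definition~\ref{bts:def.seq} to the inner sequence: when $P_i(x)$ is true we have $x \in S_{P_i}$, so the sequence inherits $r$ and $f$ from $\bt_{i1}$; when $P_i(x)$ is false the condition is in its ``else'' branch and the sequence returns $r_{P_i} = \mathcal{F}$. Applying Definition~\ref{bts:def.fallb} to the outer fallback: the fallback hands control to $\bt_{i2}$ exactly on the failure set of the sequence, i.e.\ exactly when $P_i(x)$ is false; otherwise it retains the sequence's status, i.e.\ that of $\bt_{i1}$. Hence the dynamics $f_i$ of $\bt_i$ coincide with $f_{i1}$ on $\{x : P_i(x)\}$ and with $f_{i2}$ on its complement, which is precisely the semantics of (\ref{btsvsothers:eq:DT}); combining with the inductive hypothesis yields the full equivalence.

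The main obstacle I expect is cleanly handling the hypothesis that ``all actions return Running all the time''. I must verify that, since $\mathcal{R}$ is neither $\mathcal{S}$ nor $\mathcal{F}$, a Running return from an atomic action is absorbed by the ``else'' branches of both Definitions~\ref{bts:def.seq} and \ref{bts:def.fallb} and therefore never bubbles up to trigger a different branch at a higher internal node; this is what makes an atomic action behave as a stable terminal leaf, mirroring a Decision Tree leaf that is fixed once selected. Because $P_i$ contributes only $\mathcal{S}$ or $\mathcal{F}$ and never $\mathcal{R}$, the only branching in $\bt_i$ is the one induced by $P_i$ itself, which is exactly what the if-then-else of (\ref{btsvsothers:eq:DT}) prescribes.
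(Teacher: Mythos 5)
Your proposal is correct and follows essentially the same route as the paper's proof: both apply Definitions~\ref{bts:def.seq} and \ref{bts:def.fallb} to show that the inner Sequence branches on $P_i$ and the outer Fallback hands control to $\bt_{i2}$ exactly on the predicate's failure set, so that $f_i$ equals $f_{i1}$ where $P_i$ holds and $f_{i2}$ otherwise. Your version is simply more explicit — the structural induction and the observation that a Running return is absorbed by the ``else'' branches (so a subtree never spuriously returns Failure and triggers the Fallback) are left implicit in the paper's one-paragraph argument but are exactly the points that make it go through.
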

\begin{proof}
The BT equivalent of the Decision Tree is given by

\begin{equation*}
 \bt_i=\mbox{Fallback}( \mbox{Sequence}(P_i,\bt_{i1}),\bt_{i2})
\end{equation*}
For the atomic actions always returning Running we have  $r_i=R$,
for the actions being predicates we have that $r_i=P_i$.
This, together with Definitions \ref{bts:def.seq}-\ref{bts:def.fallb} gives that
\begin{equation}
f_i(x) = \begin{cases}
f_{i1} & \mbox{if predicate $P_i$ is true} \\ 
f_{i2} & \mbox{if predicate $P_i$ is false} \\ 
\end{cases}
\end{equation}
which is equivalent to (\ref{btsvsothers:eq:DT}).
\end{proof}

Informally, first we note that by requiring all actions to return Running, we basically disable the feedback functionality that is built into the BT. Instead whatever action that is ticked will be the one that executes, just as the Decision Tree.
Second the result is a direct consequence of the fact that the predicates of the Decision Trees are essentially `If ... then ... else ...' statements, that can be captured by BTs as shown in Figure \ref{btsvsothers:fig:decisionTreeEq}.

\begin{figure}[htbp]
\begin{center}
\includegraphics[width=\columnwidth]{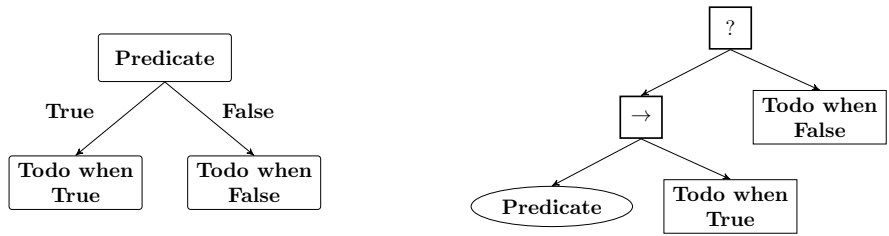}
\caption{The basic building blocks of Decision Trees are `If ... then ... else ...' statements (left), and those can be created in BTs as illustrated above (right). }
\label{btsvsothers:fig:decisionTreeEq}
\end{center}
\end{figure}

Note that this observation opens possibilities of using the extensive literature on learning Decision Trees from human operators, see e.g.\ \cite{sammut20027}, to create BTs. These learned BTs can then be extended with safety or robustness  features, as described in Section~\ref{properties:sec:propDef}.

\begin{figure}[htbp]
\begin{center}
\includegraphics[width=0.45\columnwidth]{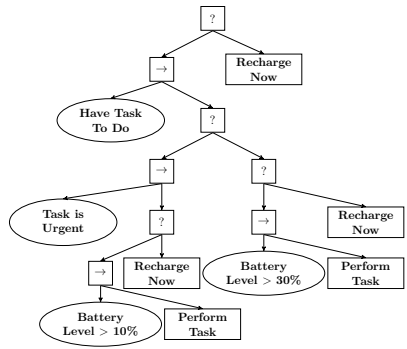}
\caption{A BT that is equivalent to the Decision Tree in Figure  \ref{btsvsothers:fig:decisionTreeEx}. 
A compact version of the same tree can be found in Figure \ref{btsvsothers:fig:decisionTreeBTopt}.
}
\label{btsvsothers:fig:decisionTreeBTeq}
\end{center}
\end{figure}

\begin{figure}[htbp]
\begin{center}
\includegraphics[width=0.45\columnwidth]{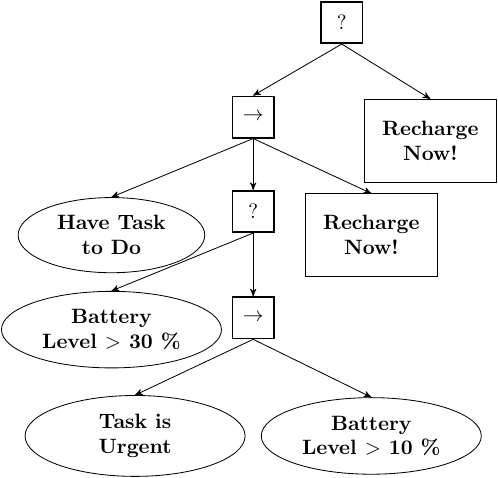}
\caption{A compact formulation of the BT in Figure  \ref{btsvsothers:fig:decisionTreeBTeq}. }
\label{btsvsothers:fig:decisionTreeBTopt}
\end{center}
\end{figure}

We finish this section with an example of how BTs generalize Decision Trees.
Consider the Decision Tree in Figure \ref{btsvsothers:fig:decisionTreeEx}. Applying Lemma \ref{btsvsothers:lem:DTBT} we get the equivalent BT of Figure~\ref{btsvsothers:fig:decisionTreeBTeq}. However the direct mapping does not always take full advantage of  the features of BTs. Thus a more compact, and still equivalent, BT can be found in Figure~\ref{btsvsothers:fig:decisionTreeBTopt}, where again, we assume that all actions always return \emph{Running}.

\section{How BTs Generalize the Subsumption Architecture}
\label{btsvsothers:sec:analogySA}
In this section, we will see how the
 Subsumption Architecture, proposed by Brooks~\cite{brooks1986robust}, can be realized using a Fallback composition.
The basic idea in  \cite{brooks1986robust} was to have a number of controllers set up in parallel and each controller was allowed to output both actuator commands, and a binary value, signaling if it wanted to control the robot or not. The controllers were then ordered according to some priority, and the highest priority controller, out of the ones signaling for action, was allowed to control the robot. Thus, a higher level controller was able to \emph{subsume} the actions of a lower level one.

\begin{figure}[htbp]
\begin{center}
\includegraphics[width=6cm]{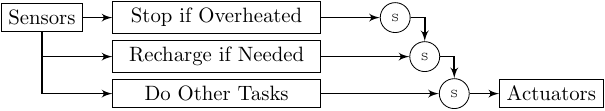}
\caption{The Subsumption Architecture. A higher level behavior can subsume (or suppress) a lower level one. }
\label{btsvsothers:fig:subsump}
\end{center}
\end{figure}

An example of a Subsumption architecture can be found in Figure  \ref{btsvsothers:fig:subsump}. Here, the basic level controller \emph{Do Other Tasks} is assumed to be controlling the robot for most of the time. However, when the battery level is low enough, the \emph{Recharge if Needed} controller will signal that it needs to command the robot, subsume the lower level controller, and guide the robot towards the recharging station. Similarly, if there is risk for overheating, the top level controller \emph{Stop if Overheated} will subsume both of the lower level ones, and stop the robot until it has cooled down.

\begin{lemma}
\label{btsvsothers:lem:sub}
Given a Subsumption architecture, we can create an equivalent BT by arranging the controllers as actions under a Fallback composition, in order from higher to lower priority. Furthermore, we let the return status of the actions be Failure if they do not need to execute, and Running if they do. They never return Success.
Formally, a subsumption architecture composition $S_i(x)=\mbox{Sub}(S_{i1}(x),S_{i2}(x))$ can be defined by
\begin{equation}
 \label{btsvsothers:eq:subEquiv}
S_i(x) = \begin{cases}
S_{i1}(x) & \mbox{if  $S_{i1}$ needs to execute} \\ 
S_{i2}(x) & \mbox{else} \\ 
\end{cases}
\end{equation}
Then we write an equivalent BT as follows
 \begin{equation}
 \bt_i=\mbox{Fallback}( \bt_{i1},\bt_{i2})
\end{equation}
where $ \bt_{ij}$ is defined by $f_{ij}(x)=S_{ij}(x)$ and
\begin{equation}
r_{ij}(x) = \begin{cases}
\mathcal{R} & \mbox{if  $S_{ij}$ needs to execute} \\ 
\mathcal{F} & \mbox{else}. \\ 
\end{cases}
\end{equation}

\end{lemma}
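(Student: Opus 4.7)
The plan is to unpack Definition~\ref{bts:def.fallb} on the proposed BT $\bt_i = \mathrm{Fallback}(\bt_{i1}, \bt_{i2})$ and observe that the resulting piecewise definition of $f_i$ coincides term-for-term with the recursive definition of $S_i$ in equation~(\ref{btsvsothers:eq:subEquiv}). First, I would note that under the stated choice of return statuses, the failure region of the first child is exactly $F_{i1} = \{x : r_{i1}(x) = \mathcal{F}\} = \{x : S_{i1} \text{ does not need to execute at } x\}$, and its running region $R_{i1}$ is the complement. Plugging this into Definition~\ref{bts:def.fallb} gives
\begin{equation}
f_i(x) = \begin{cases} f_{i2}(x) = S_{i2}(x) & \text{if } x \in F_{i1} \text{ (i.e., } S_{i1} \text{ does not need to execute)} \\ f_{i1}(x) = S_{i1}(x) & \text{otherwise (i.e., } S_{i1} \text{ needs to execute)}, \end{cases}
\end{equation}
which is exactly~(\ref{btsvsothers:eq:subEquiv}). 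Hence the one-step transitions of $\bt_i$ and $S_i$ agree on every state, and since the executions are deterministic difference equations sharing the same initial condition, the full trajectories coincide.

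Next, I would handle the recursive case by induction on the depth of the Subsumption hierarchy. The base case (controllers that are leaves in the Subsumption tree) is trivial: the corresponding leaf BT has $f = S$ by construction, and it returns $\mathcal{R}$ precisely when the leaf wants to act, and $\mathcal{F}$ otherwise, matching the assumption of the lemma. For the inductive step, assume the claim holds for $S_{i1}$ and $S_{i2}$; I would only need to verify that the composite BT $\bt_i$ also inherits the required return-status convention so that the induction can continue at the next level up. Since a Fallback over two children that never return $\mathcal{S}$ itself never returns $\mathcal{S}$ (it returns $\mathcal{R}$ if either child is running, and $\mathcal{F}$ only when both fail), and since the composite ``needs to execute'' iff at least one of its children does, the invariant is preserved.

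The main obstacle, such as it is, lies in making precise what ``needs to execute'' means at the level of a composite subsumption block, and aligning that definition with the way the Fallback propagates $\mathcal{R}$ and $\mathcal{F}$. Once that correspondence is stated cleanly, the proof reduces to a direct comparison of two piecewise-defined functions plus a short induction; no fixed-point or stability argument is required, because the claim is purely about equivalence of one-step dynamics. Finally, I would remark that iterating the construction yields the Fallback-of-children picture seen informally in Section~\ref{sec:SA} and illustrated in Figure~\ref{btsvsothers:fig:subsBT_v2}, so the general $n$-controller Subsumption Architecture is captured by an $n$-ary Fallback via the associativity convention introduced after Definition~\ref{bts:def.fallb}.
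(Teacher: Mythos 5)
Your proof is correct and follows essentially the same route as the paper: unpack Definition~\ref{bts:def.fallb}, identify $F_{i1}$ with the set where $S_{i1}$ does not need to execute, and observe that the resulting case split for $f_i$ reproduces~(\ref{btsvsothers:eq:subEquiv}). The paper's proof stops at this one-step comparison for the binary case; your explicit induction on the nesting depth and your check that the composite preserves the return-status convention (never $\mathcal{S}$, $\mathcal{R}$ iff some child wants to act) are details the paper leaves implicit, and they are verified correctly.
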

\begin{proof}
By the above arrangement, and Definition~\ref{bts:def.fallb} we have that
\begin{equation}
f_i(x) = \begin{cases}
f_{i1}(x) & \mbox{if  $S_{i1}$ needs to execute} \\ 
f_{i2}(x) & \mbox{else}, \\ 
\end{cases}
\end{equation}
which is equivalent to (\ref{btsvsothers:eq:subEquiv}) above.
In other words, actions will be checked in order of priority, until one that returns Running is found. 
\end{proof}

A BT version of the example in Figure  \ref{btsvsothers:fig:subsump} can be found in Figure  \ref{btsvsothers:fig:subsBT}. Table \ref{btsvsothers:subsumptionTable} illustrates how the two control structures are equivalent, listing all the $2^3$ possible return status combinations. Note that no action is executed if all actions return Failure.
\begin{figure}[htbp]
\begin{center}
\includegraphics[width=\columnwidth]{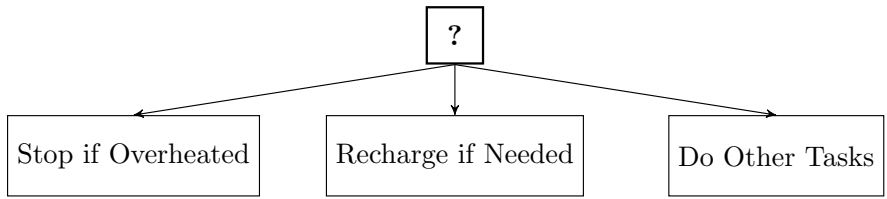}
\caption{A BT version of the Subsumption example in Figure  \ref{btsvsothers:fig:subsump}. }
\label{btsvsothers:fig:subsBT}
\end{center}
\end{figure}

\begin{table}[htp]
\begin{center}
\begin{tabular}{|p{1.8cm} |p{1.6cm}|p{1.6cm}|p{2.5cm}|p{2.5cm}|}
\hline
 \bf{Stop if overheated} & \bf{Recharge if Needed} & \bf{Do Other Tasks} & \bf{Action Executed} \cr
\hline
Running & Running & Running & Stop ...  \cr
 \hline
 Running & Running & Failure & Stop ...  \cr
 \hline
 Running & Failure & Running & Stop ...  \cr
 \hline
 Running & Failure & Failure & Stop ...  \cr
 \hline
 Failure & Running & Running & Recharge ...  \cr
 \hline
 Failure & Running & Failure & Recharge ...  \cr
 \hline
 Failure & Failure & Running & Do other ...  \cr
 \hline
 Failure & Failure & Failure & -  \cr \hline
\end{tabular}
\end{center}
\caption{Possible outcomes of Subsumption-BT example.}
\label{btsvsothers:subsumptionTable}
\end{table}%

\section{How BTs Generalize Sequential Behavior Compositions}
\label{btsvsothers:sec:analogySBCs}

In this section, we will see how the Fallback composition, and Lemma \ref{properties:lem:robustnessSelector}, can also be used to implement the Sequential Behavior Compositions proposed in   \cite{burridge1999sequential}.

The basic idea proposed by  \cite{burridge1999sequential}
is to extend the region of attraction by using a family of controllers,
where the asymptotically stable equilibrium of each controller was either the goal state, or inside the region of attraction of another controller, positioned earlier in the sequence. 


 We will now describe the construction of   \cite{burridge1999sequential} in some detail, and then see how this concept is captured in the BT framework. Given a family of controllers $U=\{\Phi_i\}$, we say that $\Phi_i$ \emph{prepares} $\Phi_j$ if the goal $G(\Phi_i)$ is inside the domain $D(\Phi_j)$. Assume the overall goal is located at $G(\Phi_1)$. A set of execution regions $C(\Phi_i)$ for each controller was then calculated according to the following scheme:

\begin{enumerate}
 \item Let a Queue contain $\Phi_1$. Let $C(\Phi_1)=D(\Phi_1)$, $N=1$, $D_1=D(\Phi_1)$.
 \item Remove the first element of the queue and append all controllers that \emph{prepare} it to the back of the queue.
 \item Remove all elements in the queue that already has a defined $C(\Phi_i)$.
 \item Let $\Phi_j$ be the first element in the queue. Let $C(\Phi_j)=D(\Phi_j) \setminus D_N$, $D_{N+1}=D_N \cup D(\Phi_j)$ and $N \leftarrow N+1$.
 \item Repeat steps 2, 3 and 4 until the queue is empty.
\end{enumerate}
 
The combined controller is then executed by finding $j$ such that $x \in C(\Phi_j)$ and then  invoking controller $\Phi_j$.
 
Looking at the design of the Fallback operator in BTs, it turns out that it does exactly the job of the Burridge algorithm above, as long as the subtrees of the Fallback are ordered in the same fashion as the queue above. We formalize this in  Lemma \ref{btsvsothers:lem:seqBComp} below.

\begin{lemma}
\label{btsvsothers:lem:seqBComp}
Given a set of controllers $U=\{\Phi_i\}$ we define the corresponding regions $S_i=G(\Phi_i), R_i'=D(\Phi_i), F_i=\mbox{Complement}(D(\Phi_i))$, and consider the controllers as atomic BTs, $\bt_i=\Phi_i$.
Assume $S_1$ is the overall goal region.
 Iteratively create a larger BT $\bt_L$ as follows
\begin{itemize}
 \item[1. ] Let  $\bt_L=\bt_1$. 
 \item[2. ] Find a BT $\bt_*\in U$ such that $S_* \subset R_L'$
 \item[3. ] Let $\bt_L \leftarrow \mbox{Fallback}(\bt_L,\bt_*)$
 \item[4. ] Let $U \leftarrow U \setminus \bt_*$
 \item[5. ] Repeat steps 2, 3 and 4 until $U$ is empty.
\end{itemize}
If all $\bt_i$ are FTS, then so is $\bt_L$.
\end{lemma}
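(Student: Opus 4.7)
The plan is to proceed by induction on the number of iterations of the loop (steps 2--4), using Lemma \ref{properties:lem:robustnessSelector} as the inductive engine. At every stage we maintain the invariant that $\bt_L$ is FTS with success region $S_L = S_1$, region of attraction $R_L' = \bigcup_{j \in J} R_j'$ where $J$ indexes the controllers already absorbed into $\bt_L$, and completion time bound $\tau_L = \sum_{j\in J}\tau_j$.

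For the base case, before any iteration $\bt_L = \bt_1$, which is FTS by hypothesis with region of attraction $R_1'$, success region $S_1$ (the overall goal), and time bound $\tau_1$, so the invariant holds trivially. For the inductive step, assume the invariant after some iteration. Step 2 produces a controller $\bt_* \in U$ with $S_* \subset R_L'$; since $\bt_*$ coincides with an atomic $\Phi_*$ from the original family, it is FTS by hypothesis with success region $S_*$ and region of attraction $R_*'$. Applying Lemma \ref{properties:lem:robustnessSelector} to the pair $(\bt_L, \bt_*)$ in the role of $(\bt_1, \bt_2)$ (the condition $S_2 \subset R_1'$ is exactly the selection rule of step 2), we conclude that $\mbox{Fallback}(\bt_L, \bt_*)$ is FTS with region of attraction $R_L' \cup R_*'$, success region $S_L = S_1$ (unchanged), and completion time bound $\tau_L + \tau_*$, preserving the invariant. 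Iterating until $U$ is empty yields the claim.

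The main obstacle is discharging the side condition $R_1 = R_1'$ required by Lemma \ref{properties:lem:robustnessSelector} at every step, i.e.\ verifying that for the partially assembled $\bt_L$ the running region coincides with the certified region of attraction. For atomic $\Phi_i$ this follows from the identification $F_i = \mbox{Complement}(D(\Phi_i))$, which forces $R_i = D(\Phi_i) \setminus S_i = R_i' \setminus S_i$; and under a Fallback with $S_* \subset R_L'$ no new success states are created outside $R_L'$ and no failure states are introduced inside $R_L' \cup R_*'$, so the property is preserved by induction. A secondary worry is that step 2 may fail to produce a $\bt_*$ before $U$ empties, but this is ruled out by the standing assumption that the family $U$ was built to mirror Burridge's \emph{prepares} relation, which guarantees that at each stage some not-yet-absorbed $\Phi_*$ satisfies $G(\Phi_*) \subset D(\Phi_j)$ for some already-absorbed $\Phi_j$, and therefore $S_* \subset R_j' \subset R_L'$.
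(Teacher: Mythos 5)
Your proof takes essentially the same route as the paper's, which simply states that the lemma is a direct consequence of iteratively applying Lemma~\ref{properties:lem:robustnessSelector}; your induction with the invariant on $S_L$, $R_L'$ and $\tau_L$ is just a careful spelling-out of that iteration. The additional attention you pay to discharging the side condition $R_1'=R_1$ at each stage and to why step 2 always produces a candidate $\bt_*$ goes beyond the paper's one-line argument, and correctly identifies the two points the paper glosses over.
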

\begin{proof}
The statement is a direct consequence of iteratively applying Lemma \ref{properties:lem:robustnessSelector}.
\end{proof}

Thus, we see that BTs generalize the Sequential Behavior Compositions of \cite{burridge1999sequential},
with the execution region computations and controller switching replaced by the Fallback composition,
as long as the ordering is given by Lemma~\ref{btsvsothers:lem:seqBComp} above.

\section{How BTs Generalize the Teleo-Reactive approach}
\label{btsvsothers:sec:analogyTRs}
In this section, we use the following 
Lemma to show how to create a BT with the same execution as a given Teleo-Reactive program. The lemma is illustrated by Example~\ref{btsvsothers:ex:goto} and Figure~\ref{btsvsothers:tr2bt}.

\begin{lemma}[Teleo-Reactive BT analogy]
\label{btsvsothers:lem:tr2bt}
 Given a TR in terms of conditions $c_i$ and actions $a_i$, an equivalent BT can be constructed as follows
\begin{equation}
 \bt_{TR}=\mbox{Fallback}(\mbox{Sequence}(c_1,a_1), \ldots, \mbox{Sequence}(c_m,a_m)),
\end{equation}
where we convert the True/False of the conditions to Success/Failure, and let the actions only return Running.
\end{lemma}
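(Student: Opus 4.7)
The plan is to unfold Definitions~\ref{bts:def.seq} and~\ref{bts:def.fallb} on the proposed tree, show that each paired subtree $\sigma_i := \mbox{Sequence}(c_i,a_i)$ behaves as a single TR rule, and then show that the outer Fallback realises the top-to-bottom scan that defines a TR program.

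First I would characterise $\sigma_i$. Because $c_i$ is a condition it never returns Running, so $S_{c_i}=\{x:c_i(x)\text{ holds}\}$ and $F_{c_i}$ is its complement (its running region is empty). Because $a_i$ is a durative action it always returns Running, so $R_{a_i}=\mathbb{R}^n$ and $f_{a_i}$ encodes the continuous execution of $a_i$. Applying Definition~\ref{bts:def.seq} to $\sigma_i$: if $x\in S_{c_i}$ the Sequence returns $r_{\sigma_i}(x)=r_{a_i}(x)=\mathcal{R}$ with $f_{\sigma_i}(x)=f_{a_i}(x)$; if $x\notin S_{c_i}$ it returns $r_{\sigma_i}(x)=r_{c_i}(x)=\mathcal{F}$. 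Hence $F_{\sigma_i}=F_{c_i}$, and whenever $\sigma_i$ does not fail it runs exactly $a_i$.

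Next I would handle the outer Fallback by induction on the number $m$ of rules, using the right-associative convention $\mbox{Fallback}(\bt_1,\bt_2,\bt_3)=\mbox{Fallback}(\bt_1,\mbox{Fallback}(\bt_2,\bt_3))$. For $m=1$ the claim is immediate from the analysis of $\sigma_1$. For the inductive step, Definition~\ref{bts:def.fallb} applied to $\mbox{Fallback}(\sigma_1,\mbox{Fallback}(\sigma_2,\ldots,\sigma_m))$ gives: if $c_1$ holds (so $x\notin F_{\sigma_1}$) the tree returns $\mathcal{R}$ with $f_{a_1}$; if $c_1$ fails, control passes to the inner Fallback, which by the induction hypothesis returns $\mathcal{R}$ with $f_{a_k}$, where $k$ is the smallest index in $\{2,\ldots,m\}$ with $c_k$ holding. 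Combining the two cases, the executed action at state $x$ is $a_k$ for $k=\min\{i:c_i(x)\text{ holds}\}$, which is exactly the TR semantics. If no condition holds, every $\sigma_i$ returns Failure and the overall tree returns Failure, corresponding to an idle TR.

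The main obstacle is essentially bookkeeping: Definitions~\ref{bts:def.seq} and~\ref{bts:def.fallb} are stated for two children, so an induction on $m$ combined with the right-associative convention is needed to cover an $m$-rule program. A minor subtlety is that a condition node carries no intrinsic dynamics $f_{c_i}$; this is harmless because the composite's dynamics always come from the first non-failing branch, and when no such branch exists the composite itself returns Failure, so the update at that tick is semantically irrelevant and matches the continuous re-evaluation of conditions that defines a TR program.
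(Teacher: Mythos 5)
Your proof is correct, and the conclusion it reaches --- the executed action is $a_k$ for $k=\min\{i: c_i(x)\text{ holds}\}$, with overall Failure when no condition holds --- is exactly the claim. The paper, however, disposes of this lemma in one sentence: it simply asserts that it is ``straightforward to see'' that the Fallback finds the first condition returning Success and executes the corresponding action. What you have done differently is to carry out that argument in the state-space formalism of Definitions~\ref{bts:def.seq} and~\ref{bts:def.fallb}, characterising each pair $\mbox{Sequence}(c_i,a_i)$ as a subtree with $F_{\sigma_i}=F_{c_i}$ and running region equal to $S_{c_i}$, and then inducting on $m$ via the right-associative convention. This is precisely the style the paper reserves for the neighbouring Decision-Tree result (Lemma~\ref{btsvsothers:lem:DTBT}), so your version is arguably more consistent with the chapter's stated aim of ``formalizing the arguments'' of the earlier informal discussion; what it buys is an explicit treatment of the two edge cases the paper glosses over, namely the vacuous dynamics $f_{c_i}$ of a failing condition node (correctly noted as irrelevant because the composite's $f$ is always drawn from the first non-failing branch) and the all-fail case, which a TR program normally excludes by ending with an \emph{(else)} rule. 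The only caution is that your claim that the overall tree ``returns Failure, corresponding to an idle TR'' is a modelling choice rather than something forced by Nilsson's semantics, but it does not affect the equivalence on the states where the TR is defined.
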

\begin{proof}
 It is straightforward to see that the BT above executes the exact same $a_i$ as the original TR would have, depending on the values of the conditions $c_i$, i.e. it finds the first condition $c_i$ that returns Success, and executes the corresponding $a_i$.
\end{proof}

\begin{figure}[htbp]
\begin{center}
\includegraphics[width=0.8\columnwidth]{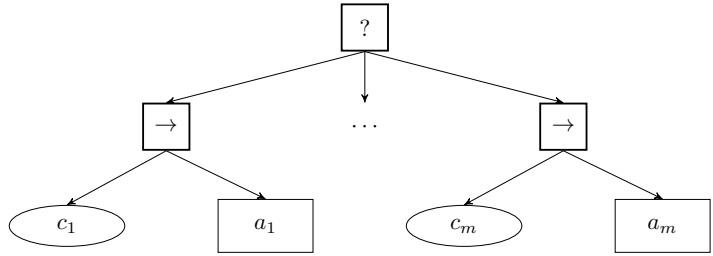}
\caption{The BT that is analogous to a given TR.}
\label{btsvsothers:tr2bt}
\end{center}
\end{figure}

We will now illustrate the lemma with an example from Nilssons original paper~\cite{nilsson1994teleo}.

\begin{example}
\label{btsvsothers:ex:goto}
 The Teleo-Reactive program \emph{Goto(loc)} is described as follows, with conditions on the left and corresponding actions to the right:
\begin{eqnarray}
 \mbox{Equal(pos,loc)} &\rightarrow& \mbox{Idle} \\
 \mbox{Heading Towards (loc)} &\rightarrow& \mbox{Go Forwards} \\
 \mbox{(else)} &\rightarrow& \mbox{Rotate} 
\end{eqnarray}
where \emph{pos} is the current robot position and \emph{loc} is the current destination.

Executing this Teleo-Reactive program, we get the following behavior. If the robot is at the destination it does nothing. If it is heading the right way it moves forward, 
and else it rotates on the spot. In a perfect world without obstacles, this will get the robot to the goal, just as predicted in Lemma~\ref{btsvsothers:lem:TR}.
Applying Lemma~\ref{btsvsothers:lem:tr2bt}, the Teleo-Reactive program Goto is translated to a BT in Figure~\ref{btsvsothers:fig:goto}.

\begin{figure}[htbp]
\begin{center}
\includegraphics[width=0.9\columnwidth]{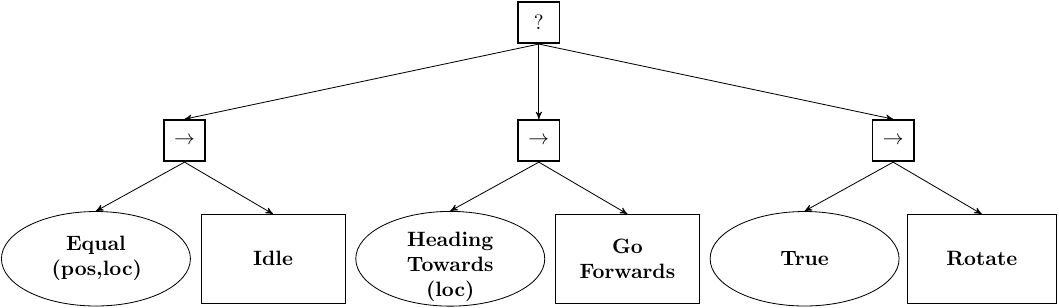}
\caption{The BT version of the Teleo-Reactive program Goto. }
\label{btsvsothers:fig:goto}
\end{center}
\end{figure}

The example continues in  \cite{nilsson1994teleo} with 
 a higher level recursive Teleo-Reactive program, called \emph{Amble(loc)}, designed to add a basic obstacle avoidance behavior
\begin{eqnarray}
 \mbox{Equal(pos,loc)} &\rightarrow& \mbox{Idle} \\
 \mbox{Clear Path(pos,loc)} &\rightarrow& \mbox{GoTo(loc)} \\
 \mbox{(else)} &\rightarrow& \mbox{Amble(new point(pos,loc))} 
\end{eqnarray}
where \emph{new point} picks a new random point in the vicinity of \emph{pos} and \emph{loc}.

Again,
if the robot is at the destination it does nothing. If the path to goal is clear it executes the Teleo-Reactive program Goto. 
Else it picks a new point relative to its current position and destination (loc) and recursively executes a new copy of Amble with that destination.
Applying Lemma~\ref{btsvsothers:lem:tr2bt}, the Amble TR is translated to a BT in Figure~\ref{btsvsothers:fig:amble}.

\begin{figure}[htbp]
\begin{center}
\includegraphics[width=0.9\columnwidth]{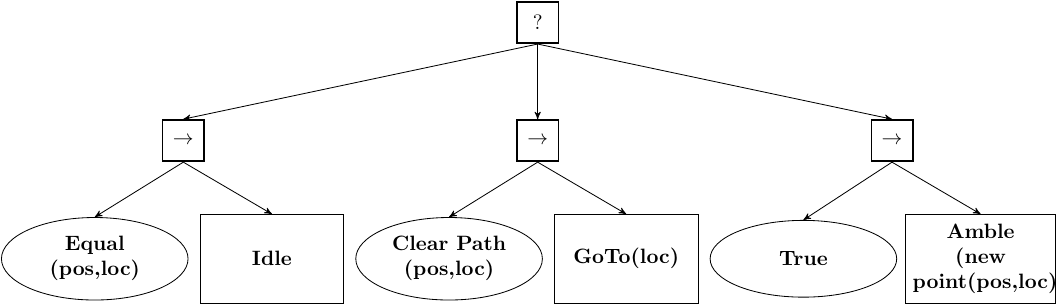}
\caption{The BT version of the TR Amble. }
\label{btsvsothers:fig:amble}
\end{center}
\end{figure}

\end{example}

\subsection{Universal Teleo-Reactive programs and FTS BTs}
\label{btsvsothers:sec:analogyTRs2}
Using the functional form of BTs introduced in~\ref{bts:sec:funcBT} we can show that Lemma~\ref{properties:lem:robustnessSelector} is a richer version of Lemma \ref{btsvsothers:lem:TR} below, and also fix one of its assumptions. Lemma~\ref{properties:lem:robustnessSelector} includes execution time, but more importantly builds on a finite difference equation system model over a continuous state space.
Thus control theory concepts can be used to include phenomena such as imperfect sensing and actuation into the analysis, that was
removed in the strong assumptions of Lemma~\ref{btsvsothers:lem:TR}.
Thus, the BT analogy provides a powerful tool for analyzing Teleo-Reactive designs.

\begin{lemma}[Nilsson 1994]
\label{btsvsothers:lem:TR}
 If a Teleo-Reactive program is Universal, and there are no sensing and execution errors, then the execution of the program will lead to
 the satisfaction of $c_1$.
\end{lemma}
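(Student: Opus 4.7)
The plan is to induct on the priority index of the currently active condition, using the defining property of Universal TR programs: namely, that for each $i \geq 2$, executing the action $a_i$ (while $c_i$ is the highest-priority condition that holds) must lead, after finite time, to a state in which some condition $c_j$ with $j < i$ holds. The assumption of perfect sensing and actuation is what allows us to use this property deterministically, i.e.\ there are no deviations between the modeled state transitions and the actual ones.

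First I would introduce the \emph{level function} $\ell(x) = \min\{\, i : c_i(x) \text{ holds}\,\}$ on the state space, which is well defined at every reachable state precisely because Universality is usually taken to include the requirement that the disjunction $c_1 \vee c_2 \vee \cdots \vee c_m$ is a tautology (any state matches at least one rule). Under the TR semantics, the action being executed at state $x$ is $a_{\ell(x)}$, since the rules are scanned top-down and the first matching condition fires. The goal is then to show that, along any trajectory, $\ell(x(t))$ eventually reaches $1$.

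The inductive step is the heart of the argument. Fix $i \geq 2$ and suppose the trajectory passes through a state with $\ell(x) = i$. By Universality, executing $a_i$ from this state leads, after a finite elapsed time, to a state $x'$ in which some $c_j$ with $j < i$ holds, hence $\ell(x') \leq i - 1$. Under the TR execution model, the conditions are monitored continuously, so as soon as such a $c_j$ becomes true the active rule switches from rule $i$ to rule $j$ (or something higher priority than $j$); perfect sensing ensures this switch is triggered exactly when the state crosses the relevant boundary. Iterating this argument, the sequence of levels visited is strictly decreasing in the well-ordered set $\{1,\dots,m\}$, so after at most $m-1$ such regressions the system reaches a state with $\ell(x) = 1$, i.e.\ $c_1$ is satisfied, which is exactly the conclusion sought.

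The main obstacle will be stating the inductive step cleanly, because the Universality property is about the \emph{semantics} of executing $a_i$ under continuous monitoring rather than about a single discrete transition: while $a_i$ is running, the active condition may in principle jump to one of higher priority than expected, interrupting $a_i$ before it completes its nominal effect. I would handle this by noting that any such interruption can only \emph{decrease} $\ell$, which is consistent with (and in fact strengthens) the induction; and by observing that the ``no sensing or actuation errors'' assumption rules out spurious interruptions that could send $\ell$ back upward. A secondary subtlety is to make precise what ``finite time'' means in the absence of the execution-time bounds provided by our later FTS framework, which is exactly the gap that Lemma~\ref{properties:lem:robustnessSelector} fills when the same setup is re-examined as a BT.
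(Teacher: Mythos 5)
Your proposal contains a genuine gap, and it is precisely the gap that the paper itself identifies: the lemma as stated (with the standard regression property) is in fact not provable, and the paper follows its one-line ``proof'' (which simply defers to Nilsson's claim that the result is easy to see) with an explicit counterexample. The fatal step in your argument is the claim that any interruption of $a_i$ ``can only \emph{decrease} $\ell$.'' This is false. The standard Universality assumption only guarantees that executing $a_i$ \emph{eventually} leads to the satisfaction of some $c_j$ with $j<i$; it says nothing about what happens to $c_i$ itself along the way. If, while $a_i$ is driving the state toward the region where $c_j$ holds, the trajectory exits the region where $c_i$ holds \emph{before} any higher-priority condition becomes true, then the first true condition in the scan becomes some $c_m$ with $m>i$, and the level function $\ell$ jumps \emph{upward}. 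Your appeal to ``no sensing or actuation errors'' cannot repair this: such an upward jump is not an error of perception or execution but a genuine consequence of the dynamics, fully consistent with perfect sensing. Consequently your sequence of visited levels need not be strictly decreasing, the well-ordering argument collapses, and the goal $c_1$ may never be reached.

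The paper's resolution is not to complete the proof but to strengthen the hypothesis: it introduces a \emph{Stronger Regression property} requiring that the execution of $a_i$ lead to the satisfaction of $c_j$ \emph{without ever violating} $c_i$. Under that stronger assumption your induction on $\ell$ goes through essentially as you wrote it, because the active level can then only decrease (either by $a_i$ completing its regression, or by an early switch to a strictly higher-priority rule). The paper also points out that the corresponding BT result (Lemma~\ref{properties:lem:robustnessSelector}) makes the needed hypothesis explicit via the region-of-attraction condition $S_2\subset R_1'$ together with the requirement that the state not leave $R_1'$ during execution, which is the formal analogue of the missing non-violation condition. So your instinct about where the difficulty lies was correct; the error is in asserting that the difficulty resolves in your favor rather than recognizing it as a counterexample to the lemma as stated.
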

\begin{proof}
In \cite{nilsson1994teleo} it is stated that it is easy to see that this is the case.
\end{proof}

The idea of the proof is indeed straight forward, but as we will see when we compare it to the BT results in Section~\ref{btsvsothers:sec:analogyTRs2} below, the proof is incomplete.

In Lemma~\ref{properties:lem:robustnessSelector},  $S_i,R_i,F_i$ correspond to Success, Running and Failure regions and $R'$ denotes the region of attraction.

Lemma~\ref{properties:lem:robustnessSelector} shows under what conditions we can guarantee that the Success region $S_0$ is reached in finite time.
If we for illustrative purposes assume that the regions of attraction are identical to the running regions $R_i=R_i'$,
the lemma states that as long as the system starts in $R_0'= R_1' \cup R_2'$ it will reach $S_0=S_1$ in less than $\tau_0=\tau_1+\tau_2$
time units. The condition analogous to the \emph{regression property} is that $S_2 \subset  R_1'$, i.e. that the Success region of the second BT
is a subset of the region of attraction $R_1'$ of the first BT. The regions of attraction, $R_1'$ and $R_2'$ are very important,
but  there is no corresponding concept in Lemma  \ref{btsvsothers:lem:TR}. 
In fact, we can construct a counter example showing that Lemma  \ref{btsvsothers:lem:TR} does not hold.

\begin{example}[Counter Example]
 Assume that a Teleo-Reactive program is Universal in the sense described above. 
 Thus, the execution of action $a_i$ eventually leads to the satisfaction of $c_j$ where $j<i$ for all $i\neq 1$.
 However, assume it is also the case that the execution of $a_i$, on its way towards satisfying $c_j$ actually leads to a violation of $c_i$.
 This would lead to the first true condition being some $c_m$, with $m>i$ and the execution of the corresponding action $a_m$.
 Thus, the chain of decreasing condition numbers is broken, and the goal condition $a_1$ might never be reached.
\end{example}

The fix is however quite straightforward, and amounts to using the following definition with a stronger assumption.

\begin{definition}[Stronger Regression property]
 For each $c_i, i>1$ there is $c_j, j<i$ such that the execution of action $a_i$ leads to the satisfaction of $c_j$, \emph{without ever violating} $c_i$.
\end{definition}



\graphicspath{{./planning/figures/}}

\chapter{Behavior Trees and Automated Planning}
\label{ch:planning}

In this chapter, we describe how automatic planning can be used to create BTs, exploiting ideas from~\cite{florez2008dynamic, weber2011building, weber2010reactive, colledanchise2016towards}.
First, in Section~\ref{planning.robotics},  we present an extension of the Backchaining design principle, introduced in Section~\ref{design:sec:back_chaining}, including a robotics example. Then we present an alternative approach using A Behavior Language (ABL), in Section~\ref{planning.MS}, including a game example.

In classical planning research, the world is often assumed to be static and known, with
all changes occurring as a result of the actions executed by one controlled agent  \cite{ghallab2016automated}.
Therefore, most approaches return a static plan, i.e. a sequence of actions that brings the system from the initial state to the goal state, with a corresponding execution handled by a classical FSM. 

However, many agents, both real and virtual, act in an uncertain world populated by
 other agents, each with their own set of goals and objectives. 
  Thus, the effect of an action can be unexpected, diverging from the planned state trajectory, making the next planned action infeasible.
  A common way of handling this problem is to  re-plan from scratch on a regular basis, which can be expensive both in terms of time and computational load.
 To address these problems, the following two open challenges were
 identified within the planning community~\cite{Ghallab14}:
\begin{itemize}
\item \say{Hierarchically organized deliberation. This principle goes beyond existing hierarchical planning techniques; its requirements and scope are significantly different. The actor performs its deliberation online}

\item \say{Continual planning and deliberation. The actor monitors, refines, extends, updates, changes and repairs its plans throughout the acting process, using both descriptive and operational models of actions.}
\end{itemize}
Similarly, the recent book \cite{ghallab2016automated} describes the need for an actor that 
\say{reacts to events and extends, updates, and repairs its plan on the basis of its perception}.

Combining planning with BTs is one way of addressing these challenges.
 The reactivity of BTs
enables the agent to re-execute previous subplans without having
to replan at all, and the modularity enables extending the plan
recursively, without having to re-plan the whole task. 
Thus, using BTs as the control architecture in an automated planning algorithm  addresses the above challenges by enabling a reasoning process that is both hierarchical and modular in its deliberation, and can monitor, update, and extend its plans while acting.

In practice, and as will be seen in the examples below,
using BTs enables reactivity, in the sense that if an object slips out of a robot's gripper, the robot will automatically stop and pick it up again without the need to replan or change the BT, see Fig.~\ref{planning:SI.fig.yousimplescreen}.
Using BTs also enables iterative plan refinement, in the sense that if an object is moved to block the path, the original BT can be extended to include a removal of the blocking obstacle. Then, if the obstacle is removed by an external actor, the robot reactively skips the obstacle removal, and goes on to pick up the main object without having to change the BT, see Fig.~\ref{planning:DSI.fig.youscreen2bis}.

%
%

\section{The Planning and Acting (PA-BT) approach}
\label{planning.robotics}
In this section, we describe an extension of the Backchaining approach, called \emph{Planning and Acting using Behavior Trees} (PA-BT).

PA-BT was inspired by the Hybrid Backward-Forward (HBF) algorithm, a task planner for dealing with infinite state spaces~\cite{garrettbackward}. 
The HBF algorithm has been shown  to efficiently solve problems with large state spaces.
Using an HBF algorithm we can
 refine the acting process by mapping the \emph{descriptive} model of actions, which describes \emph{what} the actions do, onto an \emph{operational} model, which defines \emph{how} to perform an action in certain circumstances.

The PA-BT framework combines the planning capability in an infinite state space from HBF
with the advantages of BTs compared to FSMs in terms of \emph{reactivity} and \emph{modularity}.
Looking back at the example above, the \emph{reactivity} of BTs enables the robot to pick up a dropped object without 
having to replan at all. The \emph{modularity} enables extending the plan by adding actions for handling the blocking sphere, without having to
replan the whole task. Finally, when the sphere moves away, once again the \emph{reactivity} enables the correct execution without
changing the plan.
Thus, PA-BT is indeed hierarchical and modular in its deliberation, and it does monitor, update and extend its plans while acting,
addressing the needs described in~\cite{ghallab2016automated, Ghallab14}.
The interleaved plan-and-act process of PA-BT is similar to the one of \emph{Hierarchical Planning in the Now} (HPN)~\cite{kaelbling2011hierarchical} with the addition of improved reactivity, safety and fault-tolerance.

%

The core concept in BT Backchaining was to replace a condition by a small BT achieving that condition,
on the form of a PPA BT (see Section~\ref{design:sec:back_chaining}), as shown in Figure~\ref{planning:fig:back_chaining_general}.

\begin{figure}[h]
\centering
\includegraphics[width=\columnwidth]{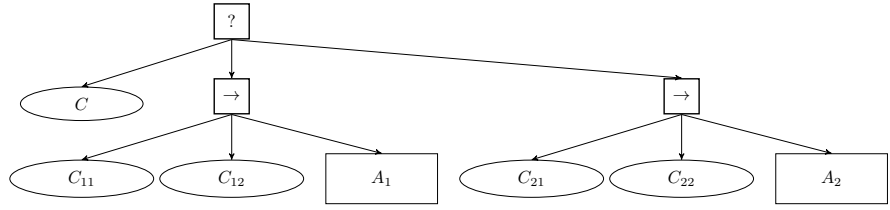}
\caption{Copy of Figure~\ref{design:fig:back_chaining_general}. The general format of a PPA BT. The Postcondition $C$ can be achieved by either one of actions $A_1$ or $A_2$, which have Preconditions $C_{1i}$ and $C_{2i}$ respectively.)}
\label{planning:fig:back_chaining_general}
\end{figure}


To get familiar with PA-BT, we look at a simple planning example. 
The planning algorithm iteratively creates the BTs in Figure~\ref{planning:PA.fig.it1to4} 
with the final BT used  in the example in            
 Figure~\ref{planning:PA.fig.it4}. The setup is shown in Figure~\ref{planning:IN.fig.front}.
 We can see how  each of the BTs in the figure is the result of applying the PPA expansion to a condition in the previous BT.
 However, a number of details needs to be taken care of, as explained in sections below.

\begin{figure}[t]

        \centering
          \begin{subfigure}[b]{0.25\columnwidth}
                \centering
\includegraphics[width=1\columnwidth]{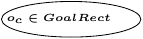}
                \caption{The initial BT.}
                \label{planning:PA.fig.it0}
        \end{subfigure}%
        
        \begin{subfigure}[b]{0.5\columnwidth}
                \centering
\includegraphics[width=1\columnwidth]{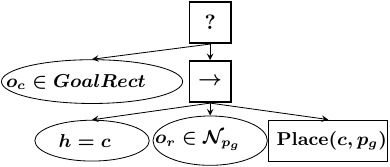}
                \caption{The BT after one iteration.}
                \label{planning:PA.fig.it1}
        \end{subfigure}%
       ~ 
        \begin{subfigure}[b]{0.5\columnwidth}
                \centering
\includegraphics[width=1\columnwidth]{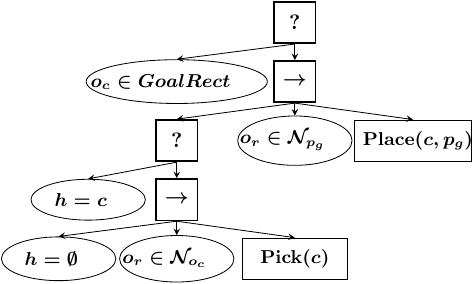}
                \caption{The BT after two iterations.}
                \label{planning:PA.fig.it2}              
        \end{subfigure}
        ~ 

        \centering
        \begin{subfigure}[b]{0.45\columnwidth}
                \centering
\includegraphics[width=1\columnwidth]{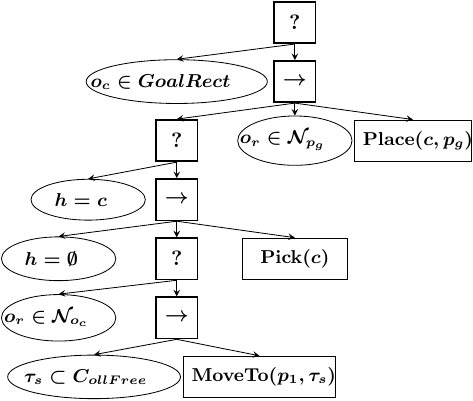}
                \caption{The BT after three iterations.}
                \label{planning:PA.fig.it3}
        \end{subfigure}%
       ~ 
        \begin{subfigure}[b]{0.55\columnwidth}
                \centering
\includegraphics[width=1\columnwidth]{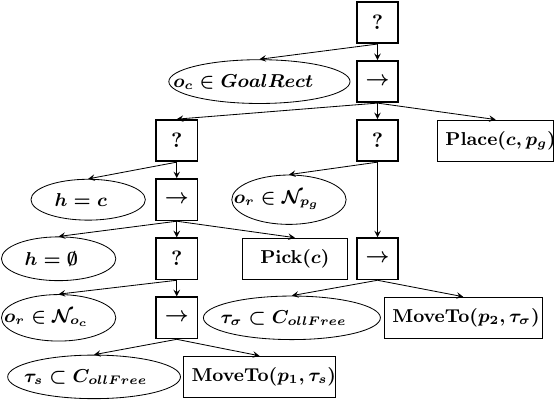}
                \caption{The BT after four iterations, the final version}
                \label{planning:PA.fig.it4}              
        \end{subfigure}
        
        \caption{BT updates during the execution.}
         \label{planning:PA.fig.it1to4}              
        ~ 

\end{figure}

\begin{figure}[t!]
    \centering
    \begin{subfigure}[t]{0.485\columnwidth}
        \centering
\includegraphics[width = \columnwidth]{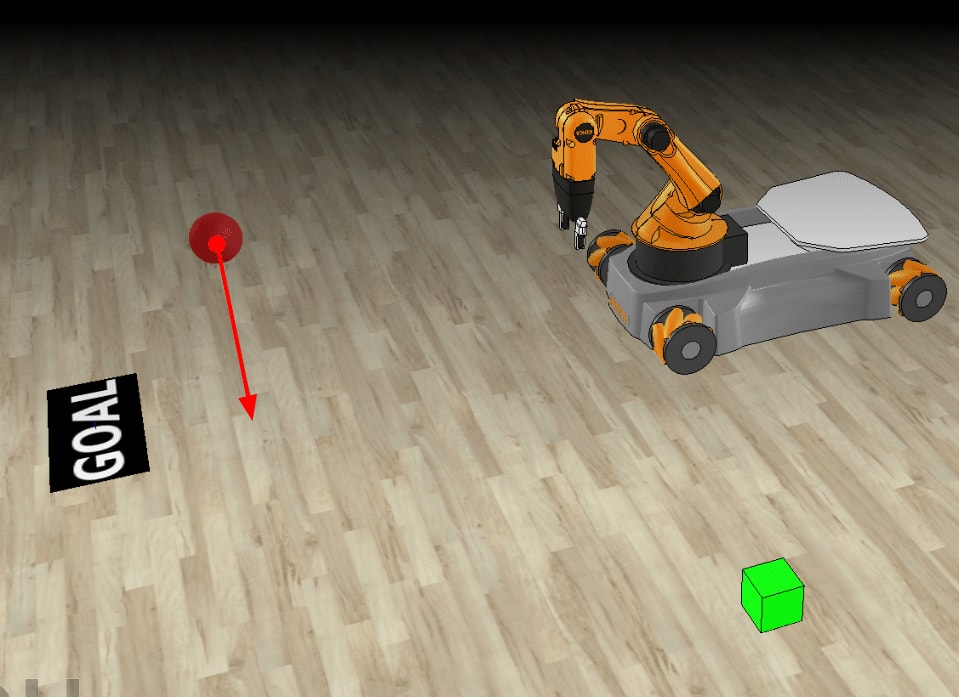}
         \caption{}
    \end{subfigure}%
    ~ 
    \begin{subfigure}[t]{0.485\columnwidth}
        \centering
\includegraphics[width = \columnwidth]{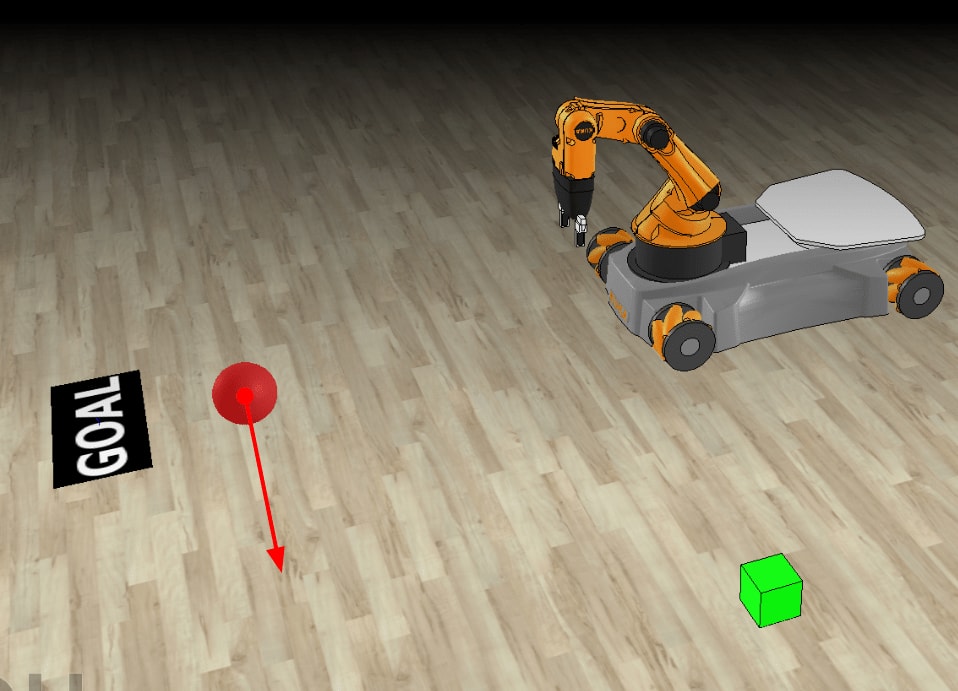}
   \caption{}
    \end{subfigure}
    \vspace{1em}
    \caption{
    A simple example scenario where the
    goal is to place the green cube $C$ onto the goal region $G$. The fact that the sphere $S$  is suddenly moved (red arrow) by an external agent to  block the path must be handled.
    In (a) the nominal plan is to \emph{MoveTo(c)$\to$Pick(c)$\to$MoveTo(g)$\to$Drop()} when the sphere suddenly moves to block the path. 
    In (b), after refining the plan, the extended plan is to \emph{MoveTo(s)$\to$Push(s)$\to$MoveTo(c)} 
        \emph{$\to$Pick(c)$\to$MoveTo(g)$\to$Drop()} when the sphere is again suddenly moved by another agent, before being pushed. Thus our agent must smoothly revert to the original set of actions. PA-BT does this without re-planning. Note that when $S$ appears, $\tau_\sigma \subset C_{collFree}$ returns Failure and the BT in Figure~\ref{planning:PA.fig.it4} is expanded further, see Example~\ref{planning:PA.exa.complex},  to push it out of the way. }
    \label{planning:IN.fig.front}
\end{figure}

\begin{example}
\label{planning:PA.exa.simple}

The robot in Figure~\ref{planning:IN.fig.front} is given the task to move the green cube into the rectangle marked GOAL
(the red sphere is handled in Example~\ref{planning:PA.exa.complex} below, in this inital example it is ignored).
The BT  in Figure~\ref{planning:PA.fig.it4} is executed, and in  each time step the root of the BT is ticked. The root is a Fallback node, which ticks is first child, the condition $o_c \in GoalRect$ (cube on goal). If the cube is indeed in the rectangle we are done, and the BT returns Success.

If not, the second child, a Sequence node, is ticked. The node ticks its first child, which is a Fallback, which again ticks its first child, the condition 
$h = c$ (object in hand is cube). If the cube is indeed in the hand, the Condition node returns Success, its parent, the Fallback node returns Success, and its parent, the Sequence node ticks its second child, which is a different Fallback, ticking its first child which is the condition $o_r \in \mathcal{N}_{p_g}$ (robot in the neighborhood of $p_g$). If the robot is in the neighborhood of the goal, the condition and its parent node (the Fallback) returns Success, followed by the sequence ticking its third child, the action $Place(c,p_g)$ (place cube in a position $p_g$ on the goal), and we are done.

If $o_r \in \mathcal{N}_{p_g}$ does not hold, the action $MoveTo(p_g,\tau_g)$ (move to position $p_g$ on the goal region using the trajectory $\tau_g$) is executed, given that the trajectory is free $\tau \subset C_{ollFree}$.
Similarly, if the cube is not in the hand, the robot does a $MoveTo(p_c,\tau_c)$ (move to cube, using the trajectory $\tau_c$) followed by a $Pick(c)$ after checking that the 
hand is empty, the robot is not in the neighborhood of $c$ and that the
corresponding trajectory is free.

We conclude the example by noting that the BT is ticked every timestep, e.g. every 0.1 second. Thus, when actions return Running (i.e. they are not finished yet) the return status of Running is progressed up the BT and the corresponding action is allowed to control the robot. However, if e.g., the cube slips out of the gripper, the condition $h = c$ instantly returns Failure, and the robot starts checking if it is in the neighborhood of the cube or if it has to move before picking it up again.
\end{example}

We are now ready to study PA-BT in detail. The approach is described in Algorithms~\ref{planning:PA.alg.main} (finding what condition to replace with a PPA) and~\ref{planning:PA.alg.update} (creating the PPA and adding it to the BT).
First we will give an overview of the algorithms and see how they are applied to the robot in Figure~\ref{planning:IN.fig.front},
to iteratively create the BTs of Figure~\ref{planning:PA.fig.it1to4}.
We will then discuss the key steps in more detail.

\begin{algorithm2e}[h!]
\caption{Main Loop, finding conditions to expand and resolving conflicts} 
 \label{planning:PA.alg.main}
\DontPrintSemicolon
%
\SetKwProg{myalg}{algorithm2e}{}{}
 $\mathcal{T} \gets \emptyset$ \\
\For{$c$ in $\mathcal{C}_{goal}$} {
 $\mathcal{T} \gets $SequenceNode($\mathcal{T}$, $c$)}
\While{True\label{planning:PA.alg.main.while}}{
$T\gets$\FuncSty{RefineActions($\bt$)} \label{planning:PA.alg.main.refine}\\
  \Do{\label{planning:PA.alg.main.do} $r \neq \mbox{\emph{Failure}}$ \label{planning:PA.alg.main.fail}} {
    $r \gets$ \FuncSty{Tick($T$)} \label{planning:PA.alg.main.fail2}}
 $c_f \gets$ \FuncSty{GetConditionToExpand($\bt$)}  \label{planning:PA.alg.expand.getcon}\\
 $\mathcal{T}, \mathcal{T}_{new\_subtree}\gets$ \FuncSty{ExpandTree($\bt$,$c_f$)}\label{planning:PA.alg.main.expand} \\
   \While{ $ \mbox{Conflict}(\bt)$  \label{planning:PA.alg.main.feas}} {  $\mathcal{T}\gets$ \FuncSty{IncreasePriority($\mathcal{T}_{new\_subtree}$)} \label{planning:PA.alg.main.incprio} } }
 
\end{algorithm2e}

\begin{algorithm2e}[h]
\caption{Behavior Tree Expansion, Creating the PPA}
      \label{planning:PA.alg.update}
  \SetKwFunction{algo}{ExpandTree}
  \SetKwFunction{proc}{proc}
  \SetKwProg{myalg}{Function}{}{}
  \myalg{\algo{$\mathcal{T}$, $c_f $}}{


 $A_T \gets $ \FuncSty {GetAllActTemplatesFor($c_f$) \label{planning:PA.alg.expand.getact}}  \\
 	$\mathcal{T}_{fall} \gets c_f$\\ 
\For{ $a$ in $A_T$} {
	$\mathcal{T}_{seq} \gets \emptyset$\\
\For{ $c_a$ in $a.con$}{
 $\mathcal{T}_{seq} \gets $ SequenceNode($\mathcal{T}_{seq}$,$c_a$)   \\  
}
 $\mathcal{T}_{seq} \gets $ SequenceNode($\mathcal{T}_{seq}$,$a$) \\ 
 $\mathcal{T}_{fall} \gets $ FallbackNodeWithMemory($\mathcal{T}_{fall}$,$\mathcal{T}_{seq}$) \\ 
}

 $\mathcal{T} \gets$  Substitute($\mathcal{T}$,$c_f$,$\mathcal{T}_{fall}$)\\
 \Return{$\mathcal{T}$, $\mathcal{T}_{fall}$}
 }
 
 \end{algorithm2e}

\begin{algorithm2e}[h]

\caption{Get Condition to Expand}
      \label{planning:PA.alg.getcond}
  \SetKwFunction{algo}{GetConditionToExpand}
  \SetKwFunction{proc}{proc}
  \SetKwProg{myalg}{Function}{}{}
  \myalg{\algo{$\mathcal{T}$}}{
	\For{$c_{next} $ in $\FuncSty{GetConditionsBFS()}$}
	{
		\If{$c_{next}.status =\mbox{\emph{Failure}}$ \textbf{and} $c_{next} \notin \mbox{ExpandedNodes}$ \label{planning:PA.alg.getcond.notexpanded}}
		{
		\mbox{ExpandedNodes}.\FuncSty{push\_back($c_{next}$)}
			\Return{$c_{next}$}		
		}
	} 
	\Return{$None$}\label{planning:PA.alg.getcond.nocond}
 }

\end{algorithm2e}

{
\begin{remark}
Note that the conditions of an action template can contain a disjunction of propositions. This can be encoded by a Fallback composition of the corresponding Condition nodes.
\end{remark}
}

\subsection{Algorithm Overview}
Running Algorithm~\ref{planning:PA.alg.main} we have the set of goal constraint  $\mathcal{C}_{goal} = \{o_c \in \{\mbox{GoalRect}\} \}$, thus the initial BT
is composed of a single condition
 $\bt = (o_c \in \{\mbox{GoalRect}\})$, as shown in Figure~\ref{planning:PA.fig.it0}.
 The first iteration of the loop starting on Line~\ref{planning:PA.alg.main.while} of Algorithm~\ref{planning:PA.alg.main} now produces the next BT shown in Figure~\ref{planning:PA.fig.it1}, the second iteration produces the BT in Figure \ref{planning:PA.fig.it2} and so on until the final BT in Figure \ref{planning:PA.fig.it4}.

In detail, at the initial state, running $\bt$ on Line 7 
returns a Failure, since the cube is not in the goal area. Trivially, the \emph{GetConditionToExpand} returns  $c_f=(o_c \in \{\mbox{GoalRect}\})$,
and a call to ExpandTree (Algorithm~\ref{planning:PA.alg.update}) is made on Line \ref{planning:PA.alg.main.expand}. 
On Line~2 of Algorithm~\ref{planning:PA.alg.update} we get  all action templates that satisfy $c_f$ i.e. $A_T=Place$.
Then on Line 7 and 8 a Sequence node $\bt_{seq}$ is created of the conditions of $Place$ (the hand holding the cube, $h = c$, and the robot being near the goal area, $o_r \in \mathcal{N}_{p_g}$) and $Place$ itself.
On Line 9 a Fallback node $\bt_{seq}$ is created of $c_f$ and the sequence above.
Finally, a BT is returned where this new sub-BT is replacing $c_f$.
The resulting BT is shown in Figure~\ref{planning:PA.fig.it1}. 

Note that Algorithm~\ref{planning:PA.alg.update} describes \emph{the core principle} of the PA-BT approach. \emph{A condition is replaced by the corresponding PPA, a check if the condition is met, and an action 
to meet it. The action is only executed if needed.} If there are several such actions, these are added in Fallbacks {with memory}. Finally, the action is preceded by conditions checking its own preconditions. If needed,
these conditions will be expanded in the same way in the next iteration.

Running the next iteration of Algorithm~\ref{planning:PA.alg.main}, a similar expansion of the condition $h = c$ transforms the BT in Figure~\ref{planning:PA.fig.it1} to the BT in Fig.~\ref{planning:PA.fig.it2}.
Then, an expansion of the condition $o_r \in \mathcal{N}_{o_c}$ transforms the BT in Figure~\ref{planning:PA.fig.it2} to the BT in Figure~\ref{planning:PA.fig.it3}.
Finally, an expansion of the condition $o_r \in \mathcal{N}_{p_g}$ transforms the BT in Figure~\ref{planning:PA.fig.it3} to the BT in Figure~\ref{planning:PA.fig.it4},
and this BT is able to solve the problem shown in Figure~\ref{planning:IN.fig.front}, until the red sphere shows up.
Then additional iterations are needed.

\subsection{The Algorithm Steps in Detail}

\paragraph*{\textbf{Refine Actions (Algorithm~\ref{planning:PA.alg.main} Line~\ref{planning:PA.alg.main.refine})}\\}

PA-BT is based on the definition of the \emph{action templates}, which contains the descriptive model of an action. An action template is characterized by conditions \emph{con} (sometimes called preconditions) and effects \emph{eff} (sometimes called postconditions) that are both constraints on the world (e.g. door open, robot in position). An action template is mapped online into an \emph{action primitive}, which contains the operational model of an action and is executable. Figure~\ref{planning.pabt.fig.acttemplate} shows an example of an action template and its corresponding action refinement.

To plan in infinite state space, PA-BT relies on a so-called Reachability Graph (RG) provided by the HBF algorithm, see \cite{garrettbackward} for details.
The RG provides efficient sampling for the actions in the BT, allowing us to map the descriptive model of an action into its operational model. 

%
%
%
%
%

\begin{figure}
\centering
          \begin{subfigure}[b]{0.35\columnwidth}
\begin{equation*}
\begin{aligned}
&\mbox{Pick}(i)   \\   
&\mbox{con}: o_r \in \mathcal{N}_{o_i} \hspace{2em} \\ 
&\hspace{2.2em}h  =  \emptyset \\	  
&\mbox{eff}:\hspace{0.4em}  h = i \hspace{1em} 
\end{aligned}
\end{equation*}
\caption{Action Template for picking a generic object denoted $i$.\\}
\end{subfigure}
~
\begin{subfigure}[b]{0.35\columnwidth}
\begin{equation*}
\begin{aligned}
&\mbox{Pick}(cube)   \\   
&\mbox{con}: o_r \in \mathcal{N}_{o_{cube}} \hspace{2em} \\ 
&\hspace{2.2em}h  =  \emptyset \\	  
&\mbox{eff}:\hspace{0.4em} h = cube \hspace{1em} 
\end{aligned}
\end{equation*}
\caption{Action primitive created from the Template in (a), where the object is given as $i= cube$.}
\end{subfigure}
\caption{\emph{Action Template} for Pick and its corresponding \emph{Action primitive}. $o_r$ is the robot's position, $\mathcal{N}_{o_i}$ is a set that defines a neighborhood of the object $o_i$, $h$ is the object currently in the robot's hand. The conditions are that the robot is in the neighborhood of the object, and that the robot hand is empty. The effect is that the object is in the robot hand.}
\label{planning.pabt.fig.acttemplate}
\end{figure}

\paragraph*{\textbf{Get Condition To Expand and Expand Tree (Algorithm~\ref{planning:PA.alg.main} Lines~9 and~10) }\\}

If the BT returns Failure, Line~\ref{planning:PA.alg.expand.getcon} of Algorithm~\ref{planning:PA.alg.main} invokes Algorithm~\ref{planning:PA.alg.getcond}, which finds the condition to expand by searching through the conditions returning Failure using a Breadth First Search~(BFS). If no such condition is found (Algorithm~\ref{planning:PA.alg.getcond} Line~\ref{planning:PA.alg.getcond.nocond}) that means that an action returned Failure due to an old refinement that is no longer valid. In that case, at the next loop of Algorithm~\ref{planning:PA.alg.main} a new refinement is found (Algorithm~\ref{planning:PA.alg.main} Line~\ref{planning:PA.alg.main.refine}), assuming that such a refinement always exists. If Algorithm~\ref{planning:PA.alg.getcond} returns a condition, this will be expanded (Algorithm~\ref{planning:PA.alg.main} Line 10), as shown in the example of Figure~\ref{planning:PA.fig.it1to4}. Example~\ref{planning:PA.ex.graph}  highlights the BFS expansion.
Thus, $\bt$ is expanded until it can perform an action (i.e. until $\bt$ contains an action template whose condition are satisfied in the current state). In~\cite{colledanchise2016towards} is proved that $\bt$ is expanded a finite number of times.
If there exists more than one valid action that satisfies a condition, their respective trees (sequence composition of the action and its conditions) are collected in a Fallback composition {with memory}, which implements the different options the agent has, to satisfy such a condition. {If needed, these options will be executed in turn.}
PA-BT does not investigate which action is the optimal one. As stressed in~\cite{ghallab2016automated} the cost of minor mistakes (e.g. non-optimal actions order) is often much lower than the cost of the extensive modeling, information gathering and thorough deliberation needed to achieve optimality.

\paragraph*{\textbf{Conflicts and Increases in Priority (Algorithm~\ref{planning:PA.alg.main} Lines~\ref{planning:PA.alg.main.feas} and~\ref{planning:PA.alg.main.incprio})}\\}
Similar to any STRIPS-style planner, adding a new action  in the plan can cause a \emph{conflict} (i.e. the execution of this new action reverses the effects of a previous action). 
In PA-BT, this possibility is checked in Algorithm~\ref{planning:PA.alg.main}  Line~\ref{planning:PA.alg.main.feas} by analyzing the conditions of the new action added with the effects of the actions that the subtree executes before executing the new action. If this effects/conditions pair is in conflict, the goal will not be reached.  An example of this situation is described in Example~\ref{planning:PA.exa.complex} below.


Again, following the approach used in STRIPS-style planners, we resolve this conflict by finding the correct action order. Exploiting the structure of BTs we can do so by moving the tree composed by the new action and its condition leftward (a BT executes its children from left to right, thus moving a subtree leftward implies executing the new action earlier). If it is the leftmost one, this means that it must be executed before its parent (i.e. it must be placed at the same depth of the parent but to its left). This operation is done in Algorithm~\ref{planning:PA.alg.main}  Line~\ref{planning:PA.alg.main.incprio}. PA-BT incrementally increases the priority of this subtree in this way, until it finds a feasible tree. In~\cite{colledanchise2016towards} it is proved that, under certain assumptions, a feasible tree always exists .

\paragraph*{\textbf{Get All Action Templates}\\}
Let's look again at Example~\ref{planning:PA.exa.simple} above and see how the BT in Figure~\ref{planning:PA.fig.it4} was created using the PA-BT approach. In this example, the action templates are summarized below with conditions and effect:
\begin{equation*}
\begin{aligned}
&\mbox{MoveTo}(p,\tau)   \\   
&\mbox{con}:\tau \subset C_{ollFree} \hspace{2em}  \\ 
& \\ 
&\mbox{eff}:o_r = p
\end{aligned}
\begin{aligned}
&\mbox{Pick}(i)   \\   
&\mbox{con}: o_r \in \mathcal{N}_{o_i} \hspace{2em} \\ 
	 &\hspace{2.3em}h  =  \emptyset \\	  
&\mbox{eff}: h = i \hspace{1em} 
\end{aligned}
\begin{aligned}
&\mbox{Place}(i,p)   \\   
&\mbox{con}: o_r \in \mathcal{N}_{p} \hspace{2em}  \\ 
	 &\hspace{2.3em}h = i  \\	  
&\mbox{eff}: o_i = p \hspace{1em} 
\end{aligned}
\end{equation*}
where $\tau$ is a trajectory, $C_{ollFree}$ is the set of all collision free trajectories,  $o_r$ is the robot pose, $p$ is a pose in the state space, $h$ is the object currently in the end effector, $i$ is the label of the $i$-th object in the scene, and  $\mathcal{N}_{x}$ is the set of all the poses near the pose $x$. 


The descriptive model of the action \emph{MoveTo}  is parametrized over the destination $p$ and the trajectory $\tau$. It requires that the trajectory is collision free ($\tau \subset C_{ollFree}$). As effect the action \emph{MoveTo} places the robot at $p$ (i.e. $o_r = p$), the descriptive model of the action \emph{Pick} is parametrized over object $i$. It requires having the end effector free (i.e. $h  =  \emptyset$) and the robot to be in a neighborhood $\mathcal{N}_{o_i}$ of the object $i$. (i.e. $o_r \in \mathcal{N}_{o_i}$). As effect the action Pick sets the object in the end effector to $i$ (i.e $h = i$); 
Finally, the  descriptive model of the action \emph{Place} is parametrized over object $i$ and final position $p$. It requires the robot to hold $i$ (i.e. $h = i$), and the robot to be in the neighborhood of the final position $p$. As effect the action \emph{Place} places the object $i$ at $p$ (i.e. $o_i = p$).

\begin{example}
\label{planning:PA.exa.complex}

Here we show a more complex example highlighting two main properties of PA-BT: the livelock freedom and the continual deliberative plan and act cycle. This example is an extension of Example~\ref{planning:PA.exa.simple} where, due to the dynamic environment, the robot needs to replan.


Consider the execution of the final BT in Figure~\ref{planning:PA.fig.it4} of Example \ref{planning:PA.exa.simple}, where the robot is carrying the desired object to the goal location. Suddenly, as in Figure~\ref{planning:IN.fig.front}~(b), an object $s$ obstructs the (only possible) path. Then the condition $\tau \subset C_{ollFree}$  returns Failure and Algorithm \ref{planning:PA.alg.main} expands the tree accordingly (Line \ref{planning:PA.alg.main.expand}) as in Figure~\ref{planning:PA.fig.it7}.

The new subtree has as condition $h = \emptyset$ (no objects in hand) but the effect of the left branch (i.e. the main part in Figure \ref{planning:PA.fig.it4}) of the BT is $h = c$ (cube in hand) (i.e. the new subtree will be executed if and only if $h = c$ holds). Clearly  the expanded tree has a conflict (Algorithm~\ref{planning:PA.alg.main} Line \ref{planning:PA.alg.main.feas}) and the priority of the new subtree is increased  (Line~\ref{planning:PA.alg.main.incprio}), until the expanded tree is in form of Figure~\ref{planning:PA.fig.it8}. Now the BT is free from conflicts as the first subtree has as effect $h = \emptyset$  and the second subtree has a condition $h = \emptyset$. Executing the tree the robot approaches the obstructing object, now the condition $h = \emptyset$ returns Failure and the tree is expanded accordingly, letting the robot drop the current object grasped, satisfying $h = \emptyset$, then it picks up the obstructing object and places it on the side of the path. Now the condition $\tau \subset C_{ollFree}$ finally returns Success. The robot can then again approach the desired object and move to the goal region and place the object in it.  

\end{example}

\begin{figure}[h!]
        \centering
        \begin{subfigure}[b]{0.5\columnwidth}
                \centering
\includegraphics[width=1\columnwidth]{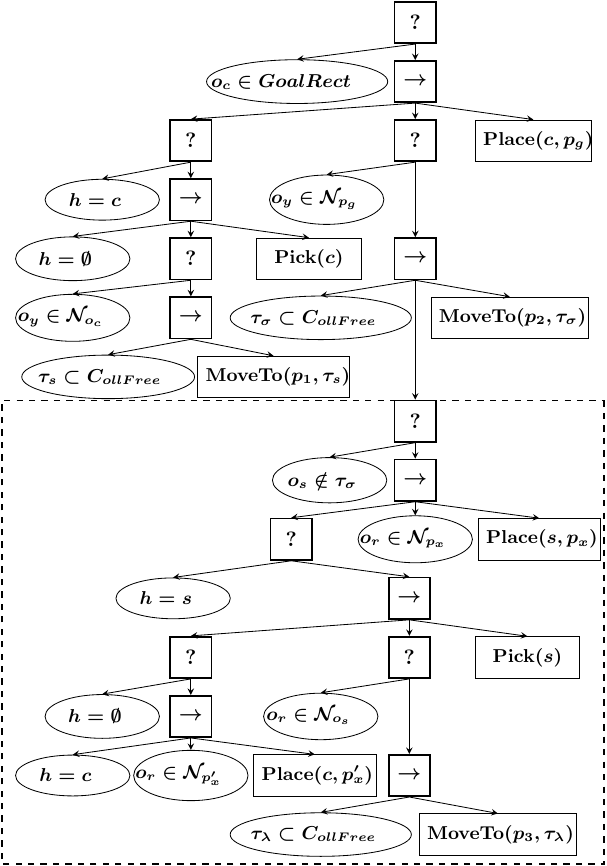}
                \caption{Unfeasible expanded tree. The new subtree is highlighted in red.}
                \label{planning:PA.fig.it7}
        \end{subfigure}\\
       ~ 
        \begin{subfigure}[b]{1\columnwidth}
                \centering
\includegraphics[width=1\columnwidth]{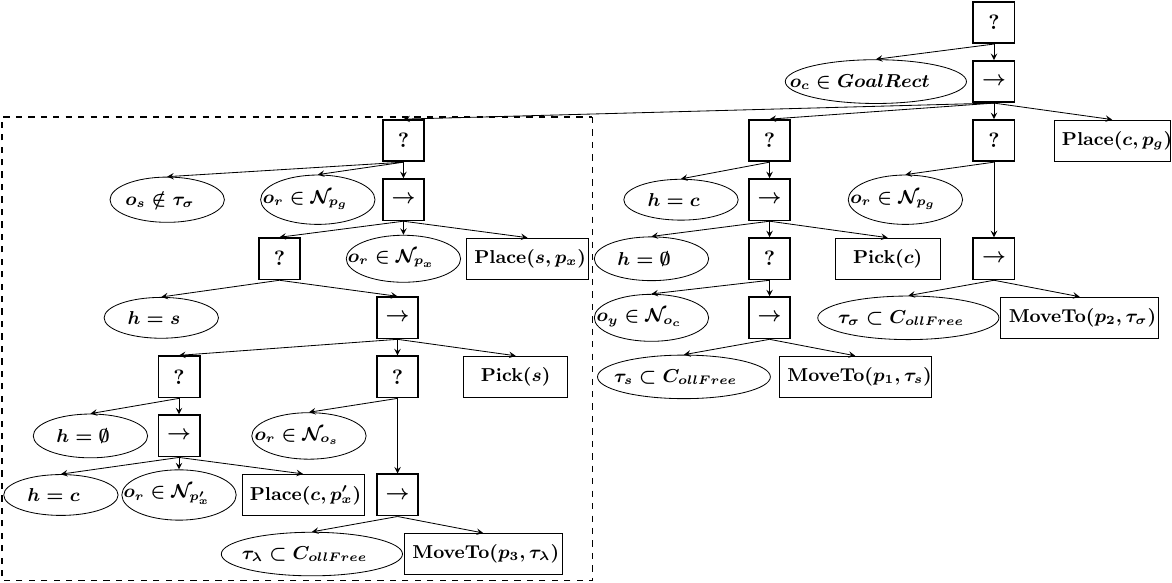}
                \caption{Expanded Feasible subtree.}
                \label{planning:PA.fig.it8}              
        \end{subfigure}
        ~ 
        \caption{Steps to increase the priority of the new subtree added in Example~\ref{planning:PA.exa.complex}.}
        ~ 
\end{figure}

\subsection{Comments on the Algorithm}

It is clear that this type of continual planning and acting exhibits both the important principles of deliberation stressed in \cite{ghallab2016automated, Ghallab14}: Hierarchically organized deliberation and continual online deliberation. For example, if the robot drops the object, then the condition $h = c$ is no longer satisfied and the BT will execute the corresponding subtree to pick the object, with no need for re-planning. This type of deliberative reactiveness is built into BTs.  On the other hand, if during its navigation a new object pops up obstructing the robot's path, the condition $\tau \subset C_{ollFree}$  will no longer return Success and the BT will be expanded accordingly. This case was  described  in Example~\ref{planning:PA.exa.complex}. Moreover, note that PA-BT refines the BT every time it returns Failure. This is to encompass the case where an older refinement is no longer valid. Is such cases an action will return Failure. This Failure is propagated up to the root. The function \emph{ExpandTree} (Algorithm~\ref{planning:PA.alg.main} Line~\ref{planning:PA.alg.main.expand}) will return the very same tree (the tree needs no extension as there is no failed condition of an action) which gets re-refined in the next loop (Algorithm~\ref{planning:PA.alg.main} Line~\ref{planning:PA.alg.main.refine}). For example, if the robot planned to place the object in a particular position on the desk but  this position was no longer feasible (e.g. another object was placed in that position by an external agent).

\subsection{Algorithm Execution on Graphs}
Here, for illustrative purposes, we show the result of PA-BT when applied to a standard shortest path problem in a graph.

\begin{example}
\label{planning:PA.ex.graph}
Consider an agent moving in different states modeled by the graph in Figure~\ref{planning:PA.fig.graph} where the initial state is $s_0$ and the goal state is $s_g$. Every arc represents an action that moves an agent from one state to another. The  action that moves the agent from a state $s_i$ to a state $s_j$ is denoted by $s_i \to s_j$. 
The initial tree, depicted in Figure~\ref{planning:PA.fig.graphit0}, is defined as a Condition node $s_g$ which returns Success if and only if the robot is at the state $s_g$ in the graph. The current state is $s_0$ (the initial state). Hence the BT returns a status of \emph{Failure}. Algorithm~\ref{planning:PA.alg.main} invokes the BT expansion routine. The state $s_g$ can be reached from the state $s_5$, through the action $s_5\to s_g$, or from the state $s_3$, through the action $s_3\to s_g$. The tree is expanded accordingly as depicted in Figure~\ref{planning:PA.fig.graphit1}. Now executing this tree, it returns a status of \emph{Failure}. Since the current state is neither $s_g$ nor $s_3$ nor $s_5$.
Now the tree is expanded in a BFS fashion, finding a subtree for condition $s_5$ as in Figure~\ref{planning:PA.fig.graphit2}. The process continues for two more iterations.
Note that at iteration 4 (See Figure~\ref{planning:PA.fig.graphit4}) Algorithm~\ref{planning:PA.alg.main} did not expand the condition $s_g$ as it was previously expanded (Algorithm~\ref{planning:PA.alg.getcond} line~\ref{planning:PA.alg.getcond.notexpanded}) this avoids infinite loops in the search. The same argument applies for conditions $s_4$ and $s_g$ in iteration 5 (See Figure~\ref{planning:PA.fig.graphit5}). The BT at iteration 5 includes the action $s_0 \to s_1$ whose precondition is satisfied (the current state is $s_0$). The action is then executed. Performing that action (and moving to $s_1$), the condition $s_1$ is satisfied. The BT executes the action  $s_1 \to s_3$ and then $s_3 \to s_g$, making the agent reach the goal state.

It is clear that the resulting execution is  the same as a BFS on the graph would have rendered. Note however that PA-BT is designed for more complex problems than graph search.

\end{example}

\begin{figure}[h]

                \centering
                \includegraphics[width=0.55\columnwidth]{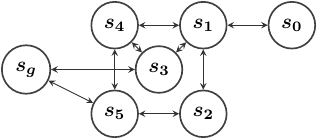}
        \caption{Graph representing the task of Example~\ref{planning:PA.ex.graph}.}
        \label{planning:PA.fig.graph}
\end{figure}

\begin{figure}[h]
                \centering

        \begin{subfigure}[b]{0.2\columnwidth}
                \centering
                \includegraphics[width=0.5\columnwidth]{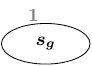}
                \caption{The initial BT. }
                \label{planning:PA.fig.graphit0}              
        \end{subfigure}          
  ~       
                   \centering

        \begin{subfigure}[b]{0.6\columnwidth}
                \centering
                \includegraphics[width=0.7\columnwidth]{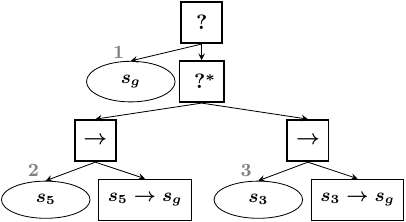}
                \caption{BT after one iteration.}
                \label{planning:PA.fig.graphit1}              
        \end{subfigure}
 ~         
        \begin{subfigure}[b]{0.6\columnwidth}
                \centering
                \includegraphics[width=1\columnwidth]{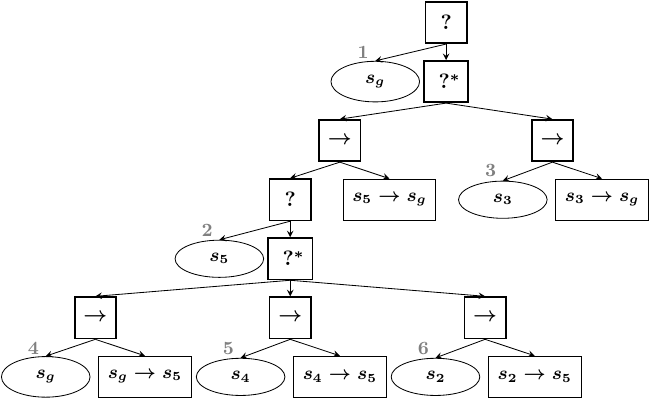}
                \caption{BT after two iterations.}
                \label{planning:PA.fig.graphit2}
        \end{subfigure}  
        \caption{First three BT updates during execution of Example~\ref{planning:PA.ex.graph}. The numbers represent the index of the BFS of Algorithm~\ref{planning:PA.alg.getcond}. {Note that the node labeled with $?^*$ is a Fallback node with memory.}}
     
  \end{figure}

\begin{figure}[h!]

  \centering    
        \begin{subfigure}[b]{1\columnwidth}
                \centering
                \includegraphics[width=1\columnwidth]{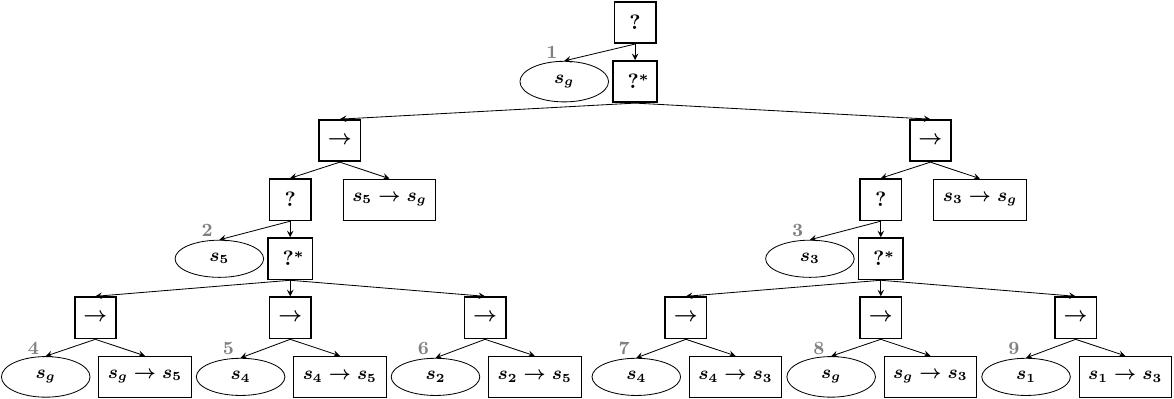}
                \caption{BT after three iterations.}
                \label{planning:PA.fig.graphit3}
        \end{subfigure}  
        
~
      \begin{subfigure}[b]{1\columnwidth}
                \centering
                \includegraphics[width=1\columnwidth]{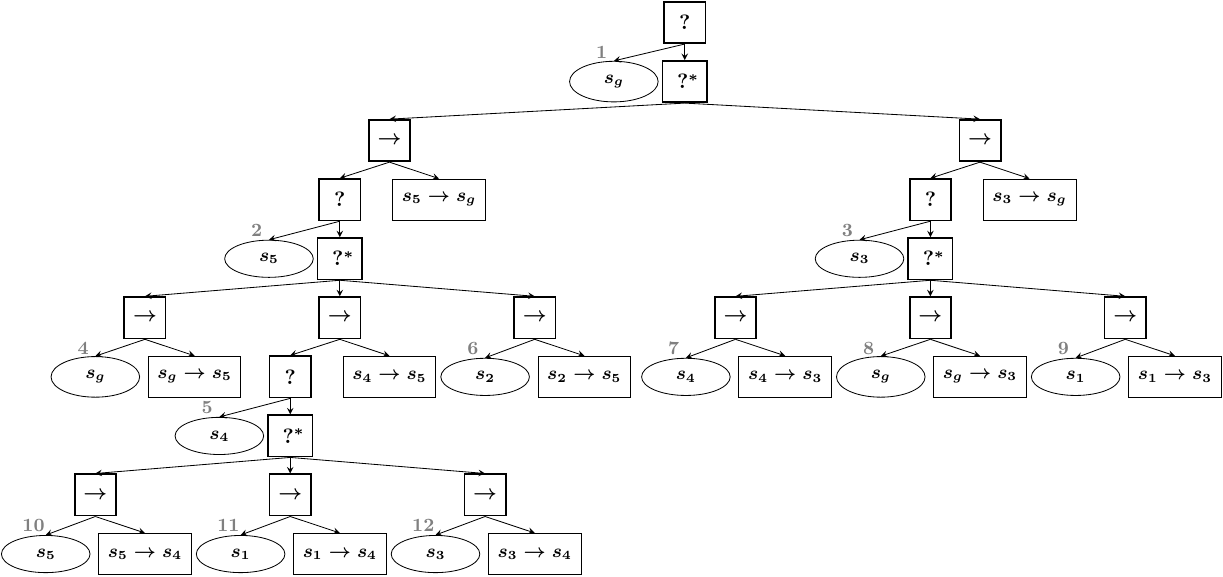}
                \caption{BT after four iterations.}
                \label{planning:PA.fig.graphit4}
        \end{subfigure}  
        
 ~       
              \begin{subfigure}[b]{1\columnwidth}
                \centering
                \includegraphics[width=1\columnwidth]{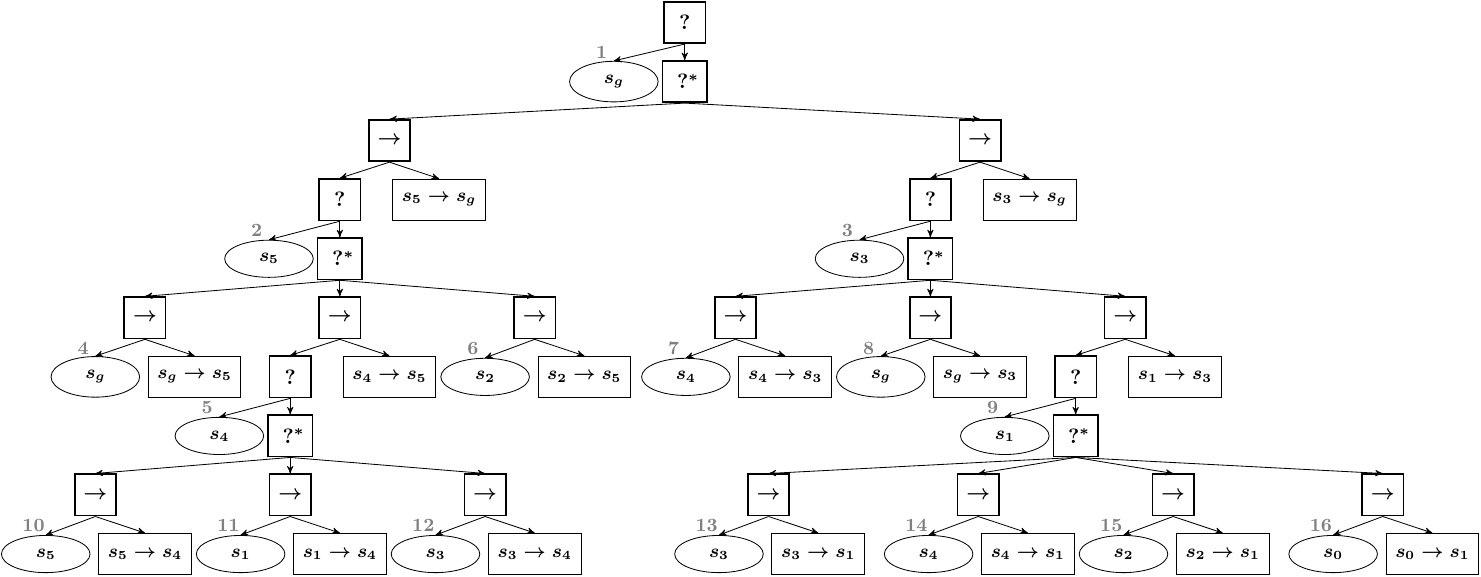}
                \caption{BT after five iterations.}
                \label{planning:PA.fig.graphit5}
        \end{subfigure}        
        \caption{Next BT updates during execution of Example~\ref{planning:PA.ex.graph}. The numbers represent the index of the BFS of Algorithm 3. {Note that the node labeled with $?^*$ is a Fallback node with memory.}}
\end{figure}

\subsection{Algorithm Execution on an existing Example}
In this section we apply the PA-BT approach in a more complex example adapted from~\cite{kaelbling2011hierarchical}.

\begin{figure}[h!]
    \centering
\includegraphics[width = \columnwidth,trim={1cm 3cm 1cm 3cm},clip]{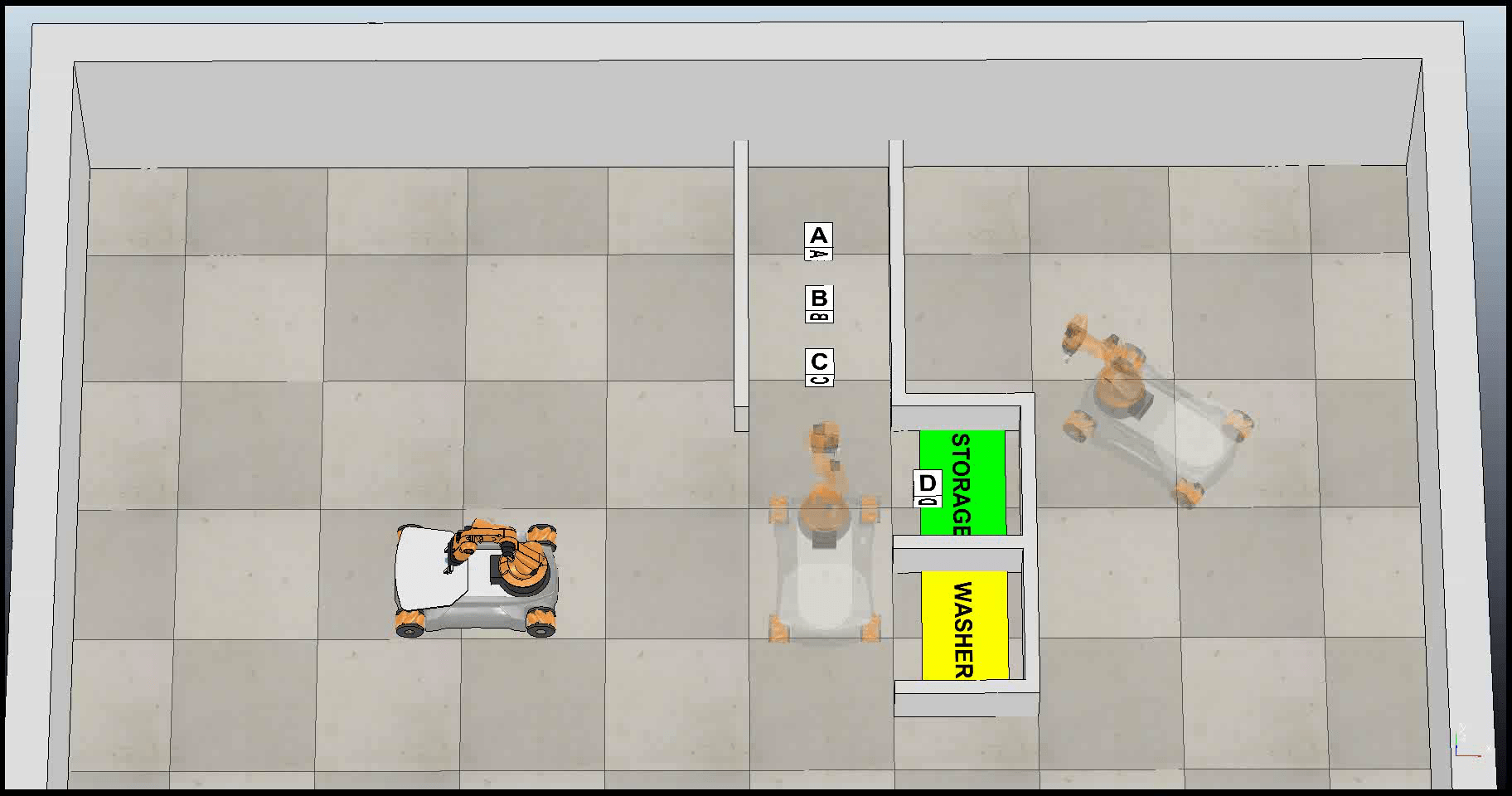} 
    \caption{
    An example scenario from~\cite{kaelbling2011hierarchical},
    with the addition of two externally controlled robots (semi-transparent) providing disturbances. The robot must wash the object "A" and then put in into the storage.
    }
    \label{planning:BG.fig.youbot}
\end{figure}
\begin{example}[From~\cite{kaelbling2011hierarchical}]
\label{planning:BG.ex.simple}
Consider a multipurpose robot that is asked to clean the object $A$ and then put it in the storage room as shown in Figure~\ref{planning:BG.fig.youbot} (in this first example we ignore the other robots as they are not in ~\cite{kaelbling2011hierarchical}). The goal is specified as a conjunction $Clean(A) \land In(A,storage)$. Using PA-BT, the initial BT is defined as a sequence composition of $Clean(A)$ with $In(A,storage)$ as in Figure~\ref{planning:BG.fig.it0}. At execution, the Condition node $Clean(A)$ returns \emph{Failure} and the tree is expanded accordingly, as in Figure~\ref{planning:BG.fig.it1}. Executing the expanded tree, the Condition node $In(A,Washer)$ returns \emph{Failure} and the BT is expanded again, as in Figure~\ref{planning:BG.fig.it2}. This iterative process of planning and acting continues until it creates a BT such that the robot is able to reach  object $C$ and remove it. After cleaning  object $A$, the approach constructs the tree to satisfy the condition $In(A,storage)$ as depicted in Figure~\ref{planning:BG.fig.it6}. This subtree requires  picking  object $A$ and then placing it into the storage. However after the BT is expanded to place $A$ into the storage it contains a conflict: in order to remove  object $D$ the robot needs to grasp it. But to let the ticks reach this tree, the condition $Holding()=a$ needs to returns \emph{Success}. Clearly the robot cannot hold both $A$ and $D$. The new subtree is moved in the BT to a position with a higher priority (See Algorithm~\ref{planning:PA.alg.main} Line~\ref{planning:PA.alg.main.incprio}) and the resulting BT is the one depicted in Figure~\ref{planning:BG.fig.it7}.
\end{example}
Note that the final BT depicted in Figure~\ref{planning:BG.fig.it7} is similar to the planning and execution tree of~\cite{kaelbling2011hierarchical} with the key difference that the BT enables the continual monitoring of the plan progress, as described in~\cite{Nau15Challenges}. For example, if  $A$ slips out of the robot gripper, the robot will automatically stop and pick it up again without the need to replan or change the BT. Moreover if $D$ moves away from the storage while the robot is approaching it to remove it, the robot aborts the plan to remove $D$ and continues with the plan to place $A$ in the storage room. Hence we can claim that the PA-BT is  reactive. The execution is exactly the same as~\cite{kaelbling2011hierarchical}: the robot removes the obstructing objects $B$ and $C$ then places $A$ into the washer. When $A$ is clean, the robot picks it up, but then it has to unpick it since it has to move  $D$ away from the storage. This is a small drawback of this type of planning algorithms. Again, as stressed in \cite{ghallab2016automated} the cost of a non-optimal plan is often much lower than the cost of extensive modeling, information gathering and thorough deliberation needed to achieve optimality.

\begin{figure}[h!]
        \centering
          \begin{subfigure}[t]{0.3\columnwidth}
                \centering
\includegraphics[width=1\columnwidth]{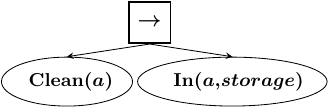}
                \caption{The initial BT.}
                \label{planning:BG.fig.it0}
        \end{subfigure}%

        \begin{subfigure}[t]{0.4\columnwidth}
\includegraphics[width=1\columnwidth]{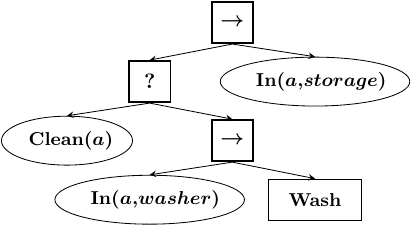}
                \caption{BT after one iteration.}
                \label{planning:BG.fig.it1}
        \end{subfigure}%
        
       ~ 
        \begin{subfigure}[t]{0.45\columnwidth}
                \centering
\includegraphics[width=1\columnwidth]{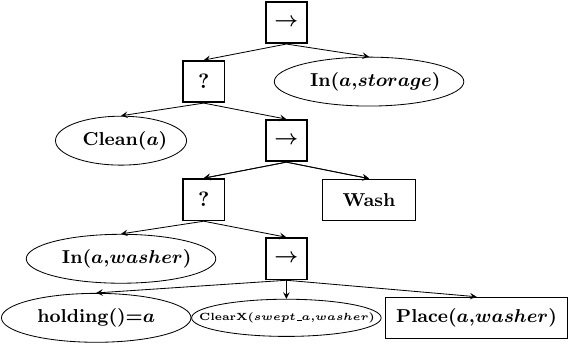}
                \caption{BT after two iterations.}
                \label{planning:BG.fig.it2}              
        \end{subfigure}
                \caption{BT updates during the execution of Example~\ref{planning:BG.ex.simple}.}
\end{figure}

\begin{figure}[h!]
        \centering
        \begin{subfigure}[b]{0.75\columnwidth}
                \centering
\includegraphics[width=1\columnwidth]{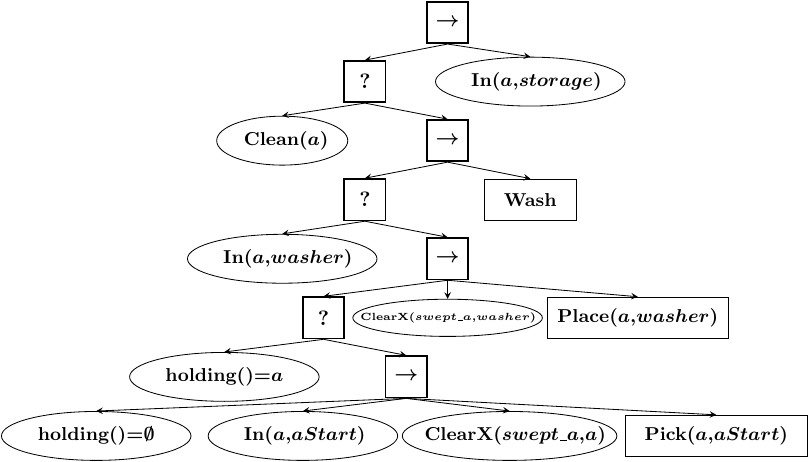}
                \caption{BT after three iterations.}
                \label{planning:BG.fig.it3}
        \end{subfigure}%
       ~ 
       
        \begin{subfigure}[b]{1\columnwidth}
                \centering
\includegraphics[width=0.85\columnwidth]{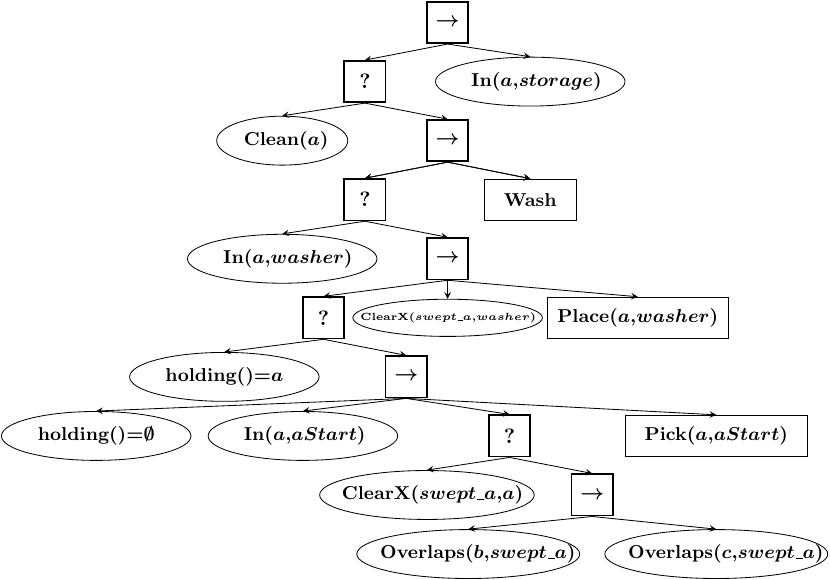}
                \caption{BT after four iterations.}
                \label{planning:BG.fig.it4}              
        \end{subfigure}
        
        \caption{BT updates during the execution of Example~\ref{planning:BG.ex.simple}.}
\end{figure}
\begin{landscape}
\begin{figure}[h!]
        \centering
\includegraphics[width=\columnwidth]{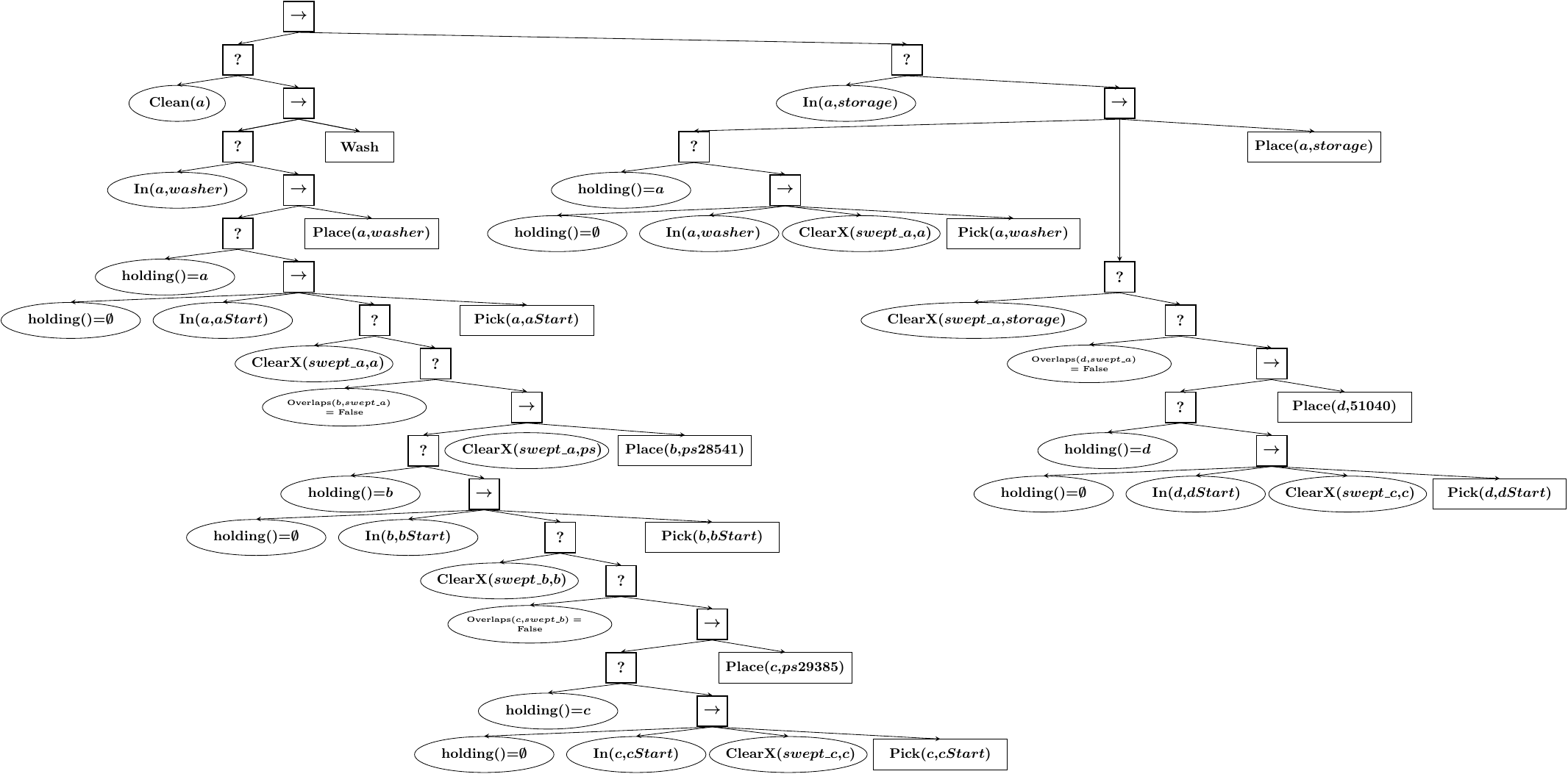}
        \caption{BT containing a conflict. The subtree created to achieve $ClearX(swept\_a,storage)$ is in conflict with the subtree created to achieve $holding()=a$. }
         \label{planning:BG.fig.it6}              
        ~ 
\end{figure}
\end{landscape}

\begin{landscape}

\begin{figure}[h]
        \centering
\includegraphics[width=\columnwidth]{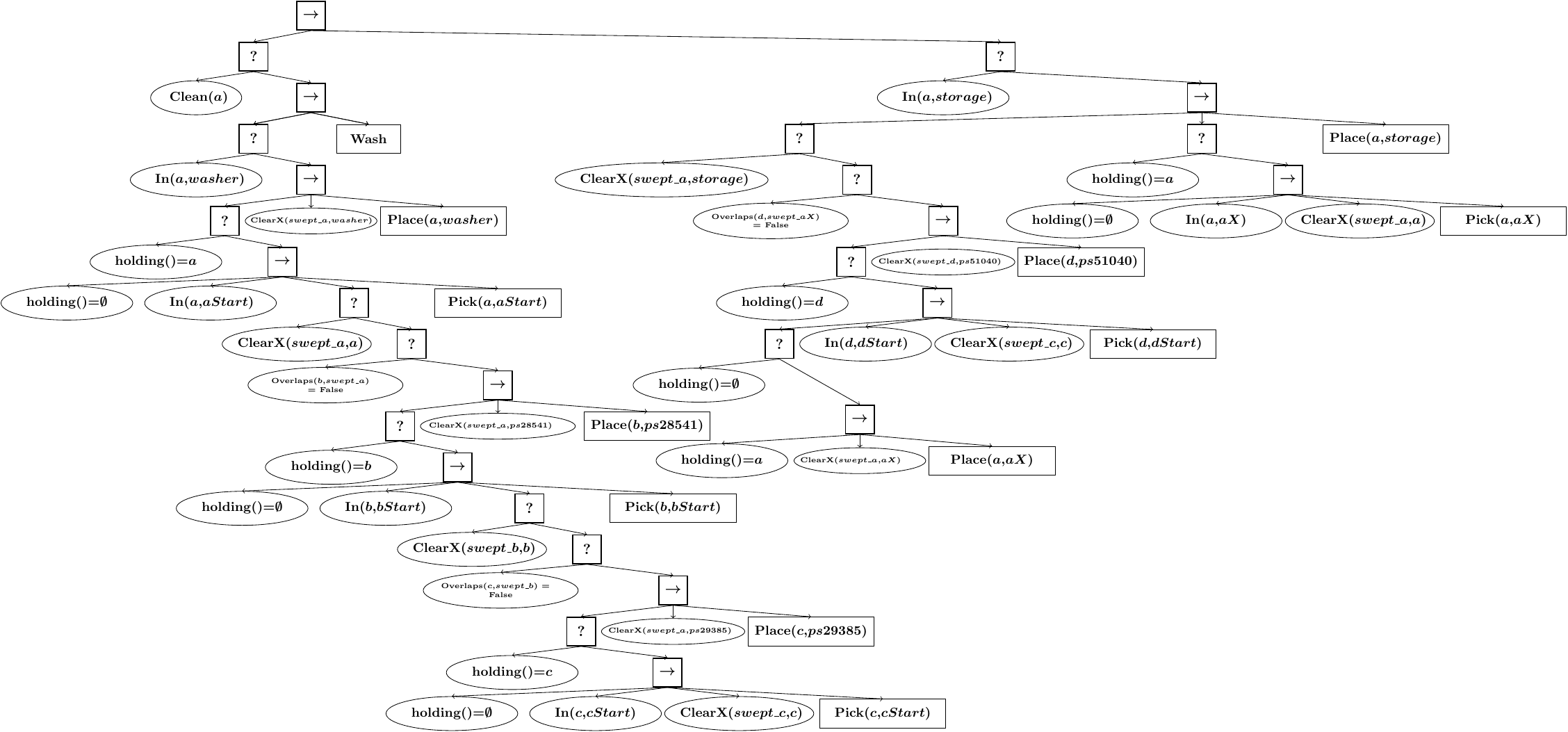}
        \caption{Conflict free BT obtained.}
         \label{planning:BG.fig.it7}              
        ~ 
\end{figure}
\end{landscape}

\subsection{Reactiveness}
\label{planning:RE}
In this section we show how BTs enable a \emph{reactive} blended acting and planning, providing concrete examples that highlight the importance of reactiveness in robotic applications.

Reactiveness is a key property for online deliberation. By reactiveness we mean \emph{the capability of dealing with drastic changes in the environment in short time}. The domains we consider are highly dynamic and unpredictable. To deal with such domains, a robot must be reactive in both planning and acting.

If an external event happens that the robot needs to react to, one or more conditions will change.
The next tick after the change will happen at most a time $\frac{1}{f_t}$ after the change. Then a subset of all
Conditions, Sequences and Fallbacks will be evaluated to either reach an Action,
or to invoke additional planning. This takes less than $\Delta t$.
The combined reaction time is thus bounded above by $\frac{1}{f_t}+\Delta t$.

\begin{remark}
Note that $\Delta t$ is strictly dependent on the real world implementation. Faster computers and better code implementation allows a smaller $\Delta t$.
\end{remark}

We are now ready to show three colloquial examples, one highlighting the reactive acting (preemption and re-execution of subplans if needed), and two highlighting the reactive planning, expanding the current plan as well as exploiting serendipity~\cite{levihn2013foresight}. 

\begin{example}[Reactive Acting]
\label{planning:RE.ex.acting}
Consider the robot in Figure~\ref{planning:BG.fig.youbot}, running the BT of Figure~\ref{planning:BG.fig.it7}. The object $A$ is not clean and it is not in the washer, however the robot is holding it. This results in the Condition nodes \emph{Clean($a$)} and \emph{In($a$,$washer$)} returning  \emph{Failure} and the Condition node \emph{holding()=$a$}  \emph{Success}.  According to the BT's logic, the ticks traverse the tree and reach the action \emph{Place($a$,$washer$)}. Now, due to vibrations during movements, the object slips out of the robot's grippers. In this new situation the Condition node \emph{holding()=$a$} now returns  \emph{Failure}. The ticks now traverse the tree and reach the action \emph{Pick($a$,$aCurrent$)} (i.e. pick $A$ from the current position, whose preconditions are satisfied). The robot then re-picks the object, making the Condition node  \emph{holding()=$a$} returning  \emph{Success} and letting the robot resume the execution of \emph{Place($a$,$washer$)}.
\end{example}

\begin{example}[Reactive Planning]
\label{planning:RE.ex.planning}
Consider the robot in Figure~\ref{planning:BG.fig.youbot}, running the BT of Figure~\ref{planning:BG.fig.it7}. The object $A$ is clean, it is not in the storage and the robot is not holding it. This results in the Condition nodes \emph{holding()=$a$}  and \emph{In($a$,$storage$)} returning  \emph{Failure} and the Condition nodes \emph{holding()=$\emptyset$}, \emph{Clean($a$)} and \\ \emph{ClearX($swept\_a$,$washer$)} returning  \emph{Success}.  According to the BT's logic,  the ticks traverse the tree and reach the action \emph{Pick($a$,$washer$)} that let the robot approach the object and then grasp it. While the robot is approaching $A$, an external uncontrolled robot places an object in front of $A$, obstructing the passage. In this new situation the Condition node \emph{ClearX($swept\_a$,$washer$)} now returns  \emph{Failure}. The action \emph{Pick($a$,$washer$)}  no longer receives ticks and it is then preempted. The BT is expanded accordingly finding a subtree to make \emph{ClearX($swept\_a$,$washer$)} return  \emph{Success}. This subtree will make the robot pick the obstructing object and remove it. Then the robot can finally reach $A$ and place it into the storage.
\end{example}

\begin{example}[Serendipity Exploitation]
\label{planning:RE.ex.serendipity}
Consider the robot in Figure~\ref{planning:BG.fig.youbot}, running the BT of Figure~\ref{planning:BG.fig.it7}. The object $A$ is not clean, it is not in the washer and the robot is not holding it. According to the BT logic, the ticks traverse the tree and reach the action \emph{Pick($b$,$bStart$)}. While the robot is reaching the object, an external uncontrolled agent picks $B$ and removes it. Now the condition \emph{Overlaps($b$,$swept\_a$) = False} returns  \emph{Success} and the BT preempts the execution of \emph{Pick($b$,$bStart$)} and skips the execution of \emph{Places($b$,$ps28541$)} going directly to execute \emph{Pick($a$,$aStart$)}.
\end{example}

Hence the BT encodes reactiveness, (re)satisfaction of subgoals whenever these are no longer achieved, and the exploitation of  external subgoal satisfaction after a change in the environment. Note that in the Examples~\ref{planning:RE.ex.acting} and~\ref{planning:RE.ex.serendipity} above, PA-BT did not replan.

\begin{figure}[b]
    \centering
\includegraphics[width = 0.5\columnwidth]{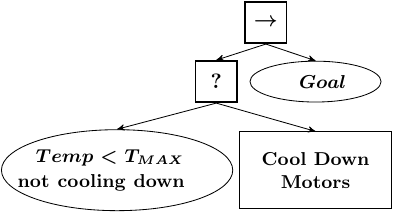}
    \caption{  A safe BT for Example \ref{planning:SA.ex.safe}. The BT guaranteeing safety is combined with the mission objective constraint.}
    \label{planning:SA.fig.tree}
\end{figure}

\subsection{Safety}
\label{planning:SA}
In this section we show how BTs allow  a \emph{safe} blended acting and planning, providing a concrete example that highlights the importance of safety in robotics applications.

Safety is a key property for robotics applications. By safety we mean \emph{the capability of avoiding undesired outcomes.} The domains we usually consider have few catastrophic outcomes of actions and,  as highlighted in~\cite{kaelbling2011hierarchical}, the result of an action can usually be undone by a finite sequence of actions. However there are some cases in which the outcome of the plan can damage the robot or its surroundings. 
These cases are assumed to be identified  in advance by human operators, who then add the corresponding sub-BT to guarantee the avoidance of them. Then, the rest of the BT is expanded using the algorithm described above.

We are now ready to show a colloquial example.
\begin{example}[Safe Execution]
\label{planning:SA.ex.safe}
Consider the multipurpose robot of Example~\ref{planning:BG.ex.simple}. Now, due to overheating, the robot has to stop whenever the motors' temperatures reach a given threshold, allowing them to cool down. This situation is relatively easy to model, and a subtree to avoid it can be designed as shown  in Figure~\ref{planning:SA.fig.tree}. When running this BT, the robot will preempt any action whenever the temperature is above the given threshold and stay inactive until the motors have cooled down to the temperature where Cool Down Motors return Success. Note that the Not Cooling Down part of the condition is needed to provide hystereses. The robot stops when $T_{\mbox{MAX}}$ is reached, and waits until Cool Down Motors return Success at some given temperature below $T_{\mbox{MAX}}$.
To perform the actual mission, the BT in Figure~\ref{planning:SA.fig.tree} is executed and expanded as explained above.
\end{example}

Thus, we first identify and handle the safety critical events separately, and then progress as above without 
jeopardizing the safety guarantees.
Note that the tree in Figure~\ref{planning:SA.fig.tree}, as well as all possible expansions of it using the PA-BT algorithm is \emph{safe} (see Section~\ref{properties:sec:safety}).

\subsection{Fault Tolerance}
\label{planning:FT}
In this section we show how BTs enable a \emph{fault tolerant} blended acting and planning, providing concrete examples that highlight the importance of fault tolerance in robotics application.

Fault tolerance is a key property for real world problem domains where no actions are guaranteed to succeed. By fault tolerant we mean \emph{the capability of operating properly in the event of Failures.} The robots we consider usually have more than a single way of carrying out a high level task. 
With multiple options, we can define a priority measure for these options. In PA-BT, the actions that achieve a given goal are collected in a Fallback composition (Algorithm~\ref{planning:PA.alg.update} Line 9). The BT execution is such that if, at execution time, one option fails (e.g. a motor breaks) the next option is chosen without replanning or extending the BT.
%
%
\begin{example}[Fault Tolerant Execution]
Consider a more advanced version the multipurpose robot of Example~\ref{planning:BG.ex.simple} having a second arm. Due to this redundancy, all the pick-and-place tasks can be carried out with either hands. In the approach (Algorithm~\ref{planning:PA.alg.update} Line 9) all the actions that can achieve a given subgoal are collected in a Fallback composition. Thus, whenever the robot fails to pick an object with a gripper, it can directly try to pick it with the other gripper.
\end{example}

\newpage

\subsection{Complex Execution on Realistic Robots}
\label{planning:sec:simulations}

In this section, we show how the PA-BT approach scales to complex problems using two different scenarios. First, a KUKA Youbot scenario, where we show the applicability of PA-BT on dynamic and unpredictable environments, highlighting the importance of continually planing and acting. Second, an  ABB Yumi industrial manipulator scenario, where we highlight the applicability of PA-BT to real world plans that require the execution of a long sequence of actions. The experiments were carried out using the physics simulator V-REP, in-house implementations of low level controllers for actions and conditions, and an open source BT library\footnote{\url{http://wiki.ros.org/behavior_tree}}.  
 Figures~\ref{planning:SI.fig.yousimplescreen} and~\ref{planning:DSI.fig.youscreen} show the execution of two KUKA youbot experiments and Figure~\ref{planning:SI.fig.yumiscreen} show the execution of one ABB Yumi robot experiment. A video showing the executions of all the experiments is publicly available.\footnote{\url{https://youtu.be/mhYuyB0uCLM}} {All experiments show the reactiveness of the PA-BT approach, one experiment is then extended to show safety maintenance and fault tolerance.}

%
%
%
%
\subsubsection{KUKA Youbot experiments}

In these scenarios, which are an extension of Examples~1 and~2, a KUKA Youbot has to place a green cube on a goal area, see Figures~\ref{planning:SI.fig.yousimplescreen} and~\ref{planning:DSI.fig.youscreen}. The robot is equipped with a single or dual arm with a simple parallel gripper. Additional objects may obstruct the feasible paths to the goal, and the robot has to plan when to pick and where to place  the obstructing objects. Moreover, external actors  co-exist in the scene and may force the robot to replan by modifying the environment, e.g. by picking and placing the objects.
{
\begin{experiment}[Static Environment]
\label{planning:results.ex.1}
In this experiment the single armed version of robot is asked to place the green cube in the goal area. First expansions of the BT allow the robot to pick up the desired object (see Figure~\ref{planning:SI.fig.youbotstep1}). Now the robot has to find a collision free path to the goal. Due to the shape of the floor and the position of the obstacle (a blue cube) the robot has to place the obstacle to the side. To do so the robot has to reach the obstacle and pick it up. Since this robot has a single arm, it needs to ungrasp the green cube (see Figure~\ref{planning:SI.fig.youbotstep2}) before placing the blue cube on the side (see Figure~\ref{planning:SI.fig.youbotstep3}). The robot then can re-grasp the green cube (without extending the plan) and approach the goal region. Now, due to vibrations, the green cube slips out of the robot gripper (see Figure~\ref{planning:SI.fig.youbotstep4}). The robot aborts its subplan to reach the goal and re-executes the subplan to grasp the object.  Finally the robot places the green cube in the desired area (see Figure~\ref{planning:SI.fig.youbotstep5}).
\end{experiment}
\begin{figure}[h]
        \centering
        \begin{subfigure}[b]{1\columnwidth}
                \centering
                \includegraphics[width=1\columnwidth,trim={1cm 1.5cm 10cm 9cm},clip]{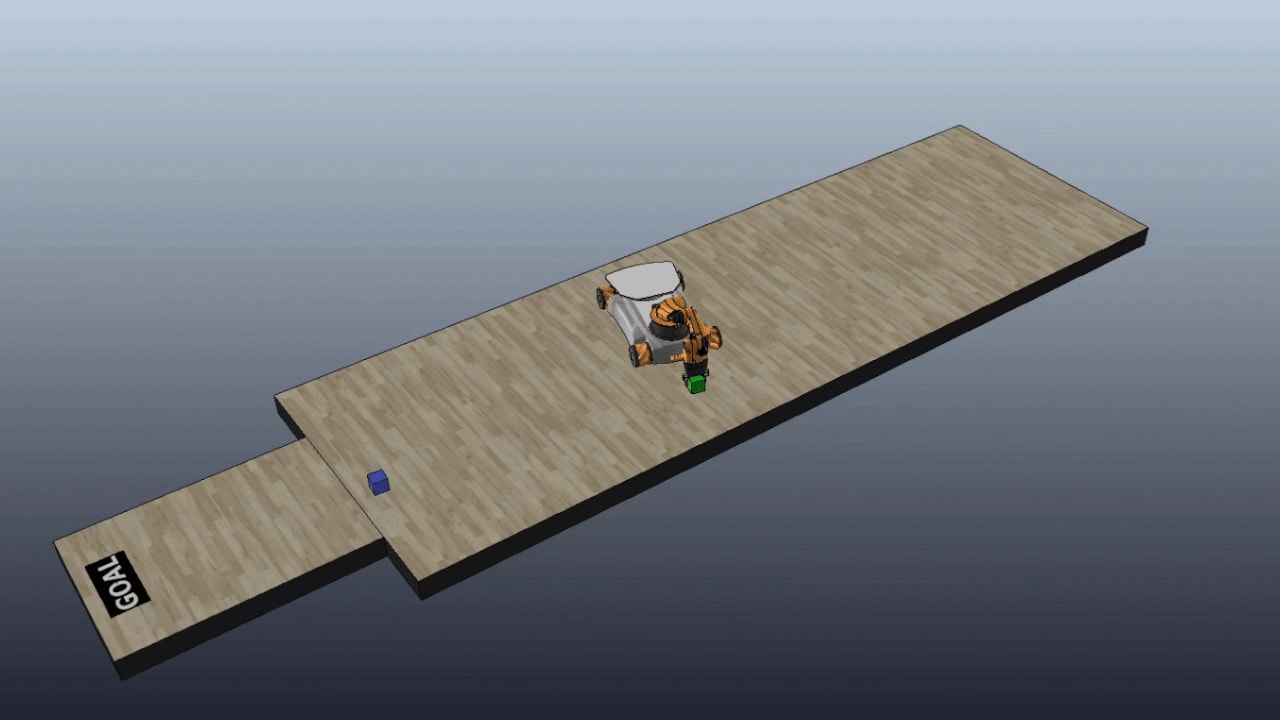}
                \caption{The robot picks the desired object: a green cube.  }
                \label{planning:SI.fig.youbotstep1}              
        \end{subfigure}       
        ~         
        \begin{subfigure}[b]{1\columnwidth}
                \centering
                \includegraphics[width=1\columnwidth,trim={1cm 1.5cm 10cm 9cm},clip]{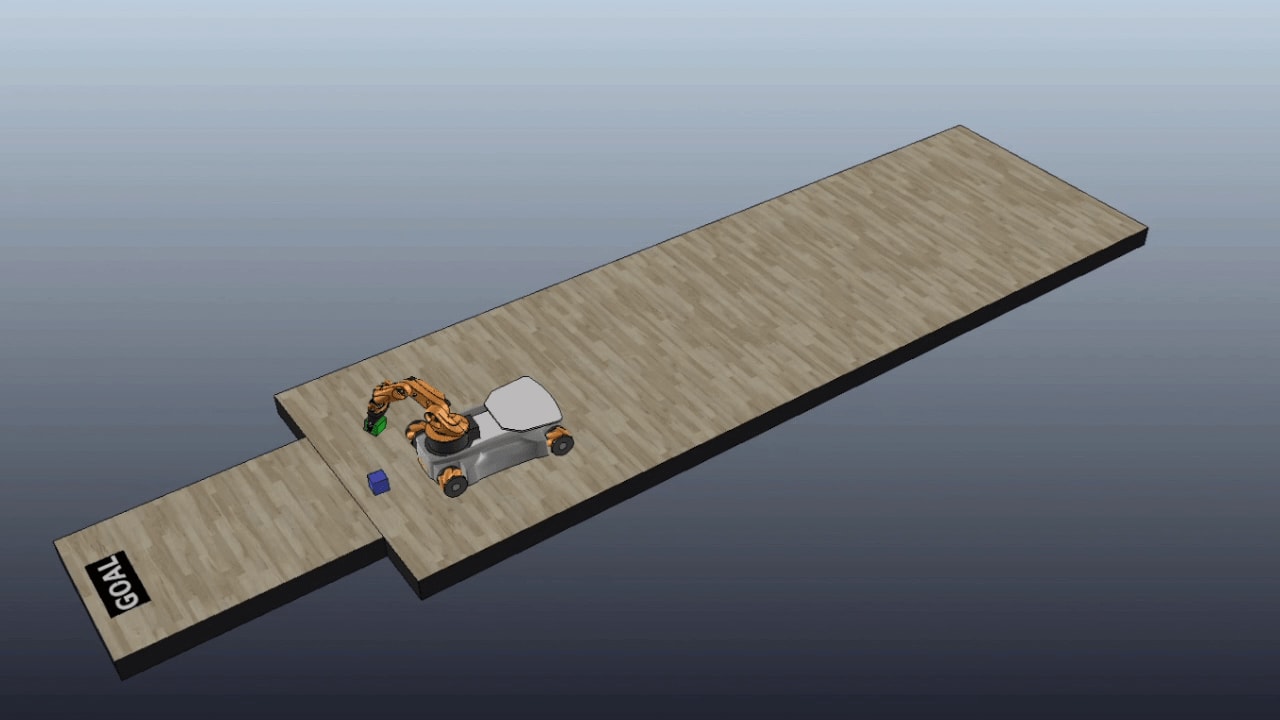}
                \caption{The robot has to move the blue cube away from the path to the goal. But the robot is currently grasping the green cube. Hence the subtree created to move the blue cube needs to have a higher priority. }
 			\label{planning:SI.fig.youbotstep2}  
         \end{subfigure}
~
        \begin{subfigure}[b]{1\columnwidth}
                \centering
                \includegraphics[width=1\columnwidth,trim={1cm 1.5cm 10cm 9cm},clip]{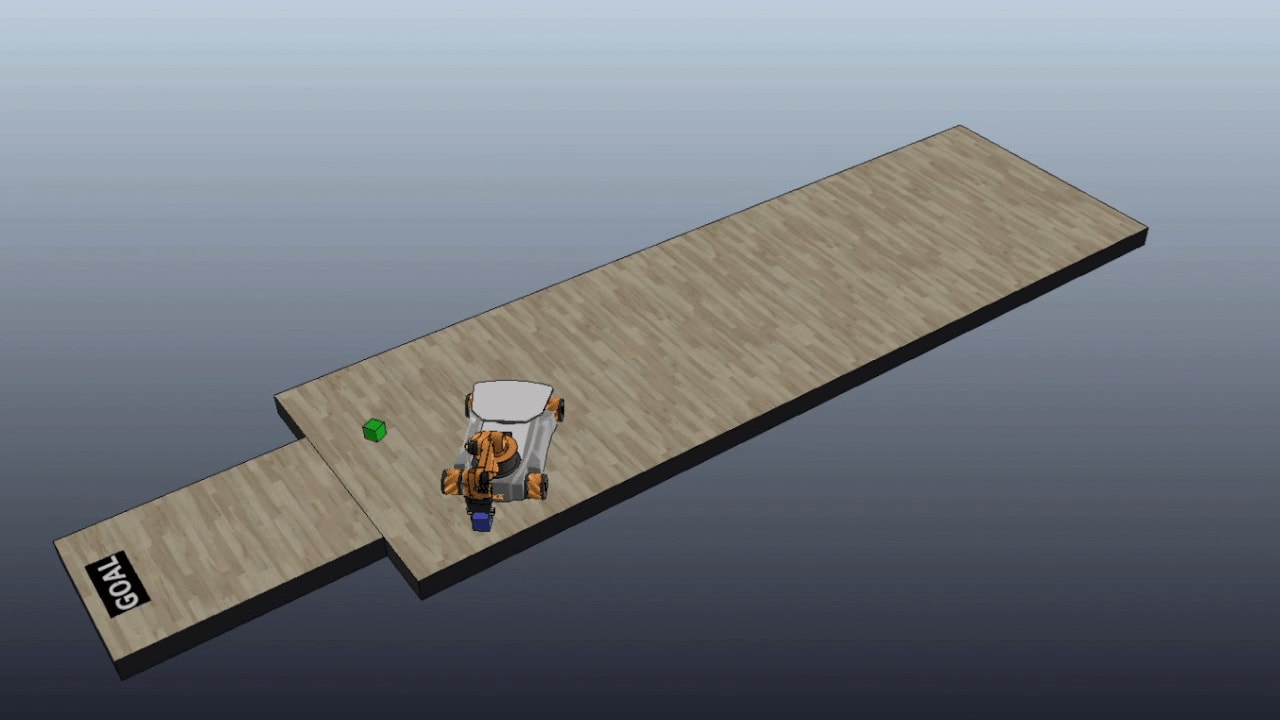}
                \caption{The blue cube is moved to the side. }
                 \label{planning:SI.fig.youbotstep3}  
        \end{subfigure}   
        \caption{Execution of a Simple KUKA Youbot experiment.}
        \label{planning:SI.fig.yousimplescreen2}
\end{figure}
\clearpage

\begin{figure}[t]
        \centering
        \begin{subfigure}[b]{1\columnwidth}
                \centering
                \includegraphics[width=1\columnwidth,trim={1cm 1.5cm 10cm 9cm},clip]{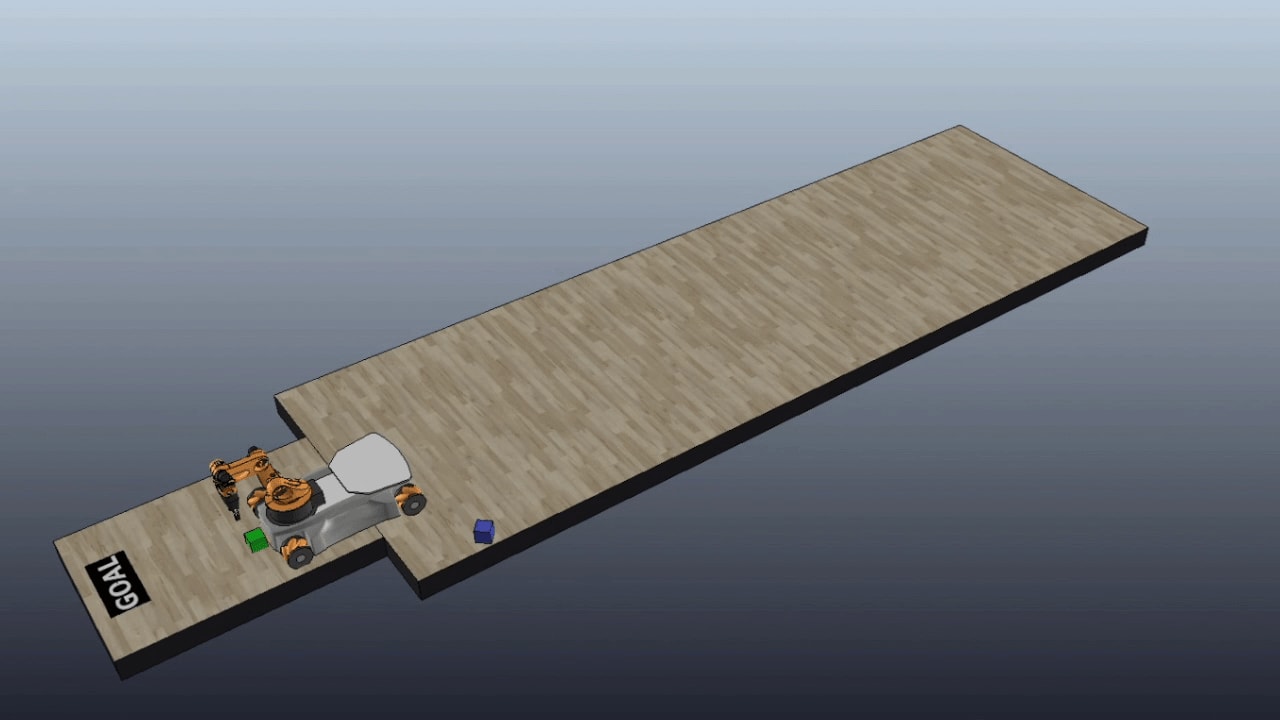}
                \caption{While the robot is moving towards the goal region, the green cube slips out of the gripper. The robot reactively preempts the subtree to move to the goal and re-executes the subtree to grasp the green cube. Without replanning. }
                 \label{planning:SI.fig.youbotstep4}  
        \end{subfigure} 
                ~            
        \begin{subfigure}[b]{1\columnwidth}
                \centering
                \includegraphics[width=1\columnwidth,trim={1cm 1.5cm 10cm 9cm},clip]{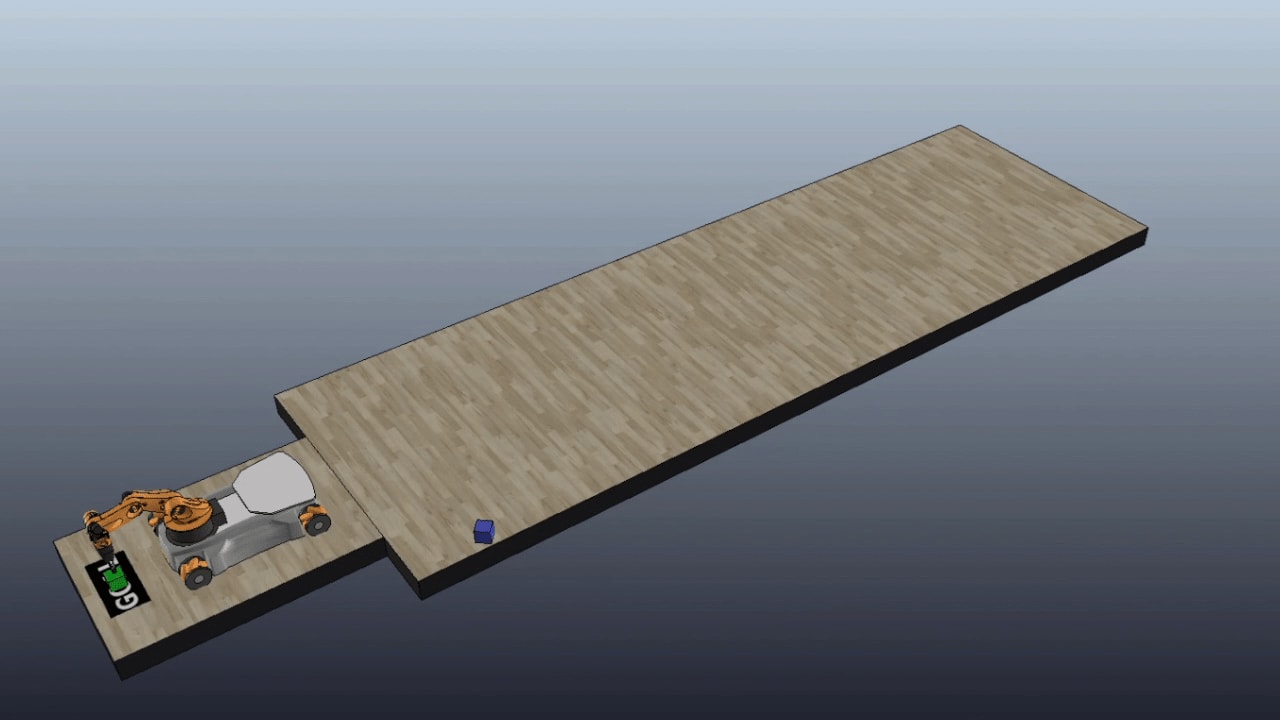}
                \caption{The robot places the object onto the desired location. }
                 \label{planning:SI.fig.youbotstep5}  
        \end{subfigure} 
        \caption{Execution of a Simple KUKA Youbot experiment.}
        \label{planning:SI.fig.yousimplescreen}
\end{figure}

\vspace*{\fill}

\clearpage

\begin{experiment}[Safety]
In this experiment the robot is asked to perform the same task as in Experiment~\ref{planning:results.ex.1} with the main difference that now the robot's battery can run out of power. To avoid this undesired irreversible outcome, the initial BT is manually created  in a way that is similar to the one in Figure~\ref{planning:SA.fig.tree}, managing the battery charging instead. As might be expected, the execution  is similar to the one described in Experiment~\ref{planning:results.ex.1} with the difference that the robot reaches the charging station whenever needed: The robot first reaches the green cube (see Figure~\ref{planning:SI.fig.youbotsafestep1}). Then, while the robot is approaching the blue cube, the battery level becomes low. Hence the subplan to reach the blue cube is aborted and the subplan to charge the battery takes over (see Figure~\ref{planning:SI.fig.youbotsafestep2}). When the battery is charged the robot can resume its plan (see Figure~\ref{planning:SI.fig.youbotsafestep3}) and complete the task (see Figure~\ref{planning:SI.fig.youbotsafestep4}).
\end{experiment}
\begin{figure}[h]
        \centering
        \begin{subfigure}[b]{	0.9\columnwidth}
                \centering
                \includegraphics[width=1\columnwidth,trim={5cm 5cm 28cm 15cm},clip]{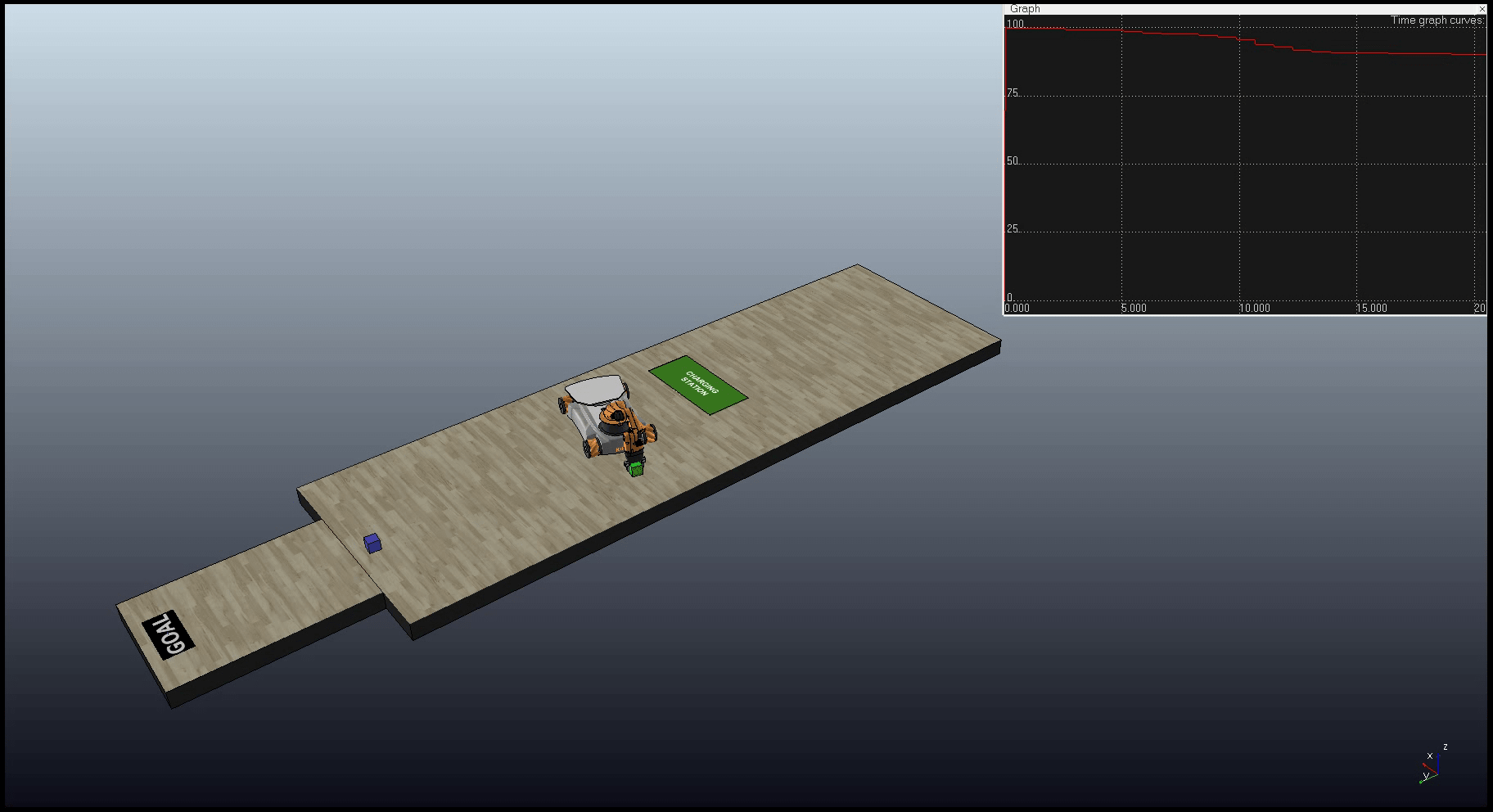}
                \caption{The robot picks the desired object, a green cube. }
                \label{planning:SI.fig.youbotsafestep1}              
        \end{subfigure}       
        ~         
        \begin{subfigure}[b]{0.9\columnwidth}
                \centering
                \includegraphics[width=1\columnwidth,trim={5cm 5cm 28cm 15cm},clip]{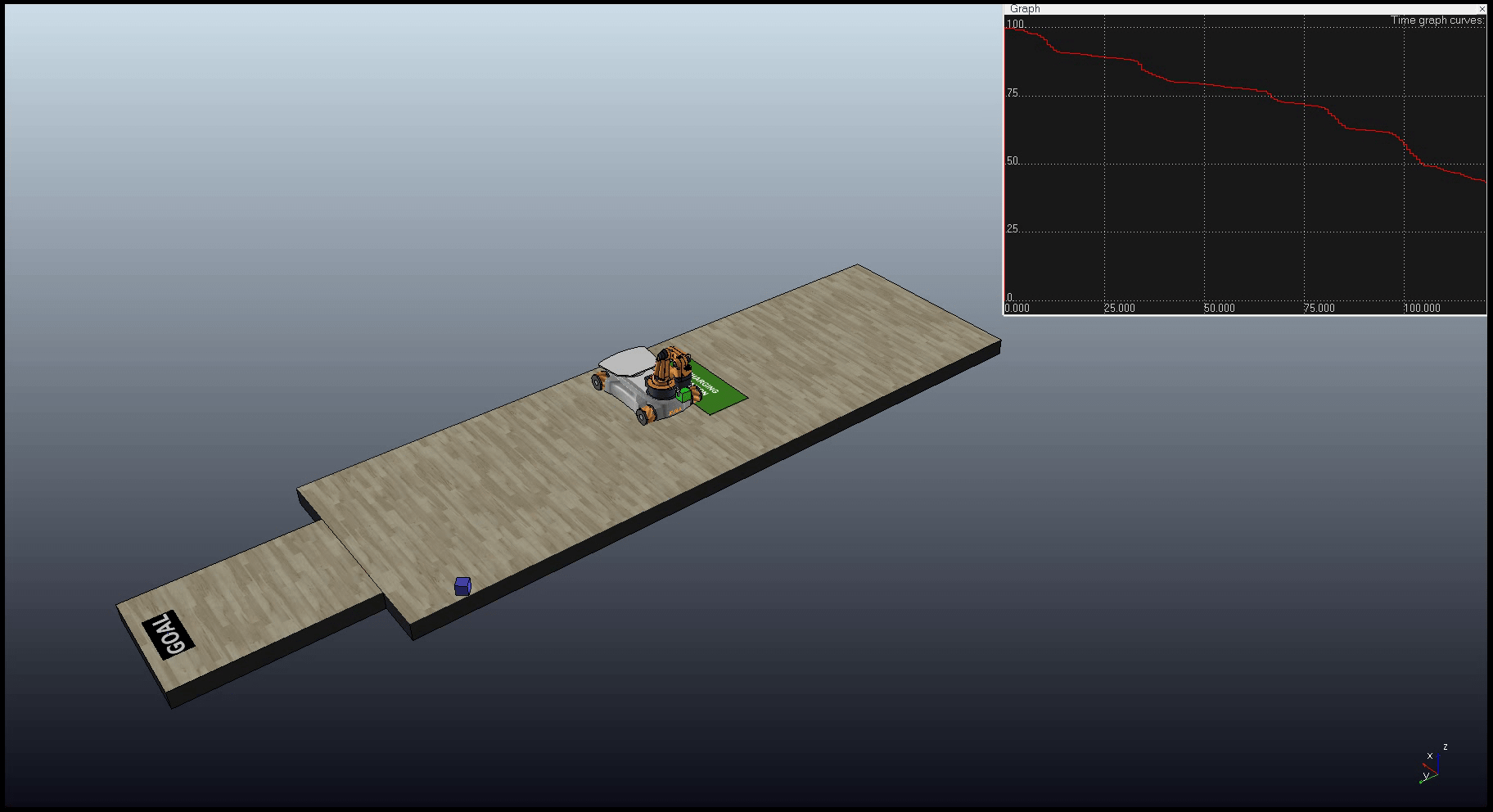}
                \caption{Due to the low battery level, the robot moves to the charging station.}
 			\label{planning:SI.fig.youbotsafestep2}  
         \end{subfigure}
                 \caption{Execution of a KUKA Youbot  experiment illustrating safety.}
        \label{planning:SI.fig.yousafescreen}
\end{figure}
\clearpage

         \begin{figure}[t]
\centering
        \begin{subfigure}[b]{0.9\columnwidth}
                \centering
                \includegraphics[width=1\columnwidth,trim={5cm 5cm 28cm 15cm},clip]{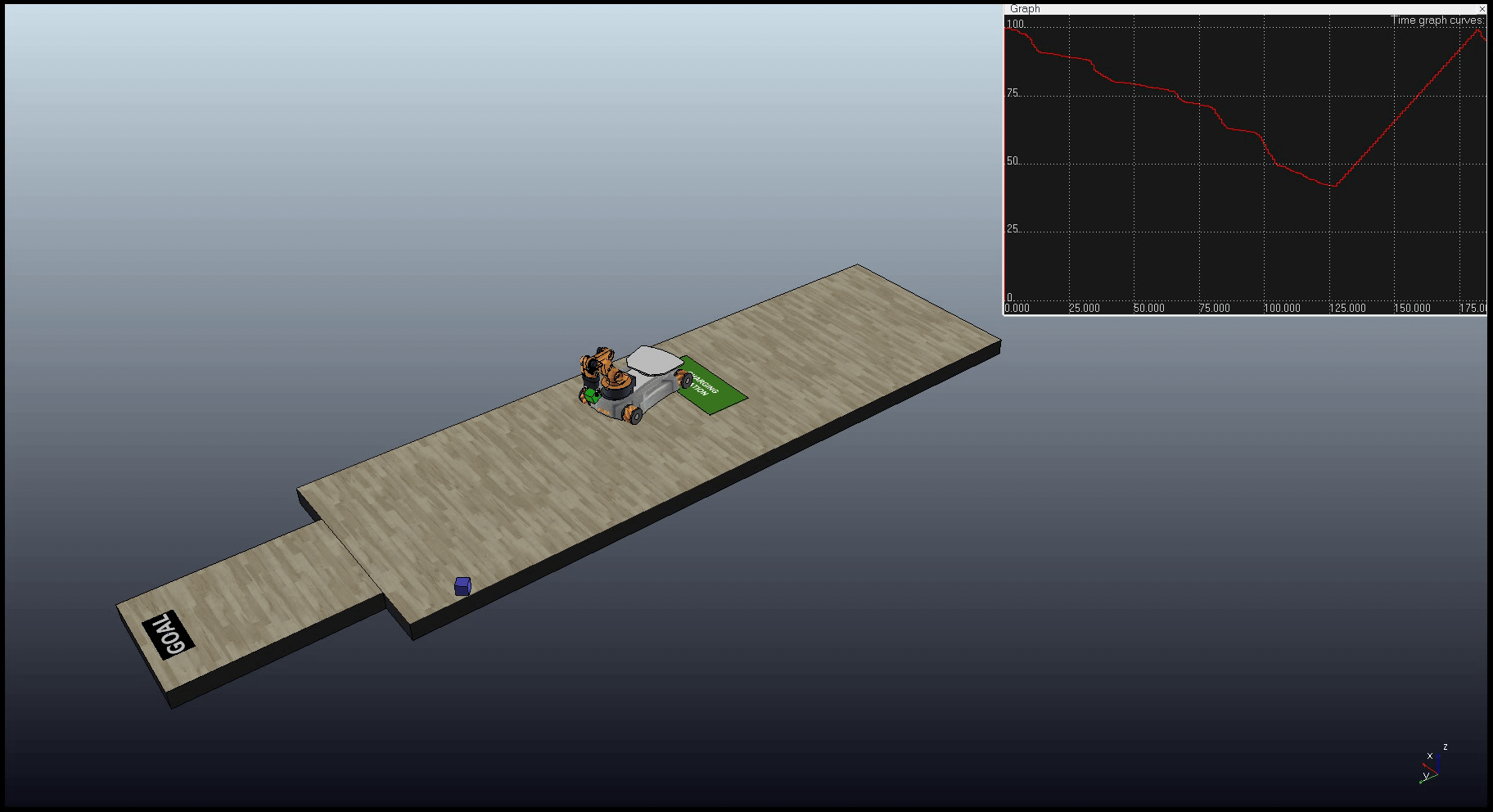}
                \caption{Once the battery is charged, the robot resumes its plan. }
                 \label{planning:SI.fig.youbotsafestep3}  
        \end{subfigure}   
        ~            
        \begin{subfigure}[b]{0.9\columnwidth}
                \centering
                \includegraphics[width=1\columnwidth,trim={5cm 5cm 28cm 15cm},clip]{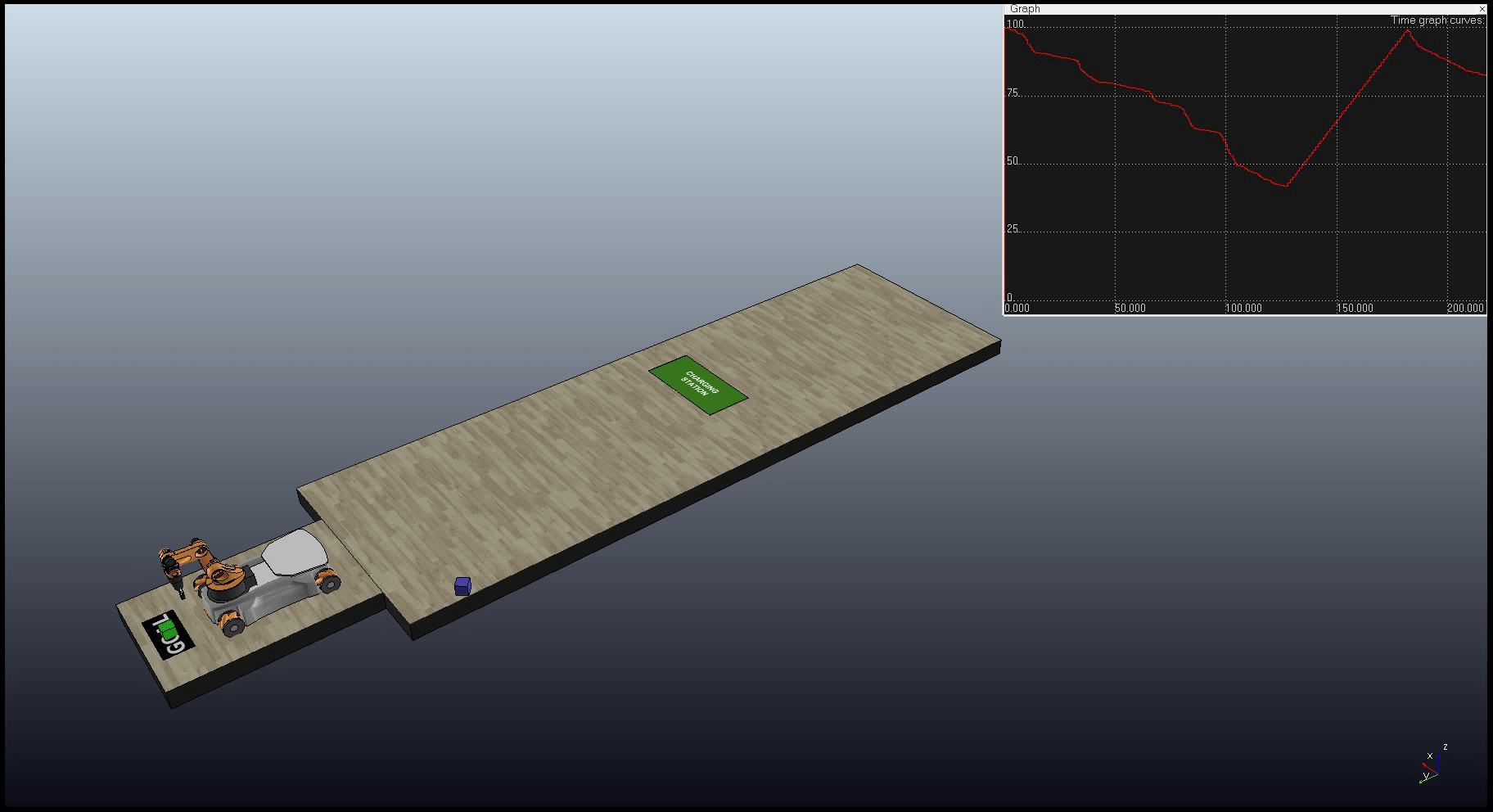}
                \caption{The robot places the object onto the desired location.}
                 \label{planning:SI.fig.youbotsafestep4}  
        \end{subfigure} 
        \caption{Execution of a KUKA Youbot  experiment illustrating safety.}
        \label{planning:SI.fig.yousafescreen2}
\end{figure}

\vspace*{\fill}

\clearpage

\begin{experiment}[Fault Tolerance]
In this experiment the robot is asked to perform the same task as in Experiment~\ref{planning:results.ex.1} with the main difference that the robot is equipped with an auxiliary arm and a fault can occur to either arm, causing the arm to stop functioning properly. The robot starts the execution as in the previous experiments (see Figure~\ref{planning:SI.fig.youbotftstep1}). However while the robot is approaching the goal area, the primary arm breaks, making the green cube fall on the ground (see Figure~\ref{planning:SI.fig.youbotftstep2}). The robot now tries to re-grasp the object with the primary arm, but this action fails since the grippers are no longer attached to the primary arm, hence the robot tries to grasp the robot with the auxiliary arm. However the auxiliary arm is too far from the object, and thus the robot has to move in a different position (see Figure~\ref{planning:SI.fig.youbotftstep3}) such that the object can  be grasped (see Figure~\ref{planning:SI.fig.youbotftstep4}) and the execution can continue.

\end{experiment}

\begin{figure}[h!]
      \begin{subfigure}[b]{1\columnwidth}
                \centering
                \includegraphics[width=0.95\columnwidth,trim={6cm 7cm 20cm 7cm},clip]{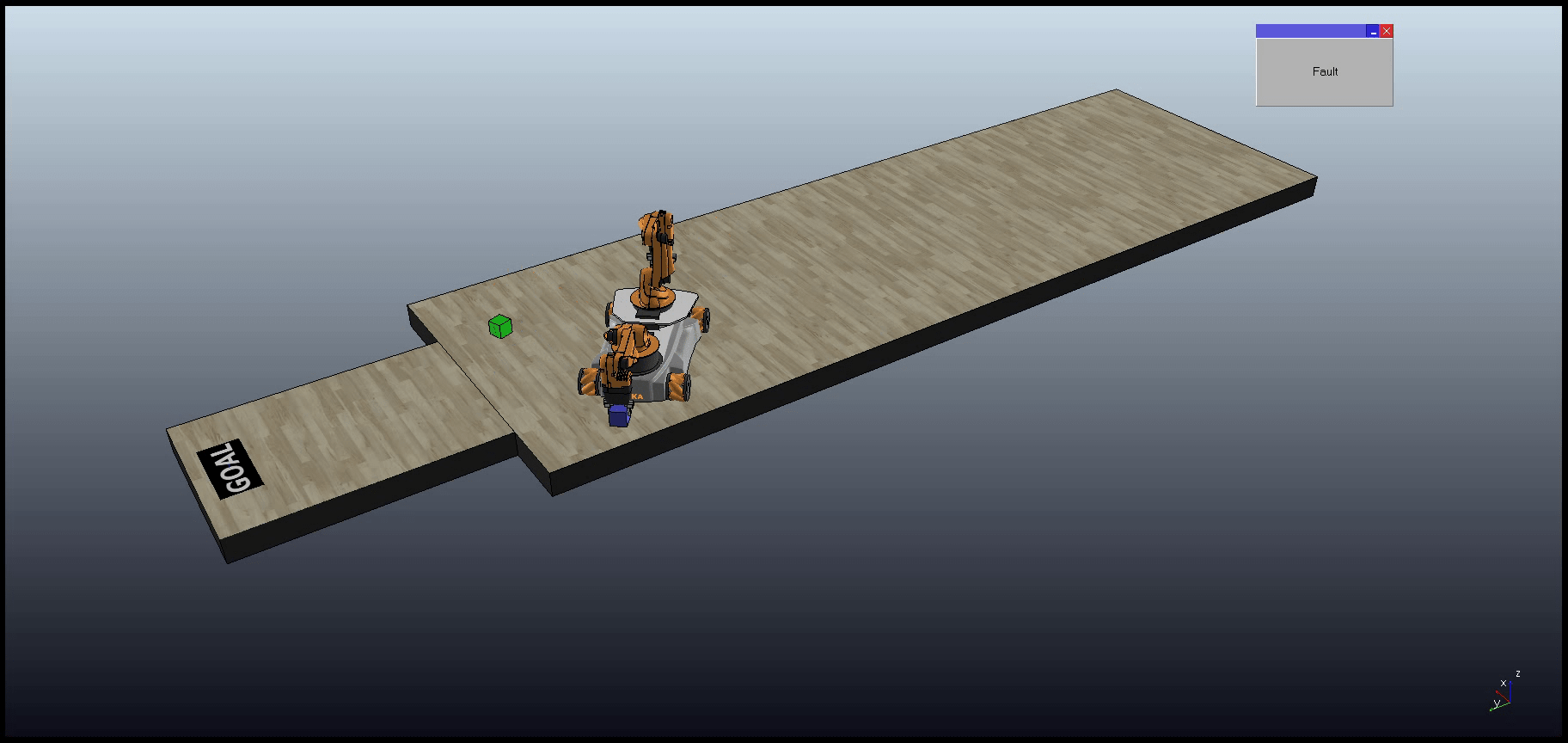}
                \caption{The robot moves the blue cube away from the path to the goal.}
                 \label{planning:SI.fig.youbotftstep1}  
        \end{subfigure}         
        ~
              \begin{subfigure}[b]{1\columnwidth}
                \centering
                \includegraphics[width=0.95\columnwidth,trim={6cm 7cm 20cm 7cm},clip]{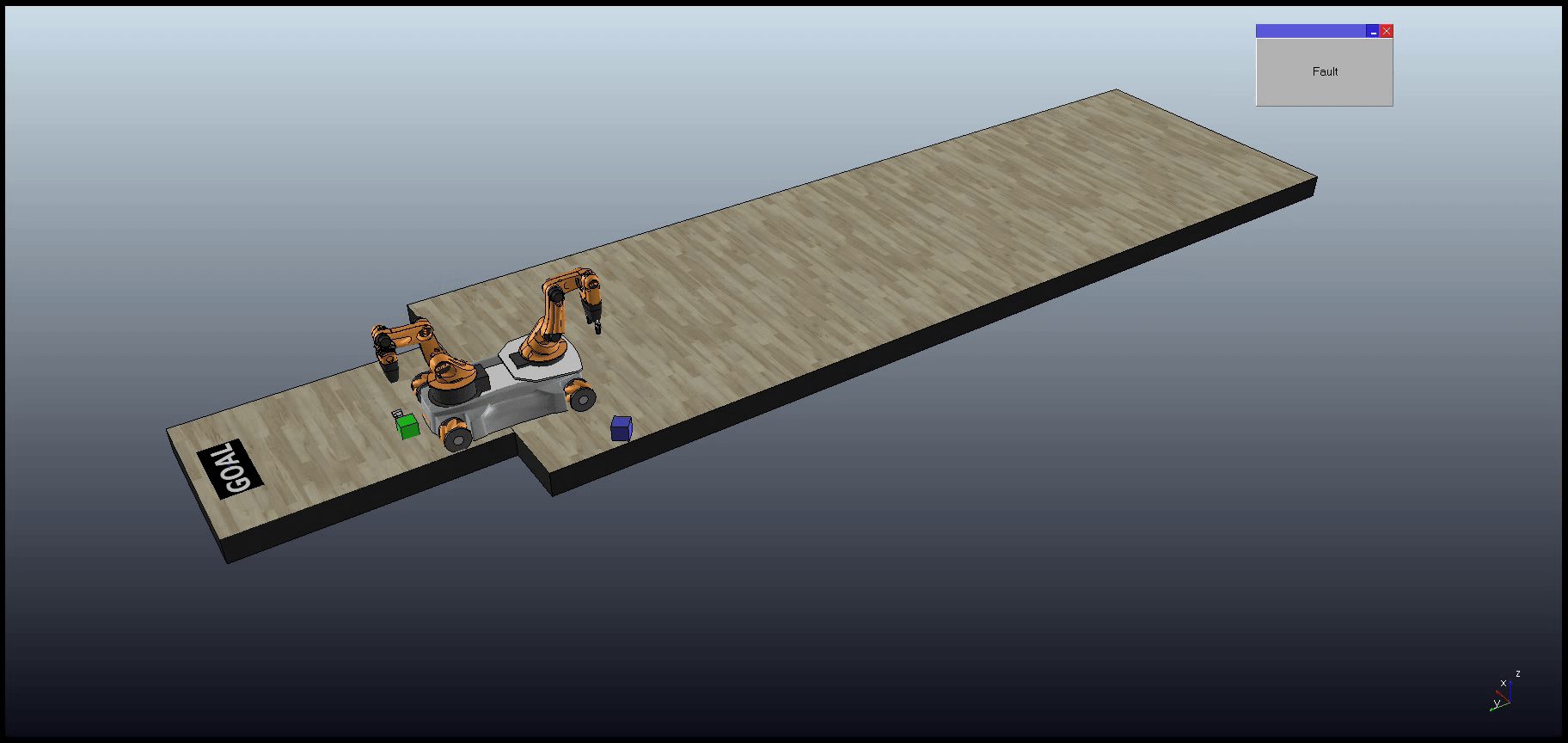}
                \caption{A fault occurs on the primary arm (the grippers break) and the green cube falls to the floor.}
                 \label{planning:SI.fig.youbotftstep2}  
        \end{subfigure} 
        \caption{Execution of a KUKA Youbot  experiment illustrating fault tolerance.}
        \label{planning:SI.fig.youftscreen2}
\end{figure}
\clearpage

\begin{figure}[t]
              \begin{subfigure}[b]{1\columnwidth}
                \centering
                \includegraphics[width=0.95\columnwidth,trim={6cm 7cm 20cm 7cm},clip]{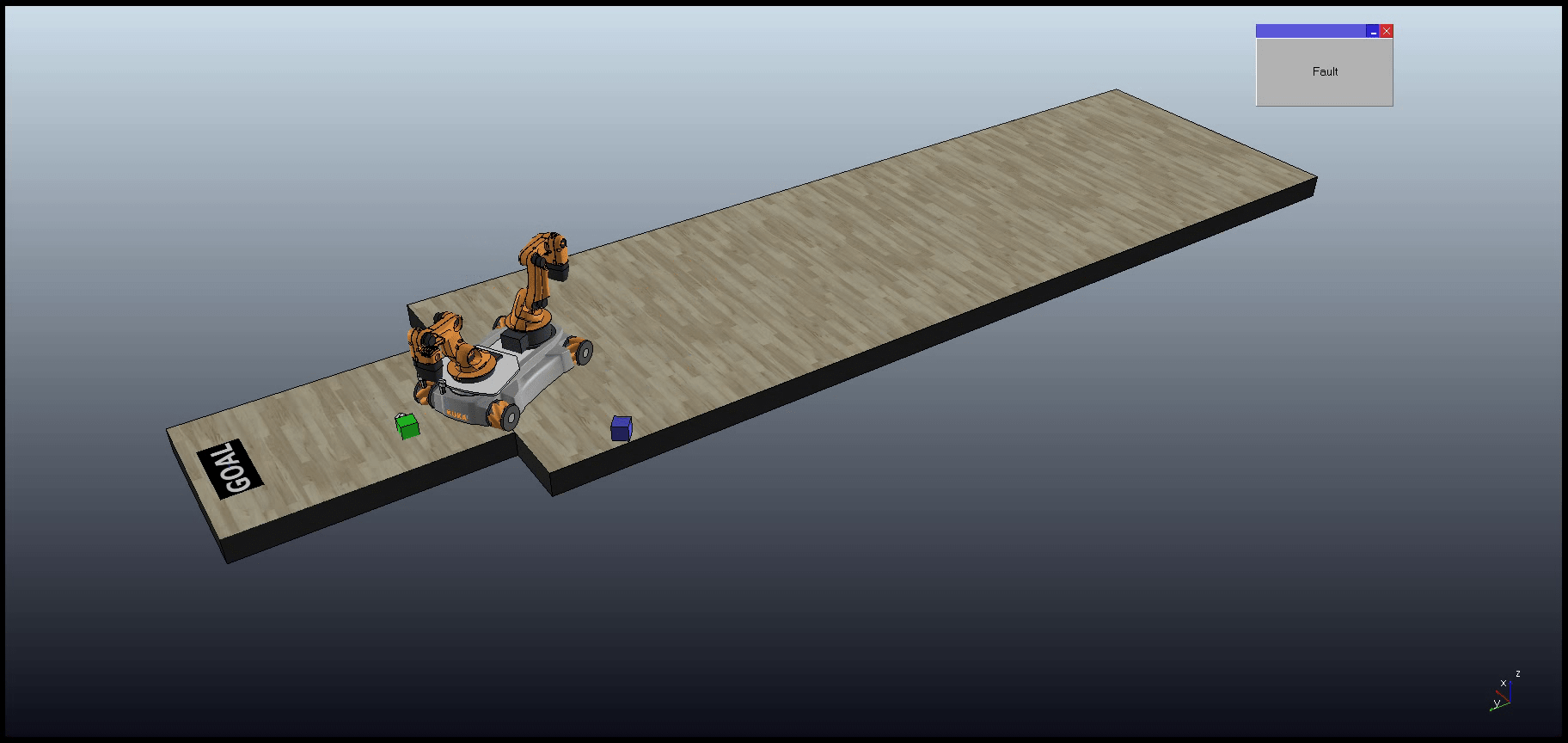}
                \caption{The robot rotates to have the object closer to the auxiliary arm.}
                 \label{planning:SI.fig.youbotftstep3}  
        \end{subfigure}      
        ~   
              \begin{subfigure}[b]{1\columnwidth}
                \centering
                \includegraphics[width=0.95\columnwidth,trim={6cm 7cm 20cm 7cm},clip]{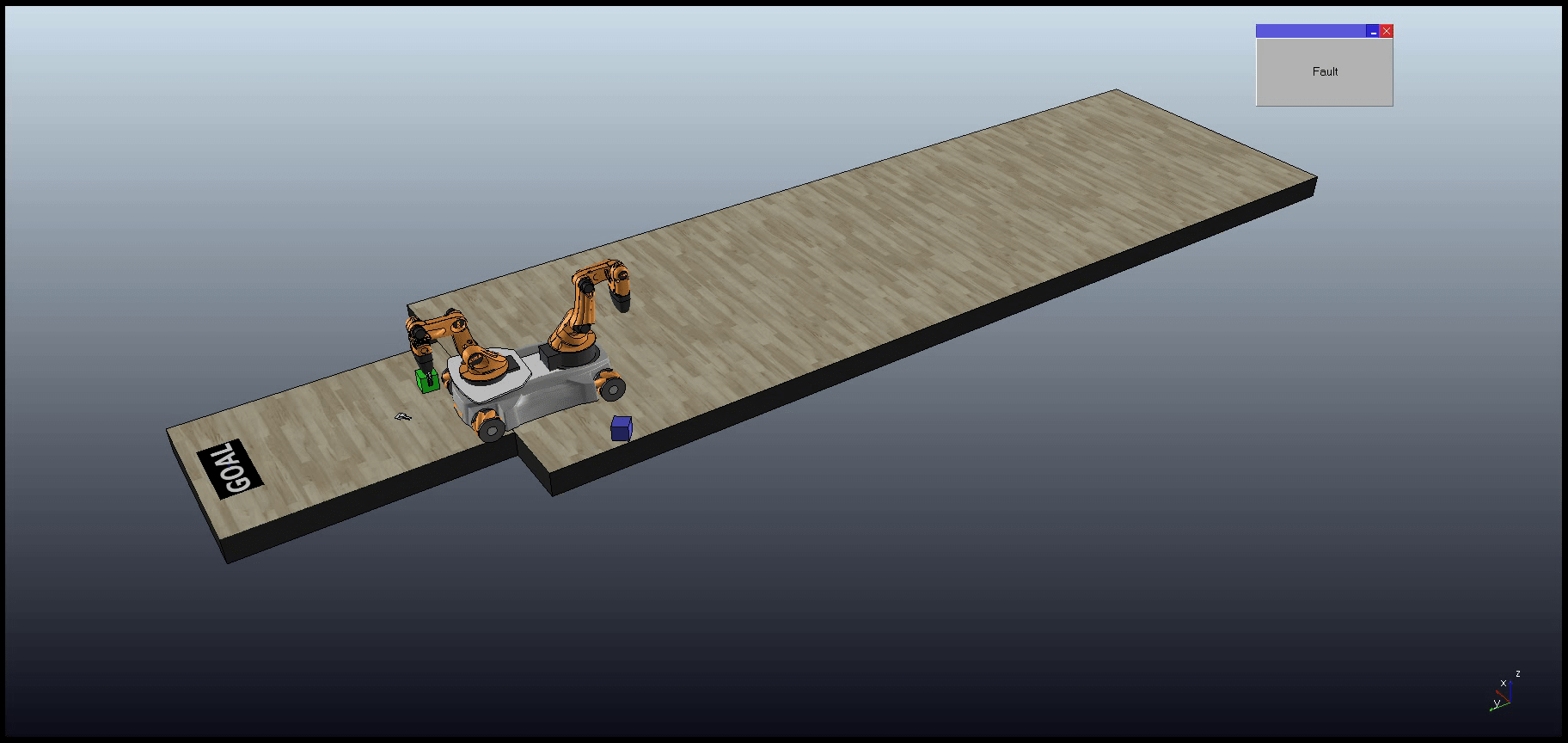}
                \caption{The robot grasps the object with the auxiliary arm.}
                 \label{planning:SI.fig.youbotftstep4}  
        \end{subfigure}        
        \caption{Execution of a KUKA Youbot  experiment illustrating fault tolerance.}
        \label{planning:SI.fig.youftscreen2bis}
\end{figure}

\vspace*{\fill}

\clearpage
\begin{experiment}[Dynamic Environment]
In this experiment the single armed version of the robot co-exists with other uncontrolled external robots. The robot is asked to place the green cube in the goal area on the opposite side of the room. The robot starts picking up the green cube and moves towards an obstructing object (a blue cube) to place it to the side (see Figure~\ref{planning:DSI.fig.youbotstep1}). Being single armed, the robot has to ungrasp the green cube (see Figure~\ref{planning:DSI.fig.youbotstep2}) to grasp the blue one (see Figure~\ref{planning:DSI.fig.youbotstep3}). While the robot is placing the blue cube to the side, an external robot places a new object between the controlled robot and the green cube (see Figure~\ref{planning:DSI.fig.youbotstep4}). The plan of the robot is then expanded to include the removal of this new object (see Figure~\ref{planning:DSI.fig.youbotstep5}). Then the robot can continue its plan by re-picking the green cube, without replaning. Now the robot approaches the yellow cube to remove it (see Figure~\ref{planning:DSI.fig.youbotstep6}), but before the robot is able to grasp the yellow cube, another external robot picks up the yellow cube (see Figure~\ref{planning:DSI.fig.youbotstep7}) and places it to the side. The subplan for removing the yellow cube is skipped (without replaning) and the robot  continues its task until the goal is reached (see Figure~\ref{planning:DSI.fig.youbotstep8}).	
\end{experiment}

\begin{figure}[h]
        \centering
        \begin{subfigure}[b]{1\columnwidth}
                \centering
                \includegraphics[width=0.85\columnwidth,trim={0 0 1cm 7cm},clip]{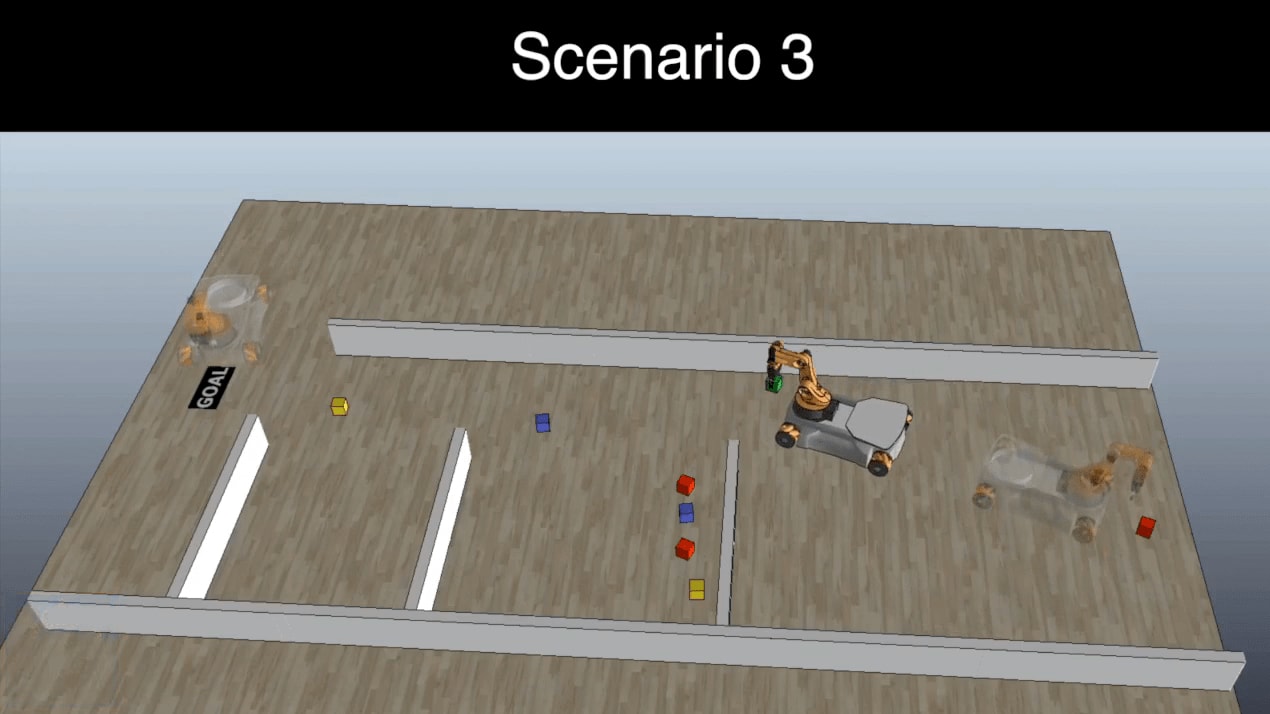}
                \caption{The robot picks the desired object, a green cube.}
                \label{planning:DSI.fig.youbotstep1}              
        \end{subfigure}       
               
        \begin{subfigure}[b]{1\columnwidth}
                \centering
                \includegraphics[width=0.85\columnwidth,trim={0 0 1cm 7cm},clip]{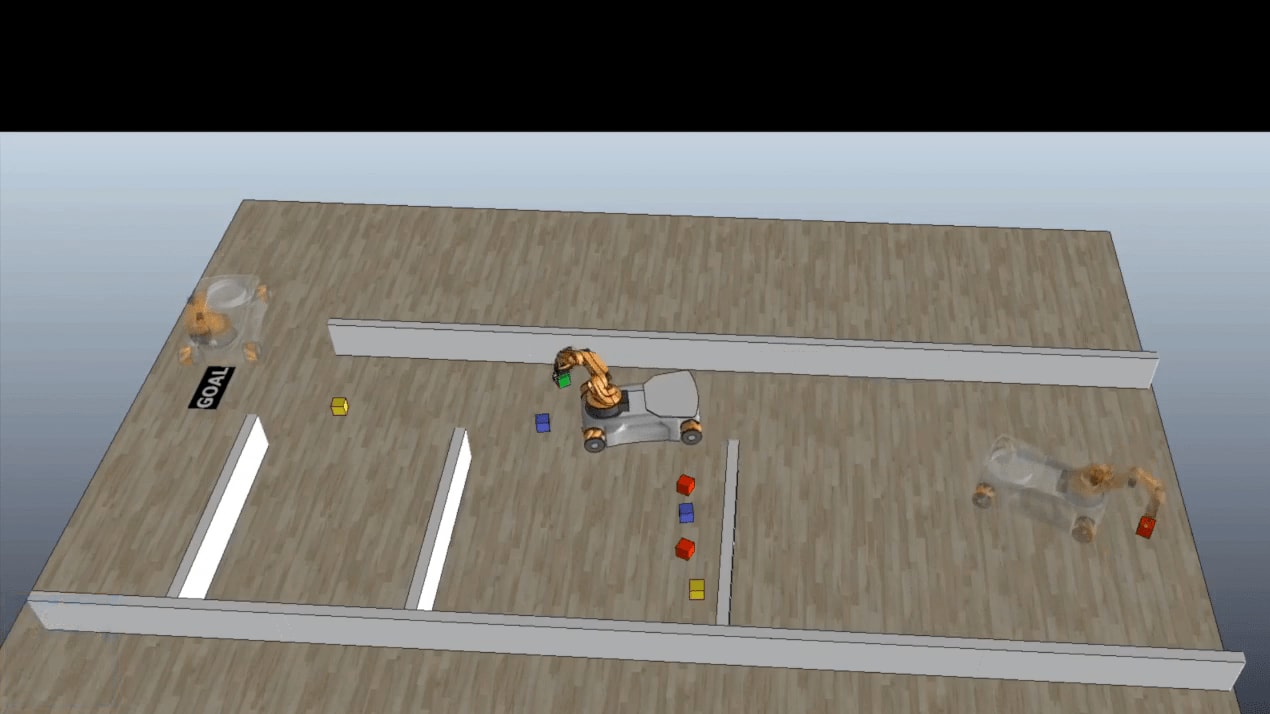}
                \caption{The blue cube obstructs the path to the goal region. The robot drops the green cube in order to pick the blue cube.}
 			\label{planning:DSI.fig.youbotstep2}  
         \end{subfigure}
        \caption{Execution of a complex KUKA Youbot experiment.}
        \label{planning:DSI.fig.youscreen}
\end{figure}
\clearpage

\begin{figure}[h]
\centering
        \begin{subfigure}[b]{1\columnwidth}
                \centering
                \includegraphics[width=0.85\columnwidth,trim={0 0 1cm 7cm},clip]{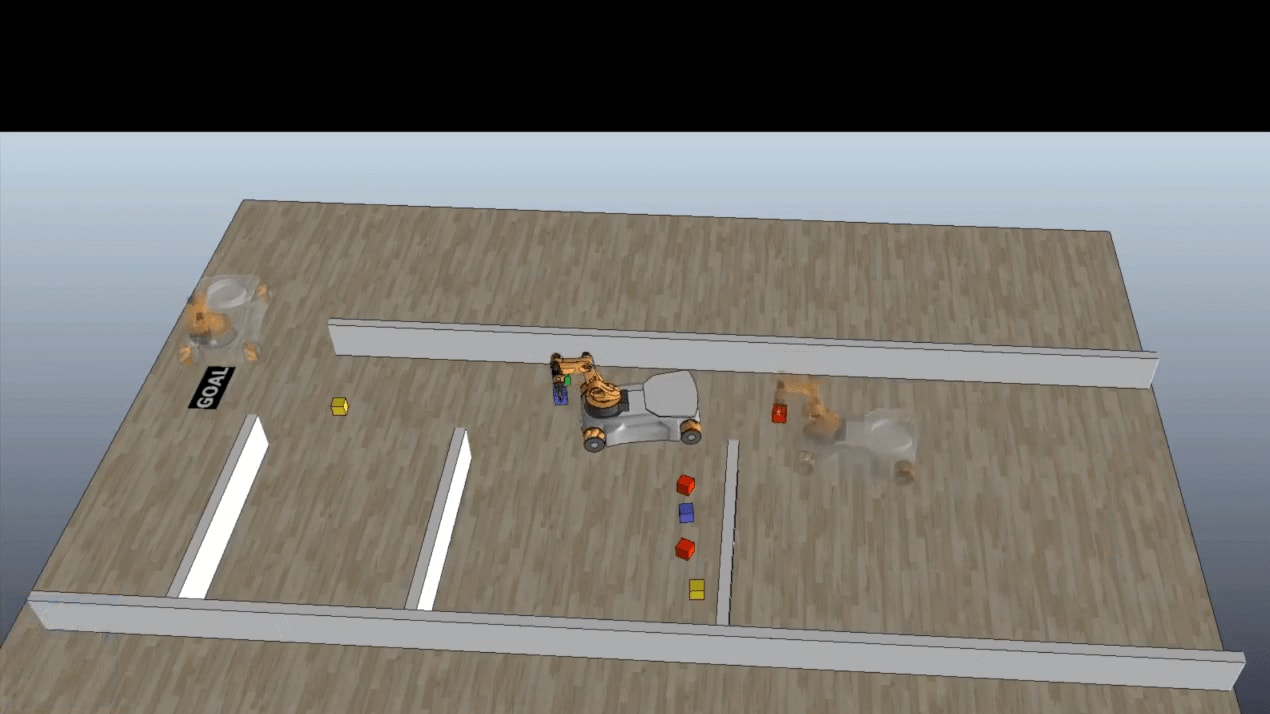}
                \caption{The robot picks the blue cube.}
                 \label{planning:DSI.fig.youbotstep3}  
        \end{subfigure}   
        ~            
        \begin{subfigure}[b]{1\columnwidth}
                \centering
                \includegraphics[width=0.85\columnwidth,trim={0 0 1cm 7cm},clip]{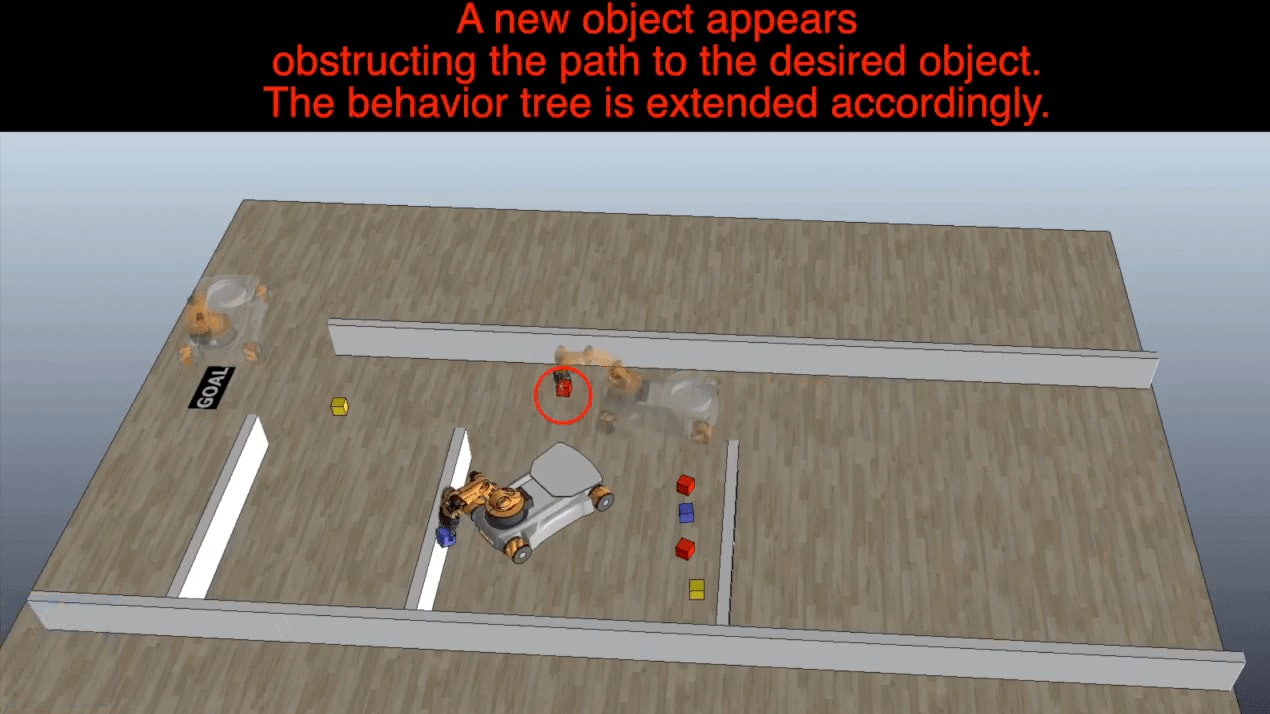}
                \caption{While the robot moves the blue cube away from the path to the goal, an external agent places a red cube between the robot and the green cube.}
                 \label{planning:DSI.fig.youbotstep4}  
        \end{subfigure} 
~

\centering

      \begin{subfigure}[b]{1\columnwidth}
                \centering
                \includegraphics[width=0.85\columnwidth,trim={0 0 1cm 7cm},clip]{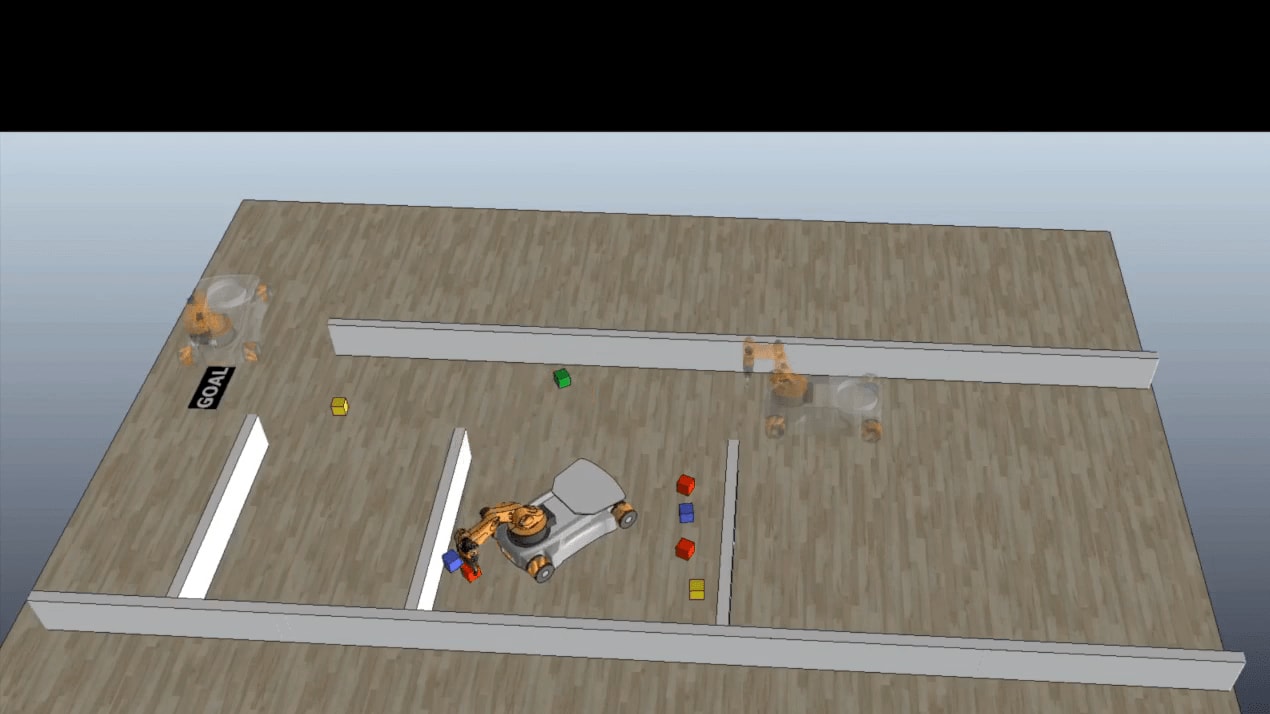}
                \caption{The robot moves the red cube away from the path to the goal.  }
                 \label{planning:DSI.fig.youbotstep5}  
        \end{subfigure}         
              \caption{Execution of a complex KUKA Youbot experiment.}
        \label{planning:DSI.fig.youscreen2}
\end{figure}
        \begin{figure}[h]
              \begin{subfigure}[b]{1\columnwidth}
                \centering
                \includegraphics[width=0.85\columnwidth,trim={0 0 1cm 7cm},clip]{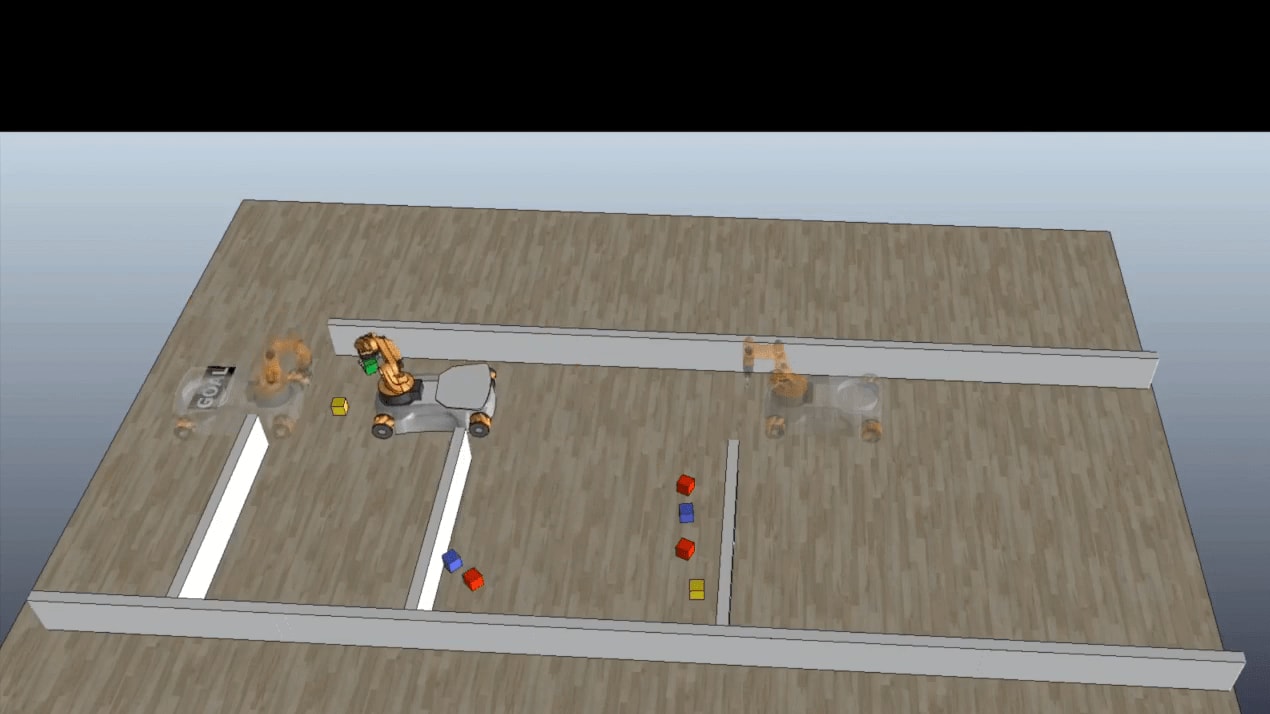}
                \caption{The yellow cube obstructs the path to the goal region. The robot drops the green cube in order to pick the yellow cube.}
                 \label{planning:DSI.fig.youbotstep6}  
        \end{subfigure} 

        \centering
              \begin{subfigure}[b]{1\columnwidth}
                \centering
                \includegraphics[width=0.85\columnwidth,trim={0 0 1cm 7cm},clip]{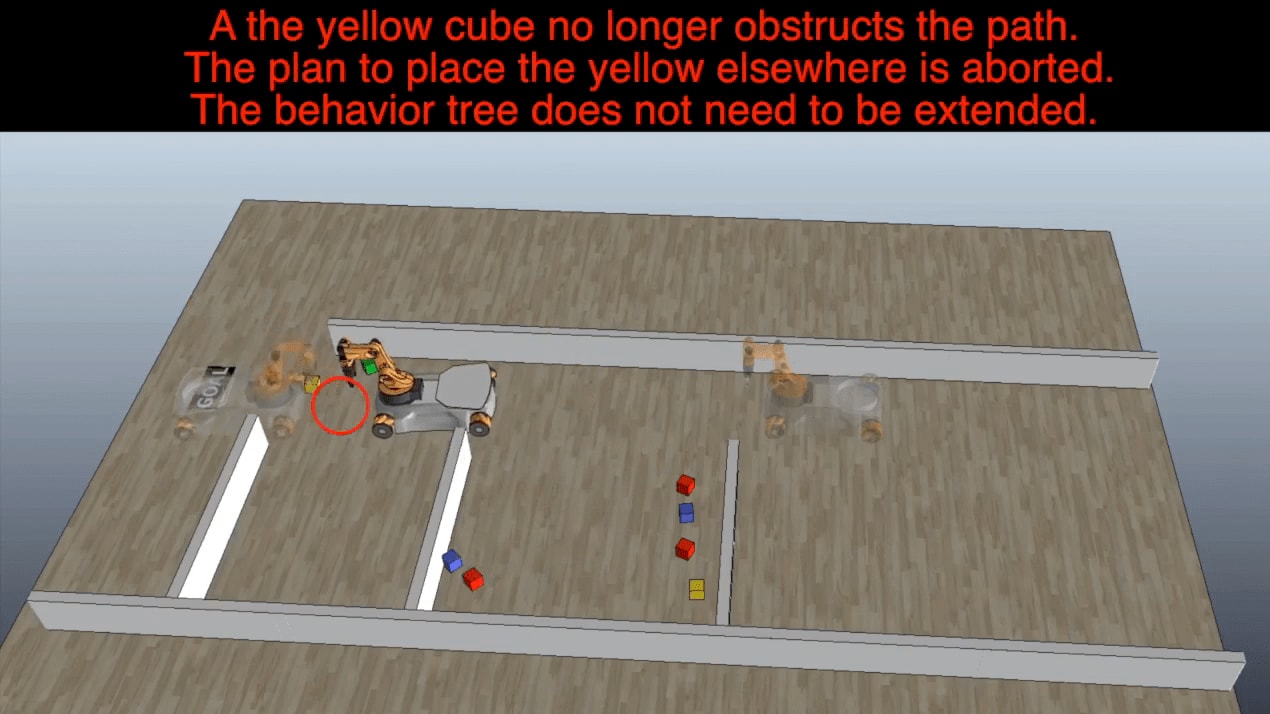}
                \caption{While the robot approaches the yellow cube, an external agent moves the yellow cube away.}
                 \label{planning:DSI.fig.youbotstep7}  
        \end{subfigure}      
        ~   
              \begin{subfigure}[b]{1\columnwidth}
                \centering
                \includegraphics[width=0.85\columnwidth,trim={0 0 1cm 7cm},clip]{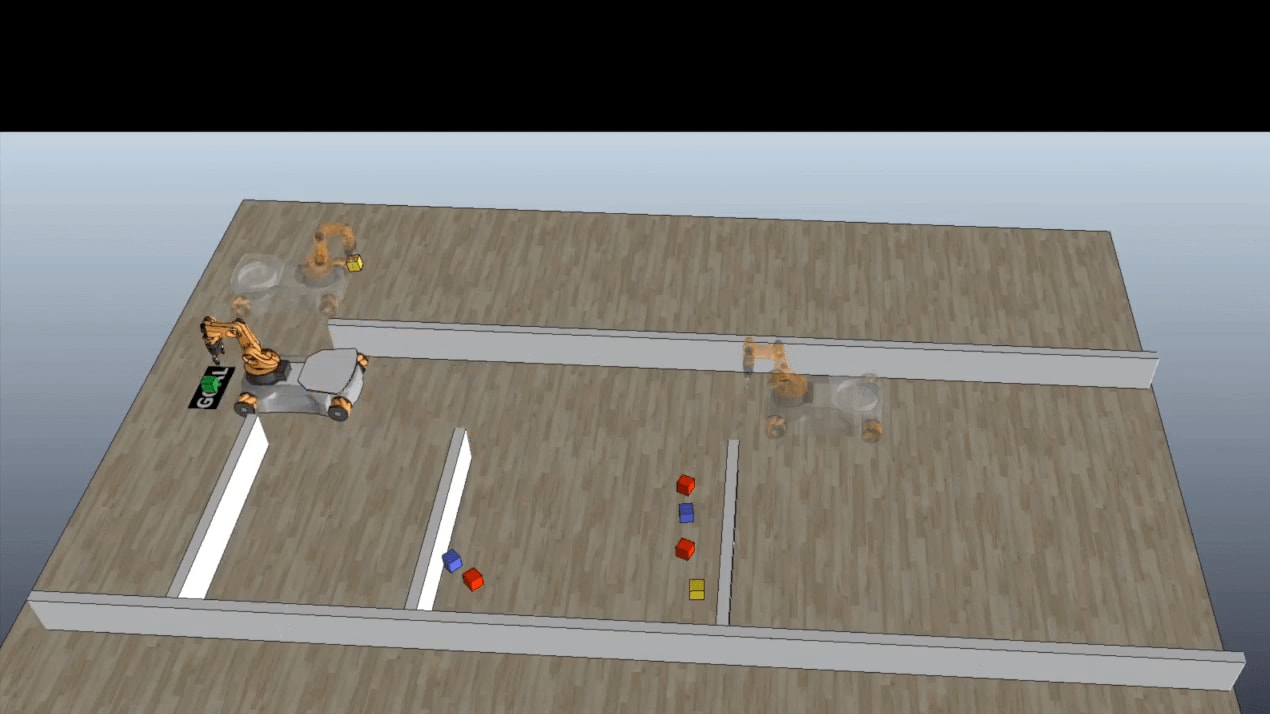}
                \caption{The robot picks the green cube and places it onto the goal region. }
                 \label{planning:DSI.fig.youbotstep8}  
        \end{subfigure}        
        \caption{Execution of a complex KUKA Youbot experiment.}
        \label{planning:DSI.fig.youscreen2bis}
\end{figure}
\clearpage

\subsubsection{ABB Yumi experiments}

In these scenarios, an ABB Yumi has to assemble a cellphone whose parts are scattered across a table, see Figure~\ref{planning:SI.fig.yumiscreen}. 
The robot is equipped with two arms with simple parallel grippers, which are not suitable for dexterous manipulation.
Furthermore, some parts must be grasped in a particular way to enable the assembly operation. 
\begin{experiment}
In this experiment, the robot needs to re-orient some cellphone's parts to expose them for assembly. Due to the gripper design, the robot must reorient the parts by performing multiple grasps transferring the part to the other gripper, see Figure~\ref{planning:SI.fig.Yumistep2}, effectively changing its orientation (see Figures~\ref{planning:SI.fig.Yumistep3}-\ref{planning:SI.fig.Yumistep4}).
\end{experiment}
}


\begin{figure}[h!]
        \centering
        \begin{subfigure}[b]{1\columnwidth}
                \centering
                \includegraphics[width=0.7\columnwidth,trim={11cm 4cm 16cm 7cm},clip]{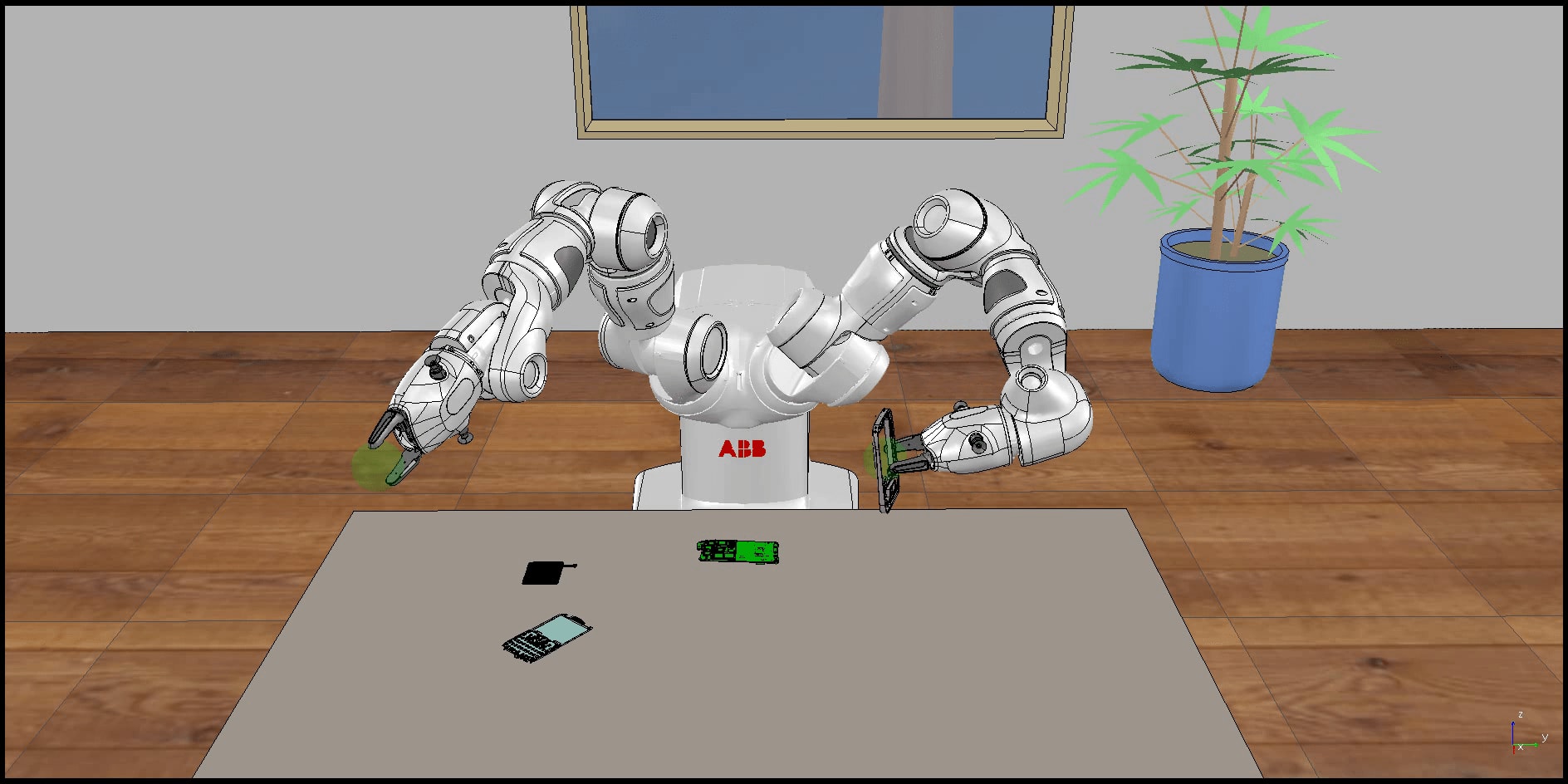}
                \caption{The robot picks the cellphone's chassis. The chassis cannot be assembled with this orientation. }
                \label{planning:SI.fig.Yumistep1}              
        \end{subfigure}       
        ~         
        \begin{subfigure}[b]{1\columnwidth}
                \centering
                \includegraphics[width=0.7\columnwidth,trim={11cm 4cm 16cm 7cm},clip]{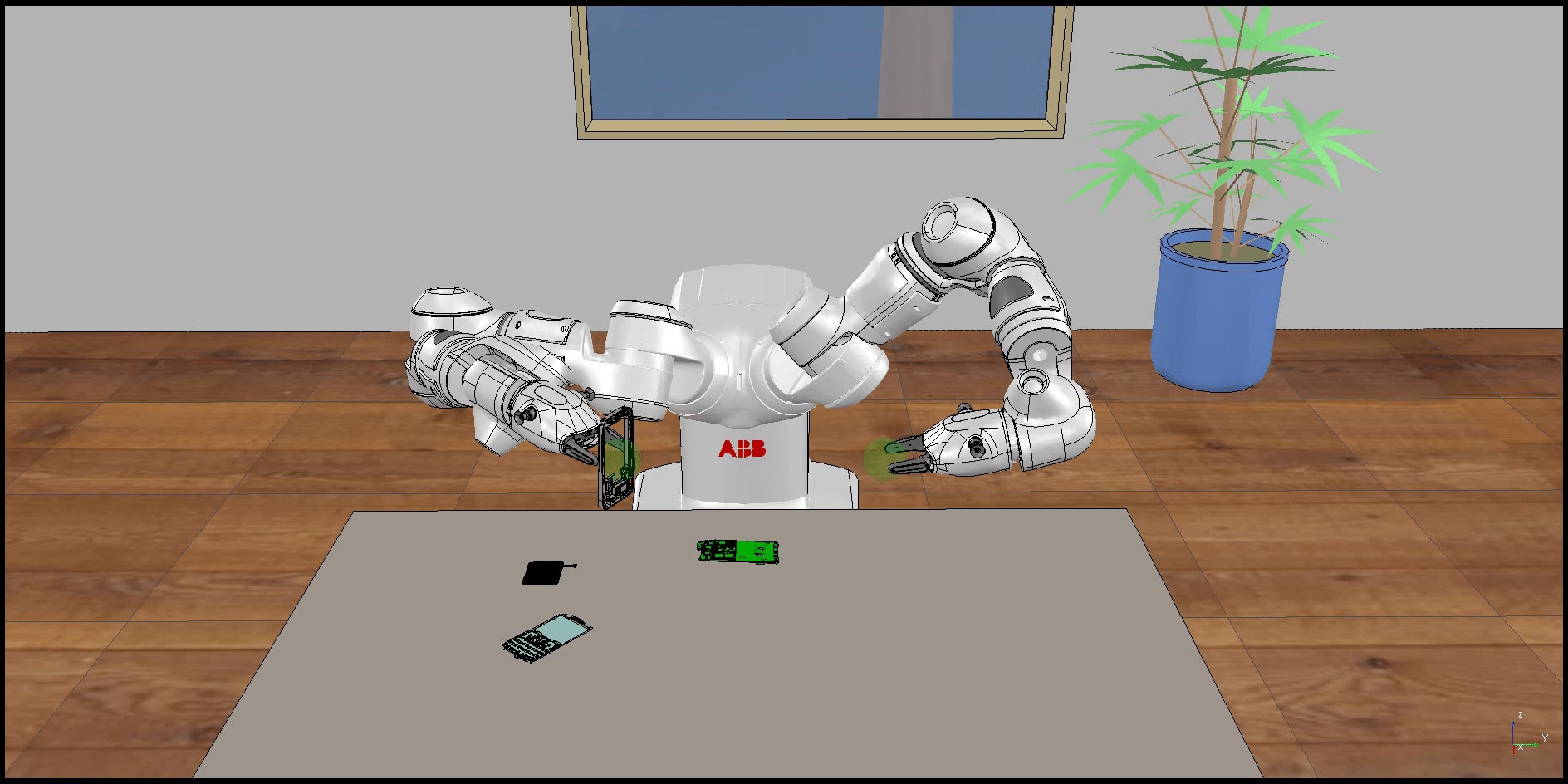}
                \caption{The chassis is handed over the other gripper. }
 			\label{planning:SI.fig.Yumistep2}  
         \end{subfigure}
        \caption{Execution of an ABB Yumi experiment.}
        \label{planning:SI.fig.yumiscreen}
\end{figure}
\clearpage
\begin{figure}[h]
\centering
        \begin{subfigure}[b]{1\columnwidth}
                \centering
                \includegraphics[width=0.7\columnwidth,trim={11cm 4cm 16cm 7cm},clip]{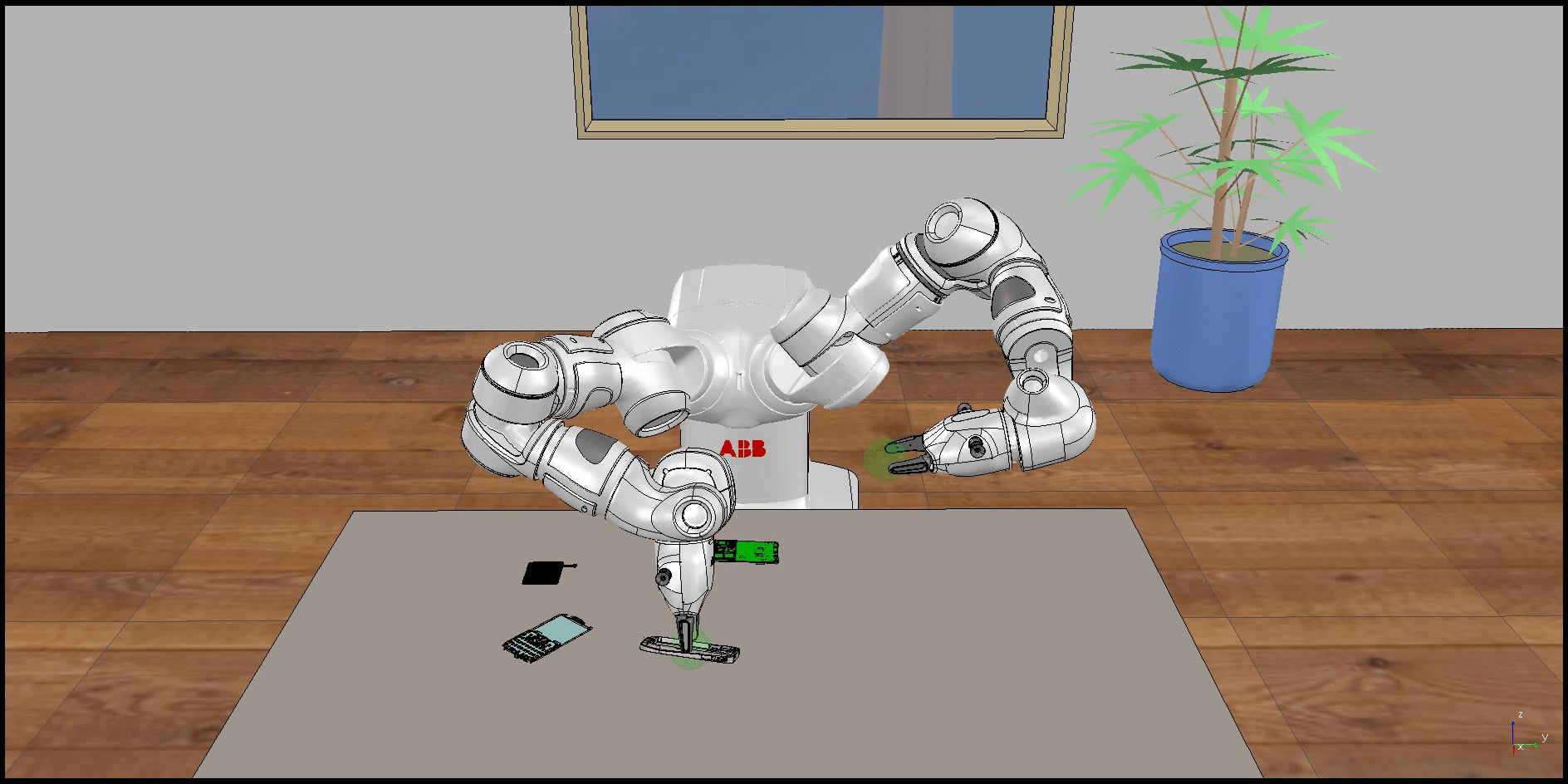}
                \caption{The chassis is placed onto the table with a different orientation than before (the opening part is facing down now).}
                 \label{planning:SI.fig.Yumistep3}  
        \end{subfigure}   
        ~            
        \begin{subfigure}[b]{1\columnwidth}
                \centering
                \includegraphics[width=0.7\columnwidth,trim={11cm 4cm 16cm 7cm},clip]{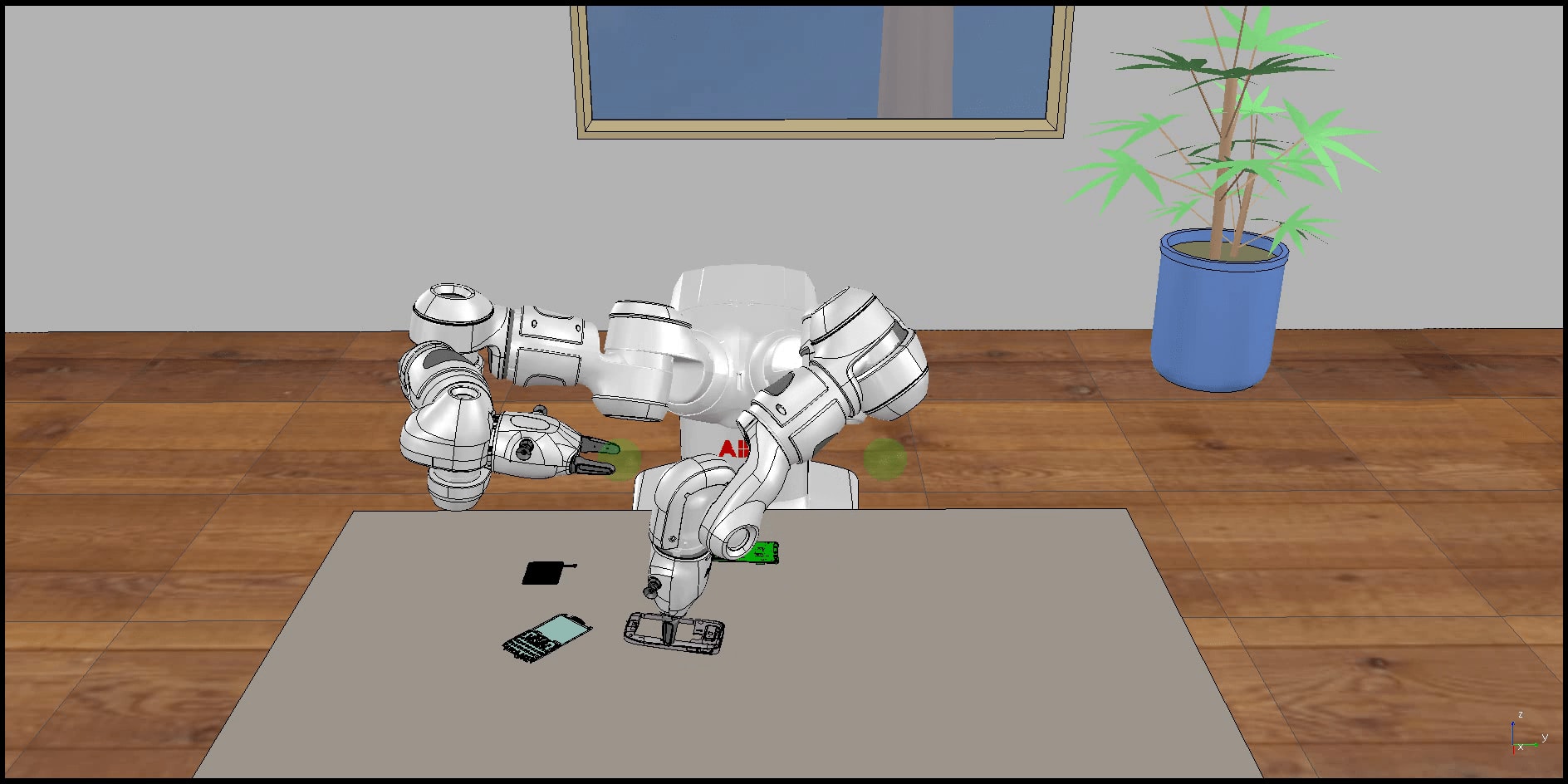}
                \caption{The robot picks the chassis with the new orientation. }
                 \label{planning:SI.fig.Yumistep4}  
        \end{subfigure} 
~

\centering

      \begin{subfigure}[b]{1\columnwidth}
                \centering
                \includegraphics[width=0.7\columnwidth,trim={11cm 4cm 16cm 7cm},,clip]{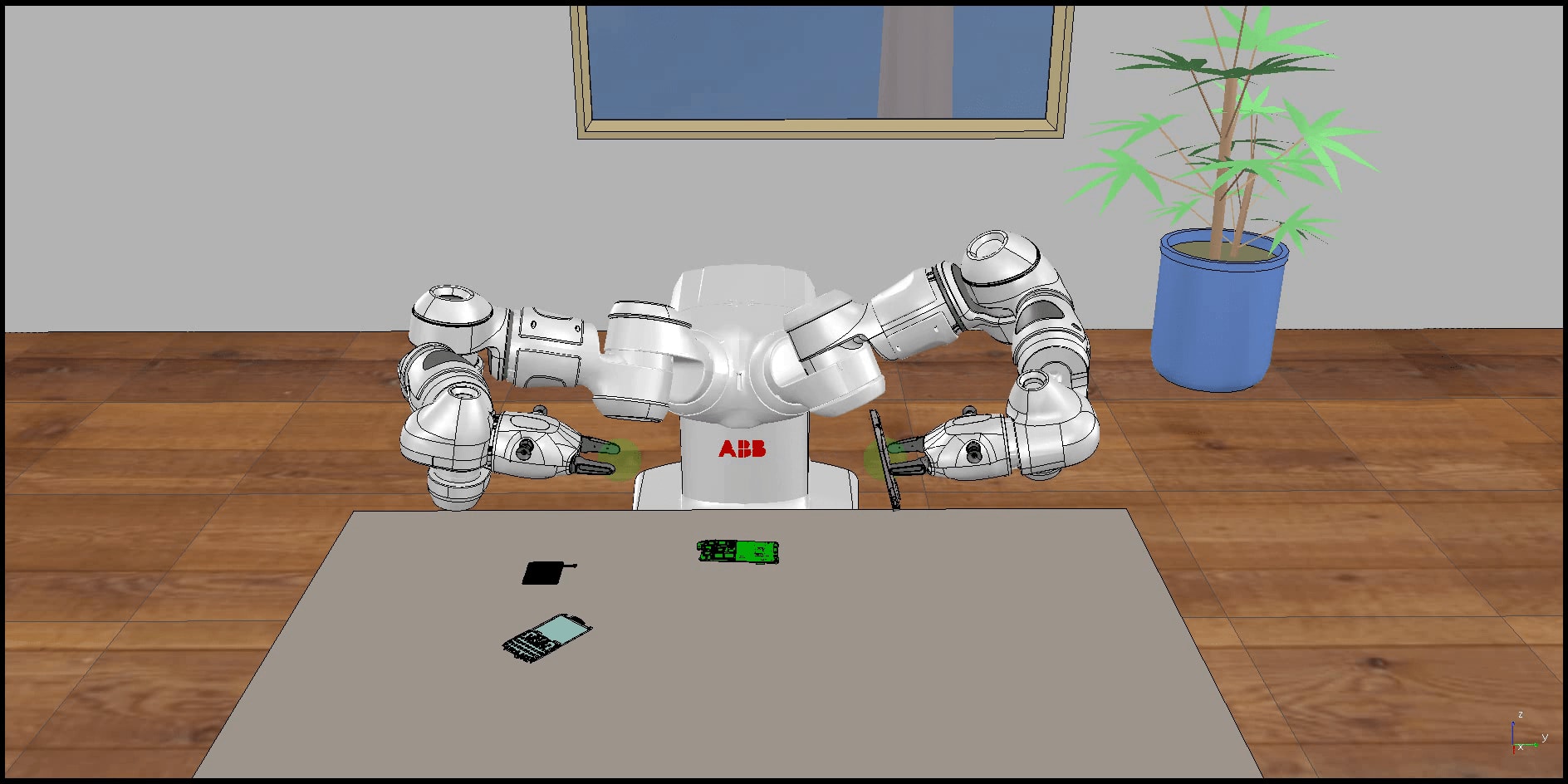}
                \caption{The chassis can be assembled with this orientation.}
                 \label{planning:SI.fig.Yumistep5}  
        \end{subfigure}         
                \label{planning:SI.fig.yumiscreen2}
                        \caption{Execution of an ABB Yumi experiment.}
\end{figure}
\clearpage
\begin{figure}[h]
              \begin{subfigure}[b]{1\columnwidth}
                \centering
                \includegraphics[width=0.7\columnwidth,trim={11cm 4cm 16cm 7cm},clip]{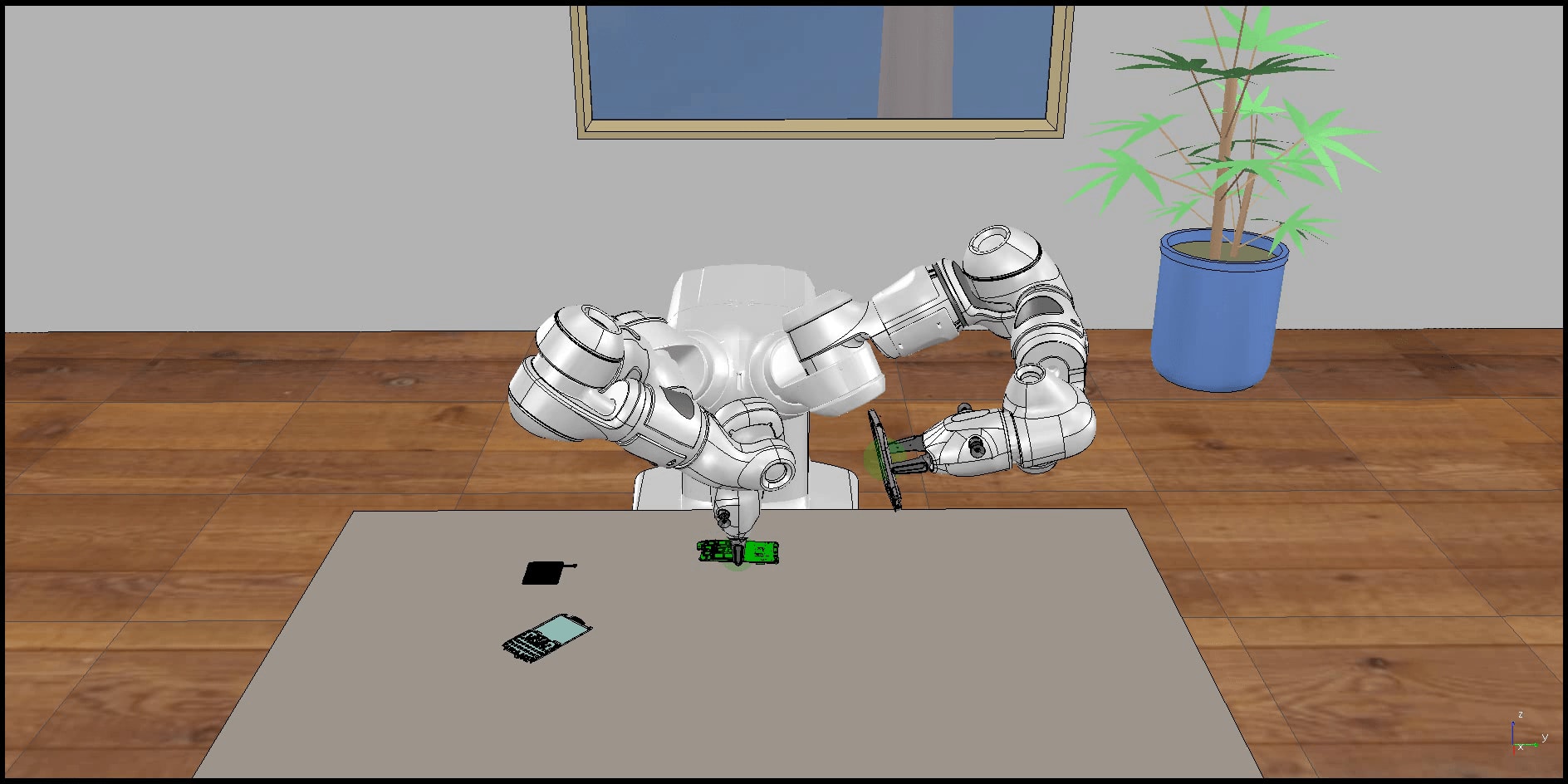}
                \caption{The robot pick the next cellphone's part to be assembled (the motherboard). }
                 \label{planning:SI.fig.Yumistep6}  
        \end{subfigure} 
        ~
        
        \centering
              \begin{subfigure}[b]{1\columnwidth}
                \centering
                \includegraphics[width=0.7\columnwidth,trim={11cm 4cm 16cm 7cm},,clip]{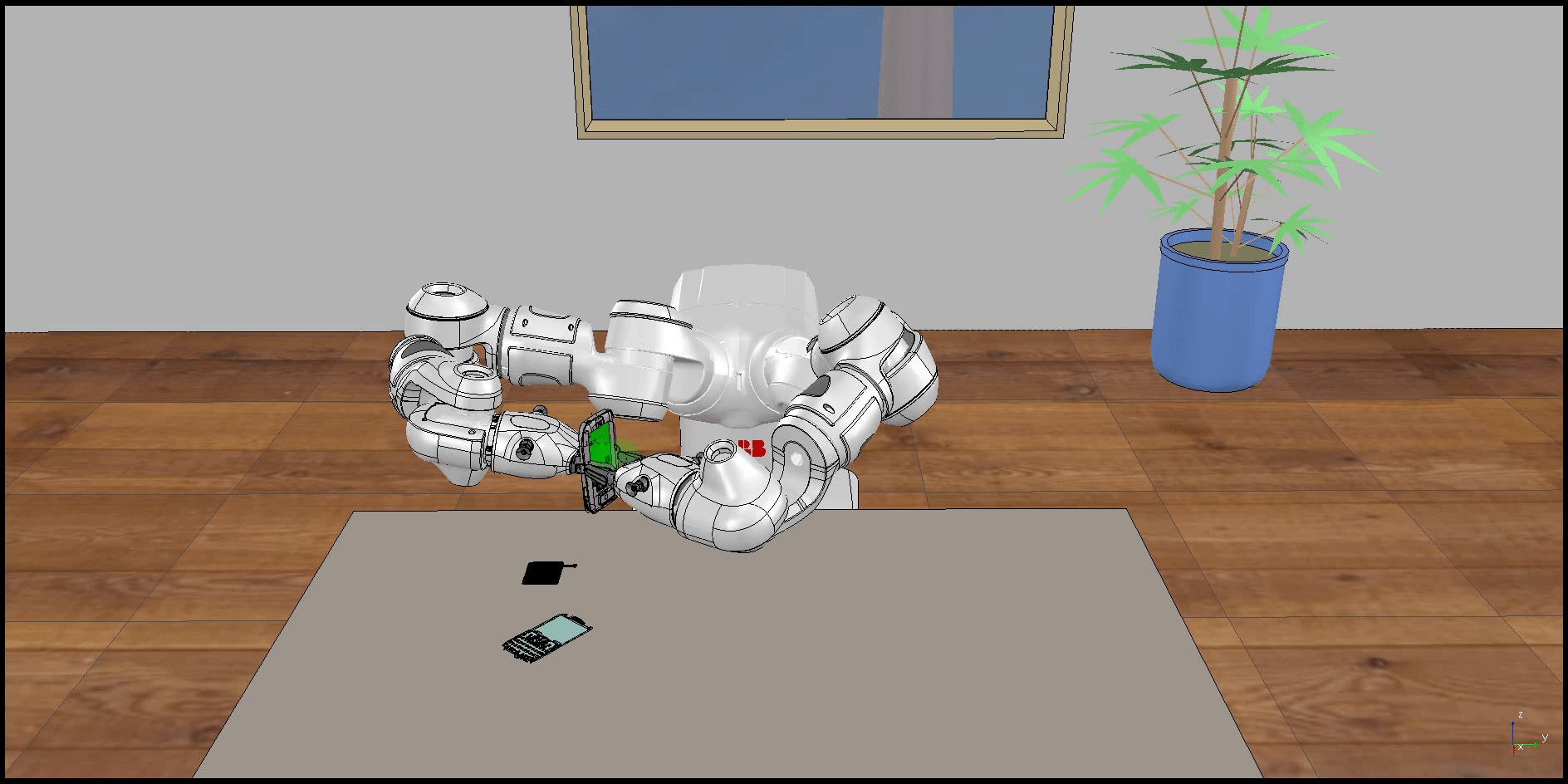}
                \caption{The motherboard and the chassis are assembled.}
                 \label{planning:SI.fig.Yumistep7}  
        \end{subfigure}      
        ~   
              \begin{subfigure}[b]{1\columnwidth}
                \centering
                \includegraphics[width=0.7\columnwidth,trim={11cm 4cm 16cm 7cm},clip]{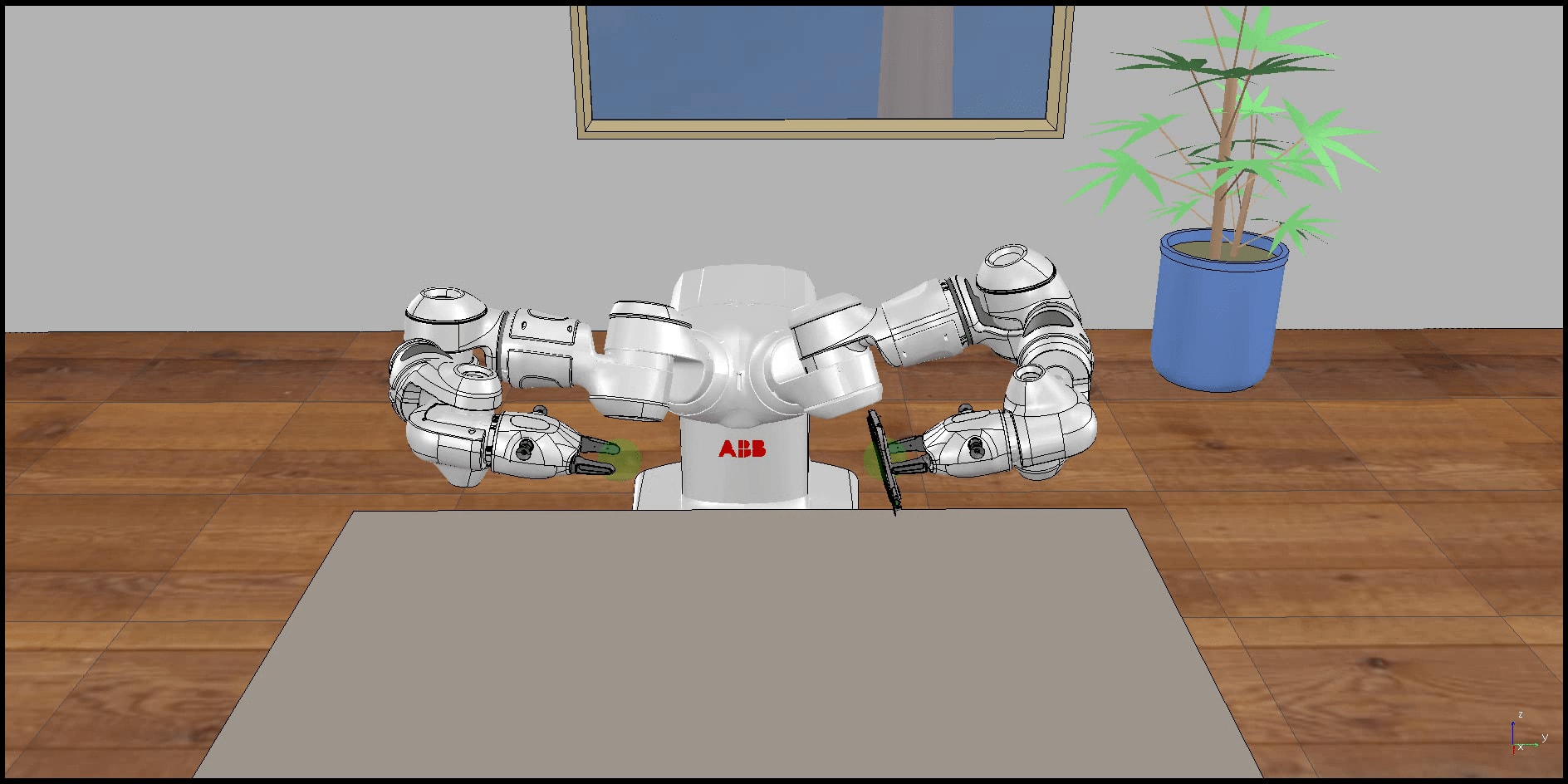}
                \caption{The robot assembles the cellphone correctly. }
                 \label{planning:SI.fig.Yumistep8}  
        \end{subfigure}        
        \caption{Execution of an ABB Yumi experiment.}
        \label{planning:SI.fig.yumiscreen2bis}
\end{figure}
\clearpage

\section{Planning using A Behavior Language (ABL)}
\label{planning.MS}

To contrast the PA-BT approach described above we will more briefly present planning using A Behavior Language
\footnote{In the first version of ABL, the tree structure that stores all the goals is called \say{Active Behavior Tree}. This tree 
is related to, but different from the BTs we cover in this book (e.g. no Fallbacks and no ticks).
  Later work used the classic BT formulation also for ABL.}
 (ABL, pronounced ``able")~\cite{weber2010reactive,weber2011building}. 

ABL was designed for the dialoge game Facade \cite{mateas2002abl}, but is also appreciated for its ability to handle the planning and acting on multiple scales 
that is often needed in both robotics and games, and in particular essential in so-called Real-Time Strategy games.
In such games, events take place both on a long term time scale, where strategic decisions has to be made regarding e.g., what buildings to construct
in the next few minutes, and where to locate them, and on a short term  time scale, where tactical decisions has to be made regarding e.g., what opponents to attack in the next few seconds.
Thus, performing well in multi-scale games requires the ability to execute short-term tasks while working towards long-term goals. 
In this section we will first use the the Pac-Man game as an example, 
where the short term decisions concern avoiding ghosts and the long term decisions concern eating all the available pills.

%


\subsection{An ABL Agent}

\begin{figure}[h]
\centering
\includegraphics[width=0.7\columnwidth]{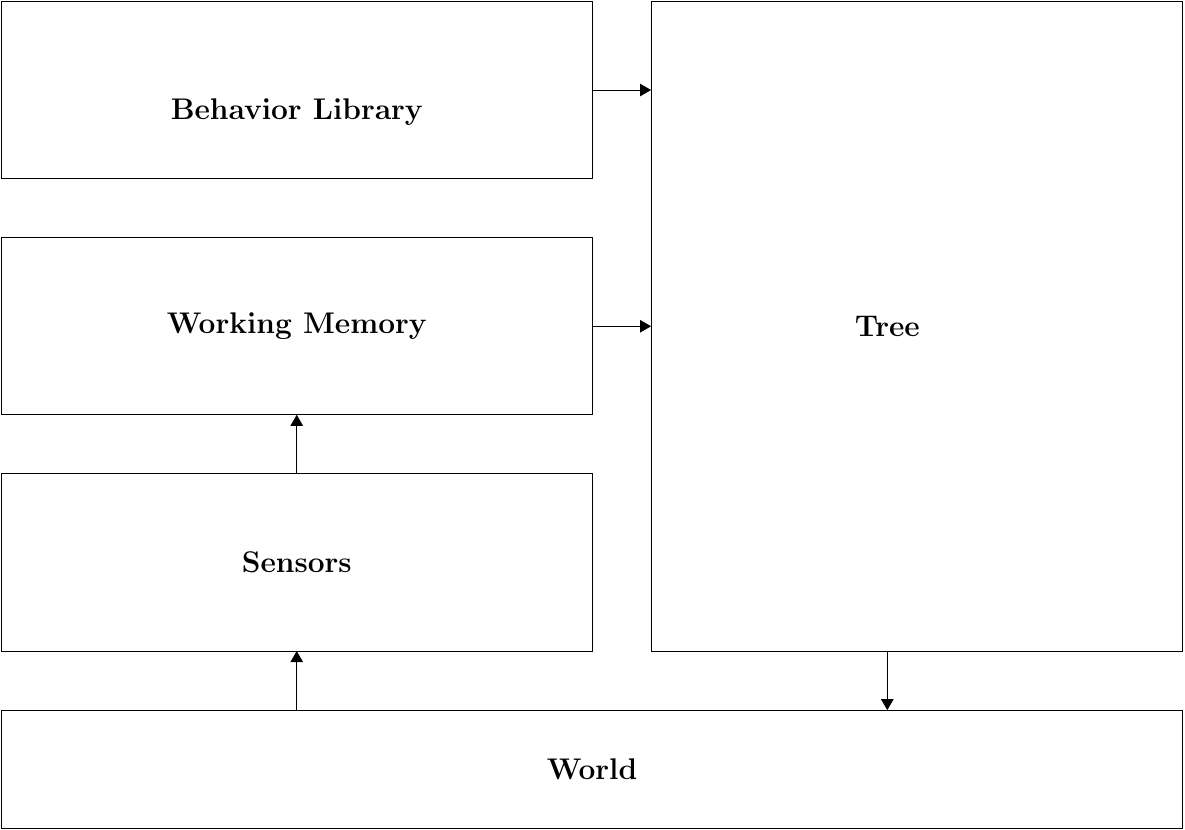}
\caption{Architecture of a ABL agent.}
\label{planning:rt.fig.ablarchitecture}
\end{figure}

An agent running an ABL planner is called an \emph{ABL Agent}. Figure~\ref{planning:rt.fig.ablarchitecture} depicts the architecture of an ABL agent. The \textbf{behavior library} is a repository of pre-defined behaviors where each behavior consists of a set of actions to execute to accomplish a goal (e.g. move to given location). There are two kinds of behaviors in ABL, \emph{sequential behaviors} and \emph{parallel behaviors}.
The \textbf{working memory} is a container for any information the agent must access during execution (e.g. unit's position on the map). The \textbf{sensors} report information about changes in
the world by writing that information into the working memory (e.g. when another agent is within sight). The \textbf{tree} (henceforth denoted ABL tree to avoid confusion) is an execution structure that describes how the agent will act, and it is dynamically extended. The ABL tree is initially defined as a collection of all the agent's goals, then it is recursively extended using a set of instructions that describe how to expand the tree.  Figure~\ref{planning:abl.fig.abtintro1} shows the initial ABL tree for the ABL Pac-Man Agent. Below we describe the semantic of  ABL tree instructions.

\begin{figure}[h]
\centering
\begin{lstlisting}
initial-tree{
	subgoal handleGhosts();
	subgoal eatAllPills();
	}
\end{lstlisting}
\caption{Example of an initial ABL tree instruction of the ABL agent for Pac-Man.}
\label{planning:abl.fig.abtintro1}
\end{figure}

A \textbf{subgoal} instruction establishes goals that must be accomplished in order to achieve
the main task.

An \textbf{act} instruction describes an action what will change the physical state of the agent or the environment.
A \textbf{mental act} instruction describes pure computation, as mathematical
computations or modifications to working memory. 

Act and mental act are both parts of behaviors, listed in  the behavior library, as the examples in Figures~\ref{planning:abl.fig.abtintro2}-\ref{planning:abl.fig.abtintro2bis}.

\begin{figure}[h]
\centering
\begin{lstlisting}
sequential behavior eatAllPills(){
	mental_act computeOptimalPath();
	act followOptimalPath();
}
\end{lstlisting}
\caption{Example of the content of the behavior library.}
\label{planning:abl.fig.abtintro2}
\end{figure}

\begin{figure}[h]
\centering
\begin{lstlisting}
parallel behavior eatAllPills(){
	mental_act recordData();
	act exploreRoom();
}
\end{lstlisting}
\caption{Example of the content of the behavior library.}
\label{planning:abl.fig.abtintro2bis}
\end{figure}

A \textbf{spawngoal} instruction is the key component for expanding the BT. It defines the subgoals that must be accomplished to achieve a behavior.

\begin{figure}[h]
\centering
\begin{lstlisting}
parallel behavior handleGhosts(){
	spawngoal handleDeadlyGhosts();
	spawngoal handleScaredGhosts();
}
\end{lstlisting}
\caption{Example of the content of the behavior library. }
\label{planning:rt.fig.abtintro3}
\end{figure}

\begin{remark}
The main difference between the instructions \emph{subgoal} and \emph{spawngoal} is that the spawngoal instruction is evaluated in a lazy fashion, expanding the tree only when the goal spawned is needed for the first time, whereas the subgoal instruction is evaluated in a greedy fashion, requiring the details on how to carry out the subgoal at design time.   
\end{remark}

\begin{figure}[h]
\centering
\begin{lstlisting}
sequential behavior handleDeadlyGhost(){
	precondition {
	(deadlyGhostClose);
	}
	act keepDistanceFromDeadlyGhost();
}
\end{lstlisting}
\caption{Example of a precondition instruction of the ABL agent for Pac-Man.}
\label{planning:abl.fig.abtintro4}
\end{figure}

A \textbf{precondition} instruction specifies
under which conditions the behavior can be selected. When all
of the preconditions are satisfied, the behavior
can be selected for execution or expansion, as in Figure~\ref{planning:abl.fig.abtintro4}.

\begin{figure}[h]
\centering
\begin{lstlisting}
conflict keepDistanceFromDeadlyGhost followOptimalPath; 
\end{lstlisting}
\caption{Example of a conflict instruction of the ABL agent for Pac-Man.}
\label{planning:rt.fig.abtintro5}
\end{figure}

A \textbf{conflict} instruction specifies priority order if two or more actions are scheduled for execution at the same time, using the same (virtual) actuator. 

\subsection{The ABL Planning Approach}

In this section, we present the ABL planning approach.
Formally, the  approach is described in Algorithms \ref{planning:ABL.alg.main}-\ref{planning:ABL.alg.Exec}.
First, will give an overview of the algorithms and see how they are applied to the problem described in \emph{Example~\ref{planning:ex:pacman}},
to iteratively create the BTs of Figure \ref{planning:abl.fig.abtpacman2BT}.
Then, we will discuss the key steps in more detail.

\begin{algorithm2e}[h!]
\caption{main loop - input(initial ABL tree)}
 \label{planning:ABL.alg.main}
\DontPrintSemicolon
%
\SetKwProg{myalg}{algorithm2e}{}{}
 $\mathcal{T} \gets$ ParallelNode \label{planning:ABL.alg.main.T} \\
\For{$subgoal$ in $initial-tree$} {
 $\mathcal{T}_g \gets$ \FuncSty{GetBT(subgoal)} \label{planning:ABL.alg.main.subgoal}
 
 $\mathcal{T}$.\FuncSty{AddChild}($\mathcal{T}_g$)\\

 }
\While{True\label{planning:ABL.alg.main.while}}{
    \FuncSty{Execute($\mathcal{T}$)} \label{planning:ABL.alg.main.fail2}}

\end{algorithm2e}


\begin{algorithm2e}[h!]
\caption{GetBT - input(goal)}
 \label{planning:ABL.alg.getbt}
\DontPrintSemicolon
%
\SetKwProg{myalg}{algorithm2e}{}{}

 $\mathcal{T}_g \gets \emptyset$\\

            \If{goal.behavior is sequential}{
 $\mathcal{T}_g \gets$ SequenceNode \label{planning:ABL.alg.main.sequence}\\
            	 }
            \Else{
             $\mathcal{T}_g \gets $ ParallelNode \label{planning:ABL.alg.main.parallel}\\
            }

 $Instructions \gets $\FuncSty{GetInstructions($goal$)}

 \For{$instruction$ in $Instructions$} {
\Switch{instruction}{

            \Case{act\label{planning:ABL.alg.main.act}}{
            	 $\mathcal{T}_g$.\FuncSty{AddChild} (ActionNode(act))\\
            } 
            \Case{mental act}{      
                $\mathcal{T}_g$.\FuncSty{AddChild} (ActionNode(mental act))\\ \label{planning:ABL.alg.main.mentalact}
                }
            \Case{spawngoal\label{planning:ABL.alg.main.spawngoal}}{
                $\mathcal{T}_g$.\FuncSty{AddChild} (PlaceholderNode(spawngoal))
            } 
 }
 }
 
             \If{goal.precondition is not empty \label{planning:ABL.alg.main.preconditions}}{
             	 $\mathcal{T}_{g'} \gets$ SequenceNode \\
				 \For{$proposition$ in $precondition$} {
                $\mathcal{T}_{g'}$.\FuncSty{AddChild}(ConditionNode($proposition$))\\
                }
            $\mathcal{T}_{g'}$.\FuncSty{AddChild}($\mathcal{T}_{g}$)\\
            \Return{$\mathcal{T}_{g'}$}
            }\Else{
 \Return{$\mathcal{T}_g$}
 }
\end{algorithm2e}

\begin{algorithm2e}[h!]
\caption{Execute - input(node)}
 \label{planning:ABL.alg.Exec}
\Switch{node.Type}{
            \Case{ActionNode}{
            \If{not in conflict}{
            	 \FuncSty{Tick}(node)
            	 }
            } 
            \Case{PlaceholderNode}{
                node $\gets$\FuncSty{GetBT}(node.goal)\label{planning:ABL.alg.placeholder} \\
                    \FuncSty{Execute}(node) 

            } 
            \Other{\FuncSty{Tick}(node)}
 }
\end{algorithm2e}

The execution of the algorithm is simple. It first creates a BT from the initial ABL tree~$t$, collecting all the subgoals in a BT Parallel composition (Algorithm~\ref{planning:ABL.alg.main}, Line~\ref{planning:ABL.alg.main.T}), then, the tree is extended by finding a BT for each subgoal (Algorithm~\ref{planning:ABL.alg.main}, Line~\ref{planning:ABL.alg.main.subgoal}). Each subgoal is translated in a corresponding BT node (Sequence or Parallel, according to the behavior in the behavior library) whose children are the instruction of the subgoal. If a behavior has precondition instructions, they are translated into BT Condition nodes, added first as children (Algorithm~\ref{planning:ABL.alg.getbt}, Line~\ref{planning:ABL.alg.main.preconditions}). If a behavior has act or mental act instruction, they are translated into BT Action nodes (Algorithm~\ref{planning:ABL.alg.getbt}, Lines~\ref{planning:ABL.alg.main.act}-\ref{planning:ABL.alg.main.mentalact}) and set as children. If a behavior has spawngoal instruction (Algorithm~\ref{planning:ABL.alg.getbt}, Line~\ref{planning:ABL.alg.main.spawngoal}), this is added as a \emph{placeholder node}, which, when ticked, extends itself as done for the subgoals (Algorithm~\ref{planning:ABL.alg.Exec}, Line~\ref{planning:ABL.alg.placeholder}). 


We are now ready to see how the algorithm is executed in a simple Pac-Man game.

\begin{example}[Simple Execution in Pac-Man]
\label{planning:ex:pacman}
While Pac-Man has to avoid being eaten by the ghosts, he has to compute the path to take in order to eat all Pills. The ABL tree Pac-Man agent is shown in Figure~\ref{planning:abl.fig.abtpacman1code}.
Running Algorithm~\ref{planning:ABL.alg.main}, the initial tree is translated into the BT in Figure~\ref{planning:abl.fig.abtpacman1BT}.
The subgoal \emph{eatAllPills} is expanded as the sequence of the two BT's Action nodes \emph{computeOptimalPath} and \emph{followOptimalPath}. The subgoal \emph{handleGhosts} is extended as a Sequence compostion of the Condition node \emph{ghostClose} (which is a precondition for handleGhosts) and a Parallel composition of placeholder nodes \emph{handleDeadlyGhosts} and \emph{handleScaredGhosts}. 
The BT is ready to be executed. Let's imagine that for a while Pac-Man is free to eat pills without being disturbed by the ghosts. For this time the condition  ghostClose is always false and the spawn of neither handleDeadlyGhosts nor handleScaredGhosts is invoked. Imagine now that a Ghost is close for the first time. This will trigger the expansion of  handleDeadlyGhosts and handleScaredGhosts. The expanded tree is shown in Figure~\ref{planning:abl.fig.abtpacman2BT}.
\begin{figure}[h]
\centering
\begin{lstlisting}

pacman_agent{
	initial-tree{
		subgoal handleGhosts();
		subgoal eatAllPills();
		}

	sequential behavior eatAllPills(){
		mental_act computeOptimalPath();
		act followOptimalPath();
	}
	parallel behavior handleGhosts(){
		precondition {
		    (ghostClose);
		}
		spawngoal handleDeadlyGhosts();
		spawngoal handleScaredGhosts();

	}
	sequential behavior handleScaredGhost(){
		precondition {
		    (scaredGhostClose);
		}
		act moveToScaredGhost();
	}
	sequential behavior handleDeadlyGhost(){
		precondition {
		    (deadlyGhostClose);
		}
		act keepDistanceFromDeadlyGhost();
	}

conflict keepDistanceFromDeadlyGhost moveToScaredGhost followOptimalPath; 
}
\end{lstlisting}
\caption{ABT tree for Pac-Man.}
\label{planning:abl.fig.abtpacman1code}
\end{figure}

\begin{figure}[h]
\centering
\includegraphics[width=\columnwidth]{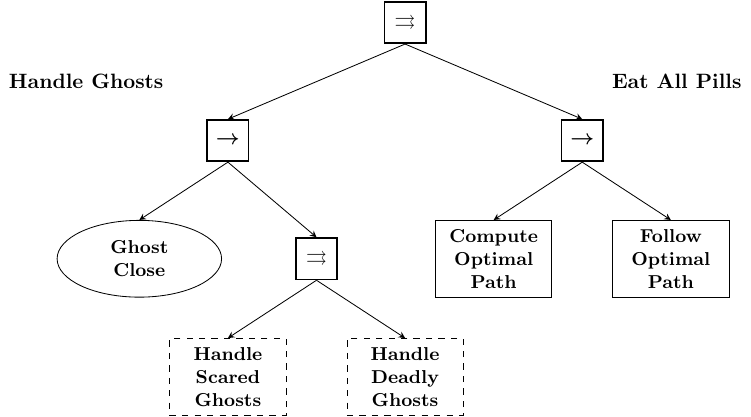}
\caption{Initial BT of Example~\ref{planning:ex:pacman}.}
\label{planning:abl.fig.abtpacman1BT}
\end{figure}

\begin{figure}[h]
\centering
\includegraphics[width=\columnwidth]{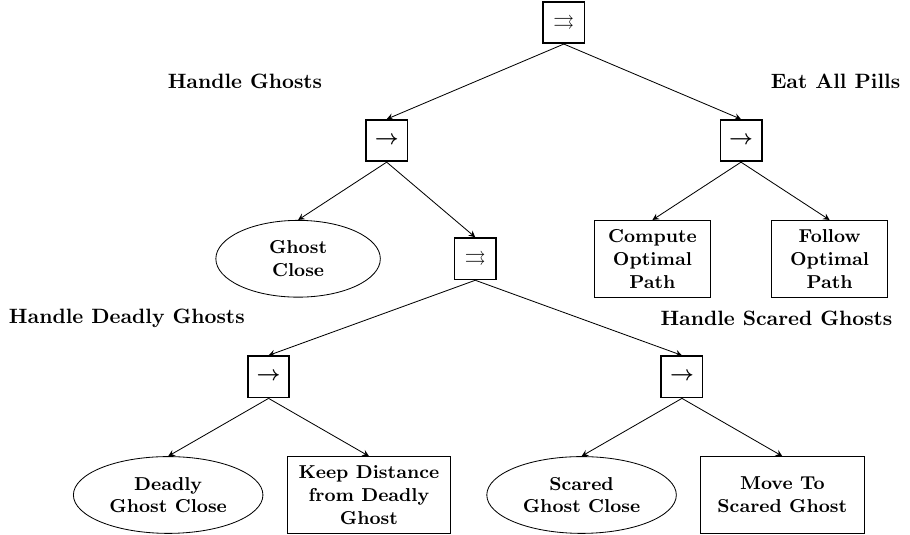}
\caption{Final BT of Example~\ref{planning:ex:pacman}.}
\label{planning:abl.fig.abtpacman2BT}
\end{figure}

\end{example}

\clearpage
\subsection{Brief Results of a Complex Execution in StarCraft}

\begin{figure}[h]
\centering
\includegraphics[width=0.6\columnwidth]{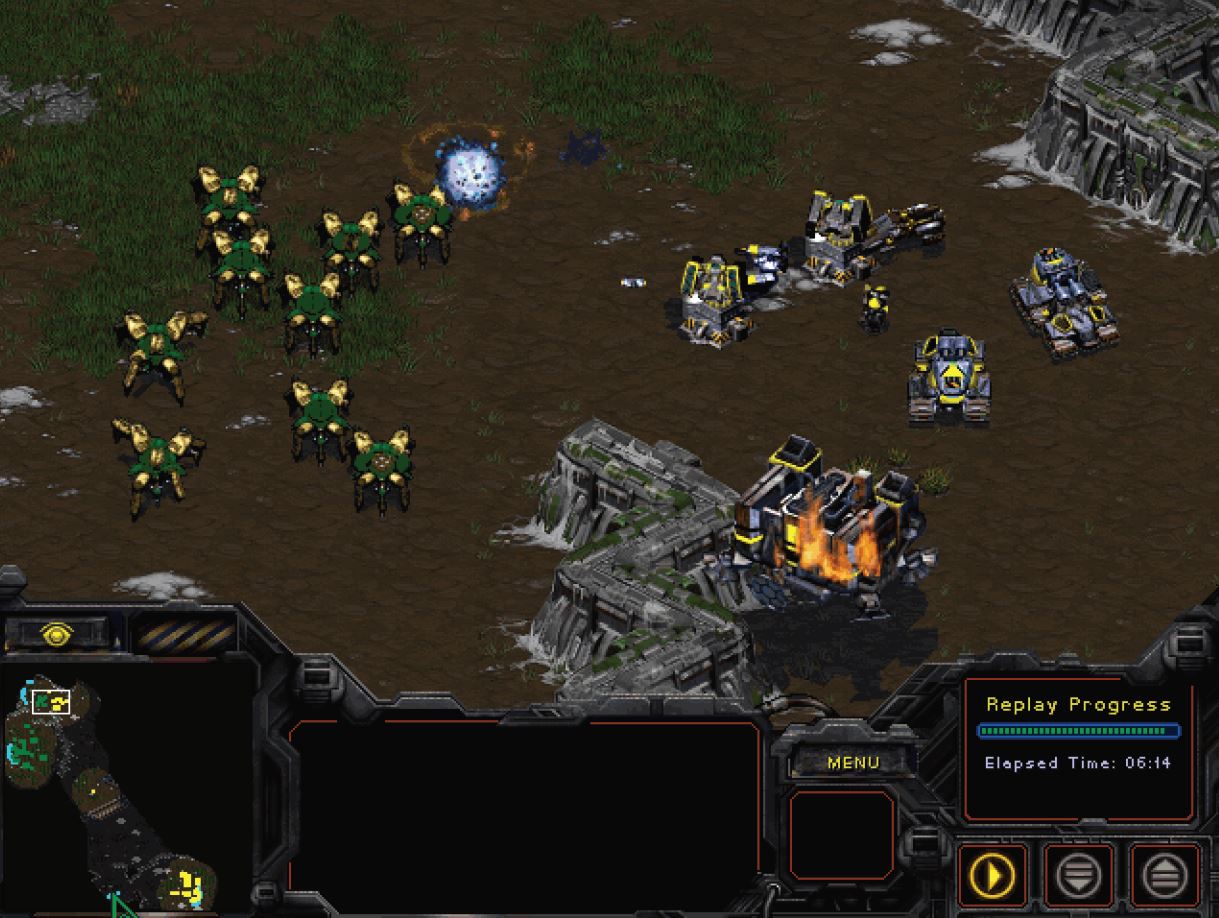}
\caption{A screenshot of StarCraft showing two players engaged in combat~\cite{weber2011building}.}
\label{planning:abl.fig.starcraft}
\end{figure}

We will now very briefly describe the results from a more complex scenario, from~\cite{weber2011building}.
One of the most well known strategy computer games that require
multi-scale reasoning is the real-time strategy game StarCraft. In StarCraft the players manage groups of units to compete for the control of the map by gathering resources
to produce buildings and units, and by researching
technologies that unlock more advanced abilities.
Building agents that perform well in this domain is challenging
due to the large decision space~\cite{aha2005learning}. StarCraft
is also a very fast-paced game, with top players performing above
300 actions per minute during peak intensity episodes~\cite{mccoy2008integrated}. This shows
that a competitive agent for StarCraft must reason quickly
at multiple scales in order to achieve efficient
game play.

\begin{example}
\label{planning:abl.ex.starcraft}

The StarCraft ABL agent is composed of three high-level managers: Strategy manager, responsible for the strategy selection and attack timing competencies; Production manager, responsible for the worker units, resource collection,
and expansion; and Tactics manager, responsible for the combat tasks and micromanagement
unit behaviors. The initial ABL tree takes the form of Figure~\ref{planning:abl.fig.starcrafttree}.

\begin{figure}[h]
\centering
\begin{lstlisting}
initial-tree{
	subgoal ManageTactic();
	subgoal ManageProduction();
	subgoal ManageStrategy();
	}
\end{lstlisting}
\caption{ABT tree for StarCraft.}
\label{planning:abl.fig.starcrafttree}
\end{figure}
Further discussions of the specific managers and behaviors  are available in~\cite{weber2011building}, and  a portion of the BT after some rounds of the game is shown in Figure~\ref{planning:abl.fig.starcraftbt}.

\begin{figure}[h]
\centering
\includegraphics[width=\columnwidth]{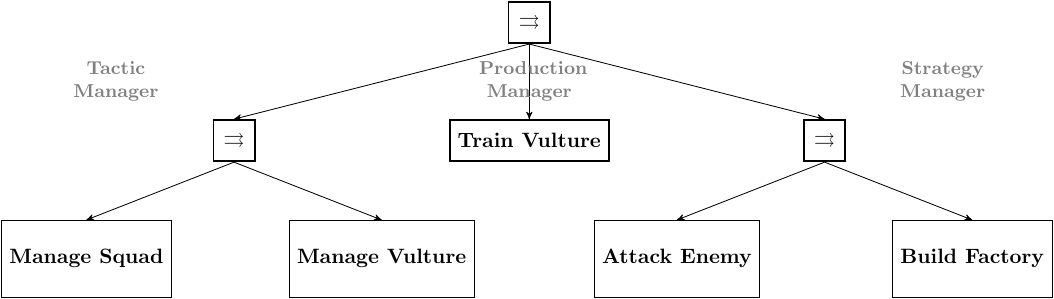}
\caption{A Portion of the tree of the ABL StarCraft agent.}
\label{planning:abl.fig.starcraftbt}
\end{figure}

\end{example}

\begin{table}
\begin{center}

   \begin{tabular}{| l | l | l | l |}
    \hline
    Map / Race &  Protoss & Terran & Zerg \\ \hline \hline
	Andromeda & 85\% & 55\% & 75\% \\
	Destination & 60\% & 60\% & 45\% \\
	Heartbreak Ridge & 70\% & 70\% & 75\% \\
	Overall & 72\% & 62\% & 65\% \\
    \hline
    \end{tabular}
    \end{center} 
    \caption{Win rate on different map/race combination over 20 trials~\cite{weber2010reactive}.}
    \label{planning:abl.tab.starcraftresult}
\end{table}

In~\cite{weber2010reactive} the ABL agent of Example~\ref{planning:abl.ex.starcraft} was evaluated against the build-in StarCraft AI. The agent was tested against three professional gaming maps: Andromeda, Destination, and Heartbreak Ridge; against all three races: Protoss, Terran, and Zerg over 20 trials. The result are shown in Table~\ref{planning:abl.tab.starcraftresult}. The ABL agent scored an overall win rate of over 60\%, additionally, the agent was capable to perform over 200 game actions per minute, highlighting the capability of the agent to combine low-level tactical task with high-level strategic reasoning.

\section{Comparison between PA-BT and ABL}
So, faced with a BT planning problem, should we choose PA-BT or ABL?
The short answer is that PA-BT is focused on creating a BT using a planning approach,
whereas ABL is a complete planning language in itself, using BT as an execution tool.
PA-BT is also better at exploiting the Fallback constructs of BTs, by iteratively expanding conditions into  PPAs,
that explicitly include fallback options for making a given condition true.
\graphicspath{{./learning/figures/}}

\chapter{Behavior Trees and Machine Learning}
\label{ch:learning}

In this chapter, we describe how learning algorithms can be used to automatically create BTs, using ideas from~\cite{pereira2015framework,colledanchise2015learning}.
First, in Section~\ref{sec.gp}, we present a mixed learning strategy that combines a greedy element with Genetic Programming (GP) and show the result in a game and a robotic example. Then, in Section~\ref{sec:rl}, we present a Reinforcement Learning (RL) algorithm applied to BTs and show the results in a game example. Finally in Section~\ref{sec.dem} we overview the main approaches used to learn BTs from demonstration.

\section{Genetic Programming Applied to BTs}
\label{BG.GP}
The capability of an agent to learn from its own experience can be realized by imitating natural evolution. GP is an optimization algorithm that takes inspiration from biological evolution~\cite{rechenberg1994evolution} where a set of \emph{individual} policies are evolved until one of them solves a given optimization problem good enough.

In a GP approach a  particular set of individuals is called a \emph{generation}. At each GP iteration, a new generation is created from the previous one. First, a set of individuals are created by applying the operations \emph{cross-over} and \emph{mutation} to the previous generation. Then a subset of the individuals are chosen through \emph{selection} for the next generation based upon a user-defined reward. We will now describe how these three operators can be applied to BTs.

%

\textbf{Crossover of two BTs}
The crossover is performed by randomly swapping a subtree from one BT with a subtree of another BT at any level. Figure~\ref{PA.fig.CrossoverBefore} and Figure~\ref{PA.fig.CrossoverAfter} show two BTs before and after a cross-over operation.

\begin{figure}[h]
\centering
\begin{subfigure}[b]{0.7\columnwidth}
\includegraphics[width=\columnwidth]{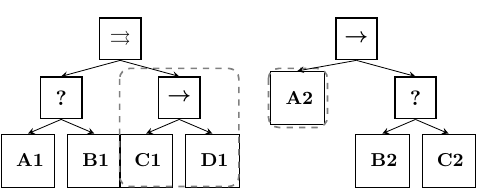}
\caption{BTs before the cross-over of the highlighted subtrees.}
\label{PA.fig.CrossoverBefore}
\end{subfigure}
~
\begin{subfigure}[b]{0.7\columnwidth}
\centering
\includegraphics[width=\columnwidth]{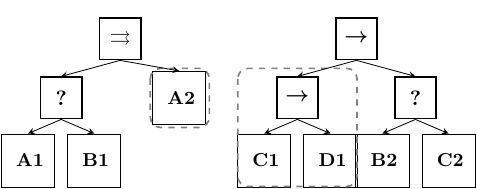}
\caption{BTs after the cross-over of the highlighted subtrees.}
\label{PA.fig.CrossoverAfter}
\end{subfigure}
\caption{Cross-over operation on two BTs.}
\label{PA.fig.Crossover}
\end{figure}

\begin{remark}
The use of BTs as the knowledge representation framework in the GP avoids the problem of logic violation during cross-over stressed in~\cite{fu2003genetic}. The logic violation occurs when, after the cross-over, the resulting individuals might not have a  consistent logic structure. One example of this is the crossover combination of two FSMs that might lead to a logic violation in terms of some transitions not leading to an existing state.
\end{remark}

\textbf{Mutation of a BT}
\label{bg.mutation}
The mutation is an unary operation that replaces a node in a BT with another node of the same type (i.e. it does not replace an execution node with a control flow node or vice versa). Mutations increase diversity, which is crucial in GP. To improve convergence properties it is common to use  so-called \emph{simulated annealing}, performing the mutation on a large number of nodes of the first generation of BTs and gradually reducing  the number of mutated nodes in each new generation. In this way we start with a very high diversity to avoid getting stuck in possible local minima of the objective function of the optimization problem, and  reduce diversity over time as we get closer to the goal.

\textbf{Selection of BTs}
In the selection step, a subset of the individuals created by mutation and crossover are selected for the next generation.
 The  process is  random,
giving each BT  a survival probability $p_i$. This probability is based upon  the \emph{reward function} which quantitatively measures the fitness of the agent, i.e., how close the agent gets to the goal. 
A common method to compute the survival probability of an individual is the Rank Space Method~\cite{mitchell1997machine},
where the designer first sets
 $P_c$ as the probability of the highest ranking individual, then we sort the BTs in  descending order w.r.t. the reward. Finally, the probabilities are defined as follows:
\begin{eqnarray}
p_k = &(1-P_c)^{k-1}P_c &\; \forall k \in \{1,2,\ldots,N-1 \} \\
p_N = &(1-P_c)^{N-1}&
\end{eqnarray}
where $N$ is the number of individuals in a generation.

\section{The GP-BT Approach}
\label{sec.gp}
In this section we outline the GP-BT approach \cite{colledanchise2015learning}. We begin with an example, describing the algorithm informally, and then  give a  formal description in Algorithm~\ref{PS.ALG.LA}.
GP-BT follows a mixed learning strategy,  trying a greedy algorithm first and then applying a GP algorithm when needed. This mixed approach reduces the learning time significantly, compared to using pure GP, while still resulting in a fairly compact BT that achieves the given objective.

We now provide an example to describe the algorithm informally.

\begin{figure}[h]
\centering
\includegraphics[width=0.3\columnwidth]{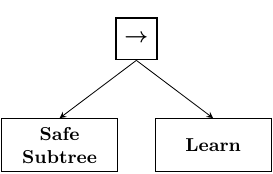}
\caption{The initial BT is a combination of the BT guaranteeing safety, and the BT \emph{Learn} that will be expanded during the learning.  }
\label{PA.fig.InitialBT}
\end{figure}

\begin{example}
Consider the case of the Mario AI setup in Figure \ref{res.fig.mario},  starting with the BT in Figure~\ref{PA.fig.InitialBT}.

The objective of Mario is to reach the rightmost end of the level.
The safety BT is optional, but motivated by the need 
to enable guarantees that the agent avoids some regions of the state space that are known by the user to be un-safe.
Thus, this is the only part of GP-BT that requires user input.

The un-safe regions must have a conservative margin to enable the safety action to act before it is too late (see Section~\ref{properties:sec:safety}).
Thus we cannot use the enemies as unsafe regions as
Mario needs to move very close to those to complete the level.
Therefore, for illustrative purposes, we let the safety action guarantee that Mario never reaches the leftmost wall of the level.

Mario starts really close to the left most wall, so the first thing that happens is that the safety action moves Mario a bit to the right. Then the Learn action is executed.

This action first checks all inputs and creates a BT, $\bt_{cond_\tau}$, of conditions that returns Success if and only if all inputs correspond to the current ``situation", as will be explained below.

Then the learning action executes all single actions available to Mario, e.g. go left, go right, jump, fire etc. and checks the resulting  reward.

All actions yielding an increase in the reward are collected in a Fallback composition BT, $\bt_{acts_i}$, sorted with respect to performance,  with the best performing action first.

If no single action results in a reward increase, the GP is invoked to find a combination of actions (arbitrary BTs are explored, possibly including parallel, Sequence and Fallback nodes) that produces an increase. Given some realistic assumptions, described in~\cite{colledanchise2015learning}, such a combination exists and will eventually be found by the GP according to previous results in \cite{rudolph1994convergence}, and stored in $\bt_{acts_i}$.

%
Then, the condition BT, $\bt_{cond_\tau}$, is composed with the corresponding action BT, $\bt_{acts_i}$, in a Sequence node and the result is added to the previously learned BT, with a higher priority than the learning action.

Finally, the new BT is executed until once again the learning action is invoked.
\end{example}

\begin{algorithm2e}[h]

  \ArgSty{$\bt$} $\gets$ \FuncSty{"Action Learn"}\\
  \Do{$\rho < 1$}
  {
  	\FuncSty{Tick(SequenceNode\ArgSty{($\bt_{safe}$,$\bt$)})}\\

    \If{\FuncSty{IsExecuted(Action Learn)}
}
    {
  		\ArgSty{$\bt_{cond}$} $\gets$ \FuncSty{GetSituation() \%Eq(\ref{eq:BTcond})}\\
  		\ArgSty{$\bt_{acts}$} $\gets$ \FuncSty{LearnSingleAction(\ArgSty{$\bt$}) \%Eq(\ref{eq:BTactSingle})}   \\
		\If{\ArgSty{$\bt_{acts}.NumOfChildren$} $=$ \ArgSty{0} }
		    {
			\ArgSty{$\bt_{acts}$} $\gets$ \FuncSty{GetActionsUsingGP(\ArgSty{$\bt$}) \%Eq(\ref{eq:BTactGP}) }\\
		    }
		    	\eIf{\FuncSty{IsAlreadyPresent(\ArgSty{$\bt_{acts}$})}}
		    {
		    \ArgSty{$\bt_{cond_{exist}}$} $\gets$ \FuncSty{GetConditions(\ArgSty{$\bt_{acts}$})}\\
		    \ArgSty{$\bt_{cond_{exist}}$} $\gets$ \FuncSty{Simplify}(FallbackNode((\ArgSty{$\bt_{cond_{exist}}$},\ArgSty{$\bt_{cond}$})) \\
		    }
		    {
		    \ArgSty{$\bt$} $\gets$ \FuncSty{FallbackNode(\FuncSty{SequenceNode(\ArgSty{$\bt_{cond}$},\ArgSty{$\bt_{acts}$})},\ArgSty{$\bt$})   \%Eq(\ref{PA.eq.newBT})}
		    }

    }
	  \ArgSty{$\rho$} $\gets$ \FuncSty{GetReward(SequenceNode\ArgSty{($\bt_{safe}$,$\bt$)})}\\

  }
  \Return{$\bt$}\\
  \caption{Pseudocode of the learning algorithm}
  \label{PS.ALG.LA}
  
\end{algorithm2e}

\vspace*{-2em}
\subsection{Algorithm Overview}
\label{LA.AO}

To describe the algorithm in detail, we need a set of definitions.
First we let a situation be a collection of all conditions, sorted on whether they are true or not, then we define the following:

\paragraph*{\textbf{Situation}}
$\mathcal{S}(t) = [C_{T}^{(t)},C_{F}^{(t)}]$ is the situation vector, where $C_{T}^{(t)}=\{C_{T1},\ldots, C_{TN}\}$ is the set of conditions that are true at time $t$ and $C_{F}^{(t)}=\{C_{F1},\ldots, C_{FM}\}$ is the set of conditions that are false at time $t$.

Then, using the analogy between AND-OR trees and BTs \cite{colledanchise2016behavior}, we create the BT that returns success only when a given situation occurs.
\paragraph*{$\mathbfcal{\bt}_\mathbf{cond_\tau}$}
$\bt_{cond_\tau}$ is the BT representation of $\mathcal{S}(\tau)$.
\begin{eqnarray}
 \bt_{cond_\tau}&\triangleq& \mbox{Sequence}(\mbox{Sequence}(C_{T1},\ldots, C_{TN}), \nonumber \\
 &&\mbox{Sequence
 }(\mbox{invert}(C_{F1}),\ldots,\mbox{invert} (C_{FM})) ) \hfill\label{eq:BTcond}
\end{eqnarray}
Figure~\ref{PA.fig.BTcond} shows a BT composition representing a situation $\mathcal{S}(\tau)$. 

\begin{figure}[h]
\centering
\includegraphics[width=0.6\columnwidth]{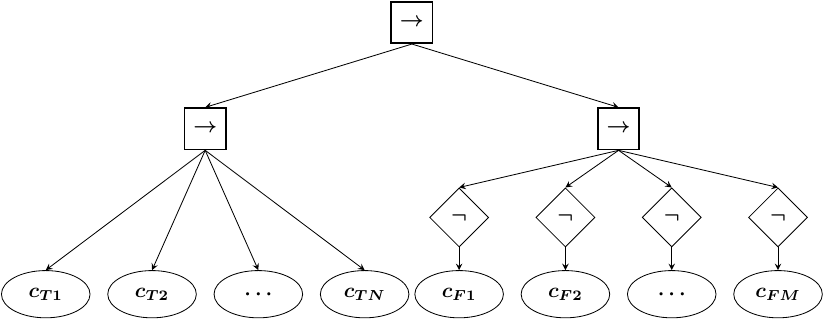}
\caption{Graphical representation of $\bt_{cond_\tau}$, $c_{Fi} \in C_F^\tau$, $c_{Tj} \in C_T^\tau$. The Decorator is the negation Decorator  (i.e it inverts the Success/Failure status). This BT returns success only when all $c_{Tj}$ are true and all $c_{Fi}$ are false. }
\label{PA.fig.BTcond}
\end{figure}

\paragraph*{$\mathbfcal{\bt}_\mathbf{acts_i}$}
Given a small $\epsilon > 0$,
if at least one action results in an increase in reward, $\Delta \rho > \epsilon$,
 let $A_{P1},\ldots, A_{P\tilde N}$ be the list of actions that result in such an improvement,
 sorted on the size of $\Delta \rho$,
then, $\bt_{acts_i}$ is defined as:
\begin{equation}
 \bt_{acts_i}= \mbox{FallbackWithMemory}(A_{P1},\ldots, A_{PN}) \label{eq:BTactSingle}
\end{equation}
 else, $ \bt_{acts_i}$ is defined as the solution of a GP algorithm that terminates when an improvement such that $\Delta \rho > \epsilon$ or $\rho(x)=1$ is found, i.e.:
 \begin{equation}
 \bt_{acts_i}= GP(\mathcal{S}(t)) \label{eq:BTactGP}
\end{equation}
\paragraph*{\textbf{New BT}}
 If $\bt_{acts_i}$ is not contained in the BT, the new BT learned is given as follows: 
\begin{equation}
\label{PA.eq.newBT}
\bt_i\triangleq \mbox{Fallback}(\mbox{Sequence}(\bt_{cond_i}, \bt_{acts_i}), \bt_{i-1})
\end{equation}
Else, if $\bt_{acts_i}$ is contained in the BT, i.e., there exists an index $j \neq i$ such that $\bt_{acts_i}=\bt_{acts_j}$,
this means there is already a situation identified where $\bt_{acts_i}$ is the appropriate response. 
Then, to reduce the number of nodes in the BT, 
we generalize the two special cases where this response is needed.
This is done by combining $\bt_{cond_i}$ with $\bt_{cond_j}$, that is, we find the BT, $\bt_{cond_{ij}}$, that returns success if and only if $\bt_{cond_i}$ or $\bt_{cond_j}$ return success. Formally, this can be written as
\begin{equation}
\label{PA.eq.newBT2}
\bt_i\triangleq \bt_{i-1}.\mbox{replace}(\bt_{cond_i},\mbox{simplify}((\bt_{cond_i},\bt_{cond_j})))
\end{equation}

Figure~\ref{PA.fig.merge} shows an example of this simplifying procedure. As can be seen this simplification generalizes the policy by iteratively removing conditions that are not relevant for the application of the specific action. It is thus central for keeping the number of nodes low, as seen below in Fig.~\ref{ER.fig.nodes}.
\begin{figure}[h]
        \centering
        \begin{subfigure}[b]{0.3\columnwidth}
                \centering
                \includegraphics[width=\columnwidth,clip]{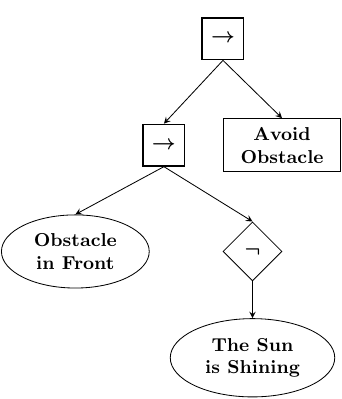}
                \caption{The action \emph{Avoid Obstacle} is executed if there is an obstacle in front and the sun is {\bf not} shining. }
        \end{subfigure}%
       ~ 
        \begin{subfigure}[b]{0.3\columnwidth}
                \centering
                \includegraphics[width=\columnwidth,clip]{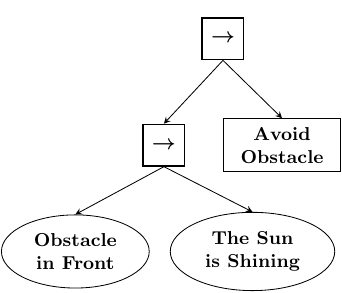}
                \caption{The action \emph{Avoid Obstacle} is executed if there is an obstacle in front and the sun is shining. \;\;\;\;}
        \end{subfigure}
        ~ 
        \begin{subfigure}[b]{0.3\columnwidth}
                \centering
                \includegraphics[width=\columnwidth,clip]{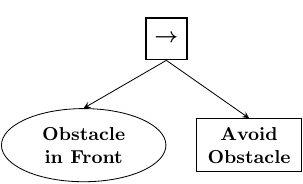}
                \caption{Simplified merged Tree. The action \emph{Avoid Obstacle} is executed if there is an obstacle in front. \;\; \;\;\;\;\;\;\;\;}
        \end{subfigure}
        \caption{Example of the simplifying procedure in (\ref{PA.eq.newBT2}). The two learned rules (a) and (b) are combined into (c): The important condition appears to be  \emph{Obstacle in Front}, and there is no reason to check the condition \emph{The Sun is Shining}. These simplifications generalize the policies, and keep the BT sizes down.  }
                        \label{PA.fig.merge}

\end{figure}

Given these definitions, we can go through the steps listed in Algorithm~\ref{PS.ALG.LA}.
Note that the agent runs $\bt_i$ until a new situation is encountered
which requires learning an expanded BT, or the goal is reached.

The BT $\bt$ is first initialized to be a single action, \emph{Action Learn}, which will be used to trigger the learning algorithm. Running Algorithm~\ref{PS.ALG.LA}  we execute the Sequence composition of the safe subtree $\bt_{safe}$ (generated manually, or using a non-learning approach) with the current tree $\bt$ (Algorithm~\ref{PS.ALG.LA} Line $3$). The execution of $\bt_{safe}$ overrides the execution of $\bt$ when needed to guarantee safety. If the action \emph{Action Learn} is executed, it means that the current situation in not considered in neither $\bt_{safe}$ nor $\bt$, hence a new action, or action composition, must be learned. The framework first starts with the greedy approach (Algorithm~\ref{PS.ALG.LA}, Line $6$) where it tries each action and stores the ones that increase the reward function, if no such actions are available (i.e. the reward value is a local maximum), then the framework starts learning the BT composition using the GP component (Algorithm~\ref{PS.ALG.LA}, Line $8$). Once the tree $\bt_{acts}$ is computed, by either the greedy or the GP component, the algorithm checks if $\bt_{acts}$ is already present in the BT as a response to another situation, and the new tree $\bt$ can be simplified using a generalization of the two situations (Algorithm~\ref{PS.ALG.LA}, Line $11$). Otherwise, the new tree $\bt$ is composed by the selector composition of the old $\bt$ with the new tree learned (Algorithm~\ref{PS.ALG.LA}, Line $13$). The algorithm runs until the goal is reached. The algorithm is guaranteed to lead the agent to the goal, under reasonable  assumptions~\cite{colledanchise2015learning}.

\subsection{The Algorithm Steps in Detail}
\label{LA.AS}
We now discuss Algorithm~\ref{PS.ALG.LA} in detail.
\subsubsection{GetSituation (Line 5)}
This function returns the tree $\bt_{cond}$ which represents the current situation. $\bt_{cond}$ is computed according to Equation~\eqref{eq:BTcond}.
\subsubsection{LearnSingleAction (Line 6)}
This function returns the tree $\bt_{acts}$ which represent the action to execute whenever the situation described by $\bt_{cond}$ holds. $\bt_{acts}$ is a Fallback composition with memory of all the actions that, if performed when $\bt_{cond}$ holds, increases the reward. The function \emph{LearnSingleAction} runs the same episode $N_a$ (number of actions) times executing a different action whenever $\bt_{cond}$ holds. When trying a new action, if the resulting reward increases, this action is stored. All the actions that lead to an increased reward are collected in a Fallback composition, ordered by the reward value. This Fallback composition, if any, is then returned to Algorithm~\ref{PS.ALG.LA}. 
\subsubsection{LearnActionsUsingGP (Line 8)}
If \emph{LearnSingleAction} has no children (Algorithm~\ref{PS.ALG.LA}, Line 7) then there exists no single action that can increase the reward value when
the situation described by $\bt_{cond}$ holds. In that case the algorithm learns a BT composition of actions and conditions that must be executed whenever $\bt_{cond}$ holds. This composition is derived as described in Section~\ref{BG.GP}.
\subsubsection{Simplify (Line 11)}
If the resulting $\bt_{acts}$ is present in $\bt$ (Algorithm~\ref{PS.ALG.LA}, Line 9) this means that there exist another situation $\mathcal{S}_{exist}$ described by the BT $\bt_{cond_{exist}}$, where the response in $\bt_{acts}$ is appropriate. To reduce the number of nodes in the updated tree, we create a new tree that captures both situations $\mathcal{S}$ and $\mathcal{S}_{exist}$. This procedure  removes from $\bt_{cond_{exist}}$ a single condition $c$ that is present in $C_F$  for one situation ($\mathcal{S}$ or $\mathcal{S}_{exist}$) and $C_S$ for the other situation. 

\begin{remark}
Note that the GP component is invoked exclusively whenever the greedy component fails to find a single action.
\end{remark}

\subsection{Pruning of Ineffective Subtrees}
\label{PA.removal}
Once obtained the BT that satisfies the goal, we can search for ineffective subtrees, i.e. those action compositions that are superfluous for reaching the goal. To identify the redundant or unnecessary subtrees, we enumerate the subtrees with a Breadth-first enumeration. We run the BT without the first subtree and checking whether the reward function has a lower value or not. In the former case the subtree is kept, in the latter case the subtree is removed creating a new BT without the subtree mentioned. Then we run the same procedure on the new BT. The procedure stops when there are no ineffective subtree found. This procedure is optional.

\subsection{Experimental Results}

\begin{figure}[t]
        \centering
        \begin{subfigure}[b]{0.5\columnwidth}
                \centering
                \includegraphics[width=\columnwidth,height=4cm,clip]{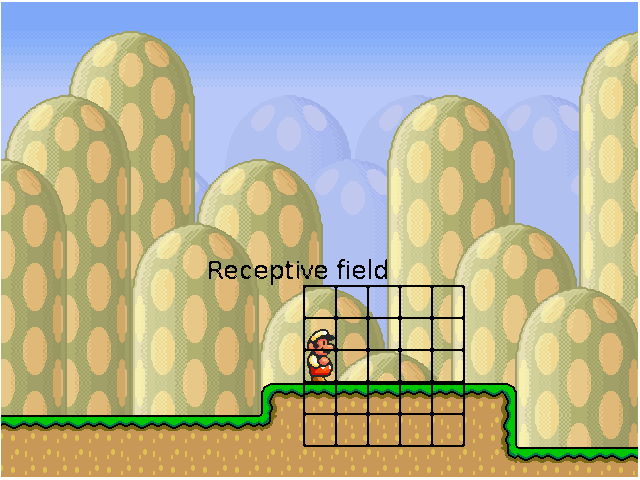}
                \caption{Mario AI benchmark. }
                \label{res.fig.mario}
        \end{subfigure}%
       ~ 
        \begin{subfigure}[b]{0.5\columnwidth}
                \centering
                \includegraphics[width=\columnwidth,height=4cm,clip]{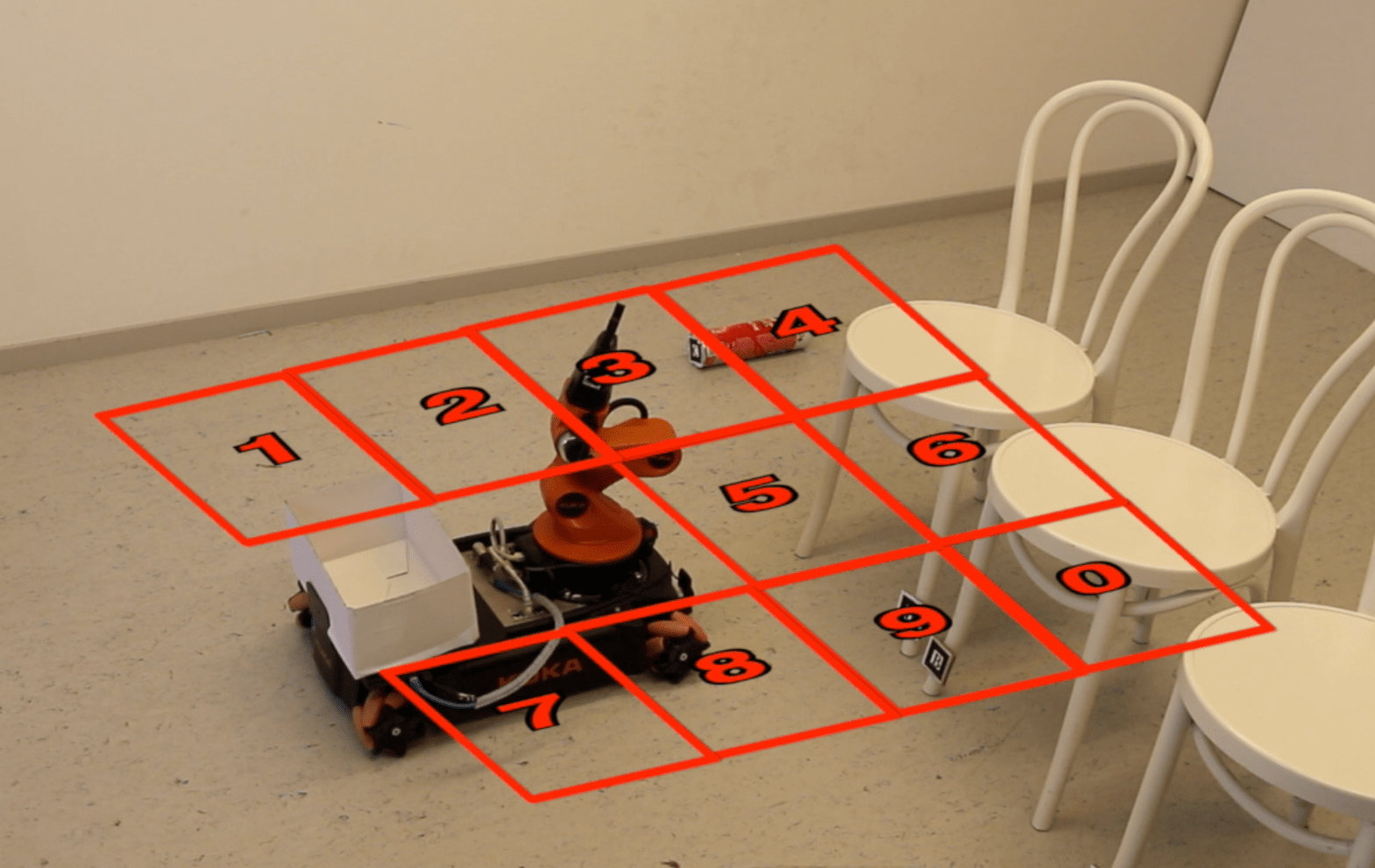}
                \caption{KUKA Youbot benchmark}
                \label{res.fig.youbot}              
        \end{subfigure}
        \caption{Benchmarks used to validate the framework.}
         \label{res.fig}        
\end{figure}

In this section we apply GP-BT to two problems. One is the \emph{Mario AI benchmark}  (Figure~\ref{res.fig.mario}) and one is  a real robot, the \emph{KUKA Youbot} (Figure~\ref{res.fig.youbot}). The results on the Mario AI benchmark shows the applicability of the GP-BT on a highly complex dynamic environment and
also allows us to compare the approach to the state-of-the-art. The results on the KUKA Youbot shows the applicability of GP-BT on a real robot.  

\subsubsection{Mario AI}
The Mario AI benchmark~\cite{karakovskiy2012mario} is an open-source software clone of Nintendo's Super Mario Bros used to test learning algorithms and game AI techniques. The task consists of moving the controlled character, Mario, through two-dimensional levels, which are viewed from the side. {Mario can walk and run to the right and left, jump, and (depending on the mode, explained below) shoot fireballs. Gravity acts on Mario, making it necessary to jump over gaps to get past them. Mario can be in one of three modes: \emph{Small}, \emph{Big}, and \emph{Fire} (can shoot fireballs).
The main goal of each level is to get to the end of the level, which means traversing it from left to right. Auxiliary goals include collecting as many coins as possible, finishing the level as fast as possible, and collecting the highest score, which in part depends on the number of enemies killed.
Gaps and moving enemies make the task more complex. If Mario falls down a gap, he loses a life. If he runs into an enemy, he gets hurt; this means losing a life if he is currently in the Small mode. Otherwise, his mode degrades from Fire to Big or from Big to Small. }

\textbf{Actions}
In the benchmark there are five actions available: \emph{Walk Right}, \emph{Walk Left}, \emph{Crouch}, \emph{Shoot}, and \emph{Jump}.

\textbf{Conditions}
In the benchmark there is a receptive field of observations as shown in Figure~\ref{res.fig.mario}. For each of the $25$ cells in the grid there are $2$ conditions available: \emph{the box is occupied by an enemy} and  \emph{the box is occupied by an obstacle}. There are two other conditions: \emph{Can Mario Shoot} and \emph{Can Mario Jump},
creating  a total of $52$ conditions.


\textbf{Reward Functions}
The reward function is given by a non linear function of the distance passed, enemies killed, number of times Mario is hurt, and time left when the end of the level is reached. The reward function is the same for every scenario.

{
\textbf{Cross-Validation}
To evaluate the learned BT in an episode, we run a different episode of the same complexity (in terms of type of enemies; height of obstacles and length of gaps). In the Mario AI framework, this is possible by choosing different so-called \emph{seeds} for the learning episode and the validating episode. The result shown below are cross-validated in this way.
}

\textbf{GP Parameters}
Whenever the GP part is invoked, it starts with $4$ random BTs composed by one random control flow node and $2$ random leaf nodes. The  number of individuals in a generation is set to $25$.

\textbf{Scenarios}
We ran the algorithm in five different scenarios of increasing difficulty. {The first scenario had no enemies and no gaps, thus only requiring motion to the right and jumping at the proper places. The resulting BT can be seen in Figure~\ref{ER.fig.BTs1} where the action \emph{Jump} is executed if an obstacle is in front of Mario and the action \emph{Go Right} is executed otherwise. The second scenario has no obstacles but it has gaps. The resulting BT can be seen in Figure~\ref{ER.fig.BTs2}  where the action \emph{Jump} is executed if Mario is close to a gap. The third scenario has high obstacles, gaps and walking enemies. The resulting BT can be seen in Figure~\ref{ER.fig.BTs3} which is similar to a combination of the previous BTs with the addition of the action \emph{Shoot} executed as soon as Mario sees an enemy (cell $14$), and to \emph{Jump} higher obstacles Mario cannot be too close. Note that to be able to show the BTs in a limited space, we used the \emph{Pruning} procedure mentioned in Section~\ref{PA.removal}}.
A video is available that shows the performance of the algorithm in all 5 scenarios.\footnote {\url{https://youtu.be/QO0VtUYkoNQ}}

\begin{figure}[h]
\centering
\includegraphics[width=0.4\columnwidth]{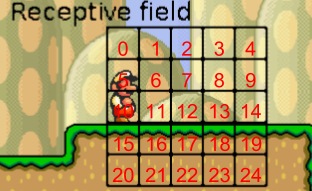}
\caption{Receptive field with cells' numbers.}
\label{ER.fig.receptiveannotated}
\end{figure}

\begin{figure}[h]
\centering
        \begin{subfigure}[t]{0.4\columnwidth}
\includegraphics[width=\columnwidth]{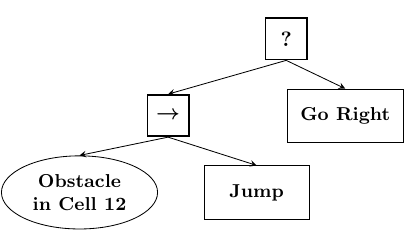}
\caption{Final BT for Scenario 1.}
\label{ER.fig.BTs1}
        \end{subfigure}%
\hfill
        \begin{subfigure}[t]{0.3\columnwidth}
        \centering
\includegraphics[width=\columnwidth]{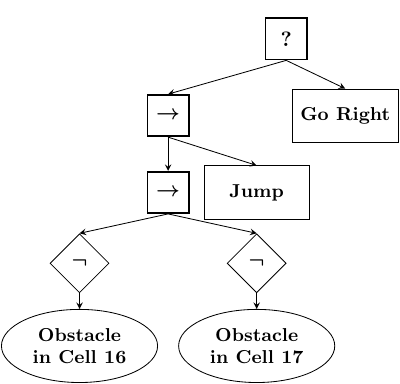}
\caption{Final BT for Scenario 2.}
\label{ER.fig.BTs2}
        \end{subfigure}%
        
        \begin{subfigure}[b]{0.6\columnwidth}
        \centering
\includegraphics[width=\columnwidth]{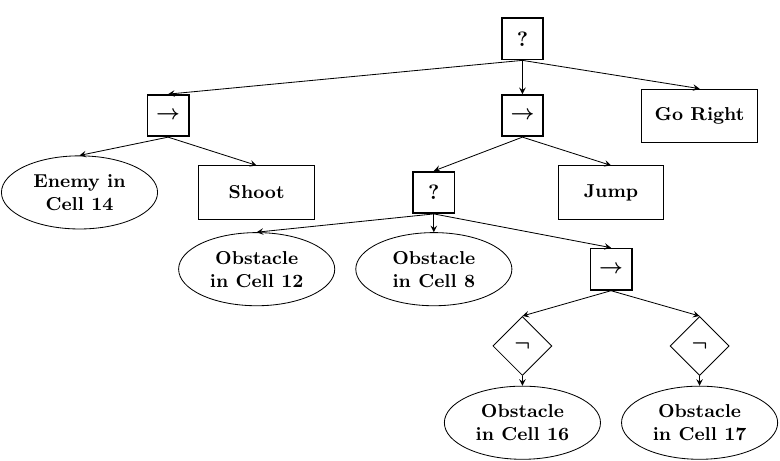}
\caption{Final BT for Scenario 3.}
\label{ER.fig.BTs3}
        \end{subfigure}%
        \caption{Final BTs learned for Scenario 1-3.}
\end{figure}
\begin{figure}[t]
        \centering
        \begin{subfigure}[b]{0.5\columnwidth}
                \centering
                \includegraphics[width=\columnwidth]{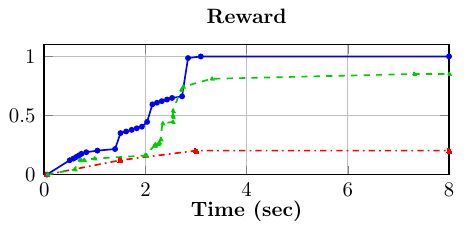}
                \caption{ Comparison for Scenario 1}
                \label{ER.fig.rs1}
        \end{subfigure}%
       ~ 
        \begin{subfigure}[b]{0.5\columnwidth}
                \centering
                \includegraphics[width=\columnwidth]{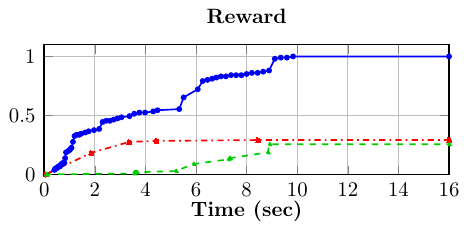}
                \caption{ Comparison for Scenario 5}
                \label{ER.fig.rs5}              
        \end{subfigure}
        \begin{subfigure}[b]{0.5\columnwidth}
                \centering
                \includegraphics[width=\columnwidth,clip]{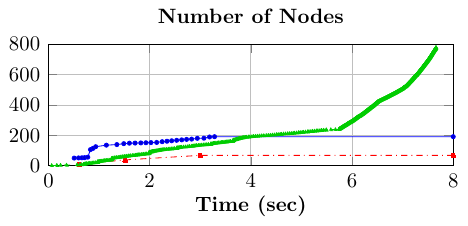}
                \caption{ Comparison for Scenario 1}
                \label{ER.fig.nns1}
        \end{subfigure}%
       ~ 
        \begin{subfigure}[b]{0.5\columnwidth}
                \centering
                \includegraphics[width=\columnwidth,clip]{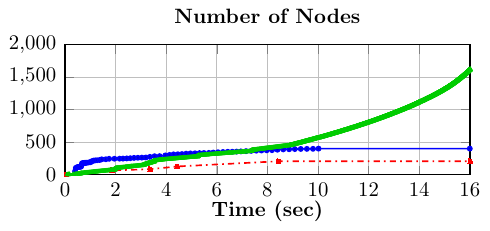}
                \caption{ Comparison for Scenario 5}
                \label{ER.fig.nns5}              
        \end{subfigure}
        \caption{ Reward value comparison (a and b) and nodes number comparison (c and d). The blue solid line refers to GP-BT. The red dash-dotted line refers to the pure GP-based algorithm. The green dashed line refers to the FSM-based algorithm.}
                \label{ER.fig.nodes}
\end{figure}

We compared the performance of the GP-BT approach to a FSM-based algorithm of the type described in~\cite{garcia2000integrated} and to a pure GP-based algorithm of the type described in~\cite{scheper2014}.

For all three algorithms, we measure the performance, in terms of the reward function, and the 
complexity of the learnt solution in terms of the number of nodes in the BT/FSM respectively. The data for the simplest and most complex scenario, 1 and 5, can be found in Figure~\ref{ER.fig.nodes}.
As can be seen in Figure~\ref{ER.fig.nodes}, the FSM approach generates a number of nodes that increases exponentially over time,
while the growth rate in GP-BT tends to decrease over time. We believe that this is due to the simplification step, described in Equation (\ref{PA.eq.newBT2}), where two different situations requiring the same set of actions are generalized by a merge in the BT.
The growth of the pure GP algorithm is very slow, as each iteration needs very long time to find a candidate improving the reward. {This is due to the fact that the number of conditions is larger than the number of actions, hence the pure GP approach often constructs BTs that check a large amount of conditions, while performing very few actions, or even none. Without a greedy component and the AND-OR-tree generalization with the conditions, a pure GP approach, like the one in~\cite{scheper2014},  is having difficulties  without any a-priory information.
}
Looking at the performance in Figure~\ref{ER.fig.nodes} we see that GP-BT is the only one who reaches a reward of $1$ (i.e. the task is completed) within the given execution time, not only for Scenario $5$, but also for the less complex Scenario $1$. 

 \begin{remark}
 Note that we do not compare GP-BT with the ones of the Mario AI challenge,
as we study problems with no a-priori information or model, whereas the challenge
provided full information about the task and environment.
 When the GP-BT learning procedure starts, the agent (Mario) does not even know that the enemies should be killed or avoided.
 \end{remark}

The other scenarios have similar result. We chose to depict the simplest and the most complex ones.

\subsubsection{KUKA Youbot}
As mentioned above, we use the KUKA Youbot to verify GP-BT on a real scenario. We consider three scenarios, one with a partially known environment and two with completely unknown environments.

Consider the Youbot in Fig.~\ref{res.fig.youbot}, the conditions are given in terms of the 10 receptive fields and binary conditions regarding a number of different objects, e.g. larger or smaller obstacles. The corresponding actions are: go left/right/forward, push object, pick object up etc. Again, the problem is to learn a switching policy mapping conditions to actions.

\textbf{Setup}
The robot is equipped with a wide range HD camera and uses markers to recognize the objects nearby. The recognized objects are mapped into the robot simulation environment V-REP~\cite{freese2010virtual}. The learning procedure is first tested on the simulation environment and then executed on the real robot.

\textbf{Actions}
Move Forward, Move Left, Move Right, Fetch Object, Slide Object to the Side, Push Object. 

\textbf{Conditions}
Wall on the Left, Wall on the Right, Glass in Front, Glass on the Left, Glass on the Right, Cylinder in Front, Cylinder on the Left, Cylinder on the Right, Ball in Front, Ball on the Left, Ball on the Right, Big Object in Front, Big Object on the Left, Big Object on the Right.

\textbf{Scenarios}
 In the first scenario, the robot has to traverse a corridor dealing with different objects that are encountered on the way.
The destination and the position of the walls is known a priori for simplicity. The other objects are recognized and mapped once they enter the field of view of the camera. 
The second scenario illustrates the reason why GP-BT performs the learning procedure for each different situation. 
The same type of cylinder is dealt with differently in two different situations.

In the third scenario, a single action is not sufficient to increase the reward. The robot has to learn an
action composition using GP to reach the goal.

A YouTube video is available that shows all three scenarios in detail\footnote{\url{https://youtu.be/P9JRC9wTmIE} and \url{https://youtu.be/dLVQOlKSqGU}}.

\subsection{Other Approaches using GP applied to BTs}
There exists several other approaches using GP applied to BTs. 

Grammatical Evolution (GE), a grammar-based form of GP, that specifies the syntax of possible solutions using a context-free grammar was used to synthesize a BT~\cite{perez2011evo}. The root node consists of a Fallback node, with a variable number
of subtrees called \emph{BehaviourBlocks}. Each BehaviourBlock consists of a sequence of one or more conditions created through a GE algorithm,
followed by a sequence of actions (possibly with some custom made Decorator) that are also created through a GE algorithm. The root node has as right most child a BehaviourBlocks called \emph{DefaultSequence}: a sequence with memory with only actions, again created through a GE algorithm. The approach works as follows: When the BT is executed, the root node will
route ticks to one BehaviourBlock; if none of those BehaviourBlocks execute actions, the ticks reach the DefaultSequence. The approach was used to compete at the 2010 Mario AI competition, reaching the fourth place of the gameplay track. 

Another similar approach was used to synthesize sensory-motor coordinations of a micro air vehicle using a pure GP algorithm on a initial population of randomly generated BTs~\cite{scheper2014}. The mutation operation is implemented using two methods: \emph{micro mutation} and \emph{macro mutation}. Micro mutation affects only leaf nodes and it is used to modify the parameter of a node. Macro mutation was used to replace a randomly selected node with a randomly selected tree.

\section{Reinforcement Learning applied to BTs}
\label{sec:rl}
In this section we will describe an approach proposed in \cite{pereira2015framework} for combining BTs with Reinforcement Learning (RL).

\subsection{Summary of Q-Learning}
A general class of problems that can be addressed by RL is the
Markov Decision Processes (MDPs). 
Let $s_t \in S$ be the system state at time step $t$, $a_t \in {A}_{s_t}$ the action executed at time $t$, where ${A}_s$ is a finite set of admissible actions for the state $s$ (i.e. the actions that can be performed by the agent while in $s$). The goal of the agent is to learn a policy $\pi : {S} \to {A}$ (i.e. a map from state to  action), where $A = \bigcup_{s \in {S}}{A}_s$, that maximizes the expected long-term reward from each state $s$,~\cite{sutton1999}.
 
One of the most used Reinforcement Learning techniques is \emph{Q-Learning}, which can handle problems with stochastic transitions and rewards. It uses a function called \emph{Q-function} $Q: S \times A \to \mathbb{R}$ to capture the expected future reward of each state-action combination (i.e. how good it is to perform a certain action when the agent is at a certain state). First, for each state-action pair, $Q$ is initialized to an arbitrary fixed value. Then, at each time step of the algorithm, the agent selects an action and observes a reward and a new state. $Q$ is updated according to a \emph{simple value iteration} function~\cite{mitchell1997machine}, using the weighted average of the old value and a value based on  the new information, as follows:

\begin{equation}
  Q_{k+1}(s, a) = (1 - \alpha_k) Q_k(s, a) + \alpha_k \left[ r + \gamma \max_{a' \in A_{s'}} Q_k(s', a') \right],
\end{equation}
where $k$ is the iteration index (increasing every time the agent receives an update), $r$ is the reward, 
$\gamma$ is the discount factor that trades off the influence of early versus late rewards, and $\alpha_k$ is the learning rate that trades off the influence of newly acquired information versus old information.

The algorithm converges to an optimal policy that maximizes the reward if all admissible state-action pairs are updated infinitely often 
\cite{sutton1998,barto2003}. At each state, the optimal policy selects  the action that maximizes the $Q$ value.

\textbf{Hierarchical Reinforcement Learning}

The vast majority of RL algorithms are based upon the MDP model above. However, Hierarchical RL (HRL) algorithms have their basis in Semi MDPs (SMDPs)~\cite{barto2003}. SMDPs enable a special treatment of the temporal aspects of the problem and, thanks to their hierarchical structure, reduce the impact of the curse of dimensionality by dividing the main problem into several subproblems.

%

%
%

Finally, the \emph{option} framework~\cite{sutton1999} is a SMDP-based HRL approach used to  efficiently compute the $Q$-function. In this approach, the options are a generalization of actions, that can call other options upon execution in a hierarchical fashion until a \emph{primitive option} (an action executable in the current state) is found. 

\subsection{The RL-BT Approach}
In this section, we outline the approach proposed in \cite{pereira2015framework}, that we choose to denote RL-BT. 
In order to combine the advantages of BTs and RL,
 the RL-BT starts from a manually designed BT where a subset of nodes are replaced with so-called \emph{Learning Nodes},
 which can be both actions and flow control nodes.
 
 In the \emph{Learning Action node}, the node encapsulates a Q-learning algorithm,
 with  states $s$, actions $A$, and reward $r$ defined by the user. 
 For example, imagine a robot with an Action node that need to ``pick an object''. This learning Action node uses Q-learning to learn how to grasp an object in different positions. Then, the state is defined as the pose of the object with respect to the robot's frame, the actions as the different grasp poses, and the reward function as the grasp quality.
 
In the \emph{Learning Control Flow Node}, the node is a standard Fallback with flexible node ordering, quite similar to the ideas described in Chapter~\ref{ch:extensions}. Here, the designer chooses the representation for the state $s$, while the actions in $A$ are the children of the Fallback node, which can be learning nodes or regular nodes. Hence, given a state $s$, the Learning Control Flow Node selects the order of its own children based on the reward. The reward function can be task-depended (i.e. the user finds a measure related to the specific task) or return value-dependent (i.e., it gives a positive reward for Success and negative reward for Failure). For example, consider a NPC that has three main goals: find resources, hide from stronger players, and attack weaker ones. 
The Learning Control Flow node has 3 subtrees, one for each goal. Hence the state $s$ can be a vector collecting information about players' position, weapons position, health level, etc. The reward function can be a combination of health level, score, etc. 

In \cite{pereira2015framework},
a formal definition of the learning node is made, with analogies to the Options approach for HRL to compute the $Q$-function,
and connections between BTs and the recursive nature of the approach.

\subsection{Experimental Results}
\label{sec:experiments}

In this section, we briefly describe the results of two experiments from \cite{pereira2015framework}. Both experiments execute the approach in 30 different episodes, each with 400 iterations. At the beginning of each experiment, the initial state is reset. 
The environment is composed of a set of rooms. In each room, the agent can perform $3$ actions: \emph{save victim}, \emph{use extinguisher X}, and \emph{change room}. Each room has a probability of $0.5$ to have a victim; if there is a victim, the agent must save it first. Moreover, each room has a probability of $0.5$ to have one of $3$ types of fire ($1, 2$ and $3$, with probability $1/3$ each); if there is a fire, the agent must extinguish it before leaving the room. The agent is equipped with $3$ extinguishers (types $A, B$ and $C$), and each extinguisher can successfully extinguish only one type of fire, randomly chosen at the beginning of the episode and unknown to the agent.

\textbf{Scenario 1}
\label{sec:experiments_scenario1}

\begin{figure}
  \centering
  \includegraphics[width=0.85\textwidth]{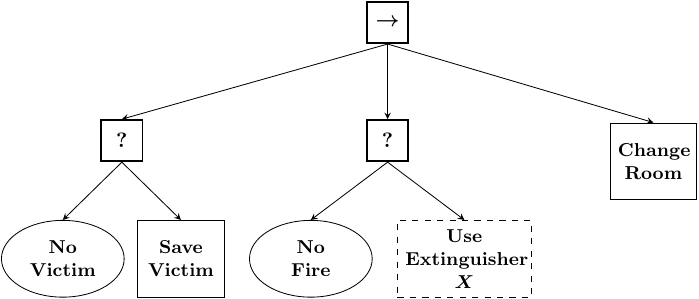}
  \caption{The BT model for the first agent. Only a single Action node (Use Extinguisher) has learning capabilities.}
  \label{fig:model_scenario1}
\end{figure}

In this scenario, we evaluate  the capability of  \emph{Use Extinguisher X} to learn the correct extinguisher to use for a specific type of fire.

Figure~\ref{fig:model_scenario1} depicts the BT modeling the behavior of the learning agent. In the learning Action node \emph{Use Extinguisher X} the state is defined as  $s = \langle \text{\emph{fire type}} \rangle$, where $\text{\emph{fire type}} = \{1, 2, 3\}$, and the available actions are as follows:\\ $A = \{\mbox{Use Extinguisher A}, \mbox{Use Extinguisher B}, \mbox{Use Extinguisher B}\}$. The reward is defined as  $10$ if the extinguisher can put out the fire and $-10$ otherwise.
%
The results in \cite{pereira2015framework} show that the accuracy converges to 100\%.

%

\textbf{Scenario 2}

This scenario is a more complex version of the one above, where we consider the time spent to execute an action.

The actions \emph{Save Victim} and \emph{Use Extinguisher X} now take time to complete, depending on the fire intensity. Any given fire has an intensity $\text{\emph{fire intensity}} \in \{1, 2, 3\}$, chosen randomly for each room. The fire intensity specifies the time steps needed to extinguish a fire. The fire is extinguished when its intensity is reduced to $0$.

The fire intensity is reduced by $1$ each time the agent uses the correct extinguisher. The action \emph{change room} is still executed instantly and the use of the wrong extinguisher makes the agent lose the room. 

\begin{figure}
  \centering
  \includegraphics[width=0.85\textwidth]{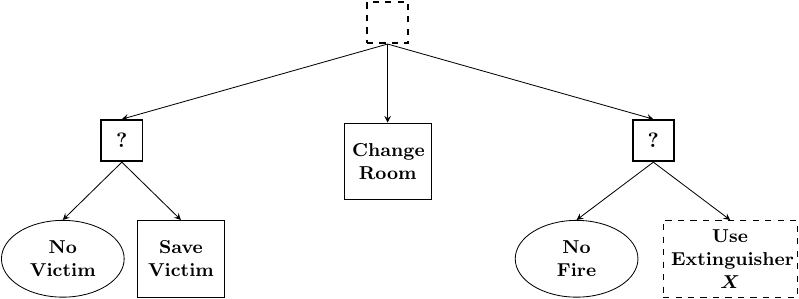}
  \caption{The BT model for the second agent using two nested Learning Nodes.}
  \label{fig:model_scenario2}
\end{figure}

Figure~\ref{fig:model_scenario2} shows the BT modeling the learning agent. 
It uses $2$ learning nodes. The first, similar to the one used in Scenario~1, is a learning Action node using. In that node, the state is defined as $s = \langle \text{\emph{fire type}} \rangle$, where $\text{\emph{fire type}} = \{1, 2, 3\}$,  the action set as $a = \{A, B, C\}$, and the reward as $\frac{10}{\text{\emph{fire intensity}}}$ if the extinguisher can extinguish the fire and $-10$ otherwise.


The second learning node is a learning control flow node. It learns the behavior that must be executed given the state $s = \langle \text{\emph{has victim?}}, \text{\emph{has fire?}}\rangle$. The node's children are the actions $A = \{\text{\emph{save victim}}, \text{\emph{use extinguisher X}}, \text{\emph{change room}}\}$. 
This learning node receives a cumulative reward, $-10$ if the node tries to save and there is no victim, $-1$ while saving the victim, and $+10$ when the victim is saved; $-10$ if trying to extinguish a non-existing fire, $-1$ while extinguishing it, and $+10$ if the fire is extinguished;  $+10$ when the agent leaves the room at the right moment and $-10$ otherwise.


The results in \cite{pereira2015framework} show that the accuracy converges to 97-99\%.
The deviation from 100\% is due to the fact that the
learning control flow node needs some steps of trial-and-error to learn the most effective action order.

%
%

\section{Comparison between GP-BT and RL-BT}
How do we choose between GP-BT and RL-BT for a given learning problem?

GP-BT and RL-BT operate on the same basic premise: they both try something, receive a reward depending on ``how good" the result is, and then try and do something different to get a higher reward. 

The key difference is that GP-BT operates on the BT itself, creating the BT from scratch and extending it when needed.
RL-BT on the other hand starts with a fixed BT structure that is not changed by the learning. Instead, the behaviors of 
a set of designated nodes are improved. If RL-BT was started from scratch with a BT consisting of a single action, there
 would be no difference between RL-BT and standard RL. Instead, the point of combining BTs with RL is to 
 address the curse of dimensionality, and do RL on smaller state spaces, while the BT connects these subproblems in a modular way,
 with the benefits described in this book.
 RL-BT thus needs user input regarding both the BT structure, and the actions, states and local reward functions to be considered in the subproblems addressed by RL. GP-BT on the other hand only needs user input regarding the single global reward function, and the conditions (sensing) and actions available to the agent.

%
%


\section{Learning from Demonstration applied to BTs}
\label{sec.dem}
Programming by demonstration  has a straightforward application in 
both robotics and 
the development of the AI for NPCs in games.

In robotics, the Intera5 software for the Baxter and Sawyer robots provides learning by demonstration support\footnote{\url{http://mfg.rethinkrobotics.com/intera/Building_a_Behavior_Tree_Using_the_Robot_Screen}}.

In computer games, one can imagine a game designer controlling the NPC during a training session in the game,  demonstrating the expected behavior for that character in different situations.
One such approach called \emph{Trained BTs (TBTs)} was  proposed in~\cite{olivenza2017trained}.

TBT applies the following approach:  
it first records traces of actions executed by the designer controlling the NPC to be programmed, then it processes the traces to generate a BT that can 
control the same NPC in the  game. The resulting BT can then be further edited with a  visual editor. 
The process starts with the creation of a minimal BT
using the provided BT editor. This initial BT contains a
special node called a Trainer Node (TN). Data for this node are
collected  in a game session where the designer simulates the intended behavior of the NPC. 
Then, the trainer node is
replaced by a machine-generated sub-behavior that, when
reached, selects the task that best fits the actual state of
the game and the traces stored  during the training session. 
The approach combines the advantages of programming
by demonstration with the ability to fine-tune the learned
BT.

Unfortunately, using learning from demonstration approaches the learned BT easily becomes  very large, as each trace is directly mapped into a new sub-BT. Recent approaches address this problem by finding common patterns in the BT and generalizing them~\cite{robertson2015building}.
First, it creates a maximally specific BT from the given traces. Then
iteratively reduces the BT in size by finding and combining common
patterns  of  actions.  The  process  stops when the BT has no common patters.    
Reducing  the  size  of  the  BT also improves readability.

%
%
%
%
%
%
%
%
%
%
%
%
%
%


\graphicspath{{./stochastic/figures/}}

\chapter{Stochastic Behavior Trees}
\label{stochastic}
\label{ch:stochastic}


In this chapter, we study the reliability of reactive  plan executions, in terms of execution times and success probabilities. 
To clarify what we mean by these concepts, we consider the following minimalistic example: a robot is searching for an object, and can choose between the two subtasks \emph{searching on the table}, and \emph{opening/searching  the drawer}.
One possible plan is depicted in Figure \ref{stoch:mcEx1}. Here, the robot first searches the table and then, if the object was not found on the table, opens the drawer and searches it. In the figure,  each task has an execution time and a success probability. For example, searching the table has a success probability of 0.1 and an execution time of 5s. Given a plan like this, it is fairly straightforward to compute the reliability of the entire plan, in terms of execution time distribution and success probability.
In this chapter, we show how to compute such performance metrics for arbitrary complex plans encoded using BTs.
In particular, we will define Stochastic BTs in Section~\ref{stoch:sec:analysis}, transform them into Discrete Time Markov Chains (DTMCs) in Section~\ref{stoch:sec:SBTtoDTCM}, compute reliabilities in Section~\ref{stoch:sec:reli} and describe examples Section~\ref{properties:sec:example:reliability}.
Some of the results of this chapter were previously published in the 
 paper \cite{Colledanchise14}.

\begin{figure}[t]
\centering
\includegraphics[width=\columnwidth]{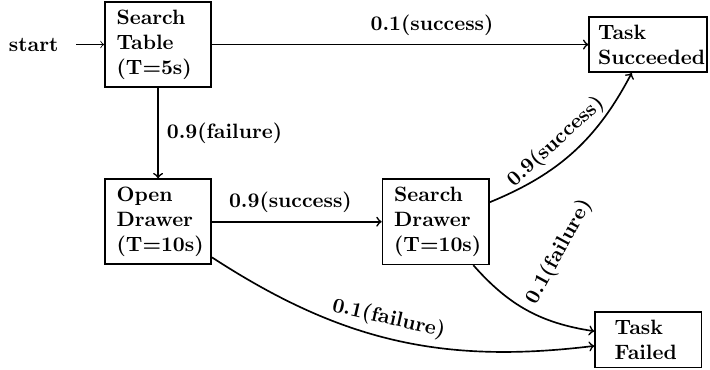}
\caption{A simple plan for a search task, modelled by a Markov Chain.
}
\label{stoch:mcEx1}
\end{figure}


Before motivating our study of BTs we will make a few more observations regarding the example above.
The ordering of the children of Fallback nodes (\emph{searching on the table} and \emph{opening/searching the drawer}) can in general be changed, whereas the ordering of the children of 
Sequence nodes (\emph{opening  the drawer} and \emph{searching  the drawer}) cannot.
Note also that adding subtasks to a Sequence generally decreases overall success probabilities, whereas adding Fallbacks generally increases overall success probabilities, as described in Section~\ref{sec:stochastic_extension}.

\begin{figure}[b]
\centering
\includegraphics[width=0.5\columnwidth]{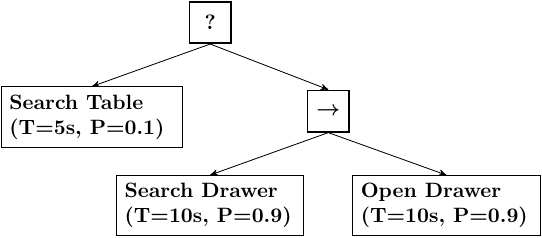}
\caption{The BT equivalent of the Markov chain in Figure \ref{stoch:mcEx1}. The atomic actions are the leaves of the tree, while the  interior nodes correspond to \emph{Sequence} compositions (arrow) and \emph{Fallbacks} compositions (question mark).
}
\label{stoch:btEx1}
\end{figure}

\section{Stochastic BTs}
\label{stoch:sec:analysis}

In this section we will show how some probabilistic measures, such as  Mean Time to Succeed (MTTS),  Mean Time to Fail (MTTF), and probabilities over time carry across modular compositions of BTs. The advantage of using BTs lie in their modularity and hierarchical structure, which provides good scalability, as explained above.  
To  address the questions above, we need to introduce some concepts from Markov theory.

\subsection{Markov Chains and Markov Processes}
Markov theory~\cite{norris1998markov} deals with memoryless processes. If a process is given by a sequence of actions that changes the system's state disregarding its history, a DTMC is suitable to model the plan execution. Whereas if a process is given by a transition rates between states, a Continuous Time Markov Chain (CTMC) it then suitable to model such plan execution.
A DTMC is given by a collection of states  $\mathcal{S}=\{s_1,s_2,\ldots,s_d\}$
and the transitions probabilities $p_{ij}$ between states $s_i$ and $s_j$. A CTMC is given by a collection of states $\mathcal{S}$ and the transition rates $q_{ij}^{-1}$ between states $s_i$ and $s_j$.

\begin{figure}[h]
\centering
\includegraphics[width=0.65\columnwidth ]{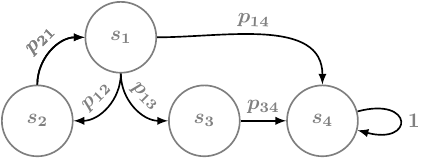}
\caption{Example of a DTMC with 4 states and 6 transitions.}
\label{stoch:bg.fig.mc}
\end{figure}

\begin{definition}
The stochastic sequence $\{X_n, n = 0,1,2,\ldots\}$ is a DTMC provided that:
\begin{equation}
\begin{split}
&P\{X_{n+1} = s_{n+1} \vert X_{n} = s_{n}, X_{n-1} = s_{n-1}, \ldots, X_0 = s_0\} =\\
&=P\{X_{n+1} = s_{n+1} \vert X_{n} = s_{n}\}
\end{split}
\label{stoch:BG.eq.DTMC}
\end{equation}
$\forall \; n\in \mathbb{N}$, and $\forall \; s\in \mathcal{S}$
\end{definition}
The expression on the right hand side of~\eqref{stoch:BG.eq.DTMC} is the so-called \emph{one step transition probability} of the chain and denotes the probability that the process goes from state $s_{n}$ to state $s_{n+1}$. We use the following notation:
\begin{equation}
p_{ij} = P\{X_{n+1} = s_j \vert X_{n} = s_i\}
\end{equation}
to denote the probability to jump from a state $s_i$ to a state $s_j$. Since we only consider homogeneous DTMC, the above probabilities do not change in time.
\begin{definition}
\label{stoch:bg.def.p}
The \emph{one-step transition matrix} $P$ is a $\left\vert{\mathcal{S}}\right\vert \times \left\vert{\mathcal{S}}\right\vert$ matrix in which the entries are the transition probabilities $p_{ij}$.
\end{definition} 
Let $\pi(k)=[\pi_1(k),\ldots, \pi_{\left\vert{\mathcal{S}}\right\vert}(k)]^\top$, where $\pi_i$ is the probability of being in state $i$, then the Markov process can be described as a discrete time system with the following time evolution:
\begin{equation}
\label{stoch:bg.eq.discrete}
\begin{cases}
\pi(k+1)=P^\top\pi(k) \\
\pi(0)= \pi_0.
\end{cases}.
\end{equation}
where $\pi_0$ is assumed to be known a priori.

\begin{definition}
The stochastic sequence $\{X(t), t\geq 0 \}$ is a CTMC provided that:
\begin{equation}
\begin{split}
P\{&X(t_{n+1}) = s_{n+1} \vert X(t_{n}) = s_{n}, X(t_{n-1}) = s_{n-1}, \ldots,\\ 
,&X(t_0) = s_0\} =P\{X(t_{n+1}) = s_{n+1} \vert X(t_{n}) = s_{n}\}
\end{split}
\label{stoch:BG.eq.CTMC}
\end{equation}
\end{definition}
$\forall \; n\in \mathbb{N}$, $\forall \; s\in \mathcal{S}$, and all sequences $\{t_0,t_1,\ldots,t_{n+1}\}$ such that $t_0 < t_1 < \ldots <  t_{n} <  t_{n+1}$.
We use the following notation:
\begin{equation}
p_{ij}(\tau) = P\{X(t+\tau) = s_j \vert X(\tau) = s_i\}
\end{equation}
to denote the probability to be in a state $s_j$ after a time interval of length $\tau$ given that at present time is into a state $s_i$. Since we only consider homogeneous CTMC, the above probabilities only depend on the time length $\tau$.

To study the continuous time behavior of a Markov process we define the so-called \emph{infinitesimal generator matrix} $Q$.
\begin{definition}
The infinitesimal generator of the transition probability matrix $P(t)$ is given by:
\begin{equation}
Q = \left[q_{ij}\right]
\end{equation}
where 

\begin{equation}
q_{ij}=
   \begin{cases}
   \displaystyle\lim_{\Delta t \to 0} \frac{p_{ij}(\Delta t)}{\Delta t}& \text{if } i \neq j\\
   - \displaystyle\sum_{k \neq i} q_{kj} & \text{otherwise}.
  \end{cases}
  \label{stoch:bg.eq.infGen}
\end{equation}
\end{definition}
Then, the continuous time behavior of the Markov process is described by the following ordinary differential equation, known as the Cauchy problem:
\begin{equation}
\begin{cases}
\dot \pi(t)=Q^\top\pi(t) \\
\pi(0)=\pi_0
\end{cases}
\label{stoch:bg.eq.cauchy}
\end{equation}
where the initial probability vector $\pi_0$ is assumed to be known a priori.
\begin{definition}
The average sojourn time $SJ_i$ of a state $s_i$ in a CTMC is the average time spent in that state. It is given by~\cite{stewart2009}:
\begin{equation}
SJ_i = -\frac{1}{q_{ii}}
\label{stoch:BG.eq.meansj}
\end{equation} 
\end{definition}
\begin{definition}
Considering the CTMC $\{X(t), t\geq 0 \}$, the stochastic sequence $\{Y_n, n = 0,1,2,\ldots\}$ is a DTMC and it is called Embedded MC (EMC) of the process $X(t)$~\cite{stewart2009}.
\end{definition}
The transition probabilities of the EMC $r_{ij}$ are defined as:
\begin{equation}
r_{ij} = P\{Y_{n+1} = s_j \vert Y_{n} = s_i\}
\end{equation}
and they can be easily obtained as a function of the transition rates $q_{ij}$:
\begin{equation}
r_{ij}=
   \begin{cases}
   -\frac{q_{ij}}{q_{ii}}& \text{if } i \neq j\\
    \displaystyle 1-\sum_{k \neq i} r_{kj} & \text{otherwise}.
  \end{cases}
\end{equation}
On the other hand, the infinitesimal generator matrix $Q$ can be reconstructed from the EMC as follows
\begin{equation}
q_{ij}=
   \begin{cases}
   \frac{1}{SJ_j }r_{ij} & \text{if } i \neq j\\
   - \displaystyle\sum_{k \neq i} r_{kj} & \text{otherwise}
  \end{cases}.
\end{equation}

\subsection{Formulation}
\label{stoch:sec:problem_formulation}

We are now ready to make some definitions and assumptions, needed to compute the performance estimates.

\begin{definition}
\label{stoch:def:stoc}
An action $\mathcal{A}$ in a BT, is called \emph{stochastic}
 if the following holds:
\begin{itemize}
\item
It first returns Running, for an amount of time that might be zero or non-zero, then consistently returns either Success or Failure for the rest of the execution of its parent node.\footnote{The execution of the parent node starts when it receives a tick and finishes when it returns either Success/Failure to its parent.}
\item The probability to succeed $ p_s$ and the probability to fail $ p_f$ are known a priori.
\item The probability to succeed  $p_s(t)$ and the probability to fail $p_f(t)$ are exponentially distributed with the following Probability Density Functions (PDFs):
\begin{eqnarray}
 \hat p_s(t)&=& p_s \mu e^{-\mu t}\\
  \hat p_f(t)&=& p_f \nu e^{-\nu t}
\end{eqnarray}
from which we can calculate the Cumulative Distribution Functions (CDFs)
\begin{eqnarray}
 \bar p_s(t)&=& p_s 	(1- e^{-\mu t})\\
  \bar p_f(t)&=& p_f (1- e^{-\nu t})
\end{eqnarray}
\end{itemize} 
\end{definition}
\begin{definition}
\label{stoch:def:det}
An action $\mathcal{A}$ in a BT, is called \emph{deterministic} (in terms of execution time, not outcome) if the following holds:
\begin{itemize}
\item
It first returns Running, for an amount of time that might be zero or non-zero, then consistently returns either Success or Failure for the rest of the execution of its parent node.
\item The probability to succeed $p_s$ and the probability to fail $p_f$ are known a priori.
\item The time to \emph{succeed} and the time to \emph{fail} are deterministic variables $\tau_s$ and $\tau_f$ known a priori.
\item The probability to succeed  $p_s(t)$ and the probability to fail  $p_f(t)$ have the following PDFs:
\begin{eqnarray}
\hat p_s(t) &=& p_s  \delta(t-\tau_s)\label{stoch:eq:d1}  \\
\hat p_f(t) &=&  p_f  \delta(t-\tau_f)  \label{stoch:eq:d2}
\end{eqnarray}
where $\delta(\cdot)$ is the Dirac's delta function. From the PDFs we can calculate the CDFs:
\begin{eqnarray}
\bar p_s(t) &=& p_s  H(t-\tau_s)\label{stoch:eq:h1}  \\
\bar p_f(t) &=&  p_f  H(t-\tau_f)  \label{stoch:eqh2}
\end{eqnarray}
where $H(\cdot)$ is the step function.

\end{itemize} 
\end{definition}

\begin{remark}
 Note that it makes sense to sometimes have $\tau_s \neq \tau_f$. Imagine a door opening task which takes 10s to complete successfully but fails 30\% of the time after 5s when the critical grasp phase fails.
\end{remark}

\begin{example}
 For comparison, given a deterministic action with $\tau_s$, we let the rates of a stochastic action have $\mu=\tau_s^{-1}$. Then the PDFs and CDFs are as seen in 
Figure~\ref{stoch:DE.fig.prob}.
\end{example}
%
\begin{figure}[h]
\centering
\begin{subfigure}[h]{0.8\columnwidth}
\includegraphics[width=0.8\columnwidth]{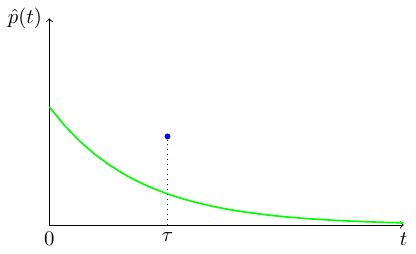}
\caption{PDFs.}
\end{subfigure}
\begin{subfigure}[h]{0.8\columnwidth}
\includegraphics[width=0.8\columnwidth]{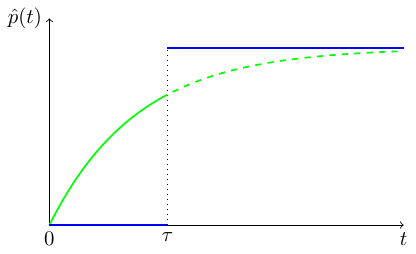}
\caption{CDFs.}
\end{subfigure}
\caption{Cumulative and probability density distribution function for a deterministic (dark straight lines) and stochastic action (bright curves).}
\label{stoch:DE.fig.prob}
\end{figure}

As we want to analyze the BT composition of actions, we must also define actions that include both stochastic and deterministic parts.

\begin{definition}
\label{stoch:def:hyb}
An action $\mathcal{A}$ in a BT, is called \emph{hybrid} if one of $p_s(t)$ and $p_f(t)$ is a random variable with exponential distribution, and the other one is deterministic.
\end{definition}

Thus  hybrid actions come in two different variations:
\textbf{Deterministic Success Time}
For this type of hybrid action, the following holds:    
\begin{itemize}
\item
It first returns Running, for an amount of time that might be zero or non-zero, then consistently returns either Success or Failure for the rest of the execution of its parent node.
\item The probability to succeed $ p_s$ is known a priori.
\item The time to \emph{succeed} is a deterministic variable $\tau_s$ known a priori.
\item The probability to fail has the following PDF:
\begin{equation}
\hat p_f(t)=\begin{cases} p_f (1- e^{-\nu t}) &\mbox{if } t < \tau_s \\ 
1-p_s &\mbox{if } t = \tau_s \\
0 & \mbox{otherwise } 
 \end{cases}.
\end{equation}

\end{itemize}
In this case the CDF and the PDF of the probability to succeed are discontinuous. In fact this hybrid action will return Failure if, after the success time $\tau_s$, it does not return Success.
Then, to have an analogy with stochastic actions we derive the PDF of the probability to succeed:
\begin{equation}
\hat p_s(t) = p_s  \delta(t-\tau_s)  
\end{equation}
and the CDFs as follows:
\begin{equation}
\bar p_s(t) = p_s  H(t-\tau_s)  
\end{equation}
\begin{equation}
\bar p_f(t)=\begin{cases} p_f (1- e^{-\nu t}) &\mbox{if } t < \tau_s \\ 
1 - \bar p_s(t)& \mbox{otherwise }  \end{cases}.
\end{equation}
Thus, the probability of Running is zero after $\tau_s$ i.e. after $\tau_s$ it either fails or succeeds.
Moreover, the success rate is set to $\mu=\tau_s^{-1}$.

\textbf{Deterministic Failure Time}
For this type of hybrid action, the following holds:    
\begin{itemize}
\item It first returns Running, for an amount of time that might be zero or non-zero, then consistently returns either Success or Failure for the rest of the execution of its parent node.
\item The probability to fail $p_f$ is known a priori.
\item The time to \emph{succeed} is a random variables with exponential distribution with rate $\mu$ known a priori. 
\item The probability to succeed has the following PDF:
\begin{equation}
\hat p_s(t)=\begin{cases} p_s (1- e^{-\mu t}) &\mbox{if } t < \tau_f \\ 
1-p_f &\mbox{if } t = \tau_f \\
0 & \mbox{otherwise } 
 \end{cases}.
\end{equation}
\end{itemize}
To have an analogy with stochastic actions we derive the PDF of the probability to fail:
\begin{equation}
\hat p_f(t) = p_f  \delta(t-\tau_f)  
\end{equation}
and the CDFs as follows:
\begin{equation}
\bar p_f(t) = p_f  H(t-\tau_f)  
\end{equation}
\begin{equation}
\bar p_s(t)=\begin{cases} p_s (1- e^{-\mu t}) &\mbox{if } t < \tau_f \\ 
1 - \bar p_f(t)& \mbox{otherwise }  \end{cases}.
\end{equation}

Moreover, the failure rate is set to $\nu=\tau_f^{-1}$

\begin{remark}
Note that the addition of deterministic execution times makes~\eqref{stoch:bg.eq.cauchy} discontinuous on the right hand side, but it still has a unique solution in the Carath\'eodory sense~\cite{filippov1988differential}.
\end{remark}

We will now give an example of how these concepts transfer over BT compositions.

\begin{figure}[h]
\centering
\includegraphics[width=0.3\columnwidth]{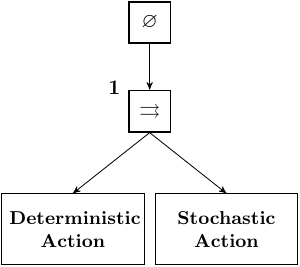}
\caption{Parallel node of Example \ref{stoch:ex:par}.}
\label{stoch:DE.fig.parexbt}
\end{figure}

\begin{example}
\label{stoch:ex:par}
Consider the BT in Figure~\ref{stoch:DE.fig.parexbt}. The Parallel node is set to returns Success as soon as one child returns Success, and the two children are of different kinds,
one deterministic and the other stochastic.
Note that the MTTS and MTTF of this BT has to account for the heterogeneity of its children. 
The deterministic child can succeed only at time $\tau_s$. The CDF of the Parallel node is given by the sum of the CDFs of its children. The PDF has a jump at time $\tau_s$ accounting for the fact that the Parallel node is more likely to return Success after that time.
Thus, the PDF and the CDF of a Success return status are shown in Figure~\ref{stoch:DE.fig.parexpd}. 

\begin{figure}[h]
\centering
\begin{subfigure}[h]{0.8\columnwidth}
\includegraphics[width=0.8\columnwidth]{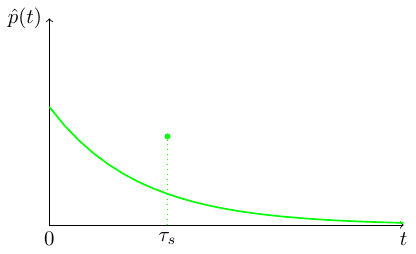}
\caption{PDF.}
\end{subfigure}
\begin{subfigure}[h]{0.8\columnwidth}
\includegraphics[width=0.8\columnwidth]{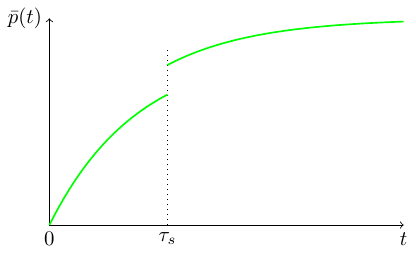}
\caption{CDF.}
\end{subfigure}
\caption{Cumulative and probability density distribution function of Success of the Parallel node in Figure~\ref{stoch:DE.fig.parexbt}.}
\label{stoch:DE.fig.parexpd}
\end{figure}
\end{example}

\begin{definition}
A BT $\bt_1$ and a BT $\bt_2$ are said \emph{equivalent} if and only if 
$\bt_1$ can be created from $\bt_2$ by permutations of the children of Fallbacks and Parallel compositions.
\label{stoch:PS.def.eq}
\end{definition}
An example of two equivalent BTs is shown in Figure~\ref{stoch:fig:search}.

\begin{assumption}
\label{stoch:ass:SBT:ac}
For each action $\mathcal{A}$ in the BT, one of the following holds
\begin{itemize}
\item The action $\mathcal{A}$ is a stochastic action.
\item The action $\mathcal{A}$ is a deterministic action. 
\item The action $\mathcal{A}$ is a hybrid action. 
\end{itemize} 
\end{assumption}

\begin{assumption}
\label{stoch:ass:SBT:cond}
For each condition $\mathcal{C}$ in the BT, the following holds
\begin{itemize}
\item It consistently returns the same value (Success or Failure) throughout the execution of its parent node.
\item The probability to succeed at any given time $p_s(t)$ and the probability to fail at any given time $p_f(t)$ are known a priori. 
\end{itemize} 
\end{assumption}
We are now ready to define a Stochastic BT (SBT).
\begin{definition}
A SBT is a BT satisfying Assumptions~\ref{stoch:ass:SBT:ac} and~\ref{stoch:ass:SBT:cond}.
\end{definition}

Given a SBT, we want to use the probabilistic descriptions of its actions and conditions, $p_s(t)$, $p_f(t)$, $\mu$ and $\nu$, to recursively compute analogous descriptions for every subtrees and finally the whole tree. 

%
%

To illustrate the investigated problems and SBTs we take a look at the following example.
\begin{example}
\label{stoch:ex:search}
Imagine a robot that is to search for a set of keys on a table and in a drawer. The robot knows that the keys are often located in the drawer, so that location is more likely than the table. However, searching the table takes less time, since the drawer must be opened first. Two possible plans are conceivable: searching the table first, and then the drawer, as in Figure~\ref{stoch:fig:searcht}, or the other way around as in Figure~\ref{stoch:fig:searchd}. These two plans can be formulated as SBTs and analyzed through the scope of Problem 1 and 2, using the results of Section~\ref{stoch:sec:analysis} below. Depending on the user requirements in terms of available time or desired reliability at a given time, the proper SBT can be chosen.
\begin{figure}
        \centering
        \begin{subfigure}[b]{0.5\columnwidth}
                \centering
                \includegraphics[width=0.8\columnwidth]{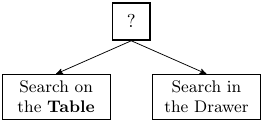}
                \caption{}
                 \label{stoch:fig:searcht}

        \end{subfigure}%
       ~ 
        \begin{subfigure}[b]{0.5\columnwidth}
                \centering
                \includegraphics[width=0.8\columnwidth]{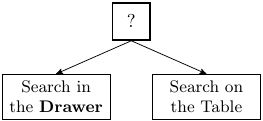}
                \caption{}
                \label{stoch:fig:searchd}
        \end{subfigure}
        ~ 
        
                        \caption{BT modeling of two plan options. In (a), the robot searches on the table first, and in the drawer only if the table search fails. In (b), the table is searched only if nothing is found in the drawer.}    
                        \label{stoch:fig:search}
\end{figure}
\end{example}
\begin{remark}
 Note that Assumptions~\ref{stoch:ass:SBT:ac} corresponds to the return status  of the search actions in Example \ref{stoch:ex:search} behaving in a reasonable way, e.g., not switching between Success and Failure.
\end{remark}

\section{Transforming a SBT into a DTMC}
\label{stoch:sec:SBTtoDTCM}
The first step is to define, for each control flow node in $\mathcal{V}$, a vector representation of the children's outcomes and a description of its execution policy, then we map the execution into a DTMC with a direct representation of the one-step transition matrix, and finally we compute the probability of success and failure over time for each node.  

Note that the modularity of BTs comes from the recursive tree structure, any BT can be inserted as subtree in another BT.
This modularity allows us to do the analysis in a recursive fashion, beginning with the leaves of the BT, i.e. the actions and conditions which have known probabilistic parameters according to Assumptions~\ref{stoch:ass:SBT:ac} and~\ref{stoch:ass:SBT:cond}, and then progressing upwards in a scalable fashion.

 To keep track of the execution of a given flow control node,
the children outcomes  are collected in a vector state called the \emph{marking} of the node, and the transitions between markings are defined according to the execution policy of the node. 
In detail, let $\mathbf{m}(k)=[m_1(k),m_2(k),\ldots,m_N(k)]$ be a marking of a given BT node with $N$ children at time step $k$ with
\begin{equation}
m_i(k) =
	 \begin{cases} -1 &\mbox{if child $i$ returns Failure  at k}  \\
 		\hspace{8pt}1 & \mbox{if child $i$ returns Success  at k}\\
 		\hspace{8pt}0  & \mbox{otherwise} \\
 	\end{cases} 
\end{equation}
\begin{example}
\label{stoch:ex:Marking}
 Consider the BT in Figure\ref{stoch:fig:searcht}. If the first child (Search Table) has failed, and the second (Search Drawer) is currently running, the marking would be $\mathbf{m}(k)=[-1,0]$.
\end{example}
We define an \emph{event} related to a BT node when one of its children returns either Success or Failure.
Defining $\mathbf{e}_i(k)$ to be the vector associated to the event of the $i$-th running child, all zeros except the $i$-th entry which is equal to $e_i(k) \in \{-1,1\}$:
 \begin{equation}
e_i(k) = \begin{cases} -1 &\mbox{if child $i$ has failed at k}  \\
\hspace{8pt}1 & \mbox{if child $i$ has succeeeded at k.}
 \end{cases} 
\end{equation}

We would like to describe the time evolution of the node marking due to an event associated with the child $i$ as follows:
\begin{equation}
 \mathbf{m}(k+1)=\mathbf{m}(k)+\mathbf{e}_i(k)
\end{equation}
 with the event $\mathbf{e}_i(k)$ restricted to the feasible set of events at $\mathbf{m}(k)$, i.e. 
 $$\mathbf{e}_i(k) \in \mathcal{F}(\mathbf{m}(k)).$$ 
In general, $\mathcal{F}(\mathbf{m}(k)) \subset \mathcal{F}_0$, with
\begin{equation}
 \mathcal{F}_0=\{\mathbf{e}_i: \mathbf{e}_i\in \{-1,0,1\}^N, ||\mathbf{e}_i||_2=1 \},
\end{equation}
i.e. events having only one nonzero element, with value $-1$ or $1$. We will now describe the set $\mathcal{F}(\mathbf{m}(k))$ for the three different node types.

\subsubsection*{Feasibility condition in the Fallback node}
\begin{equation}
\begin{split}
\mathcal{F}_{FB}(\mathbf{m}(k))=\{\mathbf{e}_i\in  \mathcal{F}_0: & \exists i: m_i(k)=0, e_i \neq 0, \\ & m_j(k) = -1, \forall j, 0<j<i  \},
\end{split}
\end{equation}
i.e. the event of a child returning Success or Failure is only allowed if it was ticked, which only happens if it is the first child, or if all children before it have returned Failure.

\subsubsection*{Feasibility condition in the Sequence node}
\begin{equation}
\begin{split}
\mathcal{F}_{S}(\mathbf{m}(k))=\{\mathbf{e}_i\in  \mathcal{F}_0: & \exists i: m_i(k)=0, e_i \neq 0,  \\ & m_j(k) = 1, \forall j, 0<j<i  \},
\end{split}
\end{equation}
i.e. the event of a child returning Success or Failure is only allowed if it was ticked, which only happens if it is the first child, or if all children before it have returned Success.

\subsubsection*{Feasibility condition in the Parallell node}
\begin{equation}
\begin{split}
\mathcal{F}_{P}(\mathbf{m}(k))=\{\mathbf{e}_i\in  \mathcal{F}_0: 
& \exists i: m_i(k)=0, e_i \neq 0, \\
& \Sigma_{j:m_j(k)>0}m_j(k)<M \\
&  \Sigma_{j:m_j(k)<0}m_j(k)<N-M+1 \},
\end{split}
\end{equation}
i.e. the event of a child returning Success or Failure is only allowed it if has not returned yet, and the conditions for Success ($<M$ successful children) and Failure ($<N-M-1$ failed children) of the Parallel node are not yet fulfilled.

\begin{example}
Continuing Example \ref{stoch:ex:Marking} above, $\mathcal{F}(\mathbf{m}(k))=\mathcal{F}_{FB}([-1,0])=\{(0,1), (0, -1)\}$, i.e. the second child returning Success or Failure.
Note that if the first child would have returned Success, the feasible set would be empty $\mathcal{F}_{FB}([1,0])=\emptyset$.
\end{example}

The \emph{Marking Reachability Graph} (MRG)\label{definition:RG}, see Figure \ref{stoch:app.fig.seq},
 of a BT node can now be computed starting from the initial marking $\mathbf{m}(0)=\mathbf{m}_0=\mathbf{0}^\top$, taking into account all the possible event combinations that satisfy the feasibility condition.

\begin{definition}
A marking $\mathbf{m}_{i}$ is reachable from a marking $\mathbf{m}_j$ if there exists a sequence of feasible events $\boldsymbol{\sigma}=[\sigma_1,\sigma_2,\ldots,\sigma_g]$ such that $\mathbf{m}_{i}= \mathbf{m}_{j} + \sum_{h=1}^g \sigma_h$.
\end{definition}
\begin{remark}
Note that $\mathbf{m}(k)=\mathbf{m}_i$ when $\mathbf{m}_i$ is the marking at time $k$.
\end{remark}
\begin{figure}[h]
\centering
\includegraphics[width=1\columnwidth ]{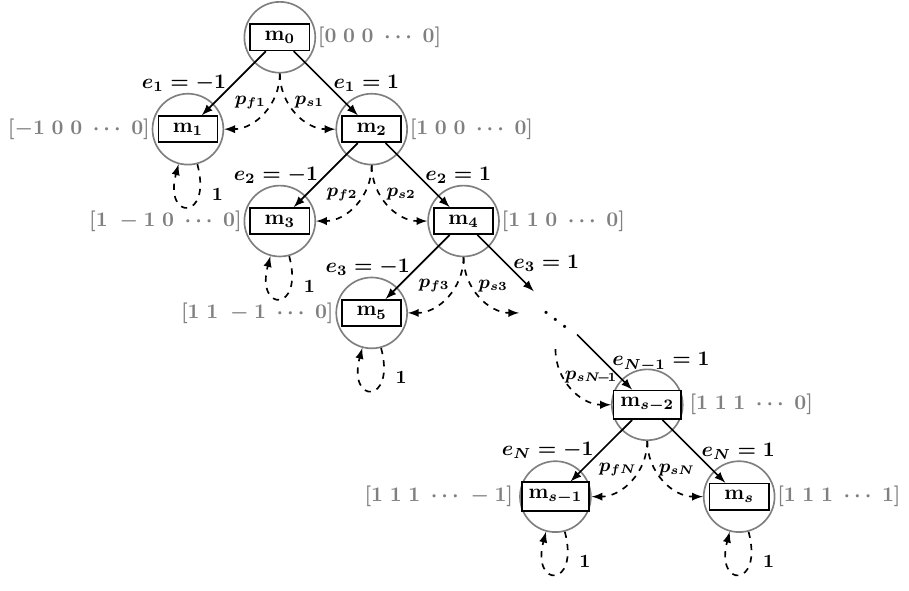}
\caption{MRG of the Sequence node (rectangles) with $N$ children and its DTMC representation (circles).}
\label{stoch:app.fig.seq}
\end{figure}
\begin{figure}[h]
\centering
\includegraphics[width=1\columnwidth ]{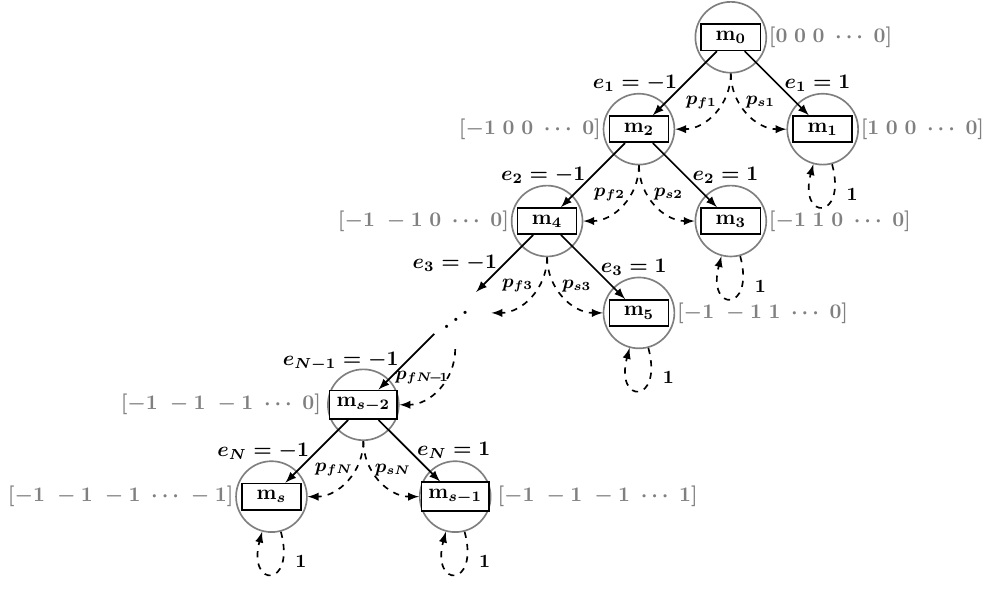}
\caption{MRG of the Fallback node (rectangles) with $N$ children and its DTMC representation (circles).}
\label{stoch:app.fig.sel}
\end{figure}

\begin{figure}[h]
\centering
\includegraphics[width=1\columnwidth ]{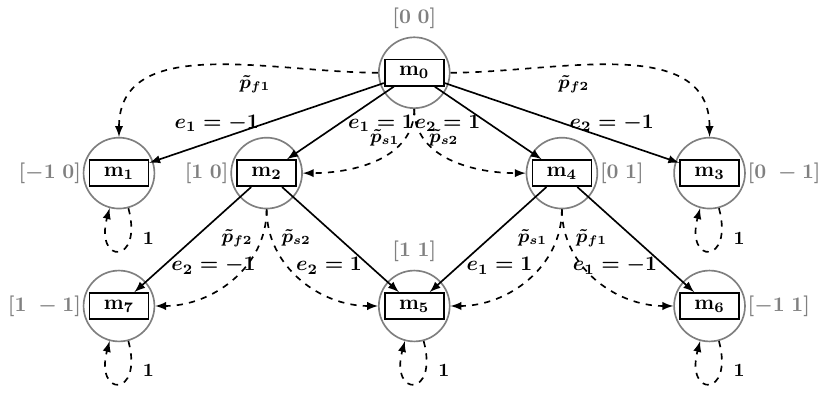}
\caption{MRG of the Parallel node (rectangles) with $2$ children and its DTMC representation (circles).}
\label{stoch:app.fig.par}
\end{figure}

\subsection{Computing Transition Properties of the DTMC}
The MRG of a BT node comprises all the reachable markings, the transitions between them describe events which have a certain success/failure probability.  We can then map the node execution to a DTMC where the states are the markings in the MRG and the one-step transition matrix $P$ is given by the probability of jump between markings, with off diagonal entries defined as follows:

\begin{equation}
p_{ij}=
   \begin{cases}
   \tilde p_{sh} & \!\!\!\text{if } \mathbf{m}_j-\mathbf{m}_i\in \mathcal{F}(\mathbf{m}_i) \land  e_h\mathbf{e}_h^T(\mathbf{m}_j-\mathbf{m}_i)>0\\
   \tilde p_{fh} & \!\!\!\text{if } \mathbf{m}_j-\mathbf{m}_i\in \mathcal{F}(\mathbf{m}_i) \land  e_h\mathbf{e}_h^T(\mathbf{m}_j-\mathbf{m}_i)<0\\
   0 & \!\!\!\text{otherwise}
  \end{cases} 
\end{equation}
and diagonal entries defined as:
\begin{equation}
p_{ii}=1-\sum_{j}p_{ij}. 
\end{equation}
 with:
\begin{equation}
\tilde p_{sh} = \frac{p_{sh} \mu_h \nu_h}{p_{fh} \mu_h + p_{sh}\nu_h} \cdot \left( \sum_{j: \mathbf{e}_j \in \mathcal{F}(\mathbf{m}_i)} \frac{\mu_j \nu_j}{p_{fj} \mu_j + p_{sj}\nu_j} \right)^{-1}
\end{equation} 
and

\begin{equation}
\tilde p_{fh} = \frac{p_{fh} \mu_h \nu_h}{p_{fh} \mu_h + p_{sh}\nu_h} \cdot \left( \sum_{j: \mathbf{e}_j \in \mathcal{F}(\mathbf{m}_i)} \frac{\mu_j \nu_j}{p_{fj} \mu_j + p_{sj}\nu_j} \right)^{-1}
\end{equation} 
where $p_{sj}$ and $p_{fj}$ is the $p_{s}$ and $p_{f}$ of child $j$.

\begin{remark}
For Sequence and Fallback nodes the following holds: $\tilde p_{sh}=p_{sh}$ and $\tilde p_{fh}=p_{fh}$.
\end{remark}
In Figures.~\ref{stoch:app.fig.seq} and~\ref{stoch:app.fig.sel} the mapping from MRG to a DTCM related to a Sequence node and a Fallback node are shown. In Figure~\ref{stoch:app.fig.par} the mapping for a Parallel node with two children and $M=2$ is shown. We choose not to depict the mapping of a general Parallel node, due to its large amount of states and possible transition between them.

To obtain the continuous time probability vector $\pi(t)$ we need to compute the infinitesimal generator matrix $Q$ associated with the BT node. For doing so we construct a CTMC for which the EMC is the DTMC of the BT node above computed. According to~\eqref{stoch:bg.eq.infGen} the map from the EMC and the related CTMC is direct, given the average sojourn times $SJ_i$. 
\section{Reliability of a SBT}
\label{stoch:sec:reli}
\subsection{Average sojourn time}
We now compute the average sojourn time of each marking $\mathbf{m}_i$ of a BT node. 
\begin{lemma}
For a BT node with $p_{si},p_{fi}, \mu_i,\nu_i$ given for each child, the average sojourn time of in a marking $\mathbf{m}_i$ is:
\begin{equation}
SJ_i =\left(   \displaystyle\sum_{h:\mathbf{e}_h\in \mathcal{F}(\mathbf{m}_i)} \left(\frac{p_{sh}}{\mu_h}+\frac{p_{fh}}{\nu_h}\right)^{-1}     \right)^{-1}
\label{stoch:app.eq.sj}
\end{equation}  
with $h: \mathbf{e}_h \in \mathcal{F}(\mathbf{m}_i)$.

\end{lemma}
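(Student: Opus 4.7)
The plan is to derive $SJ_i$ as the expected waiting time until the first feasible event fires at marking $\mathbf{m}_i$, using the CTMC embedding that was set up just before the lemma. My first step would be to identify the set of \emph{active} children at $\mathbf{m}_i$: those indices $h$ for which at least one event $\mathbf{e}_h$ belongs to $\mathcal{F}(\mathbf{m}_i)$. By the feasibility conditions $\mathcal{F}_S$, $\mathcal{F}_{FB}$, $\mathcal{F}_P$ stated earlier, these are precisely the children currently receiving ticks. For each active child $h$, the time until it returns either Success or Failure is a mixture: with probability $p_{sh}$ it succeeds according to an exponential of rate $\mu_h$, and with probability $p_{fh}$ it fails according to an exponential of rate $\nu_h$. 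Hence its expected completion time is
\begin{equation}
T_h \;=\; \frac{p_{sh}}{\mu_h} + \frac{p_{fh}}{\nu_h}.
\end{equation}

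The next step is to invoke the CTMC embedding introduced in Section~\ref{stoch:sec:SBTtoDTCM}. There, each active child is replaced by an equivalent exponential clock whose mean matches $T_h$, i.e.\ with effective rate $\lambda_h := 1/T_h$. This is consistent with the choices of $\tilde p_{sh}$ and $\tilde p_{fh}$ in the one-step transition matrix, which split the "first firing" event among children in proportion to their effective rates and among the Success/Failure modes in proportion to the within-child contributions $p_{sh}\mu_h$ versus $p_{fh}\nu_h$. Under this embedding, the sojourn time in $\mathbf{m}_i$ is exactly the minimum of the independent exponentials $\{\mathrm{Exp}(\lambda_h)\}_{h \text{ active}}$.

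The final step is the standard exponential-race identity: the minimum of independent exponentials with rates $\lambda_h$ is itself exponential with rate $\sum_h \lambda_h$, so its mean equals
\begin{equation}
SJ_i \;=\; \Bigl(\sum_{h:\mathbf{e}_h\in \mathcal{F}(\mathbf{m}_i)} \lambda_h \Bigr)^{-1}
\;=\; \Bigl(\sum_{h:\mathbf{e}_h\in \mathcal{F}(\mathbf{m}_i)} \Bigl(\tfrac{p_{sh}}{\mu_h}+\tfrac{p_{fh}}{\nu_h}\Bigr)^{-1} \Bigr)^{-1},
\end{equation}
which is precisely the claim. As a consistency check, for Sequence and Fallback nodes only one child is active at a time, so the sum collapses to a single term and we recover $SJ_i = T_h$, the exact expected completion time of that child with no approximation.

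The main obstacle I anticipate is justifying the collapse of each child's (genuinely non-exponential) mixture completion law to a single effective exponential of matched mean. For Sequence and Fallback this is irrelevant since the race has only one participant, and the formula is exact. For Parallel nodes it is a modeling assumption inherent to the CTMC representation built in Section~\ref{stoch:sec:SBTtoDTCM} rather than something provable from Definition~\ref{stoch:def:stoc} alone; the content of the lemma is that, once this exponential embedding is accepted (as it is by the construction of $Q$ from the DTMC via average sojourn times in Eq.~(\ref{stoch:bg.eq.infGen})), the sojourn-time formula follows immediately from the standard minimum-of-exponentials result.
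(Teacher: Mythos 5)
Your proposal is correct and follows essentially the same route as the paper's own proof: compute each active child's expected completion time as $p_{sh}/\mu_h + p_{fh}/\nu_h$ (the paper does this via the sojourn-time formula for the per-child CTMC of Figure~\ref{stoch:app.fig.sjmc}, you do it directly as the mean of the Success/Failure mixture), invert to get an effective rate, sum the rates over the feasible events, and invert again. Your closing remark about the mixture-to-exponential collapse being a modeling assumption of the CTMC embedding (exact for Sequence/Fallback, approximate for Parallel) is a point the paper glosses over, but it does not change the argument.
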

\begin{proof}

In each marking one of the following occur: the running child $h$ fails or succeeds. To take into account both probabilities and time rates, that influence the average sojourn time, we describe the child execution using an additional CTMC, depicted in Figure~\ref{stoch:app.fig.sjmc}

\begin{figure}[h]
\centering
\includegraphics[width=0.6\columnwidth ]{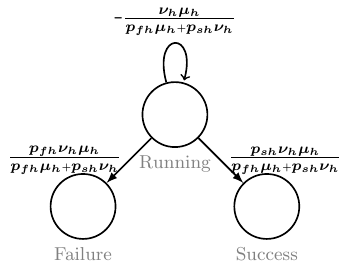}
\caption{CTMC of a child's execution.}
\label{stoch:app.fig.sjmc}
\end{figure}
According to~\eqref{stoch:BG.eq.meansj} the average sojourn time is: 
\begin{equation}
\tau_i=\frac{p_{fh} \mu_h + p_{sh}\nu_h } {\nu_h\mu_h } = \frac{p_{sh}}{\mu_h}+\frac{p_{fh}}{\nu_h}
\end{equation}
and the rate of leaving that state is ${\tau_i}^{-1}$.
Now to account all the possible running children outcome, e.g. in a Parallel node, we consider all the rates associate to the running children. The rate of such node is the sum of all the rates associated to the running children ${\tau_i}^{-1}$. Finally, the average sojourn time of a marking $\mathbf{m}_i$ is given by the inverse of the combined rate:
\begin{equation}
\frac{1}{ SJ_i }= \displaystyle \sum_{h:\mathbf{e}_h\in \mathcal{F}(\mathbf{m}_h)} \frac{1}{\frac{p_{sh}}{\mu_h}+\frac{p_{fh}}{\nu_h}}
\end{equation}
from which we obtain~\eqref{stoch:app.eq.sj}.
\end{proof}
\begin{remark}
The EMC associated with the CTMC in Figure~\ref{stoch:app.fig.sjmc} is depicted in Figure~\ref{stoch:app.fig.dsjmc}. It describes the child's execution as a DTMC.
\end{remark}
\begin{figure}[h]
\centering
\includegraphics[width=0.45\columnwidth ]{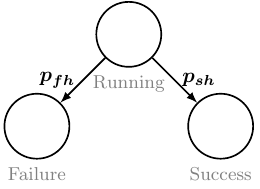}
\caption{DTMC of a child's execution.}
\label{stoch:app.fig.dsjmc}
\end{figure}


\subsection{Mean Time To Fail and Mean Time To Succeed}
To derive a closed form of the mean time to fail MTTF/MTTS of a BT node, we take the probability to reach a success/failure state from the DTCM and the average time spent in each state visited before reaching this state obtained from~\eqref{stoch:app.eq.sj}.
We rearrange the state space of the DTMC so that the initial state is first, the other transient states are second, the failure states are second last and the success states are last:
\begin{equation}
P_c^\top=\left[
\begin{array}{ccc}
T & {0}& {0} \\ 
R_F & \mathbb{I} & {0} \\
R_S & {0} & \mathbb{I} 
\end{array} \right]
\label{stoch:Example.eq.MTTSform}
\end{equation}
where $T$ is the matrix describing the one-step transition from a transit state to another one, $R_F$ is a the matrix describing the one-step transition from a transit state to a failure state, and $R_S$ is the matrix describing the one-step transition from a transit state to a success state. We call this rearrangement the \emph{canonization} of the state space. 


\begin{lemma}
Let $A$ be a matrix with the $ij$-th entry defined as $\exp(t_{ij})$ where $t_{ij}$ is the time needed to transit from a state $j$ to a state $i$ if $j,i$ are neighbors in the MRG, $0$ otherwise. 
The MTTF and MTTS of the BT node can be computed as follows
\begin{equation}
MTTF=\frac{\sum_{i=1}^{\left\vert{\mathcal{S}_F}\right\vert} u^F_{i1}\mbox{log}(h^F_{i1})}{\sum_{i=1}^{\left\vert{\mathcal{S}_F}\right\vert}u^F_{i1}}
\label{stoch:main.eq.mttf}
\end{equation}
where:
\begin{equation} 
H^F\triangleq A_F \sum_{i=0}^{\infty} A_T^i.
\label{stoch:main.eq.hf}
\end{equation}
and
\begin{equation}
MTTS=\frac{\sum_{i=1}^{\left\vert{\mathcal{S}_S}\right\vert} u^S_{i1}\mbox{log}(h^S_{i1})}{\sum_{i=1}^{\left\vert{\mathcal{S}_S}\right\vert}u^S_{i1}}
\label{stoch:main.eq.mtts}
\end{equation}
where:
\begin{equation} 
H^S \triangleq A_S \sum_{i=0}^{\infty} A_T^i
\label{stoch:main.eq.hs}
\end{equation}
where $A_T$, $A_F$, and $A_S$ are the submatrices of $A$ corresponding to the canonization described in~\eqref{stoch:Example.eq.MTTSform}, for which the following holds:
\begin{equation}
 A=\left[
\begin{array}{ccc}
A_T & {0}& {0} \\ 
A_F & {0} & {0} \\
A_S & {0} & {0} 
\end{array} \right].
\label{stoch:main.eq.A}
\end{equation}

\end{lemma}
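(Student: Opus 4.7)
The plan is to exploit the canonical block decomposition of the DTMC in~\eqref{stoch:Example.eq.MTTSform} together with the same partition applied to $A$, as recorded in~\eqref{stoch:main.eq.A}. The key device is the ``$\exp$-trick'': because $A_{ij}=\exp(t_{ij})$, the product of entries of $A$ along a directed path in the MRG equals $\exp(\tau)$, where $\tau$ is the \emph{sum} of sojourn times along that path. This turns the essentially additive quantity we want (total time to absorption) into a multiplicative one that is computed by ordinary matrix multiplication, after which a final $\log$ recovers the time.

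First I would observe that the transient block $T$ of the rearranged DTMC is acyclic: markings of the MRG can only grow in the partial order induced by the execution of the BT node (no child already returning Success/Failure can go back to Running). Hence $A_T$ is nilpotent and the Neumann-type series $\sum_{k=0}^{\infty} A_T^k$ in~\eqref{stoch:main.eq.hf} and~\eqref{stoch:main.eq.hs} is in fact a finite sum, and in particular well defined. Next I would compute by induction on $k$ that $(A_T^k)_{ij}$ equals the sum, over all length-$k$ directed MRG paths from $j$ to $i$ through transient states, of $\exp$ of the total time along the path. Multiplying on the left by $A_F$ (respectively $A_S$) appends a final transition into a failure (respectively success) state, so that the entry $h^F_{i1}$ of $H^F=A_F\sum_{k\ge 0}A_T^k$ is the sum over all MRG paths from the initial state $1$ to the absorbing failure state $i$ of $\exp(\text{total path time})$. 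Because the MRG of a BT node has a tree-like structure rooted at $\mathbf m_0$, for each absorbing $i$ this sum reduces to a single term, and therefore $\log(h^F_{i1})$ is exactly the (deterministic) time to reach the $i$-th failure absorbing state along the unique path from state $1$.

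To close the argument I would combine the times $\log(h^F_{i1})$ with the corresponding absorption probabilities $u^F_{i1}$, which are the entries of $R_F(I-T)^{-1}$ computed from the canonized DTMC in~\eqref{stoch:Example.eq.MTTSform}, so that $u^F_{i1}=\Pr(\text{absorbed in failure state }i\mid \text{start in }1)$. By the total expectation formula, conditioning on which failure state is reached gives
\begin{equation*}
\mathrm{MTTF}=\mathbb E[T\mid \text{absorb in }\mathcal S_F]=\frac{\sum_{i=1}^{|\mathcal S_F|}\Pr(\text{absorb in }i)\,\mathbb E[T\mid \text{absorb in }i]}{\sum_{i=1}^{|\mathcal S_F|}\Pr(\text{absorb in }i)},
\end{equation*}
which matches~\eqref{stoch:main.eq.mttf} once the identifications $u^F_{i1}=\Pr(\text{absorb in }i)$ and $\log(h^F_{i1})=\mathbb E[T\mid \text{absorb in }i]$ are in place. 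The same argument, verbatim, with $S$ in place of $F$, yields~\eqref{stoch:main.eq.mtts}. The main obstacle I expect is justifying that each absorbing leaf of the MRG is reached by a single directed path, so that $\log$ of the sum in $h^F_{i1}$ collapses to the sum of the individual transit times $t_{ij}$ along that path; this requires a careful appeal to the acyclic, tree-like structure of the MRGs illustrated in Figs.~\ref{stoch:app.fig.seq}--\ref{stoch:app.fig.par}, and a check that the average sojourn times~\eqref{stoch:app.eq.sj} are indeed the $t_{ij}$ used in the definition of $A$.
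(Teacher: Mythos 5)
Your proposal is correct and follows essentially the same route as the paper's own (much terser) proof: the fundamental-matrix absorption probabilities $u^F_{i1}=\bigl(R_F\sum_{k} T^k\bigr)_{i1}$ weight the per-absorbing-state transit times $\log(h^F_{i1})$, and the stated formula is then the conditional expectation of the time to absorption given failure (resp.\ success), exactly as in your total-expectation step. The one point you flag as an obstacle --- that $\log(h^F_{i1})$ equals a sum of sojourn times only if each absorbing marking is reached by a unique directed path in the MRG --- is genuine and is not addressed in the paper's proof either; it holds for Sequence and Fallback nodes, whose MRGs are chains, but requires care for Parallel nodes, where several orderings of the children's returns can lead to the same absorbing marking.
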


\begin{proof}
Failure and success states are absorbing, hence we focus our attention on the probability of leaving a transient state, described by the matrix $U$, defined below:

\begin{equation}
U=\sum_{k=0}^\infty  T^i,
\label{stoch:bg.eq.u}
\end{equation}
Thus, considering $i$ as the initial transient state, the entries $u_{ij}$ is the mean number of visits of $j$ starting from $i$ before 
being absorbed, we have to distinguish the case in which the absorbing state is a failure state from the case in which it is a success state:
\vspace{-8pt}
\begin{eqnarray} 
U^F\triangleq R_F U
\label{stoch:mtt.eq.uf} \\
 U^S\triangleq R_S U .
\label{stoch:mtt.eq.us} 
\end{eqnarray}
Equations~\eqref{stoch:mtt.eq.uf} and~\eqref{stoch:mtt.eq.us} represent the mean number of visits before being absorbed in a failure or success state respectively.

To derive MTTF/MTTS we take into account the mean time needed to reach every single failure/success state with its probability, normalized over the probability of reaching any failure (success) state, starting from the initial state. Hence we sum the probabilities of reaching a state starting from the initial one, taking into account only the first column of the matrices obtaining \eqref{stoch:main.eq.mttf} and \eqref{stoch:main.eq.mtts}.
\end{proof}
\begin{remark}
Since there are no self loops in the transient state of the DTMC above, the matrix $T$ is nilpotent. Hence $u_{ij}$ is finite $\forall i,j$.
\end{remark}

\subsection{Probabilities Over Time} 
Since all the marking of a BT node have a non null corresponding average sojourn time, the corresponding DTMC is a EMC of a CTMC with infinitesimal generator matrix $Q(t)$ as defined in \eqref{stoch:bg.eq.infGen}. Hence, we can compute the probability distribution over time of the node according to~\eqref{stoch:bg.eq.cauchy} with the initial condition $\pi_0=[1 \;\mathbf{0}]^\top$ that represents the state in which none of the children have returned Success/Failure yet.


\subsection{Stochastic Execution Times}

\begin{proposition}
Given a SBT, with known probabilistic parameters for actions and conditions, we can compute probabilistic measures for the rest of the tree as follows:
For each node whose children have known probabilistic measures we compute the related DTMC. Now the probability of a node to return Success $p_s(t)$ (Failure $p_f(t)$) is given by the sum of the probabilities of the DTMC of being in a success (failure) state.
Let $\mathcal{S}_S\subset \mathcal{S}_A$, and $\mathcal{S}_F\subset \mathcal{S}_A$ be the set of the success and failure states respectively of a DTMC related to a node, i.e. those states representing a marking in which the node returns Success or Failure, with $\mathcal{S}_F \cup \mathcal{S}_S =\mathcal{S}_A$ and $\mathcal{S}_F \cap \mathcal{S}_S =\emptyset$. 

Then we have
\begin{eqnarray}
\bar p_s(t)=\displaystyle \sum_{i:s_i\in \mathcal{S}_S} \pi_i(t) \label{stoch:main.eq.ps} \\
\bar p_f(t)=\displaystyle \sum_{i:s_i\in \mathcal{S}_F} \pi_i(t)
\end{eqnarray}
where $\pi(t)$ is the probability vector of the DTMC related to the node (i.e. the solution of~\eqref{stoch:bg.eq.cauchy}). 
The time to succeed (fail) for a node is given by a random variable with exponential distribution 
and rate given by the inverse of the MTTS (MTTF) since for such random variables the mean time is given by the inverse of the rate.
\begin{eqnarray}
\mu=MTTS^{-1}\\
\nu=MTTF^{-1}
\label{stoch:main.eq.nu}
\end{eqnarray}
\end{proposition}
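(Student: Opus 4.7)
The plan is to proceed by structural induction on the BT, moving from the leaves upward along the tree, using the DTMC/CTMC machinery developed in Sections~\ref{stoch:sec:SBTtoDTCM}--\ref{stoch:sec:reli}. The base case is immediate: by Assumptions~\ref{stoch:ass:SBT:ac} and~\ref{stoch:ass:SBT:cond}, every leaf (action or condition) comes equipped with $p_s(t)$, $p_f(t)$, $\mu$, and $\nu$, so the probabilistic parameters needed to start the recursion are available by hypothesis.

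For the inductive step, I would consider an internal control-flow node $v$ (Sequence, Fallback, or Parallel) and assume the inductive hypothesis: every child of $v$ already has well-defined quantities $(p_{si},p_{fi},\mu_i,\nu_i)$ and an associated exponential model for the time-to-Success/Failure. Using the construction of Section~\ref{stoch:sec:SBTtoDTCM}, I would first build the MRG of $v$ from its feasibility set $\mathcal{F}(\mathbf{m})$ (using $\mathcal{F}_{FB}$, $\mathcal{F}_S$, or $\mathcal{F}_P$ depending on the node type), then turn it into a DTMC whose off-diagonal transition probabilities $p_{ij}$ are the $\tilde p_{sh},\tilde p_{fh}$ formulas derived from the children's parameters. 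Since by construction the absorbing states split cleanly into $\mathcal{S}_S$ and $\mathcal{S}_F$ (corresponding exactly to the markings in which $v$ itself would return Success or Failure under Definitions~\ref{bts:def.seq}--\ref{bts:def:parallel}), the total probability of $v$ having returned Success at time $t$ is the sum of $\pi_i(t)$ over $i$ with $s_i\in\mathcal{S}_S$, and analogously for Failure. This gives exactly \eqref{stoch:main.eq.ps} and its Failure counterpart.

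To obtain the time-evolution $\pi(t)$, I would invoke the average sojourn-time lemma \eqref{stoch:app.eq.sj} to upgrade the DTMC to a CTMC via \eqref{stoch:bg.eq.infGen}, solve the Cauchy problem \eqref{stoch:bg.eq.cauchy} with initial condition $\pi_0=[1\;\mathbf{0}]^\top$, and read off $\bar p_s(t)$ and $\bar p_f(t)$. For $\mu$ and $\nu$, I would apply the MTTS/MTTF formulas \eqref{stoch:main.eq.mtts} and \eqref{stoch:main.eq.mttf} to the canonized transition matrix \eqref{stoch:Example.eq.MTTSform}, and then set $\mu=MTTS^{-1}$, $\nu=MTTF^{-1}$, as the mean of an exponential random variable is the reciprocal of its rate. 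This closes the induction: the node $v$ now satisfies the same form of probabilistic characterization (stochastic in the sense of Definition~\ref{stoch:def:stoc}) as was assumed for its children, so the recursion can be applied to the parent of $v$, and so on up to the root.

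The main obstacle is the last step, which is really a modelling choice rather than a derivation: the exact time-to-absorption distribution arising from the CTMC above is a phase-type distribution, not a single exponential, so fitting it by the exponential with rate $MTTS^{-1}$ (respectively $MTTF^{-1}$) is an approximation needed to keep the recursion closed within the class of stochastic/hybrid actions of Definitions~\ref{stoch:def:stoc}--\ref{stoch:def:hyb}. I would therefore stress this explicitly, noting that $\bar p_s(t)$ and $\bar p_f(t)$ computed from $\pi(t)$ are exact, while the pair $(\mu,\nu)$ is the moment-matched exponential approximation used to propagate the analysis to the next level, consistent with the treatment of deterministic and hybrid leaves earlier in the chapter. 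A secondary but routine obstacle is verifying that the matrix series in \eqref{stoch:main.eq.hf}--\eqref{stoch:main.eq.hs} converges; this follows from the nilpotency of $T$ noted in the remark after \eqref{stoch:bg.eq.u}, since the transient part of the BT DTMC has no self-loops (each firing event strictly changes some entry of the marking vector).
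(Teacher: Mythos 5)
Your proof is correct and follows essentially the same route as the paper, which gives no separate proof of this proposition but relies on exactly the recursive leaf-to-root application of the MRG/DTMC/CTMC machinery of Sections~\ref{stoch:sec:SBTtoDTCM}--\ref{stoch:sec:reli} that you spell out. Your explicit observation that the time-to-absorption is phase-type, so that $\mu=MTTS^{-1}$, $\nu=MTTF^{-1}$ is a moment-matched exponential approximation needed to close the recursion (while $\bar p_s(t)$ and $\bar p_f(t)$ computed from $\pi(t)$ remain exact), is a point the paper leaves implicit and is worth stating.
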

\begin{remark}
Proposition~\ref{stoch:main.eq.nu} holds also for deterministic and hybrid BTs, as \eqref{stoch:bg.eq.cauchy} has a unique solution in the Carath\'eodory sense~\cite{filippov1988differential}.

\end{remark}
\subsection{Deterministic Execution Times}
\label{stoch:PS.numdet}


As the formulation of the deterministic case involves Dirac delta functions, see Equation (\ref{stoch:eq:d1})-(\ref{stoch:eq:d2}), the approach described above might lead to computational difficulties.
As an alternative, we can take advantage of the fact that we know the exact time of possible transitions.
Thus, the success and failure probabilities of a deterministic node are unchanged in the intervals between the $MTTF$ and $MTTS$ of its children. 

\begin{example}
 Consider the BT 
\begin{equation}
\mathcal{T}=\mbox{Fallback}(\mathcal{A}_1,\mathcal{A}_2)
\end{equation}
depicted in Figure~\ref{stoch:res.fig.BTdet} and let $\tau_{Fi}$ ($\tau_{Si}$) be the $MTTF$ ($MTTS$) of action $i$ and $p_{fi}$ ($p_{si}$) its probability to fail (succeed). The success/failure probability over time of the tree $\mathcal{T}$ is a discontinuous function depicted in Figure~\ref{stoch:res.fig.probdet}. 
\end{example} 

\begin{figure}[h]
\centering
\includegraphics[width=0.5\columnwidth]{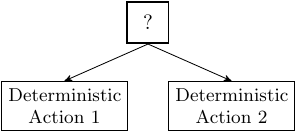}
\caption{Example of a Fallback node with two deterministic actions/subtrees.}
\label{stoch:res.fig.BTdet}
\end{figure}

\begin{figure}[h]
\centering
\includegraphics[width=0.8\columnwidth]{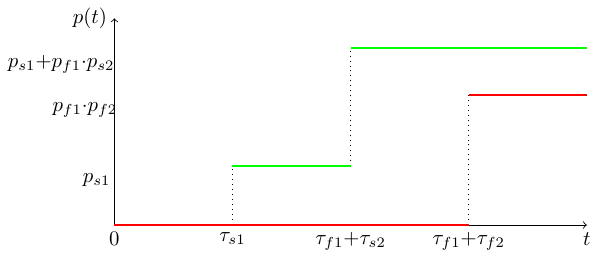}
\caption{Failure (red, lower) and success (green, upper) probability of the deterministic node of example. The running probability is the complement of the other two (not shown). 
}
\label{stoch:res.fig.probdet}
\end{figure}

Hence the success and failure probability have discrete jumps over time. These piece-wise continuous functions can be described by the discrete time system~\eqref{stoch:bg.eq.discrete} introducing the information of the time when the transitions take place, which is more tractable than directly solving~\eqref{stoch:bg.eq.cauchy}. Then, the calculation of $\pi(t)$ is given by a zero order hold of the discrete solution.

\begin{proposition}
\label{stoch:ps.prop.det}
Let $P$ be the one-step transition matrix given in Definition~\ref{stoch:bg.def.p} and let $\tau_{Fi}$ ($\tau_{Si}$) be the time to fail (succeed) of action $i$ and $ p_{fi}$ ($  p_{si}$) its probability to fail (succeed). Let $\tilde \pi(\tau)=[\tilde \pi_1(\tau),\ldots, \tilde \pi_{\left\vert{\mathcal{S}}\right\vert}(\tau)]^\top$, where $ \tilde \pi_i(\tau)$ is the probability of being in a marking $\mathbf{m}_i$ at time $\tau$ of a MRG representing a deterministic node with $N$ children, let $\tilde P(\tau)$ be a matrix which entries $\tilde p_{ij}(\tau)$ are defined as:
\begin{equation}
\tilde p_{ij}(\tau) = \begin{cases} p_{ij} \cdot \delta(\tau-(log (\tilde a_{j1})) &\mbox{if } i\neq j \\ 
1- \displaystyle\sum_{k \neq i} \tilde p_{ik} & \text{otherwise}  \end{cases} 
\label{stoch:ps.eq.ptilde}
\end{equation}
with $\tilde a_{ij}$ the $ij$-th entry of the matrix $\tilde A$ defined as:
\begin{equation}
\tilde A \triangleq \sum_{i=0}^{\infty} A^i
\label{stoch:ps.eq.atilde}
\end{equation}
with $A$ as defined in~\eqref{stoch:main.eq.A}.

Then the evolution of $ \tilde \pi(k)$ process can be described as a discrete time system with the following time evolution:
\begin{equation}
\label{stoch:tildepi}
\tilde \pi(\tau +{\Delta \tau}) = \tilde P(\tau)^\top\tilde \pi(\tau) 
\end{equation}
where $\Delta \tau$ is the common factor of $\{\tau_{F1},\tau_{S1},\tau_{F2},\tau_{S2},\ldots,\tau_{FN},\tau_{SN}\}$
Then for, deterministic nodes, given $\tilde \pi(\tau)$ the probability over time is given by:
\begin{equation}
\label{stoch:ps.eq.pdet}
\pi(t) = \mbox{ZOH}(\tilde \pi(\tau))
\end{equation}
where ZOH is the zero order hold function.
\end{proposition}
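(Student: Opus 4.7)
The plan is to prove Proposition~\ref{stoch:ps.prop.det} by exploiting the piecewise-constant nature of $\pi(t)$ for deterministic nodes and showing that the discrete-time recursion~\eqref{stoch:tildepi} sampled at the transition instants, followed by a zero-order hold, reconstructs $\pi(t)$ exactly. The key observation is that, under Definition~\ref{stoch:def:det}, each child $i$ can only return Success at time $\tau_{Si}$ or Failure at time $\tau_{Fi}$, and no transition in the DTMC of the node can occur at any other instant. Hence, between two such instants, $\pi(t)$ is constant, and the only information we need to recover $\pi(t)$ is (i) the set of instants at which discontinuities occur and (ii) the jump magnitudes, which are exactly the entries of $P$.

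First, I would unpack the algebraic role of the matrix $A$ defined in~\eqref{stoch:main.eq.A}. The $ij$-th entry $\exp(t_{ij})$ encodes the single-step transition time multiplicatively rather than additively, so that the product $a_{ij} a_{jk} = \exp(t_{ij} + t_{jk})$ encodes the total time along the two-step path $k \to j \to i$. By induction, the $(i,j)$-entry of $A^n$ is a sum (over directed paths of length $n$ from $j$ to $i$ in the MRG) of terms of the form $\exp(\text{total path time})$. Since the MRG of a BT node is acyclic (a child never un-returns), only finitely many powers of $A$ are nonzero, so the series~\eqref{stoch:ps.eq.atilde} converges and is well defined. Taking the first column of $\tilde A$ singles out paths starting at the initial marking $\mathbf{m}_0$, and $\log(\tilde a_{j1})$ therefore gives the (unique) time $t_j$ at which marking $\mathbf{m}_j$ is reached from $\mathbf{m}_0$; uniqueness is another consequence of the acyclicity of the MRG together with the fact that every path to a given marking $\mathbf{m}_j$ corresponds to the same sequence of child terminations and hence the same elapsed time.

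Second, I would verify that $\tilde P(\tau)$ as defined in~\eqref{stoch:ps.eq.ptilde} implements exactly the transitions of the DTMC of Section~\ref{stoch:sec:SBTtoDTCM}, restricted to the correct firing times. The Dirac delta $\delta(\tau - \log \tilde a_{j1})$ ensures that the transition into $\mathbf{m}_j$ can only fire at the instant $\tau = t_j$; the multiplicative weight $p_{ij}$ reproduces the one-step transition probability obtained in Section~\ref{stoch:sec:SBTtoDTCM}; the diagonal correction ensures $\sum_j \tilde p_{ij} = 1$ so that $\tilde\pi(\tau)$ remains a probability vector. Then I would argue that~\eqref{stoch:tildepi}, iterated with step $\Delta\tau$ equal to a common divisor of $\{\tau_{Fi},\tau_{Si}\}_{i=1}^N$, samples the probability vector exactly at each instant where a transition can occur (and leaves $\tilde\pi$ unchanged in between, by construction of the diagonal correction). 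Finally, the zero-order hold in~\eqref{stoch:ps.eq.pdet} interpolates the sampled sequence into the piecewise-constant $\pi(t)$ that solves~\eqref{stoch:bg.eq.cauchy} in the Carath\'eodory sense noted in the Remark after Proposition~\ref{stoch:main.eq.nu}.

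The main obstacle I expect is making the $\log/\exp$ encoding of path times rigorous in the presence of diagonal self-loops in $A$ (the diagonal of $A$ is $\exp(0) = 1$, which could spoil convergence of $\sum A^i$). I would handle this by observing that the matrix $A$ in~\eqref{stoch:main.eq.A} is constructed from the \emph{canonized} transition graph where diagonal entries of the transient block $A_T$ are zero (only genuine inter-marking transitions carry an $\exp(t_{ij})$), so $A_T$ is strictly upper triangular up to relabeling, hence nilpotent, and $\sum_{i=0}^\infty A_T^i$ terminates after finitely many terms. A second, more technical point is to justify treating the product of Dirac deltas in iterated applications of $\tilde P(\tau)$ as an ordinary discrete-time update: this is where restricting $\tau$ to the sampling grid $\{k\,\Delta\tau\}$ with $\Delta\tau$ the common divisor is essential, because it collapses each $\delta$ firing to a Kronecker-style event and makes~\eqref{stoch:tildepi} a bona fide linear recursion, whose solution, after zero-order hold, coincides with the unique Carath\'eodory solution of~\eqref{stoch:bg.eq.cauchy}.
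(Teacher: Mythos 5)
Your proof is correct and follows essentially the same route as the paper, whose own proof is a one-line remark that the evolution is piecewise constant and $\Delta\tau$ is the common factor of all the step instants. Your additional detail on the $\exp/\log$ encoding of path times, the nilpotency of $A_T$ ensuring convergence of $\sum_i A^i$, and the collapse of the Dirac deltas to a discrete recursion on the grid $\{k\,\Delta\tau\}$ simply fills in the steps the paper declares trivial.
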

\begin{proof}
The proof is trivial considering that \eqref{stoch:tildepi} is a piece-wise constant function and $\Delta \tau$ is the common faction of all the step instants.
\end{proof}

\section{Examples}
\label{properties:sec:example:reliability}
In this section, we present three examples.
The first  example is the BT in Figure~\ref{stoch:Example.Tree}, which is fairly small and allows us to show the details of each step.
The second example is the deterministic time version of the same BT, illustrating the differences between the two cases.
The third example involves a more complex BT, shown in Figure \ref{stoch:res.fig.BT}.
This example will be used to verify the approach numerically, by performing Monte Carlo simulations and comparing the numeric results to the analytical ones, see Table \ref{stoch:res.tab.param} and Figure \ref{stoch:res.fig.comparisondet}. It is also used to illustrate the difference in performance metrics, between two equivalent BTs, see Figure \ref{stoch:res.fig.comparison}.

We will now carry out the computation of probabilistic parameters for an example SBT.
\label{stoch:Example.computed}
\begin{figure}
        \centering
        \begin{subfigure}[b]{0.4\columnwidth}
                \centering
                \includegraphics[width=\columnwidth]{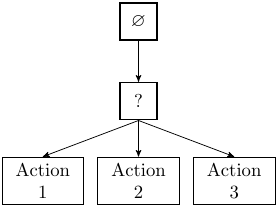}
                \caption{BT of example.}
                \label{stoch:Example.Tree}
        \end{subfigure}%
        ~ 
        \begin{subfigure}[b]{0.6\columnwidth}
                \centering
                \includegraphics[width=\columnwidth]{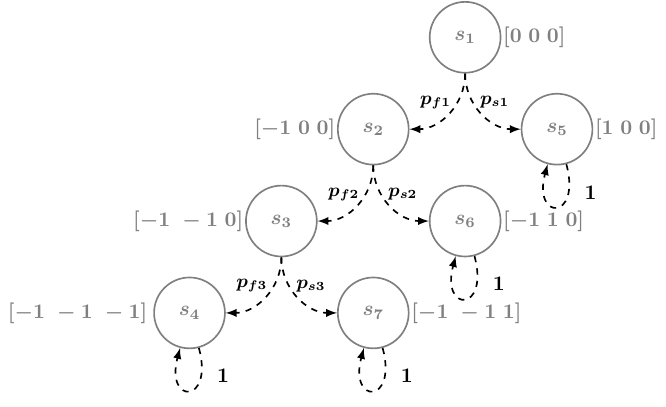}
                \caption{Markov Chain.}   
                \label{stoch:Example.MC}           
        \end{subfigure}
        ~ 
        \caption{BT and related DTMC modeling the plan of Example~\ref{stoch:Example.computed}.}
\end{figure}
\begin{example}
Given the tree shown in Figure~\ref{stoch:Example.Tree}, its probabilistic parameters are given by evaluating the Fallback node, since it is the child of the root node. The given PDF of the $i$-th action are:
\begin{eqnarray}
 \hat p_s(t)&=& p_{s_i} \mu e^{-\mu_i t}\\
  \hat p_f(t)&=& p_{f_i} \nu e^{-\nu_i t}
\end{eqnarray}
where:
\begin{itemize}
\item $p_{f_i}$ probability of failure
\item $p_{s_i}$ probability of success
\item $\nu_i$ failure rate
\item $\mu_i$ success rate
\end{itemize}
The DTMC related as shown in Figure~\ref{stoch:Example.MC} has $\mathcal{S}=\{s_1,s_2,s_3,s_4,s_5,s_6,s_7\}$, $\mathcal{S}_F =\{s_4\}$ and $\mathcal{S}_S =\{s_5,s_6,s_7\}$. 
 
According to the canonization in \eqref{stoch:Example.eq.MTTSform}, the one-step transition matrix is:
\begin{equation}
\label{stoch:app.eq.MTTSform}
 P_c^\top=\left[
\begin{array}{ccccccc}
0 & 0 & 0 & 0 & 0 & 0 & 0 \\ 
p_{f_1} & 0 & 0 & 0 & 0 & 0 & 0 \\ 
0 & p_{f_2} & 0 & 0 & 0 & 0 & 0 \\ 
0 & 0 & p_{f_3} & 1 & 0 & 0 & 0 \\ 
p_{s_1} & 0 & 0 & 0 & 1 & 0 & 0 \\ 
0 & p_{s_2} & 0 & 0 & 0 & 1 & 0 \\ 
0 & 0 & p_{s_3} & 0 & 0 & 0 & 1
\end{array} \right]
\end{equation}

According to \eqref{stoch:app.eq.sj} the average sojourn times are collected in the following vector
\begin{equation}
\label{stoch:app.eq.exSj}
SJ =\left[\begin{array}{c c c}
\frac{p_{s_1}}{\mu_1}+\frac{p_{f_1}}{\nu_1},&\frac{p_{s_2}}{\mu_2}+\frac{p_{f_2}}{\nu_2},&\frac{p_{s_3}}{\mu_3}+\frac{p_{f_3}}{\nu_3} 
\end{array}\right]
\end{equation}

The infinitesimal generator matrix is defined, according to~\eqref{stoch:bg.eq.infGen}, as follows:
\begin{equation}
\label{stoch:app.eq.exQ}
\setlength\arraycolsep{3pt}
Q=\left[
\begin{array}{ccccccc}
\frac{-\mu_1 \nu_1 }{p_{s_1}\nu_1+p_{f_1}\mu_1} & \scriptstyle{0} & \scriptstyle{0}  & \scriptstyle{0}  & \scriptstyle{0}  & \scriptstyle{0}& \scriptstyle{0}  \\[1ex]
\frac{\mu_1 \nu_1 p_{f_1} }{p_{s_1}\nu_1+p_{f_1}\mu_1} & \frac{-\mu_2 \nu_2 }{p_{s_2}\nu_2+p_{f_2}\mu_2} & \scriptstyle{0}  & \scriptstyle{0}  & \scriptstyle{0}  & \scriptstyle{0}  & \scriptstyle{0}  \\ [1ex]
\scriptstyle{0}  & \frac{\mu_2 \nu_2 p_{f_2} }{p_{s_2}\nu_2+p_{f_2}\mu_2} & \frac{-\mu_3 \nu_3 }{p_{s_3}\nu_3+p_{f_3}\mu_3} & \scriptstyle{0}  & \scriptstyle{0}  & \scriptstyle{0}  & \scriptstyle{0}  \\ [1ex]
\scriptstyle{0}  & \scriptstyle{0}  & \frac{\mu_3 \nu_3 p_{f_3} }{p_{s_3}\nu_3+p_{f_3}\mu_3} & \scriptstyle{0}  & \scriptstyle{0}  & \scriptstyle{0}  & \scriptstyle{0}  \\ 
\frac{\mu_1 \nu_1 p_{s_1} }{p_{s_1}\nu_1+p_{f_1}\mu_1} & \scriptstyle{0}  & \scriptstyle{0}  & \scriptstyle{0}  & \scriptstyle{0}  & \scriptstyle{0}  & \scriptstyle{0}  \\ 
\scriptstyle{0}  & \frac{\mu_2 \nu_2 p_{s_2} }{p_{s_2}\nu_2+p_{f_2}\mu_2} & \scriptstyle{0}  & \scriptstyle{0}  & \scriptstyle{0}  & \scriptstyle{0}  & \scriptstyle{0}  \\ 
\scriptstyle{0}  & \scriptstyle{0}  & \frac{\mu_3 \nu_3 p_{s_3} }{p_{s_3}\nu_3+p_{f_3}\mu_3} & \scriptstyle{0}  & \scriptstyle{0}  & \scriptstyle{0}  & \scriptstyle{0} 
\end{array} \right].
\end{equation}
The probability vector, according to~\eqref{stoch:bg.eq.cauchy}, is given by:
\begin{equation}
\pi(t)=\left[\begin{array}{ccccccc}
\pi_1(t)&\!\!\!\pi_2(t)&\!\!\!\pi_3(t)&\!\!\!\pi_4(t)&\!\!\!\pi_5(t)&\!\!\!\pi_6(t)&\!\!\!\pi_7(t)
\end{array}\right]^\top
\end{equation}
We can now derive closed form expression for MTTS and MTTF.
Using the decomposition in~\eqref{stoch:Example.eq.MTTSform}, the matrices computed according \eqref{stoch:mtt.eq.us} and~\eqref{stoch:mtt.eq.uf} are:
\begin{eqnarray}
 U^S&=\left[
\begin{array}{ccc}
p_{s_1} &     0	 & 0 \\ 
p_{f_1}  p_{s_2} &    p_{s_2}&   0\\ 
p_{f_1} p_{f_2} p_{s_3}& p_{f_2} p_{s_3} & p_{s_3}
\end{array} \right]
\\
 U^F&=\left[
\begin{array}{ccc}
 p_{f_1}p_{f_2}p_{f_3}& p_{f_2}p_{f_3} & p_{f_3}
\end{array} \right]
\end{eqnarray}
Note that $U^S$ is a  $3 \times 3$ matrix and $U^S$ is a $1 \times 3$ matrix since there are $3$ transient states, $3$ success state and $1$ failure state.
For action $i$ we define $t_{f_i}= \nu_i^{-1}$ the time to fail and $t_{s_i}= \mu_i^{-1}$ the time to succeed. The non-zero entries of the matrix given by~\eqref{stoch:main.eq.A} are:
\begin{equation}
\begin{split}
a_{2,1}={e}^{t_{f_1}}\;\;\;a_{3,2}={e}^{t_{f_2}}\;\;\;a_{4,3}={e}^{t_{f_3}} \\
a_{5,1}={e}^{t_{s_1}}\;\;\;a_{6,2}={e}^{t_{s_2}}\;\;\;a_{7,3}={e}^{t_{s_3}}
\end{split}
\end{equation}
from which we derive~\eqref{stoch:main.eq.hf} and~\eqref{stoch:main.eq.hs} as:
\begin{eqnarray}
  H^S&=\left[
\begin{array}{ccc}
e^{t_{s_1}}&                0 &        0 \\
e^{t_{f_1}}e^{t_{s_2}}&          e^{t_{s_2}}&        0\\
 e^{t_{f_1}}e^{t_{f_2}}e^{t_{s_3}}& e^{t_{f_2}}e^{t_{s_3}}& e^{t_{s_3}}
\end{array} \right]
\\
  H^F&=\left[
\begin{array}{ccc}
 e^{t_{f_1}}e^{t_{f_2}}e^{t_{f_3}}& e^{t_{f_2}}e^{t_{f_3}}& e^{t_{f_3}}
\end{array} \right]
\end{eqnarray}
Using \eqref{stoch:main.eq.mttf} and~\eqref{stoch:main.eq.mtts} we obtain the MTTS and MTTF. 
Finally, the probabilistic parameters of the tree are expressed in a closed form according to \eqref{stoch:main.eq.ps}-\eqref{stoch:main.eq.nu}:
\begin{eqnarray}
\bar p_s(t)=\!\!\!&\pi_5(t)+\pi_6(t)+\pi_7(t) \\
\bar p_f(t)=\!\!\!&\pi_4(t) \\
\mu=\!\!\!&\frac{p_{s_1}+p_{f_1}p_{s_2}+p_{f_1}p_{f_2}p_{s_3}}{p_{s_1}t_{s_1}+p_{f_1}p_{s_2}(t_{f_1}+t_{s_2})+p_{f_1}p_{f_2}p_{s_3}(t_{f_1}+t_{f_2}+t_{s_3})}\\
\nu=\!\!\!&\frac{1}{t_{f_1}+t_{f_2}+t_{f_3}}
\end{eqnarray}

\end{example}

\begin{example}
Consider the BT given in Example~\ref{stoch:Example.computed}, we now compute the performances in case when the actions are all deterministic.

The computation of MTTF and MTTS follows from Example~\ref{stoch:Example.computed}, whereas the computation of $\pi(t)$ can be made according to Proposition~\ref{stoch:ps.prop.det}.

According to~\eqref{stoch:ps.eq.atilde} the matrix $\tilde A$ takes the form below

\begin{equation}
 \tilde A=\left[
\begin{array}{ccccccc}
0 & 0 & 0 & 0 & 0 & 0 & 0 \\ 
 e^{t_{f_1}} & 0 & 0 & 0 & 0 & 0 & 0 \\ 
 e^{t_{f_1}} e^{t_{f_2}}& e^{t_{f_2}} & 0 & 0 & 0 & 0 & 0 \\ 
 e^{t_{f_1}}e^{t_{f_2}} e^{t_{f_3}} & e^{t_{f_2}} e^{t_{f_3}} & e^{t_{f_3}} & 0 & 0 & 0 & 0 \\ 
e^{t_{s_1}} & 0 & 0 & 0 & 0 & 0 & 0 \\ 
 e^{t_{f_1}}e^{t_{s_2}} & e^{t_{s_2}} & 0 & 0 & 0 & 0 & 0 \\ 
 e^{t_{f_1}}e^{t_{f_2}}e^{t_{s_3}} & e^{t_{f_2}}e^{t_{s_3}} & e^{t_{s_3}} & 0 & 0 & 0 & 0
\end{array} \right]
\end{equation}
thereby, according to~\eqref{stoch:ps.eq.ptilde}, the modified one step transition matrix takes the form of
Figure~\ref{stoch:eq:largefig},
\begin{landscape}
\begin{figure}\centering
\setlength{\tabcolsep}{1em}
\vfill\hfill $\tiny
\left[
\begin{array}{ccccccc}
1-(p_{f_1}\delta(t\minus{}t_{f_1})\plus{}p_{s_1}\delta(t\minus{}t_{s_1})) & 0 & 0 & 0 & 0 & 0 & 0 \\ 
p_{f_1}\delta(t\minus{}t_{f_1}) & 1-(p_{f_2}\delta(t\minus{}(t_{f_1}\plus{}t_{f_2}))\plus{}p_{s_2}\delta(t\minus{}(t_{f_1}\plus{}t_{s_2}))) & 0 & 0 & 0 & 0 & 0 \\ 
0 & p_{f_2}\delta(t\minus{}(t_{f_1}\plus{}t_{f_2})) & 1-(p_{f_3}\delta(t\minus{}(t_{f_1}\plus{}t_{f_2}\plus{}t_{f_3}))\plus{}p_{s_3}\delta(t\minus{}(t_{f_1}\plus{}t_{f_2}\plus{}t_{s_3}))) & 0 & 0 & 0 & 0 \\ 
0 & 0 & p_{f_3}\delta(t\minus{}(t_{f_1}\plus{}t_{f_2}\plus{}t_{f_3})) & 1 & 0 & 0 & 0 \\ 
p_{s_1}\delta(t\minus{}t_{s_1}) & 0 & 0 & 0 & 1 & 0 & 0 \\ 
0 & p_{s_2}\delta(t\minus{}(t_{f_1}\plus{}t_{s_2})) & 0 & 0 & 0 & 1 & 0 \\ 
0 & 0 & p_{s_3}\delta(t\minus{}(t_{f_1}\plus{}t_{f_2}\plus{}t_{s_3})) & 0 & 0 & 0 & 1
\end{array} \right]$ 
\vfill
 \captionsetup{justification=centering}
\caption{Modified one step transition matrix $\tilde P^\top$.}
 \label{stoch:eq:largefig}
\end{figure}
\end{landscape}
and the probability vector $\pi(t)$ is given by~\eqref{stoch:ps.eq.pdet}.
\end{example}

Below we present a more complex example, extending Example~\ref{stoch:Example.computed} above.
We use this example for two purposes, first, to verify the correctness of the proposed approach  using Monte Carlo simulations, and second, to illustrate how changes in the SBT lead to different performance metrics.

\begin{example}
\label{stoch:PS.ex.complex}
The task given to a two armed robot is to find and collect objects which can be found either on the floor, in the drawers or in the closet. The time needed to search for a desired object on the floor is less than the time needed to search for it in the drawers, since the latter has to be reached and opened first. On the other hand, the object is more likely to be in the drawers than on the floor, or in the closet. Moreover, the available policies for picking up objects are the one-hand and the two-hands grasps. The one-hand grasp most likely fails, but it takes less time to check if it has failed or not. Given these options, the task can be achieved in different ways, each of them corresponding to a different performance measure.
The plan chosen for this example is modeled by the SBT shown in Figure~\ref{stoch:res.fig.BT}. 

The performance estimates given by the proposed approach for the whole BT, as well as for two sub trees can be seen in Figures~\ref{stoch:res.fig.7}-\ref{stoch:res.fig.35} .
\end{example}

\begin{figure}[h]
\centering
\includegraphics[width=\columnwidth]{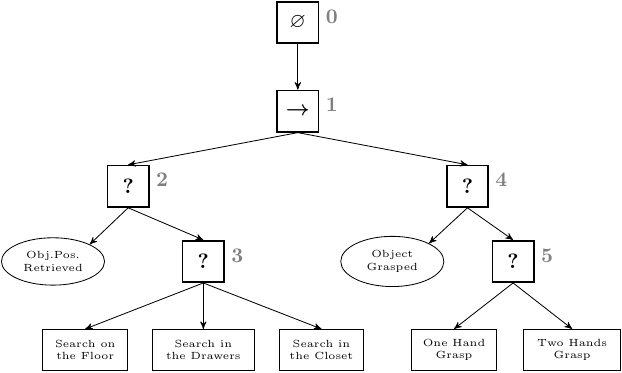}
\caption{BT modeling the search and grasp plan. The leaf nodes are labeled with a text, and the control flow nodes are labeled with a number, 
for easy reference.}
\label{stoch:res.fig.BT}
\end{figure}

We also use the example above to verify the correctness of the analytical estimates, and the results  can be seen in Table~\ref{stoch:res.tab.param}.
We compared the analytical solution derived using our approach with  numerical results given by a massive Monte Carlo simulation carried out using a BT implementation in the Robot Operative System (ROS)~\cite{Marzinotto14} where actions and conditions are performed using ROS nodes with outcomes computed using the \verb!C++! 
random number generator with exponential distribution. The BT implementation in ROS was run approximately 80000 times to have enough samples to get numerical averages close to the true values.
For each run we stored if the tree (and some subtrees) succeeded or failed and how long it took, allowing us to estimate $\mu$, $\nu$, $p_s(t)$, $p_f(t)$ experimentally. The match is reported in Figures~\ref{stoch:res.fig.7}-\ref{stoch:res.fig.35} and in Table~\ref{stoch:res.tab.times}. 
As can be seen, all estimates are within  0.18 \% of the analytical results.

\begin{table}[h]
\centering
  \begin{tabular}{| l | c | c | c |}\hline
    Measure & Analytical & Numerical & Relative Error \\ \hline \hline 
      $\mu_{0}$ & $\num{5.9039e-3}$ & $\num{5.8958e-3}$ & 0.0012 \\ \hline 
      $\nu_{0}$ & $\num{4.4832e-3}$ & $\num{4.4908e-3}$ & 0.0017 \\ \hline 
      $\mu_3$ &  $\num{6.2905e-3}$ & $\num{6.2998e-3}$  & 0.0014\\ \hline 
      $\nu_3$ & $\num{2.6415e-3}$ & $\num{2.6460e-3}$ & 0.0017 \\ \hline 
      $\mu_5$ & $\num{9.6060e-2}$ & $\num{9.5891e-2}$ &0.0018  \\ \hline 
      $\nu_5$ & $\num{4.8780e-2}$ & $\num{4.8701e-2}$ &0.0016 \\  
               \hline
  \end{tabular}
  \caption{Table comparing numerical and experimental results of MTTF and MTTS. The labels of the subscripts are given in Figure~\ref{stoch:res.fig.BT}}
  \label{stoch:res.tab.times}	
  \end{table}
\begin{table}[h]
\centering
  \begin{tabular}{| l | c | c | c | c |}\hline
    Label & $\mu$ & $\nu$ & $p_s(t)$ & $p_f(t)$\\ \hline \hline
    Obj. Pos. Retrieved & $-$ & $-$ & $p_{s5}(t)$ & $p_{f5}(t)$\\ \hline
    Object Grasped & $-$ & $-$ & $p_{s4}(t)$ & $p_{f4}(t)$\\ \hline
    Search on the Floor & $0.01$ & $0.0167$ & $0.3$ & $0.7$\\ \hline
    Search in the Drawer & $0.01$ & $0.01$ & $0.8$ & $0.2$\\ \hline
    Search in the Closet  & $0.005$ & $0.0056$ & $0.2$ & $0.8$\\ \hline
    One Hand Grasp & $0.1$ & $20$ & $0.1$ & $0.9$\\ \hline
    Two Hands Grasp & $0.1$ & $0.05$ & $0.5$ & $0.5$\\ 
    \hline
  \end{tabular}
  \caption{Table collecting given parameters, the labels of the control flow nodes are given in Figure~\ref{stoch:res.fig.BT}.}
  \label{stoch:res.tab.param}	
  \end{table}

\begin{figure}[!h]
\centering
\includegraphics[width=1.0\columnwidth]{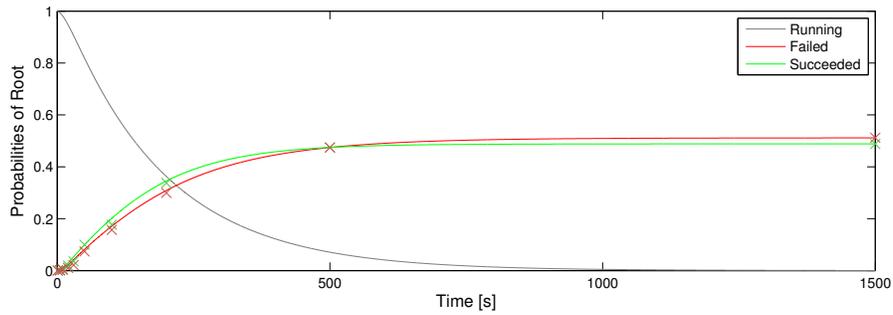}
\caption{Probability distribution over time for the Root node of the larger BT in Figure~\ref{stoch:res.fig.BT}.  Numerical results are marked with an 'x' and analytical results are drawn using solid lines. Note how the failure probability is initially lower, but then becomes higher than the success probability after $t=500$.}
\label{stoch:res.fig.7}
\end{figure}

\begin{figure}[h]
\centering
        \begin{subfigure}[h]{0.8\columnwidth}
                \includegraphics[width=0.8\columnwidth]{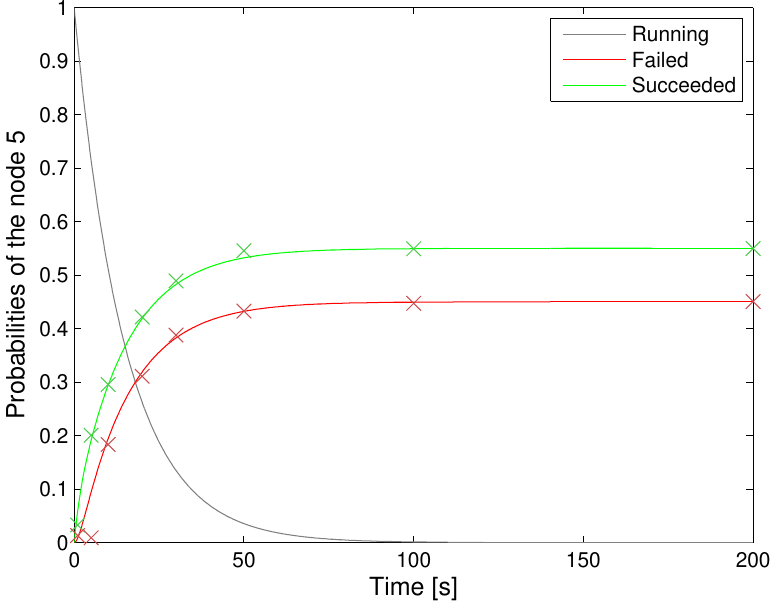}
                \caption{Node 5}
                \vspace{0.3cm}
                \label{stoch:res.fig.5}
        \end{subfigure}
        \begin{subfigure}[h]{0.8\columnwidth}
                \includegraphics[width=0.8\columnwidth]{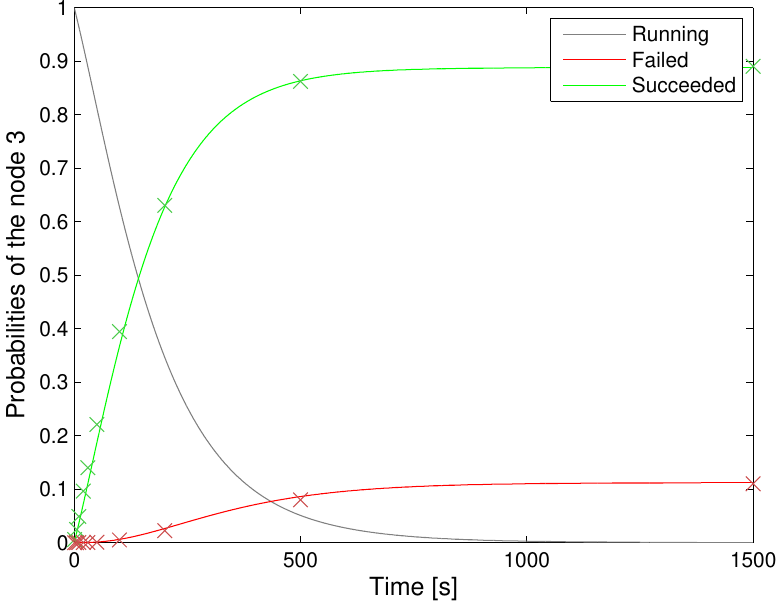}
                \caption{Node 3}   
                \label{stoch:res.fig.3}           
        \end{subfigure}
        ~ 
        \caption{Comparison of probability distribution over time related to  Node 5 (a) and Node 3 (b).
        Numerical results are marked with an 'x' and analytical results are drawn using solid lines. The failure probabilities are lower in both plots.}
        \label{stoch:res.fig.35}
\end{figure}

To further illustrate the difference between modeling the actions as deterministic and stochastic,
we again use the BT in Figure \ref{stoch:res.fig.BT} and
compute the
 accumulated Succes/Failure/Running probabilities for the two cases.
Defining the time to succeed and fail as the inverse of the given rates and computing the probabilities as described in Section~\ref{stoch:PS.numdet}
we get the results depicted in 
 Figures~\ref{stoch:res.fig.comparisondet} and \ref{stoch:res.fig.35det}.   
 As can be seen, the largest deviation is found in the Failure probabilities. In the stochastic case the CDF rises instantly, whereas in the deterministic case it becomes non-zero only after all the Fallbacks in at least one of the two subtrees have failed. 

\begin{figure}[!h]
\centering
\includegraphics[width=1.0\columnwidth]{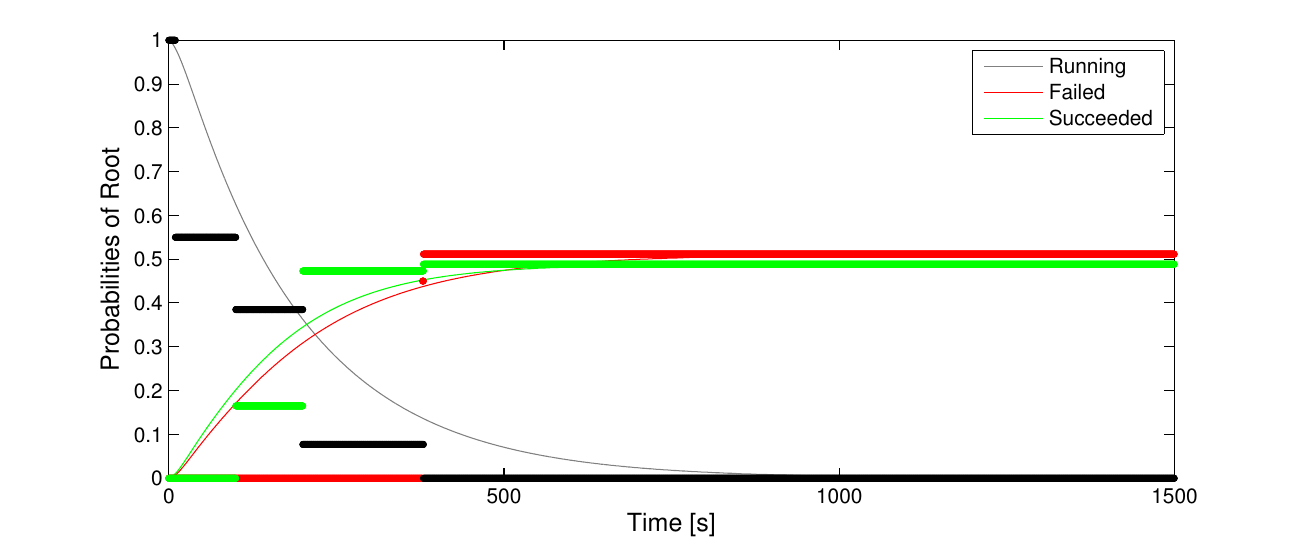}
\caption{Comparison of Success/Failure/Running probabilities of the root node in the case of deterministic times (thick) and stochastic times (thin). }
\label{stoch:res.fig.comparisondet}
\end{figure}

\begin{figure}[h]
\centering
        \begin{subfigure}[h]{0.8\columnwidth}
                \centering
                \includegraphics[width=0.75\columnwidth]{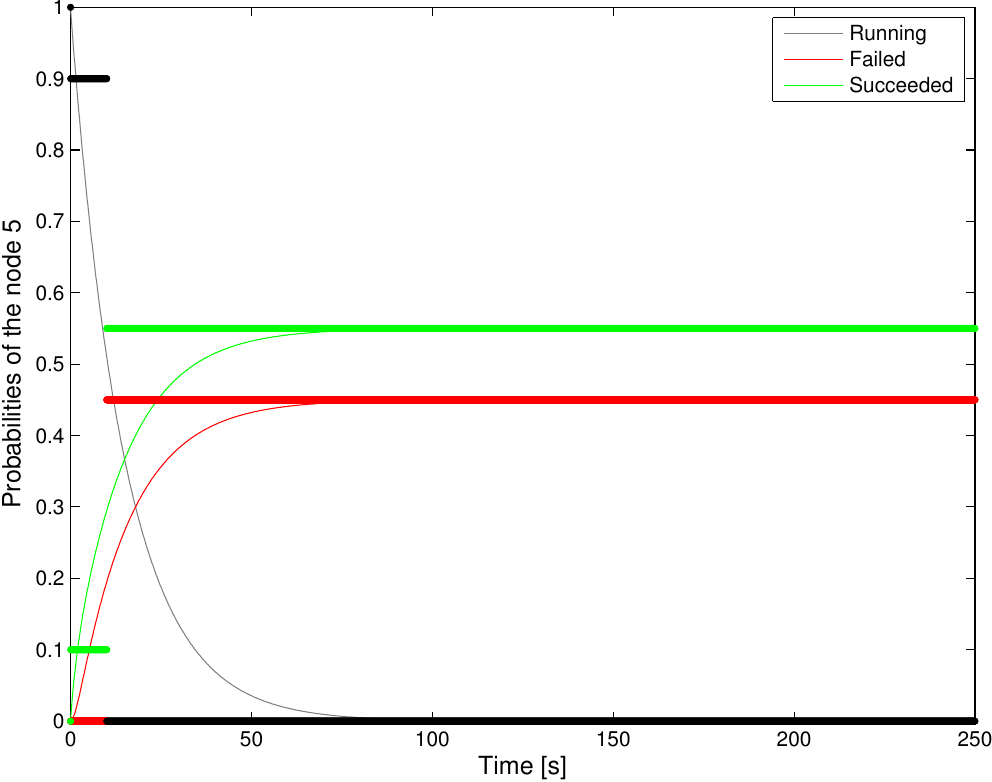}
                \caption{Node 5}
                 \vspace{0.3cm}
                \label{stoch:res.fig.5compar}
        \end{subfigure}%
        \vspace{0.1cm}
        \begin{subfigure}[h]{0.8\columnwidth}
                \centering
                \includegraphics[width=0.75\columnwidth]{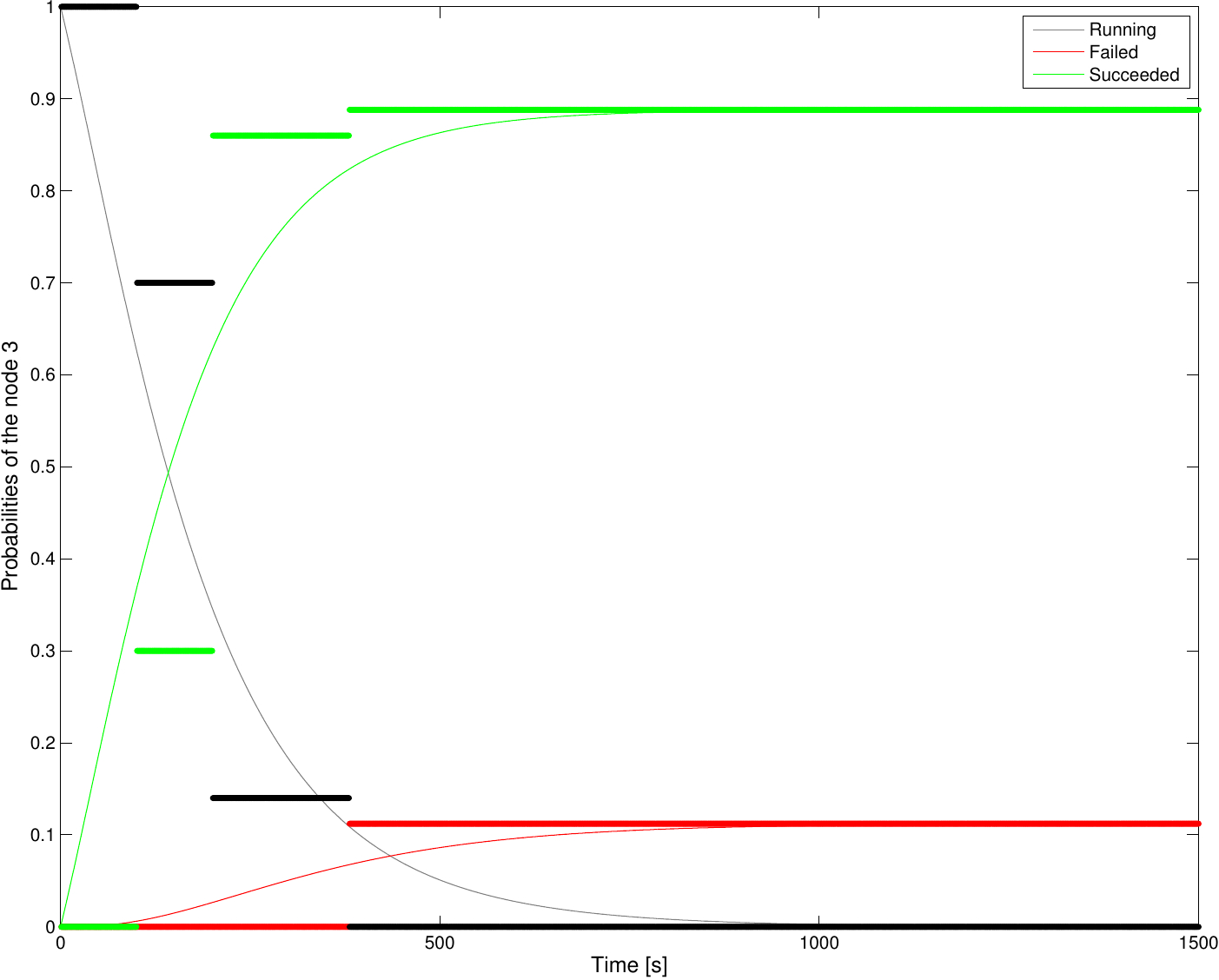}
                \caption{Node 3}   
                \label{stoch:res.fig.3compar}           
        \end{subfigure}
        ~ 
        \caption{Comparison of Success/Failure/Running probabilities of the node 5 (a) and node 3 (b) in the case of deterministic times (thick) and stochastic times (thin).}
        \label{stoch:res.fig.35det}
\end{figure}


In Figure~\ref{stoch:res.fig.comparison}
the results of swapping the order of ``Search on the Floor" and ``Search in the Drawers" are shown in.
As can be seen, the success probability after 100s is about 30\% when starting with the drawers, and about 20\% when starting with the floor.
Thus the optimal solution is a new BT, with the drawer search as the first option.
Note that
the asymptotic probabilities are always the same for equivalent BT, see Definition \ref{stoch:PS.def.eq}, as the  changes considered are only permutations of Fallbacks.

\begin{figure}[h]
\centering
\includegraphics[width=1.0\columnwidth]{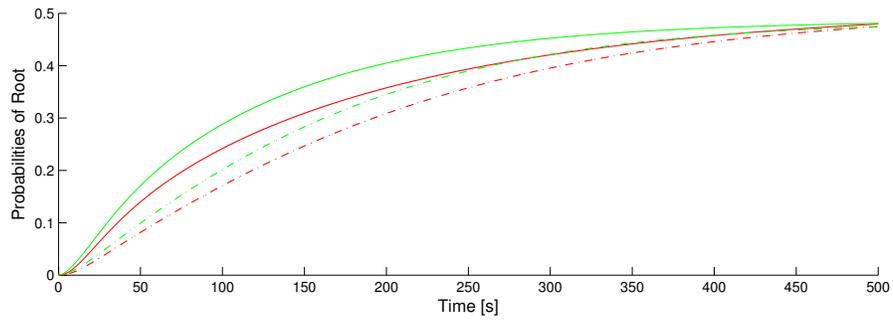}
\caption{Success/Failure probabilities in the case of searching on the floor first (dashed) and searching in the drawer first (solid). Failure probabilities are lower in both cases.}
\label{stoch:res.fig.comparison}
\end{figure}

\chapter{Concluding Remarks}
\thispagestyle{empty}

In this book, we have tried to present a broad, unified picture of BTs.
We have covered the classical formulation of BTs, its extensions and its relation to other approaches.
We have provided theoretical results on efficiency, safety and robustness, using a new state space formalism,
as well as estimates on execution time and success probabilities using a stochastic framework.
We have described a number of practical design principles as well as connections between BTs and the important areas of planning and learning.

We believe that modularity is the main reason behind the huge success of BTs in the computer game AI community, and the growing popularity of BTs in robotics.
It is well known that modularity is a key enabler when designing
 complex, maintainable and reusable systems. Clear interfaces 
 reduce dependencies between components and makes development, testing, and reuse much simpler.
 BTs have such  interfaces, as each level of the tree has the same interface as a single action,
 and the internal nodes of the tree makes
 the implementation of an action  independent of the context and order in which the action is to be used. 
 Finally, these simple interfaces provide structures that are equally beneficial for both humans and machines. 
 In fact, they are vital to the ideas of all chapters, from state-space formalism and planning to design principles and machine learning.

Thus, BTs represent a promising control architecture in both computer game AI and robotics. However, the parallel development in the field has given rise to a set of different formulations and variations on the theme.  This book is an attempt to provide a unified view of a breadth of ideas, algoritms and applications.
There is still lots of work to be done, and we hope the reader has found this book helpful,
and perhaps inspiring, when continuing on the journey towards building better virtual agents and robots.

\bibliographystyle{plain}
\bibliography{bookReferences}
\backmatter
\include{glossary}
\include{solutions}
\printindex


\end{document}